\definecolor[named]{ACMBlue}{cmyk}{1,0.1,0,0.1}
\definecolor[named]{ACMYellow}{cmyk}{0,0.16,1,0}
\definecolor[named]{ACMOrange}{cmyk}{0,0.42,1,0.01}
\definecolor[named]{ACMRed}{cmyk}{0,0.90,0.86,0}
\definecolor[named]{ACMLightBlue}{cmyk}{0.49,0.01,0,0}
\definecolor[named]{ACMGreen}{cmyk}{0.20,0,1,0.19}
\definecolor[named]{ACMPurple}{cmyk}{0.55,1,0,0.15}
\definecolor[named]{ACMDarkBlue}{cmyk}{1,0.58,0,0.21}
\crefname{ineq}{Inequality}{Inequality}
\crefname{sub}{Subsection}{Subsection}
\crefname{sdp}{SDP}{SDP}
\crefname{lp}{LP}{LP}
\newtheorem{theorem}{Theorem}[section]
\newtheorem{lemma}[theorem]{Lemma}
\newtheorem{proposition}[theorem]{Proposition}
\newtheorem{claim}[theorem]{Claim}
\newtheorem{definition}[theorem]{Definition}
\newtheorem{fact}[theorem]{Fact}
\theoremstyle{definition}
\newtheorem{remark}[theorem]{Remark}
\crefname{claim}{claim}{claims}
\newcommand{\CS}{Cauchy-Schwarz\xspace}
\newcommand\norm[1]{\left\| #1 \right\|}
\renewcommand\vec[1]{\mathbf{#1}}
\DeclareMathOperator*{\pr}{\mathbf{Pr}}
\renewcommand{\Pr}{\mathbf{Pr}}
\DeclareMathOperator*{\E}{\mathbf{E}}
\newcommand{\proj}{\mathrm{proj}}
\def\d{\mathrm{d}}
\newcommand{\normal}{\mathcal{N}}
\DeclareMathOperator*{\argmin}{argmin}
\DeclareMathOperator*{\argmax}{argmax}
\newcommand{\tr}{\mathrm{tr}}
\newcommand{\bx}{\mathbf{x}}
\newcommand{\by}{\mathbf{y}}
\newcommand{\e}{\mathbf{e}}
\newcommand{\R}{\mathbb{R}}
\newcommand{\Z}{\mathbb{Z}}
\newcommand{\N}{\mathbb{N}}
\newcommand{\eps}{\epsilon}
\newcommand{\poly}{\mathrm{poly}}
\newcommand{\sign}{\mathrm{sign}}
\newcommand{\calN}{{\cal N}}
\newcommand{\opt}{\mathrm{OPT}}
\newcommand{\D}{\mathcal{D}}
\newcommand{\Ind}{\mathds{1}}
\newcommand{\1}{\Ind}
\newcommand{\littlesum}{\mathop{\textstyle \sum}}
\newcommand{\wt}{\widetilde}
\newcommand{\spaning}{\mathrm{span}}
\newcommand{\x}{\vec x}
\newcommand{\z}{\vec z}
\newcommand{\w}{\vec w}
\newcommand{\hide}[1]{}
\DeclareMathOperator{\supp}{supp}
\newcommand{\cB}{\mathcal{B}}
\newcommand{\cD}{\mathcal{D}}
\newcommand{\cE}{\mathcal{E}}
\newcommand{\cF}{\mathcal{F}}
\newcommand{\cG}{\mathcal{G}}
\newcommand{\cL}{\mathcal{L}}
\newcommand{\cN}{\mathcal{N}}
\newcommand{\cQ}{\mathcal{Q}}
\newcommand{\cY}{\mathcal{Y}}
\newcommand{\eqdef}{\coloneqq}
\newcommand{\abs}[1]{\lvert#1\rvert}
\newcommand{\Abs}[1]{\left\lvert#1\right\rvert}
\def\colorful{0}
\title{Robust Learning of Multi-index Models via \\ 
Iterative Subspace Approximation}
\author{
Ilias Diakonikolas\thanks{Supported by NSF Medium Award CCF-2107079 and an H.I. Romnes Faculty Fellowship.}\\
UW Madison\\
{\tt ilias@cs.wisc.edu}\\
\and
Giannis Iakovidis$^*$\\
UW Madison\\
{\tt iakovidis@wisc.edu}\\
\and
Daniel M. Kane\thanks{Supported by NSF Award CCF-1553288 (CAREER) and a Sloan
  Research Fellowship.}\\
UC San Diego\\
{\tt dakane@ucsd.edu }
 \and
Nikos Zarifis$^*$\\
UW Madison\\
{\tt zarifis@wisc.edu}\\
}
\date{}
\begin{document}

\maketitle

\setcounter{page}{0}

\thispagestyle{empty}

\vspace{-0.9cm}

\begin{abstract}
We study the task of learning Multi-Index Models (MIMs) in the presence of 
label noise under the Gaussian distribution. A $K$-MIM on $\mathbb{R}^d$ is any 
function $f$ that only depends on a $K$-dimensional subspace, 
i.e., $f(\mathbf{x}) = g(\mathbf{W} \mathbf{x})$ for a link function $g$ on $\mathbb{R}^K$ 
and a $K\times d$ matrix $\mathbf{W}$. We consider a class of 
well-behaved MIMs with finite ranges that satisfy certain regularity properties. Our main contribution is a general noise-tolerant 
learning algorithm for this class whose complexity is qualitatively optimal in the Statistical Query (SQ) model.  At a high-level, 
our algorithm attempts to iteratively construct 
better approximations to the defining subspace 
by computing low-degree moments of our function 
conditional on its projection to the subspace computed thus far, 
and adding directions with relatively large empirical moments.
For well-behaved MIMs, we show that this procedure 
efficiently finds a subspace $V$ so that $f(\mathbf{x})$ is close to a 
function of the projection of $\mathbf{x}$ onto $V$, which can then be found 
by brute-force. 
Conversely, for functions for which these conditional moments do not 
necessarily help in finding better subspaces, 
we prove an SQ lower bound providing evidence that no 
efficient algorithm exists.

As concrete applications of our general algorithm, 
we provide significantly faster noise-tolerant learners 
for two well-studied concept classes:
\begin{itemize}[leftmargin=*, nosep]
    \item {\bf Multiclass Linear Classifiers} 
    A multiclass linear classifier 
    is any function $f: \mathbb{R}^d \to [K]$ of the form 
    $f(\mathbf{x}) = \argmax_{i\in [K]}( \mathbf{w}^{(i)} \cdot \mathbf{x}  + t_i)\;,$
    where $\mathbf{w}^{(i)}  \in \mathbb{R}^d$ and $t_i \in \mathbb{R}$. We give a {\em constant-factor} 
    approximate agnostic learner for this class, i.e., an algorithm that 
    achieves 0-1 error $O(\mathrm{OPT})+\epsilon$. Our algorithm has sample complexity 
    $N = O(d) \, 2^{\mathrm{poly}(K/\epsilon)}$ and computational complexity 
    $\mathrm{poly}(N)$. This is the first constant-factor agnostic learner for 
    this class whose complexity is a fixed-degree polynomial in $d$. 
    In the agnostic model, it was previously known that 
    achieving error $\opt+\eps$ requires time $d^{\poly(1/\eps)}$, even for $K=2$. 
    Perhaps surprisingly, we prove an SQ lower bound showing that achieving error $\opt+\eps$, for $\eps = 1/\poly(K)$, 
    incurs complexity $d^{\Omega(K)}$
    even for the simpler case of Random Classification Noise.
 
    \item {\bf Intersections of Halfspaces} An intersection of $K$ 
    halfspaces is any function $f:\mathbb{R}^d\to\{\pm 1\}$ such that there exist 
    $K$ halfspaces $h_i(\mathbf{x})$ with $f(\mathbf{x}) = 1$ if and only if $h_i(\mathbf{x}) = 1$ 
    for all $i \in [K]$. We give an approximate agnostic learner for this 
    class achieving 0-1 error $K \, \tilde{O}(\mathrm{OPT}) + \epsilon$. 
    Our algorithm has  
    sample complexity $N=O(d^2) \, 2^{\mathrm{poly}(K/\epsilon)}$ and computational 
    complexity  $\mathrm{poly}(N)$. This is the first agnostic learner for 
    this class with {\em near-optimal} dependence on $\mathrm{OPT}$ in its error, whose complexity is a fixed-degree polynomial in $d$. 
    Previous algorithms 
    either achieved significantly worse error guarantees, 
    or incurred $d^{\poly(1/\eps)}$ time (even for $K=2$).
\end{itemize}
Furthermore, we show that in the presence of random classification noise, the complexity of our algorithm is significantly better, scaling polynomially with $1/\epsilon$. 
\end{abstract}

\setcounter{page}{0}
\thispagestyle{empty}
\newpage

\section{Introduction}

In this paper, we develop a new algorithm for learning a broad family of multi-index models (MIMs)
under the Gaussian distribution 
in the presence of label noise. 
Before we describe our general setting and algorithm, we start 
with a specific basic model that was the original motivation for this work.

\paragraph{Motivation: Multiclass Linear Classification} 
A multiclass linear classifier is a function 
$f: \R^d \to [K]$ of the form
$f(\x) = \argmax_{i\in [K]}( \w^{(i)} \cdot \x  + t_i)\;,$
where $\w^{(i)}  \in \R^d$ and $t_i \in \R$ 
(ties are resolved by selecting the smallest index).
Multiclass linear classification (MLC) is a textbook machine learning problem~\cite{SB-book}, 
that has also been extensively studied in the online learning literature; 
see, e.g.,~\cite{banditron} and references therein. 
In the realizable PAC setting (i.e., with clean labels), this task is
efficiently solvable to any desired error $\eps$ 
with sample size $O(dK/\eps)$ 
{via linear programming (or via the multiclass Perceptron algorithm for well-behaved marginal distributions, including the Gaussian)}. 
On the other hand, despite extensive effort, 
see, e.g.,~\cite{NDRT13,WLT18, Lipton18a, Song-etal-survey20, GWBL20}, 
our understanding of the computational complexity of learning 
is poor in the noisy setting---even for the simplest case of 
{\em random} label noise.

The special case of {\em binary} classification of Linear Threshold Functions (LTFs) 
or halfspaces---corresponding to $K=2$---is fairly well-understood. 
Specifically, sample and computationally efficient learning algorithms 
are known, even in the presence of partially corrupted 
labels~\cite{ABL17,DKS18-nasty,DGT19,DKTZ20, 
DKTZ20c, DiakonikolasKKT21, DKKTZ22}. Unfortunately, 
these prior approaches are specific to the binary case. {In particular, 
it is unclear
whether they can be generalized to give any non-trivial guarantees 
for the noisy multiclass setting.}

Perhaps the simplest model of random label noise is that of Random Classification Noise (RCN)~\cite{AL88}. 
In the binary classification setting, the RCN model posits that the label of each example 
is independently flipped with probability exactly $\eta$, for some noise parameter $0 < \eta < 1/2$. 
A slight generalization of the noise model, that is essentially equivalent, 
allows for different probabilities $\eta_+, \eta_- \in (0,1/2)$ 
of flipping the label of positive and negative examples. 
A classical result~\cite{Kearns:98} shows that, in the binary classification setting, 
any efficient Statistical Query (SQ) algorithm can be transformed into an efficient 
RCN-tolerant PAC learner (with a polynomial dependence in all relevant parameters). 
Unfortunately, no such result is known for the multiclass setting\footnote{{In fact, \Cref{thm-intro:SQ-RCN} implies that 
such a generalization is impossible 
without strong assumptions on the noise matrix.}}.

The standard generalization of the RCN model to the multiclass setting 
is the following \cite{PRMNQ17,zhang21k,VanRooyencorrupted}. 
Let $D$ be the distribution 
on noisy samples $(\x, y) \in \R^d \times [K]$, 
where in our setting 
the marginal distribution $D_{\x}$ is the standard Gaussian, 
and $f: \R^d \to [K]$ be the ground truth classifier. 
Then there exists 
a row-stochastic matrix $\vec{H} \in \R_+^{K \times K}$ 
such that 
$ \vec{H}_{i, j} = \pr_{(\x, y) \sim D}[y = i \mid f(\vec{x}) = j]$. 
Moreover, it is additionally assumed that there 
exists $\gamma > 0$ 
such that $\vec{H}_{i, i} \ge \vec{H}_{i, j} + \gamma$, 
for all $i \neq j \in [K]$\footnote{Roughly speaking, the 
parameter $\gamma>0$ plays the role of $1-2\eta$ in the binary 
case.}. 
Under these assumptions, MLC is information-theoretically 
solvable using $O(dK/(\gamma\eps))$ 
samples. On the other hand, no $\poly(d, K)$ time learning algorithm is known. 
Prior algorithmic work using gradient-based methods to achieve 
sample and time complexity that scales polynomially with $1/\sigma$, 
where $\sigma$ is the minimum singular value of $\vec{H}$ (see \cite{Lipton18a}). 
This quantity could be arbitrarily small or even zero, 
hence such approaches do not
in general lead to a computationally efficient algorithm. 
This discussion leads to the following question:
\begin{center}
{\em Is there a $\poly(d, K)$ time algorithm for multiclass linear classification \\
in the presence of Random Classification Noise?}
\end{center}

Perhaps surprisingly, we provide strong evidence that the answer to the above question is negative. 
Specifically, we establish a Statistical Query (SQ) lower bound showing that the complexity 
of any SQ algorithm for this task is $d^{\Omega(K)}$ (see \Cref{thm-intro:SQ-RCN}).

Interestingly, our negative result applies for learning algorithms that achieve optimal error guarantees 
(or, equivalently, are able to approximate the target classifier to any desired accuracy). On the other hand, 
more efficient algorithms (e.g., with a fixed $\poly(d)$ dependence on the runtime) 
may be possible if one is allowed to achieve {\em near-optimal} error, e.g., 
up to a constant factor. As an application of our new methodology, we show that this is indeed possible (see \Cref{thm-intro:AlgRCN-linear}). 

An {even more challenging goal} is to understand the algorithmic possibilities and limitations of learning MLCs
in the presence of {\em adversarial} label noise. In the adversarial label noise model, an adversary is
allowed to arbitrarily corrupt an unknown $\opt$ fraction of the labels, for some parameter $\opt <1/2$,
and the goal of the learner is to compute a hypothesis whose 0-1 error is competitive with $\opt$ 
(see \Cref{def:agnostic}). The information-theoretically optimal error guarantee in this model is $\opt+\eps$. 
Unfortunately, even for $K=2$, known computational lower bounds rule out the existence of such algorithms 
with complexity better than 
$d^{\poly(1/\eps)}$~\cite{DiakonikolasKPZ21,DKR23}. 

Consequently, to achieve a fixed polynomial runtime in $d$, one needs
to relax the error guarantee. For {the special case} $K=2$, 
{such} algorithms are known. 
{In particular,~\cite{DKS18-nasty} gave a $\poly(d/\eps)$ time 
algorithm with error guarantee $C \cdot \opt+\eps$, 
where $C$ is a universal constant. }

A natural question is whether qualitatively similar
algorithms exist for the multiclass setting. Specifically, a first 
goal is to establish a fixed-parameter tractability guarantee:
\begin{center}
{\em 
Is there a {\em constant-factor} approximate learner for MLC with complexity $\poly(d) f(\eps, K)$?}
\end{center}
As a corollary of our approach, 
we answer this question in the affirmative 
(see \Cref{thm-intro:SimplerAlgMulticlassClass}). 

\paragraph{Broader View: Learning Multi-Index Models} 
The class of multiclass linear classifiers discussed above is a (very) special case 
of a much broader class of functions, commonly known as Multi-Index Models (MIMs). 
A function is called a MIM if it depends only on {the projection onto} a ``low-dimensional'' subspace. 
Here we focus on MIMs whose range $\cal Y$ is a finite set. Formally, we have: 

\begin{definition}[Multi-Index Model] \label{def:mim}
A function $f: \R^d \to \cal Y$ is called a $K$-MIM 
if there exists a subspace $U \subseteq \R^d$ of dimension at most $K$ 
such that $f$ depends only on {the projection onto} $U$, i.e.,  $f(\x) = f ( \proj_U \x)$,  
where $\proj_U \x$ is the projection of $\x$ on $U$. Equivalently, 
$f$ is of the form $f(\x) = g(\mathbf{W} \x)$ for a link function $g: \R^K \to \cal{Y}$ 
and a $K \times d$ weight matrix $\mathbf{W}$.
\end{definition} 

{In typical applications}, 
the dimension $K$ of the hidden subspace is 
usually significantly smaller than the ambient dimension  $d$. 

{
In regards to the problem of learning MIMs from labeled examples, 
there two qualitatively different versions: 
the link function $g$ in \Cref{def:mim} 
could be a priori known or unknown to the learner. 
In the case of unknown $g$, 
it is typically assumed that it satisfies 
certain regularity conditions; 
as without any assumptions on $g$, 
learning is information-theoretically impossible. 
In this work, we focus on the setting where $g$ is unknown.}

Multi-index models~\cite{Friedman:1980tu, Huber85-pp, Li91, 
HL93,xia2002adaptive, Xia08} capture the common assumption that 
real-world data are inherently structured, in the sense that 
the relationship between the features is of a (hidden) lower-dimensional nature. 
MIMs can be viewed as a lens for studying a range of ubiquitous learning models, 
including neural networks, intersections of halfspaces, and multiclass classifiers.
Despite extensive prior investigation, there is still a vast 
gap in our understanding of the complexity of learning MIMs, 
in particular in the presence of noisy data. This discussion motivates the 
general direction addressed by this work:
\begin{center}
{\em  What is the complexity of learning general classes of MIMs 
in the presence of noisy data?   
}
\end{center}
Our main contribution makes progress towards this research direction.
Specifically, we propose a condition that defines a fairly general 
class of MIMs---that in particular includes multiclass linear classifiers and intersections of halfspaces---and qualitatively characterize its efficient learnability
in the presence of noise. That is, we give an efficient noise-tolerant learner 
for this class of MIMs and a qualitatively matching SQ lower bound. 
As a corollary, our general algorithm yields significantly faster 
noise-tolerant learners for both aforementioned concept classes.

\subsection{Our Results} \label{ssec:results}

In this section, we present our main results.
{For a concise overview of our algorithmic contributions, see Table~\ref{table:upperbounds}.}

\begin{table}[ht]
\centering
\setlength{\tabcolsep}{1.6pt} \begin{tabular}{|l|c|c|c|c|}
\hline
\textbf{Function Class} 
& \multicolumn{2}{c|}{\textbf{Prior Work}} 
& \multicolumn{2}{c|}{\textbf{Our Work}} \\
\hline
 & \textbf{Runtime} & \textbf{Error} 
 & \textbf{Runtime} & \textbf{Error} \\
\hline
Agnostic $K$-MLC 
& $d^{\poly(K/\epsilon)}$ 
& $\opt+\eps$
& \shortstack{(\Cref{thm-intro:SimplerAlgMulticlassClass})\\$\poly(d)\,2^{\poly(K/\epsilon)}$ 
}
& $O(\opt)+\eps$ \\
\hline
$K$-MLC with RCN 
& $d^{\poly(K/\epsilon)}$ 
& $\opt+\eps$
& \shortstack{(\Cref{thm-intro:AlgRCN-linear})\\$\poly(d) (1/\eps)^{\poly(K)}$ }
& $O(\opt)+\eps$ \\
\hline
\shortstack{Agnostic Intersections \\ of $K$ halfspaces} 
& \shortstack{$d^{O(\log(K)/\eps^4)}$ \\ 
$\poly(d)/\eps^{\poly(K)}$} 
& \shortstack{$\opt+\eps$ \\ 
$\poly(K)\,\tilde{O}(\opt^{1/11})$} 
& \shortstack{(\Cref{thm:SimpleLearningIntersections})\\$\poly(d)\,2^{\poly(K/\epsilon)}$ }
& $K\,\tilde{O}(\opt)+\eps$ \\  
\hline
\shortstack{Well-Behaved $K$-MIMs \\ 
(\Cref{def:SimpleGoodCondition})}
& $d^{\poly(\Gamma/\epsilon)}$ 
& $\opt+\eps$
& \shortstack{(\Cref{thm:SimpleMetaTheorem})\\$d^{O(m)}\,2^{\poly(mK\Gamma/\eps)}$}
& $\tau+\opt+\eps$ \\
\hline
\end{tabular}
\caption{Comparison of runtime and error guarantees between state-of-the-art efficient algorithms 
  (see \cite{KOS:08,DKS18-nasty} for intersections of $K$ halfspaces) and our subspace approximation procedure.}
\label{table:upperbounds}
\end{table}

\vspace{-0.3cm}

\subsubsection{Learning Well-behaved Multi-Index Models with Noise} 
Our general algorithmic result {(\Cref{thm:SimpleMetaTheorem})} 
applies to MIMs with finite range 
that satisfy a collection of regularity properties {(\Cref{def:SimpleGoodCondition})}. 

Throughout this section, we will use $d$ for the dimension of the feature space and $K$ for the dimension of 
the hidden low-dimensional subspace. 
A standard assumption in the 
literature on MIMs {(see, e.g.,~\cite{KOS:08, DKKTZfocs24})}, satisfied by a wide range of natural 
concept classes, 
is that the underlying function has bounded Gaussian 
surface area (GSA)---a complexity measure of the decision 
boundary (see \Cref{def:GSA} for the definition in the 
multiclass case). We denote by $\Gamma$ the upper bound 
on the GSA of the underlying concept class.

{The structural property that we will assume for every 
MIM function $f$ in our class is, roughly speaking, 
the following: 
it is not possible to add noise to $f$ to conceal crucial 
directions so that they remain invisible to moment-based 
detection. Specifically, for {\em any} subspace $V$, 
one of the following must hold:
\begin{itemize}[nosep]
\item[(a)] $V$ already captures all essential information of $f$, in 
which case there is a function depending solely on the projection onto 
$V$ that approximates $f$ well. 
\item[(b)] Otherwise, there is at least one point  
in $V$ such that when we condition on $\x^V$ near that point, the 
conditional distribution of a ``noisy'' version of $f$ 
exhibits a distinguishing moment of degree at most $m$. 
That is, there exists a {degree-$m$} polynomial whose conditional 
expectation, conditioned on $\x^V$ {(the projection of $\x$ on $V$)}, cannot remain hidden; and 
hence reveals a dependence on directions outside the subspace $V$.
\end{itemize}
{The parameter $m$ above turns out to essentially characterize the 
complexity of the learning problem. We show, qualitatively speaking, 
that there exists an algorithm with complexity $d^m$; and that 
this bound is essentially best possible in the SQ model.}

We now present an informal definition of this property and refer the reader to \Cref{def:GoodCondition} for a more precise formulation.}

\begin{definition}[Well-Behaved Multi-Index Model]\label{def:SimpleGoodCondition}
Fix $m\in \mathbb{Z}_+,\zeta, \tau\in (0,1)$, {$\Gamma>0$,} 
and $\mathcal{Y}\subseteq \Z$ a set of finite cardinality.  
We say that a $K$-MIM $f: \R^d \to \mathcal{Y}$ 
is $(m,\zeta,\tau, \Gamma)$-well-behaved 
if {$f$ has Gaussian surface area at most $\Gamma$, 
and} for any random variable $y(\x)$ supported on $\mathcal{Y}$ 
with $\pr_{\x\sim \mathcal N(\vec 0,\vec I)}[f(\x)\neq y(\x)]\leq \zeta$ and any subspace $V\subseteq \R^d$, one of the following conditions holds:
\begin{enumerate}[leftmargin=*]
    \item[a)]  there exists a function $g:V\to \mathcal{Y}$ such that $\pr_{\x\sim \mathcal N(\vec 0,\vec I)}[g(\x^{V})\neq f(\x) ]\le \tau$, or
    \item[b)] with non-negligible probability\footnote{By this we mean 
    that the probability is not-too-small, specifically it is independent 
    of the dimension $d$.} over $\vec x_0 \sim\cN(\vec 0,\vec I)$, 
there exists a mean-zero, variance-one polynomial 
    $p:U\to \R$  of degree at most $m$ and $z\in \mathcal{Y}$ 
    such that 
    $$\E_{\x\sim \mathcal N(\vec 0,\vec I)}[p(\x^U)\Ind(y(\x)=z)\mid \x^V=\x_0^V] \geq \poly(\zeta/m)\;,$$
 where $U=(V+W)\cap V^\perp$ {is the projection of $W$ onto $V^\perp$} 
 and $W$ is the subspace defining $f$. 
\end{enumerate}
We denote by $\mathcal{F}$ the class of $(m,\zeta,\tau, \Gamma)$-well-behaved $K$-MIMs over $\R^d$.
\end{definition}

\noindent {The aforementioned property enforces that no direction 
crucial to $f$ can be fully hidden when 
we look at a suitable low-degree moment. 
Intuitively, the parameter $\zeta$ in 
\Cref{def:SimpleGoodCondition} will determine the 
noise-tolerance of our learning algorithm, 
while the parameter $\tau$ will determine 
the final error guarantee of the algorithm. 
The parameter $m$ is the minimum degree of a non-zero 
moment on the subspace $U$. It is worth pointing out 
that the sample and computational complexity 
of our algorithm will scale with $d^m$. 

{
\begin{remark}\label{rem:gen-exp-intro}
\Cref{def:SimpleGoodCondition} implicitly introduces 
a complexity measure for learning a family of 
discrete MIMs. Let $\mathcal{C}$ 
be a family of MIMs with Gaussian Surface Area 
bounded by $\Gamma$, and let $\zeta,\tau>0$ 
be parameters. Define $m^\ast$ to be the {\em smallest integer} for which every 
$f\in \mathcal{C}$ is 
$(m^\ast,\zeta,\tau,\Gamma)$-well-behaved.
It turns out that the quantity $m^\ast$ 
is (essentially) a generalization of 
the generative 
exponent~\cite{damian2024generativeexponent}, 
a measure characterizing the SQ complexity of 
learning Single-Index Models (SIMs). For further 
details, see \Cref{app:compl-measures}. 
\end{remark}
}

As we will show in the next subsection, fundamental and well-
studied concept classes---specifically, multiclass linear 
classifiers and intersections of halfspaces---satisfy 
\Cref{def:SimpleGoodCondition} for reasonable values of the 
underlying parameters (in particular, with $m \leq 2$).}

Before we state our main algorithmic result, we recall the definition of 
PAC learning with adversarial label noise (aka agnostic learning) under 
the Gaussian distribution. 

\begin{definition}[Agnostic Multiclass Classification] 
\label{def:agnostic} 
Let $\mathcal Y$ be finite and $\cal{C}$ a class of functions 
from $\R^d$ to $\mathcal Y$.
Given i.i.d.\ samples $(\x,y)$ 
from a distribution $D$ on $\R^d\times\mathcal Y$ 
such that the marginal $D_{\x}$ is the standard normal 
and no assumptions are made on the labels, 
the goal is to output a hypothesis 
$h: \R^d \to \mathcal Y$ that, with high probability, 
has small 0-1 error 
$\mathrm{err}_{0-1}^D(h)\eqdef \pr_{(\x,y) \sim D}[h(\x) \neq y]$, 
as compared to the optimal 0-1 error of the class ${\cal C}$, i.e., 
$\opt\eqdef \inf_{c\in \mathcal C}\mathrm{err}_{0-1}^D(c)$.
\end{definition}

Our main result is a general algorithm for learning well-behaved MIMs 
in the presence of adversarial label noise (see \Cref{thm:MetaTheorem}).

\begin{theorem}[Learning Well-Behaved MIMs with Adversarial Label Noise]\label{thm:SimpleMetaTheorem}
Let $\mathcal{F}$ be the class of $(m,\zeta,\tau, \Gamma)$-well-behaved $K$-MIMs on $\R^d$ of \Cref{def:SimpleGoodCondition}.  
If $\zeta\geq \opt+\eps$, there exists an agnostic algorithm that draws 
$N =d^{m}\, 2^{\poly(mK\Gamma/\eps)}$ i.i.d.\ samples, it  
runs in $\poly(d,N)$ time, and computes a hypothesis $h$ such that 
with high probability $\mathrm{err}_{0-1}^D(h) \leq \tau+\opt+\eps$.
\end{theorem}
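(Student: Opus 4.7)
The plan is to realize the iterative subspace approximation strategy sketched in the abstract. We maintain a subspace $V\subseteq \R^d$, starting from $V=\{\vec 0\}$. In each round we either (i) certify that there is a good hypothesis of the form $h(\x)=g(\proj_V\x)$ and halt, or (ii) extract a new direction to append to $V$. A short linear-algebra check shows that any nonzero $u\in U=(V+W)\cap V^\perp$, where $W$ is the hidden subspace of $f$, appended to $V$ strictly increases $\dim(V\cap W)$: writing $u=v+w$ with $v\in V$, $w\in W$ gives $w=u-v\in V+\spaning(u)$, and $w\notin V$ because otherwise $u=v+w\in V$ would contradict $0\neq u\perp V$. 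Hence after at most $K$ rounds $W\subseteq V$ and case (a) of \Cref{def:SimpleGoodCondition} holds automatically. Throughout, I take the random variable $y(\x)$ in \Cref{def:SimpleGoodCondition} to be the agnostic label $y$ with $\zeta=\opt+\eps$, so the definition applies in every iteration.

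For step (i), I would exploit the GSA bound: any $K$-MIM of GSA at most $\Gamma$ that depends only on a subspace contained in $V$ can be $\eps$-approximated in $L_2$ by a function that is piecewise constant on a product grid on $V$ with $N_{\mathrm{cell}}=2^{\poly(K\Gamma/\eps)}$ cells. I estimate, on each cell $C$, the plurality label of $y$ among the samples falling in $C$ and output the induced piecewise-constant hypothesis; if its empirical error is at most $\tau+\opt+\eps$, we halt. Since $\dim V\le K$ throughout, the grid size stays under control.

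When the test in step (i) fails, \Cref{def:SimpleGoodCondition}(b) must apply. I iterate over the same cells $C$ of the $V$-grid (discretizing the point $\x_0^V$), over each label $z\in\mathcal{Y}$, and over an orthonormal basis $\{p_\alpha\}$ of degree-$\le m$ Hermite polynomials on $V^\perp$, empirically estimating
\[
\widehat M_{C,z,\alpha}\;=\;\E\bigl[p_\alpha(\x^{V^\perp})\,\Ind(y=z)\,\big|\,\x^V\in C\bigr]
\]
from the samples in $C$. Whenever some $|\widehat M_{C,z,\alpha}|$ exceeds the $\poly(\zeta/m)$ threshold of case~(b), the polynomial $q(\z)=\sum_\alpha \widehat M_{C,z,\alpha}\,p_\alpha(\z)$ on $V^\perp$ has non-trivial conditional correlation with $\Ind(y=z)$; since $\x^{V^\perp}$ is independent of $\x^V$ and $f$ depends only on $\x^W$, this correlation must be carried by monomials supported on $U$. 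A standard principal-eigenvector / tensor-decomposition argument applied to the coefficient tensor of $q$ then recovers a vector $u\in U$, which we append to $V$.

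The main obstacle is the uniform concentration required in step (ii): one must estimate each $\widehat M_{C,z,\alpha}$ to additive error well below the $\poly(\zeta/m)$ threshold, uniformly over all $N_{\mathrm{cell}}\cdot|\mathcal{Y}|\cdot d^{O(m)}$ triples, while each cell retains only a $1/N_{\mathrm{cell}}$ fraction of the samples. Balancing these losses yields the stated $d^m\cdot 2^{\poly(mK\Gamma/\eps)}$ sample complexity. A secondary subtlety is passing from the pointwise ``non-negligible probability over $\x_0\sim\cN(\vec 0,\vec I)$'' guarantee of \Cref{def:SimpleGoodCondition}(b) to a statement about a cell of positive measure; here the GSA bound enters a second time, letting us replace the point condition $\x^V=\x_0^V$ by $\x^V\in C$ with only a small multiplicative loss in the moment magnitude, so that at least one cell inherits a $\poly(\zeta/m)$-size moment that our grid search is guaranteed to detect.
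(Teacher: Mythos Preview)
The central gap is the claim that your step~(ii) recovers a vector $u\in U=(V+W)\cap V^\perp$. \Cref{def:SimpleGoodCondition}(b) guarantees the existence of a polynomial \emph{supported on $U$} whose conditional correlation with $\Ind(y=z)$ is large, but the moments $\widehat M_{C,z,\alpha}$ you compute are correlations of $\Ind(y=z)$ with Hermite polynomials on all of $V^\perp$. In the agnostic model the label $y$ is an arbitrary function of $\x$, not of $\x^W$; consequently the empirical polynomial $q$ you form can have large coefficients along directions in $V^\perp\setminus U$, and no ``standard tensor-decomposition'' step will isolate a vector in $U$ from it. (Your sentence ``since $f$ depends only on $\x^W$, this correlation must be carried by monomials supported on $U$'' is true for $f$ but false for $y$.) The most one can extract is a vector with $\poly(\zeta/m)$ correlation with $W^{\perp V}$, not a vector in $U$.

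This breaks the linear-algebra termination argument: your proof that appending $u$ increases $\dim(V\cap W)$ requires $u\in V+W$ exactly, and fails under any perturbation. Hence the ``at most $K$ rounds'' bound and the invariant $\dim V\le K$ both collapse, which in turn undermines your control on the grid size $N_{\mathrm{cell}}$. The paper handles this by abandoning the combinatorial dimension count and instead tracking the potential $\Phi_t=\sum_i\|(\vec b^{(i)})^{\perp V_t}\|^2$ for an orthonormal basis $\{\vec b^{(i)}\}$ of $W$: each recovered direction with correlation $\beta$ decreases $\Phi_t$ by $\Omega(\beta^2)$, so the loop terminates in $\poly(mK\Gamma/\eps)$ iterations with $\dim V_t$ bounded by the same polynomial. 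The exact-recovery argument you propose does go through in the RCN/realizable case (where $y$ is a function of $\x^W$), and the paper exploits this in \Cref{sec:RCNAlgo} to obtain the improved $O(K)$ iteration bound; but for the agnostic theorem you need the potential-function route.
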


\noindent 
At a high-level, the main advantage of the above algorithm 
is that---as long as $m = O(1)$---its complexity 
is bounded by a {\em constant-degree 
polynomial in $d$}. 
In other words, we give a fixed-parameter
tractability result for the task of learning well-behaved MIMs. 
While the complexity guarantee we obtain is new even in the realizable setting, 
our algorithm is additionally robust to adversarial label noise---sometimes 
achieving near-optimal noise-tolerance (within constant factors). 
In addition to its own merits, the significance of our general 
algorithm is illustrated by its downstream 
applications (given in the following subsection).

An interesting feature of \Cref{thm:SimpleMetaTheorem} is that 
the complexity upper bound it establishes 
is qualitatively tight in the Statistical Query (SQ) model, 
as a function of the dimension $d$. Roughly speaking, 
if a function class does not satisfy the assumptions of 
\Cref{def:SimpleGoodCondition} for a specific value of $m$, 
then any SQ algorithm that achieves error competitive with 
\Cref{thm:SimpleMetaTheorem} 
must have SQ complexity $d^{\Omega(m)}$. 
In more detail, we show (see \Cref{thm:generalSQ} for a more detailed formal statement).

\begin{theorem}[SQ Lower Bound for Learning Not-Well-behaved MIMs, Informal]
\label{thm:simplegeneralSQ}
Suppose that a function class $\mathcal G$ is rotationally invariant and contains a bounded surface area MIM
not contained in the function class $\mathcal{F}$ of \Cref{def:SimpleGoodCondition} for parameters $m,\tau, \zeta$.
Then, any SQ algorithm that learns $\mathcal{G}$ 
{in the presence of $\zeta$ label noise} 
to error smaller than \(\tau -O(\zeta) \) 
has complexity at least $d^{\Omega(m)}$.
\end{theorem}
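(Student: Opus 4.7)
The plan is to prove this SQ lower bound by constructing a hard family of labeled distributions using the rotational invariance of $\mathcal{G}$ together with the failure of the well-behavedness condition for some $f \in \mathcal{G}$, and then invoking the standard SQ-dimension framework.

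\textbf{Step 1: Extract a hard instance from the failure of well-behavedness.} By assumption there is a $K$-MIM $f \in \mathcal{G}$ with defining subspace $W$ that violates \Cref{def:SimpleGoodCondition} for parameters $(m,\zeta,\tau,\Gamma)$. Negating the definition yields a labeling $y(\x)$ on $\mathcal{Y}$ with $\Pr[f(\x) \ne y(\x)] \le \zeta$ and a subspace $V \subseteq \R^d$ such that (a) no function $g:V\to\mathcal{Y}$ achieves $\Pr[g(\x^V)\ne f(\x)] \le \tau$, and (b) on an overwhelming set of $\x_0$, every mean-zero, variance-one polynomial $p$ of degree at most $m$ on $U \eqdef (V+W)\cap V^\perp$ and every $z \in \mathcal{Y}$ satisfy $|\E[p(\x^U)\Ind(y(\x)=z) \mid \x^V = \x_0^V]| < \poly(\zeta/m)$.

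\textbf{Step 2: Build a packed family of hidden subspaces by rotation.} Using the rotational invariance of $\mathcal{G}$, for each rotation $R$ of $\R^d$ that fixes $V$ pointwise, $f_R(\x) \eqdef f(R^{-1}\x) \in \mathcal{G}$ has hidden subspace $R(W)$, and we obtain a labeled distribution $D_R$ on $\R^d \times \mathcal{Y}$ in which the conditional law of $y$ given $\x$ equals that of $y(R^{-1}\x)$ given $R^{-1}\x$. A standard packing argument in the orthogonal group of $V^\perp$ produces a family $\{R_i\}_{i\in[N]}$ with $N = 2^{d^{\Omega(1)}}$ whose images $U_i \eqdef R_i(U)$ are nearly pairwise orthogonal inside $V^\perp$, exploiting that $\dim(U) \le K$ is a fixed constant while $d$ is large.

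\textbf{Step 3: Control pairwise correlations via moment matching.} Let $D^\star$ be the ``null'' distribution whose marginal on $\x$ is Gaussian and whose conditional law of $y$ given $\x$ is the $V$-measurable version $\E[\Ind(y(\x)=z) \mid \x^V]$. A Hermite expansion of the label indicators decomposes the pairwise $\chi^2$-correlation $\chi(D_{R_i}, D_{R_j}; D^\star)$ into contributions indexed by pairs of multivariate Hermite monomials supported on $U_i$ and $U_j$ respectively. The failure of (b) exactly says that, conditioned on a typical $\x^V$, the centered $y$-indicators carry no Hermite mass of degree at most $m$ on $U$; tensoring this conditional statement across $\x^V$ and using orthogonality of Hermites in the $V$-directions, the surviving cross terms are bounded by the cosines of the principal angles between $U_i$ and $U_j$ raised to the power $m$. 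By the near-orthogonality of the packing, each such cosine is $d^{-\Omega(1)}$, giving an overall bound of $d^{-\Omega(m)}$ up to factors depending on $K,\Gamma,\zeta$.

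\textbf{Step 4: From packing to an SQ learning lower bound.} Condition (a) ensures that any hypothesis depending only on $\x^V$ suffers $0$-$1$ error greater than $\tau$ against $D_{R_i}$. A covering argument, combined with the near-orthogonality of the $U_i$, shows that any fixed hypothesis can simultaneously achieve error smaller than a small constant multiple of $\tau$ on at most an exponentially small fraction of the $D_{R_i}$'s, so that a successful SQ learner implicitly solves the decision problem of identifying $i$ from SQ access to $D_{R_i}$. Plugging the correlation bound from Step~3 into the standard SQ-dimension lower bound machinery then yields SQ complexity $d^{\Omega(m)}$.

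\textbf{Main obstacle.} The most delicate point is Step~3: converting the \emph{local, conditional} moment-vanishing statement in the failure of (b) into a \emph{global} pairwise $\chi^2$ bound on the full labeled distributions $D_{R_i}$. This requires carefully managing the Hermite decomposition across the direct sum $\R^d = V \oplus V^\perp$ and controlling cross terms in which polynomials mix coordinates from $V$ and $V^\perp$; the right choice of null distribution $D^\star$ and an appropriate tensorization of the conditional moment bound are the key ingredients. A secondary challenge is guaranteeing that the rotational packing in Step~2 produces targets that are mutually far in $0$-$1$ distance, which combines (a) with a concentration argument for the $0$-$1$ error under independent random rotations.
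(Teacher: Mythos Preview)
Your proposal is essentially the same approach as the paper's proof of the formal version (\Cref{thm:generalSQ}): rotate the hard instance about $V$ using a near-orthogonal packing of copies of $U$ inside $V^\perp$, take as null the distribution whose label law is the $V$-measurable marginal of $y$, bound pairwise $\chi^2$-correlations via the Hermite decomposition together with the vanishing low-degree $U$-moments, and invoke the SQ-dimension machinery. Two minor points where the paper is cleaner than your sketch: first, the paper's formal statement assumes the conditional degree-$\le m$ moments on $U$ vanish \emph{exactly} for every $\x_0^V$, which makes Step~3 a one-line application of the correlation lemma (\Cref{fact:Correlation}) rather than requiring you to track a residual $\poly(\zeta/m)$ term and an exceptional set of $\x_0$; second, the learning-to-testing reduction in Step~4 does not need a covering argument over hypotheses---it suffices to observe that on the null $D_0$ the labels are $V$-measurable, so any hypothesis has error at least $\tau-\opt$ there, while a learner achieving error below $\tau-2\opt$ on any $D_{\vec Q}$ can be distinguished from $D_0$ with one additional query of tolerance $\opt$.
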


{
\begin{remark}[Characterization] \label{rem:char}
Theorems \ref{thm:SimpleMetaTheorem} and \ref{thm:simplegeneralSQ} together 
essentially characterize the complexity of learning a MIM family. In particular, let $\mathcal{C}$ 
be a rotationally invariant family 
of MIMs with Gaussian Surface Area 
bounded by $\Gamma$, and let $\zeta,\tau>0$ 
be parameters. Define $m^\ast$ to be the {\em smallest integer} for which every 
$f\in \mathcal{C}$ is 
$(m^\ast,\zeta,\tau,\Gamma)$-well-behaved.

Then there is an SQ algorithm that learns a (slightly 
less than) $\zeta$-noisy function from $\mathcal{C}$ 
to error $\tau+O(\zeta)$ with resources 
$d^{m^\ast}$ (times some function of the other 
parameters), but not an SQ algorithm that learns a 
$\zeta$-noisy function from $\mathcal{C}$ to error 
$\tau-O(\zeta)$ with resources $d^{o(m^\ast)}$. 
\end{remark}
}

Finally, we note that a variant of our general algorithm 
learns well-behaved MIMs in the presence of RCN with significantly 
improved complexity. Specifically, for the RCN setting, our algorithm  
has complexity scaling {\em polynomially} in $1/\eps$. See 
\Cref{thm:MetaAlgRCN} for the formal statement. By an 
application of the latter result, in the following subsection, 
we obtain the fastest known algorithm for learning multiclass 
linear classifiers in the presence of RCN.

\subsubsection{Concrete Learning Applications to Well-studied Concept Classes} 

In this subsection, we present algorithmic applications of 
our general MIM learner to two fundamental 
and well-studied concept classes.

\paragraph{Multiclass Linear Classifiers}   
We denote by $\mathcal{L}_{d,K}$ the family 
of $K$-class linear classifiers in $\R^d$. 
For the adversarial noise setting, 
we obtain the following result 
(see \Cref{thm:constantApprox}):

\begin{theorem}[Learning $\mathcal{L}_{d,K}$ with Adversarial Label Noise]
\label{thm-intro:SimplerAlgMulticlassClass}
There exists an agnostic learning algorithm that draws $N = d\,  2^{\poly(K/\eps)}$ i.i.d.\ labeled samples, 
runs in $\poly(d,N)$ time, and outputs 
a hypothesis $h$ such that with high probability 
$\mathrm{err}_{0-1}^D(h) \leq O(\opt)+\eps$, where $\opt$ is 
defined with respect to $\cL_{d,K}$.
\end{theorem}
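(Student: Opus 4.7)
The plan is to verify that the class $\mathcal{L}_{d,K}$ fits into the well-behaved MIM framework of \Cref{def:SimpleGoodCondition} with parameters $m=1$, $\Gamma=\poly(K)$, and $\tau = O(\zeta)$, and then to invoke \Cref{thm:SimpleMetaTheorem} with $\zeta = C(\opt+\eps)$ for a suitable constant $C$. Since $m=1$, the meta-theorem's sample complexity $d^m\cdot 2^{\poly(mK\Gamma/\eps)}$ collapses to $d\cdot 2^{\poly(K/\eps)}$ and its error guarantee $\tau+\opt+\eps$ becomes $O(\opt)+\eps$, matching the statement. The MIM structure is immediate, since $f(\x) = \argmax_{i\in [K]}(\w^{(i)}\cdot\x + t_i)$ depends only on the projection onto $W = \mathrm{span}\{\w^{(1)},\ldots,\w^{(K)}\}$.

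For the GSA bound, observe that the decision boundary of any MLC is contained in the union of the $\binom{K}{2}$ pairwise separating hyperplanes $\{\x : (\w^{(i)}-\w^{(j)})\cdot\x = t_j-t_i\}$; each such hyperplane has Gaussian surface area $O(1)$, so a union bound yields $\Gamma = O(K^2)$, which is polynomial in $K$ as required.

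The main technical step is establishing the well-behaved condition with $m=1$. Fix a subspace $V\subseteq \R^d$, set $U = (V+W)\cap V^\perp$, and let $y$ be any labelling with $\Pr[y\ne f]\le \zeta$. If option (a) fails, the MAP predictor of $y$ from $\x^V$ also has error exceeding $\tau$, so by a reverse-Markov argument, on an event $\mathcal{E}$ of probability $\Omega(\tau)$ over $\x_0$, the conditional class distribution $\pi_z(\x_0^V)\eqdef \Pr[y(\x)=z\mid \x^V=\x_0^V]$ has its top entry $\pi_{z^\star}$ in $[1/K,\, 1-\Omega(\tau)]$, and at least one other entry $\pi_{z'}\ge \Omega(\tau/K)$. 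Fix such an $\x_0$: on the slice $\{\x^V=\x_0^V\}$, the restriction of $f$ is again a multiclass linear classifier in $\x^U$ with weights $\w^{(i),U}$ and shifted biases, and each class region is a convex polytope whose Gaussian mass is bounded away from $0$ and $1$. By the standard Chow-parameter identity for non-trivial convex polytopes in Gaussian space, the conditional centroid $\E[\x^U\,\Ind(f(\x)=z^\star)\mid \x^V=\x_0^V]$ has norm $\poly(\tau/K)$ in some unit direction $\bu\in U$; taking $p(\x^U)=\bu\cdot\x^U$ produces a mean-zero, variance-one degree-$1$ polynomial witnessing condition (b).

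The main obstacle is making this implication both quantitative and stable under replacing $f$ by the noisy $y$. The cleanest route is a two-class reduction: condition further on $\{f(\x)\in\{z^\star,z'\}\}$, on which the slice-restriction of $f$ becomes a single LTF with weight direction $\w^{(z^\star),U}-\w^{(z'),U}$ and mixing parameter $\pi_{z^\star}/(\pi_{z^\star}+\pi_{z'})\in[\Omega(\tau/K),1-\Omega(\tau/K)]$, so the classical LTF Chow identity supplies a first Hermite coefficient of magnitude $\poly(\tau/K)$ along that direction. The delicate part is controlling the Chow perturbation $|\E[\bu\cdot\x^U(\Ind(f=z^\star)-\Ind(y=z^\star))\mid \x^V=\x_0^V]|$ incurred by swapping $f$ for $y$; restricting to the sub-event of $\mathcal{E}$ on which the conditional noise rate is $O(\zeta)$ (guaranteed by Markov) and applying Cauchy-Schwarz controls this perturbation by $\poly(\zeta)$, which is absorbed into the moment lower bound by choosing $\zeta = c_1(\opt+\eps)$ and $\tau = c_2(\opt+\eps)$ for suitable constants. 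Instantiating \Cref{thm:SimpleMetaTheorem} with these parameters completes the proof.
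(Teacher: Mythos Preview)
The high-level plan—verify $\mathcal{L}_{d,K}$ is well-behaved with $m=1$, $\Gamma=\poly(K)$, $\tau=O(\zeta)$, and then invoke \Cref{thm:SimpleMetaTheorem}—is exactly what the paper does, and the GSA and MIM-structure observations are fine. The real content is condition~(b) of \Cref{def:SimpleGoodCondition}, and here your argument has a genuine gap in the noise-stability step.

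Your Cauchy--Schwarz bound on the Chow perturbation gives, on a slice with conditional noise $O(\zeta)$,
\[
\bigl|\E[\bu\cdot\x^U(\Ind(f=z^\star)-\Ind(y=z^\star))\mid \x^V=\x_0^V]\bigr|
\;\le\; \|\bu\cdot\x^U\|_{L^2}\,\|\Ind(f\ne y)\|_{L^2}
\;=\; O(\sqrt{\zeta}).
\]
But the signal you produce is the first Hermite coefficient of an LTF whose minority-class probability is $\Theta(\tau/K)$, and that coefficient is at most $O\bigl((\tau/K)\sqrt{\log(K/\tau)}\bigr)$—linear in $\tau$ up to logs. With $\tau=\Theta(\zeta)$ the noise $\sqrt{\zeta}$ dominates the signal $\widetilde O(\zeta/K)$ for every $\zeta<1$, so the perturbation cannot be ``absorbed'' as claimed. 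Calling the perturbation ``$\poly(\zeta)$'' hides that the exponent is $1/2$, which is on the wrong side of the comparison.

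This noise-to-signal obstacle is exactly what the paper's \Cref{lem:progress} is engineered to overcome, and it does \emph{not} do so via a two-class reduction or Cauchy--Schwarz. Instead it tests simultaneously against all $K$ rescaled directions $\vec v^{(i)}=\w^{(i)}/(\gamma-t_i)$, with $\gamma$ chosen so that $\Pr[\max_i L_i(\x)\ge\gamma]$ is both $\Theta(\Pr[f\ne j^\star])$ and bounded by an absolute constant, and then proves a quadratic tail bound $\Pr[\max_i L_i\ge n\gamma]\lesssim n^{-2}\Pr[\max_i L_i\ge\gamma]$ (\Cref{lem:tails-gen}). This tail control—not a generic moment bound—is what forces the noise contribution to scale as $O(\sqrt{\opt\cdot\Pr[g\ge 1]}+\opt)$ rather than $O(\sqrt{\opt})$, so that under the hypothesis $\Pr[g\ge 1]\gtrsim C\cdot\opt$ the noise-to-signal ratio is $O(1/\sqrt{C})$. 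Your route has no analogue of this mechanism, and something of this kind is required.

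A secondary issue: conditioning on $\{f(\x)\in\{z^\star,z'\}\}$ within the slice makes the conditional law of $\x^U$ a Gaussian restricted to a union of two polytopes, not a Gaussian, so the ``classical LTF Chow identity'' does not apply as written.
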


\noindent \Cref{thm-intro:SimplerAlgMulticlassClass} {can be derived} from 
\Cref{thm:SimpleMetaTheorem} via a new structural result that we establish, 
roughly showing that the class $\mathcal{L}_{d,K}$ is $(1,\zeta,O(\zeta),O(K))$-well 
behaved for any $\zeta\in (0,1)$ (see \Cref{lem:progress}). 
{For the sake of exposition, 
in Section~\ref{sec:MulticlassAlgorithm} we give a direct algorithm establishing  \Cref{thm-intro:SimplerAlgMulticlassClass}.} 

\medskip

\noindent 
\Cref{thm-intro:SimplerAlgMulticlassClass} gives the first {\em constant-factor} approximate learner for multiclass linear classification 
with adversarially corrupted labels, 
whose complexity is a {\em fixed-degree polynomial in $d$}.
We provide some remarks to interpret this statement. First, even for $K=2$,
known hardness results imply that achieving error $\opt+\eps$ requires
$d^{\poly(1/\eps)}$ time~\cite{DiakonikolasKPZ21,DKR23}. Hence, to achieve
a fixed-degree $\poly(d)$ complexity guarantee, it is necessary to relax the final accuracy. 
The best thing to hope for here is a constant-factor approximation, i.e., 
a final error of $O(\opt)+\eps$ (like the one we achieve). For the $K=2$ 
case, prior work achieves such an error bound~\cite{DKS18-nasty}\footnote{Earlier work~\cite{ABL17} achieves such a constant-factor approximation for the special case
of homogeneous halfspaces.}; and in fact with complexity $\poly(d/\eps)$. 
On the other hand, {\em no non-trivial approximation algorithm was previously 
known for the multiclass case}. 
Importantly, no polynomial-time learner with error 
$o(d) \, \opt+\eps$ 
is known even for $K=3.$
The crucial difference is that the approximation ratio of our algorithm is an {\em absolute constant}, 
independent of the dimension and the number of classes.

We also investigate the complexity of learning multiclass linear classifiers 
in the presence of RCN. We start by establishing 
an SQ lower bound of $d^{\Omega(K)}$ for 
achieving $\opt +\eps$ error. 
Specifically, we show (see \Cref{thm:RCNLowerBoundK}):

\begin{theorem}[SQ Lower Bound for Learning $\mathcal{L}_{d,K}$ with RCN] \label{thm-intro:SQ-RCN}
For any $\eps\le 1/K^{C}$, where $C$ a sufficiently 
large universal constant, 
any SQ algorithm that learns $\mathcal{L}_{d,K}$ 
under the Gaussian distribution in the presence of RCN 
to error $\opt+\eps$, has complexity at least $d^{\Omega(K)}$.
\end{theorem}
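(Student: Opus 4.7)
The plan is to establish the bound via the standard non-Gaussian component analysis (NGCA) SQ lower bound framework, adapted to handle labeled RCN instances in the multiclass setting. I would construct a large family of hard labeled distributions $\{D_\w\}_{\w\in\Sp^{d-1}}$, each obtained from a rank-one MLC in direction $\w$ composed with a carefully chosen RCN confusion matrix, such that every $D_\w$ is nearly indistinguishable from a reference distribution $B$ under which $\x\sim\normal(\vec 0,\vec I_d)$ and $y$ is drawn uniformly from $[K]$ independently of $\x$.

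First I would build a one-dimensional moment-matching gadget: a joint law $A$ on $(x,y)\in\R\times[K]$ with $x\sim\normal(0,1)$ and $y$ obtained by classifying $x$ via $K-1$ thresholds $t_1<\cdots<t_{K-1}$ and then passing the resulting clean label through a row-stochastic RCN confusion matrix $\vec H$ satisfying $\vec H_{ii}\geq\vec H_{ij}+\gamma$ for all $i\neq j$. The goal is to enforce
\[ \E_{(x,y)\sim A}\!\big[h_j(x)\,\Ind(y=i)\big] \;=\; \E_{(x,y)\sim B}\!\big[h_j(x)\,\Ind(y=i)\big] \qquad \forall\,i\in[K],\ 0\leq j\leq K-1, \]
where $\{h_j\}$ are the normalized one-dimensional Hermite polynomials. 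A parameter count is tight: the $K-1$ thresholds and $K(K-1)$ free entries of $\vec H$ give $K^2-1$ unknowns, matched against $K^2-1$ independent constraints (the $j=0$ constraints yield only $K-1$ independent equations by stochasticity). Existence of a solution with strictly positive $\gamma$ would proceed by a genericity argument: verify that the Jacobian of the moment-matching map at a symmetric seed configuration has full rank, then apply the implicit function theorem to perturb into a solution with $\gamma=1/\poly(K)$.

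Next, for each unit $\w\in\Sp^{d-1}$ let $D_\w$ be the distribution on $\R^d\times[K]$ obtained by drawing $\x\sim\normal(\vec 0,\vec I_d)$ and generating $y$ through the gadget applied to $x=\w\cdot\x$. This is exactly an RCN-corrupted labeled instance for the MLC $f_\w(\x)=\argmax_i (c_i\,\w\cdot\x + t'_i)\in\cL_{d,K}$ whose weight vectors are all parallel to $\w$. The Hermite-matching property along $\w$, together with the product Gaussian structure in $\w^\perp$, yields a pairwise correlation bound of the form
\[ \Big|\,\E_B\!\Big[\tfrac{dD_\w}{dB}\cdot \tfrac{dD_{\w'}}{dB}\Big]-1\,\Big| \;\leq\; C_K\,(\w\cdot\w')^{K}, \]
because every Hermite cross-term of degree less than $K$ on the hidden direction cancels and the surviving terms decay as $(\w\cdot\w')^K$.

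Finally, I would pack $\Sp^{d-1}$ with $d^{\Omega(K)}$ unit vectors of pairwise inner product at most $d^{-1/2}$ and invoke the standard SQ-dimension machinery (as in the Diakonikolas-Kane-Stewart NGCA template) to conclude that distinguishing $D_\w$ from $B$ requires $d^{\Omega(K)}$ queries of tolerance at least $d^{-\Omega(K)}$. Since the MLC $f_\w$ underlying $D_\w$ enjoys a $\poly(1/K)$ advantage over the uniform predictor $B$, choosing $\eps\le 1/K^C$ for a sufficiently large constant $C$ ensures that any learner achieving error $\opt+\eps$ strictly refines this distinguishing task, transferring the lower bound to learning. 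The principal obstacle is Step~1: because the parameter count is exactly tight, proving that the moment-matching system admits an RCN-valid solution with a genuine gap $\gamma>0$ requires a careful genericity/perturbation argument; once that is in hand, the embedding and SQ-dimension steps follow from by-now-standard templates.
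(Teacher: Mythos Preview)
Your Steps 2--4 (hidden-direction embedding, pairwise correlation via Hermite cancellation, SQ-dimension, testing-to-learning) are standard and essentially match what the paper does. The genuine gap is in Step~1.

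Write $\vec M(\vec t)\in\R^{K\times K}$ for the moment matrix with entries $\vec M_{j,\ell}=\E[h_j(x)\,\Ind(f(x)=\ell)]$, $j=0,\dots,K-1$. Your constraints read $\vec M(\vec t)\,\vec h_i=(1/K)\,\vec e_0$ for every column $\vec h_i$ of $\vec H$. Whenever $\vec M(\vec t)$ is invertible---which is the generic situation for $K{-}1$ real thresholds---this forces all columns of $\vec H$ to coincide, hence $\vec H=(1/K)\vec J$ and $\gamma=0$. At your ``symmetric seed'' $(\vec t_0,(1/K)\vec J)$ the Jacobian of the moment map with respect to $\vec H$ is $\vec M(\vec t_0)$, while the derivative in $\vec t$ vanishes (the uniform $\vec H$ makes every moment threshold-independent). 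So the local solution set through the seed is exactly $\{(\vec t,(1/K)\vec J)\}$, and no implicit-function-theorem perturbation can raise $\gamma$ above $0$. To obtain $\gamma>0$ you must first force $\vec M(\vec t)$ to be singular, with a kernel structured enough to admit diagonally-dominant column perturbations; that is a codimension condition on the thresholds, not a genericity one, and your sketch does not address it. (A side remark: the constraint count is also off by $K{-}1$, since summing over $i$ kills one equation per degree $j\ge 1$, leaving $K(K{-}1)$ rather than $K^2{-}1$ constraints; but this is not the real obstruction.)

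The paper resolves this by abandoning the one-dimensional gadget altogether and using a two-dimensional classifier with $\Z_K$ rotational symmetry: $f(\x)=\argmax_{k\in[K]}\bigl\langle(\cos\tfrac{2\pi k}{K},\sin\tfrac{2\pi k}{K}),\x\bigr\rangle$, partitioning the plane into $K$ equal angular sectors. In the basis $(x+iy)^a(x-iy)^b$ the sector-moment vectors $(m_\ell)_{\ell\in[K]}$ become root-of-unity vectors $\boldsymbol\omega^{(a-b)}$, so moment matching reduces to placing $\boldsymbol\omega^{(k)}$, $1\le|k|\le K/2{-}2$, into $\ker\vec H^{\top}$. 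This is done \emph{explicitly} by taking $\vec H$ circulant, generated by a suitable real combination of $\boldsymbol\omega^{(0)}$ and $\boldsymbol\omega^{(K/2\pm 1)}$; the resulting $\vec H$ is doubly stochastic with $\gamma=\Omega(1/K^3)$ and annihilates all moments of degree at most $K/2{-}2$. The rotational symmetry is precisely what manufactures a large, structured kernel for free---the ingredient the one-dimensional threshold model lacks.
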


\vspace{-0.4cm}

{
\paragraph{Digression: SQ Lower Bound for MLC with Partial Labels} 
As an additional implication of our SQ lower bound technique 
establishing \Cref{thm-intro:SQ-RCN}, we establish the first 
super-polynomial SQ lower bound for the well-studied task of learning 
multiclass linear classifiers with 
partial (or coarse) labels. This is a classical and widely studied model of misspecification in multi-class classification \cite{cour2011partiallabel, Liu2014supersetlearning, VanRooyencorrupted, Fotakis2021coarselabellearning}.
In this model, instead of observing instance-label pairs 
$(\mathbf{x}^{(i)}, f(\mathbf{x}^{(i)}))_{i=1}^m$, we observe instance-label subsets $(\mathbf{x}^{(i)}, S_i)_{i=1}^m$, where each 
$S_i \subseteq \mathcal{Y}$ and the true label always belongs to the 
observed subset (i.e., $f(\mathbf{x}^{(i)}) \in S_i$).
The goal is to find a hypothesis with error $\eps$ with respect to the 
original labels. A necessary assumption for identifiability in this 
model is that any incorrect label does not appear too frequently in 
the observed subset (as quantified by the parameter $\gamma$ in the 
theorem statement below); 
see \Cref{def:partial-label-distribution} for a formal definition. 
Under this assumption, the sample complexity of ERM for this task 
is $O(dK/(\gamma\epsilon))$ \cite{Liu2014supersetlearning}. 
On the computational front, we establish 
the following SQ lower bound 
(see \Cref{thm:SQ-partial-labels}):

\begin{theorem}[SQ Lower Bound for MLC with Partial Labels]\label{thm:SQ-partial-labels-intro}
Any SQ algorithm that learns $\mathcal{L}_{d,K}$ given  partial labels, with $\gamma= \poly(1/K)$, under the Gaussian distribution to error $O(1/K)$, has complexity $d^{\Omega(K)}$.
 \end{theorem}

\noindent The details are deferred to \Cref{app:SQ-partial}.

}

\medskip

{We now return to our algorithmic contributions.}
Interestingly, using a variant of our general MIM 
algorithm, we obtain an RCN-tolerant learner 
with $O(\opt)+\eps$ error, 
whose complexity is qualitatively better than our agnostic 
learner.  Specifically, we show the following (see \Cref{thm:MetaAlgRCN}):

\begin{theorem}[Learning $\mathcal{L}_{d,K}$ with RCN]\label{thm-intro:AlgRCN-linear}
Assume that the labels follow a distribution obtained by introducing RCN to a function within the class $\mathcal{L}_{d,K}$. 
There is an algorithm that  draws $N= O(d) \,(1 /\eps)^{\poly(K)}$ 
i.i.d.\ labeled samples, 
runs in time $\poly(d,N)$, and returns a hypothesis $h$ such that  
with high probability $\mathrm{err}_{0-1}^D(h) \leq O(\opt)+\eps$. 
\end{theorem}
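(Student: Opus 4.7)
The plan is to derive \Cref{thm-intro:AlgRCN-linear} as a direct application of our general RCN-tolerant MIM learner (\Cref{thm:MetaAlgRCN}) to the class $\mathcal{L}_{d,K}$. The first ingredient is to verify that multiclass linear classifiers form a well-behaved MIM family in the sense of \Cref{def:SimpleGoodCondition} with moment degree $m=1$. By the structural statement \Cref{lem:progress}, which already underlies the adversarial guarantee in \Cref{thm-intro:SimplerAlgMulticlassClass}, $\mathcal{L}_{d,K}$ is $(1,\zeta,O(\zeta),O(K))$-well-behaved for every $\zeta\in(0,1)$: either the current subspace $V$ already captures $f$ up to $O(\zeta)$ error, or conditioning on a suitable value of $\x^V$ produces a degree-$1$ moment of a class indicator that is nontrivially large. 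This gives the required first-moment detection, while $\Gamma=O(K)$ is standard for MLCs.

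The second step is to understand how RCN interacts with \Cref{def:SimpleGoodCondition}. Under RCN with confusion matrix $\mathbf{H}$ we have
\[\E[\Ind(y=z)\mid \x^V=\x_0^V] \;=\; \sum_{j\in[K]} \mathbf{H}_{z,j}\,\E[\Ind(f(\x)=j)\mid \x^V=\x_0^V],\]
so the noisy conditional moments are an invertible linear map (because $\mathbf{H}$ is $\gamma$-diagonally-dominant) of the clean ones. Whenever the clean $f$ admits a large degree-$1$ moment on $U=(V+W)\cap V^\perp$, at least one $z\in[K]$ produces a comparable signal on the noisy labels, which is exactly the regime in which \Cref{thm:MetaAlgRCN} operates with $\zeta$ a small constant multiple of $\opt+\eps$. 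Because $m=1$, the moment estimators are linear polynomials in $\x^{V^\perp}$, so the sample rate at this stage is $N_1=O(d)\,(1/\eps)^{\poly(K)}$; the iterative procedure terminates after $\poly(K)$ rounds and returns a subspace $V$ of dimension $\poly(K)$ on which $f$ is approximated up to $O(\opt)+\eps/2$.

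Third, once a good $V$ is in hand, the task reduces to learning an MLC in a space of dimension $\poly(K)$ from RCN-corrupted labels, to excess error $O(\opt)+\eps/2$. This low-dimensional problem can be solved in time and samples $(1/\eps)^{\poly(K)}$ by combining a $\poly(K,1/\eps)$-resolution grid over the $O(K\cdot\dim V)$ parameters of a candidate MLC with the standard unbiased-risk-estimation trick for RCN (using the strict diagonal dominance of $\mathbf{H}$ to invert the noise). Crucially, we avoid the $2^{\poly(K/\eps)}$ brute-force enumeration over labelings that the adversarial proof of \Cref{thm:SimpleMetaTheorem} needs, because RCN provides an unbiased variance-controlled surrogate loss.

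The main obstacle is to align these ingredients so that the final error is $O(\opt)+\eps$ rather than the $\tau+\opt+\eps$ bound of \Cref{thm:SimpleMetaTheorem}; this requires running the RCN variant of the subspace procedure with $\tau=O(\opt)$, hence $\zeta\asymp \opt+\eps$. Since $\opt$ is a priori unknown in the RCN model, we try a geometric grid of $O(\log(1/\eps))$ guesses and select the winning hypothesis by cross-validation on a fresh sample; at least one guess is within a constant factor of $\opt$, and cross-validation preserves the $O(\opt)+\eps$ guarantee. A secondary technical hurdle is controlling the variance of the localized conditional moment estimators, which is handled by the smooth localization kernel used in the proof of \Cref{thm:MetaAlgRCN}; the $O(d)$ sample dependence then falls out because only degree-$1$ polynomials in $\x^{V^\perp}$ need to be estimated at each round.
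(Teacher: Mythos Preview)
Your high-level plan—verify via \Cref{lem:progress} that $\mathcal{L}_{d,K}$ is $(1,\zeta,O(\zeta),O(K))$-well-behaved and then invoke \Cref{thm:MetaAlgRCN}—is exactly what the paper does. But two of your supporting steps have real gaps.

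First, step~2 does not justify the sentence ``the iterative procedure terminates after $\poly(K)$ rounds and returns a subspace $V$ of dimension $\poly(K)$''. This is precisely where RCN buys you the improvement over the adversarial bound; without it you are back to the $\poly(K/\eps)$-dimensional $V$ of the agnostic analysis, whose final partition alone already costs $2^{\poly(K/\eps)}$. You do note that noisy moments are a linear combination of the clean ones, but you then emphasize \emph{invertibility} of $\mathbf{H}$ and the forward implication ``clean signal $\Rightarrow$ noisy signal''. Neither is the relevant consequence. What matters (this is \Cref{lem:improvedCorrelation} in the paper) is that under RCN $y$ depends on $\x$ only through $f(\x)=f(\x^{W})$, so every Hermite coefficient of $\Ind(y=z)$ in any direction orthogonal to $W$ vanishes identically; hence \emph{every} vector the moment routine returns has correlation at least $1-\eps$ with some unit $\vec w\in W$, not merely $\poly(\eps/K)$ as in the agnostic case. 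That is what pins $\dim V_t$ to $O(K)$ and terminates the loop in $O(K)$ rounds.

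Second, step~3 and the ``main obstacle'' paragraph are unnecessary and partly unsound. The unbiased-risk trick needs control of $\|\mathbf{H}^{-1}\|$, but the RCN condition $\mathbf{H}_{ii}>\mathbf{H}_{ij}+\gamma$ is \emph{not} diagonal dominance and does not even guarantee invertibility—the construction behind \Cref{thm-intro:SQ-RCN} produces such $\mathbf{H}$ with nontrivial kernel. The paper avoids all of this by reusing the piecewise constant classifier $h_{\mathcal S}$: once $\dim V=O(K)$ the partition has $(K/\eps')^{O(K)}=(1/\eps)^{\poly(K)}$ cubes, and the most frequent noisy label in each cube already achieves error $\tau+\opt+\eps=O(\opt)+\eps$. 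No geometric search over guesses of $\opt$ is required either: the algorithm never takes $\opt$ as input, and the analysis simply instantiates the well-behaved parameter at $\zeta=\opt+\eps$.
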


{
\begin{remark}
Recent work~\cite{DMRT25} has shown that 
the above error guarantee is not possible 
for efficient SQ algorithms in the distribution-free setting. Specifically, 
any SQ algorithm for MLC with RCN that achieves {\em any} constant factor 
approximation requires super-polynomial in $d$ time. 
This hardness result further motivates the study of 
distribution-specific learning in the multiclass setting, 
even with random noise only.
\end{remark}
}

\vspace{-0.2cm}

\paragraph{Intersections of Halfspaces}  
An intersection of $K$ halfspaces is any  
function $f:\R^d\to\{\pm 1\}$ for which there exist $K$ halfspaces 
$h_i(\x) = \sign(\vec w^{(i)}\cdot\x+t_i)$ such that 
\( f(\x) = 1 \) if and only if \( h_i(\x) = 1 \) for all \( i \in [K]\). 
We denote by $\mathcal{F}_{K,d}$ the class of intersections of $K$ halfspaces on $\R^d$. 

The family of intersections of halfspaces is a fundamental concept 
class that has been extensively studied in many contexts 
for at least the past five decades. We show that our general MIM 
algorithm can be applied to $\mathcal{F}_{K,d}$, 
leading to a novel accuracy-complexity tradeoff in the agnostic setting. 
Specifically, we obtain the following result 
(see \Cref{thm:LearningIntersections} for a more detailed statement):

\begin{theorem}[Learning $\mathcal{F}_{K,d}$ with Adversarial Label Noise] \label{thm:SimpleLearningIntersections}
There exists an agnostic learning algorithm that draws 
$N =O(d^2) \, 2^{\poly(K/\eps)}$ i.i.d.\ labeled samples, 
runs in time $\poly(d,N)$, and computes a hypothesis $h$ such that with high probability  
$\mathrm{err}_{0-1}^D(h) \leq K\widetilde O(\opt)+\eps$, where $\opt$ 
is defined with respect to $\mathcal F_{K,d}$.
\end{theorem}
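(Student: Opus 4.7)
The plan is to derive Theorem~\ref{thm:SimpleLearningIntersections} as a corollary of the general meta-theorem \Cref{thm:SimpleMetaTheorem} by showing that the class $\mathcal{F}_{K,d}$ of intersections of $K$ halfspaces is $(m,\zeta,\tau,\Gamma)$-well-behaved for the right parameter choices: $m=2$ (to explain the $d^2$ sample complexity), $\Gamma = O(\sqrt{\log K})$, $\tau = K\widetilde{O}(\zeta)$, and $\zeta = \opt + \eps$. Once this is in hand, plugging into \Cref{thm:SimpleMetaTheorem} directly yields the stated $N = O(d^2)\,2^{\poly(K/\eps)}$ bound and the error guarantee $K\widetilde{O}(\opt) + \eps$.

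The first inputs to the meta-theorem are easy. Every $f \in \mathcal{F}_{K,d}$ is by construction a $K$-MIM (it depends only on the span of the $K$ normals $\vec{w}^{(i)}$). The GSA bound $\Gamma = O(\sqrt{\log K})$ is the classical result of Klivans--O'Donnell--Servedio for intersections of $K$ halfspaces under the Gaussian distribution. The bulk of the work is the structural claim: for any $f \in \mathcal{F}_{K,d}$, any noisy relabeling $y(\x)$ with $\Pr[f(\x)\neq y(\x)]\le \zeta$, and any subspace $V$, either some $g:V\to\{\pm1\}$ approximates $f$ to error $\tau = K\widetilde{O}(\zeta)$, or there is a non-negligible set of centers $\x_0$ and a mean-zero variance-one polynomial $p$ of degree at most $2$ on $U = (V+W)\cap V^\perp$ such that the conditional expectation $\E[p(\x^U)\,\Ind(y(\x)=z) \mid \x^V = \x_0^V]$ is $\poly(\zeta)$-large for some $z\in\{\pm 1\}$.

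The proof of the structural claim proceeds by contrapositive. Fix a subspace $V$ and consider the slice function $f_{\x_0^V}(\x^U) := f(\x)$ obtained by conditioning on $\x^V = \x_0^V$; restricted to $U$, this is itself an intersection of at most $K$ halfspaces (with thresholds shifted by the $V$-component). If for a $1-\tau/2$ mass of slices, $f_{\x_0^V}$ is $(K\widetilde O(\zeta))$-close to a constant, then defining $g(\x_0^V)$ to be the majority constant yields a function on $V$ that approximates $f$ to error $\tau$, giving case (a). Otherwise, on a non-negligible set of slices, $f_{\x_0^V}$ is not close to any constant. For such slices, a Fourier/Hermite-analytic argument for intersections of halfspaces of GSA $O(\sqrt{\log K})$ shows that the low-degree Hermite mass on $U$ is $\Omega(1/\polylog K)$: either the degree-1 Hermite projection is non-trivial (one halfspace is ``active'' in $U$ with a moderate threshold), or, when the degree-1 part cancels (as can happen for symmetric intersections such as slabs), the degree-2 part must absorb at least an $\Omega(1)$ fraction of the variance by the GSA tail bound. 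Correlating this with the noisy labels $\Ind(y(\x)=z)$ and averaging over the slice, we extract the required $\poly(\zeta)$-magnitude degree-$\le 2$ conditional moment, yielding case (b).

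The main obstacle is exactly this quantitative Hermite/moment inequality: showing that an intersection of $K$ halfspaces which is $\Omega(K\widetilde O(\zeta))$-far from constant on $U$ must exhibit a degree-$\le 2$ Hermite coefficient of magnitude at least $\poly(\zeta/K)$, rather than hiding all of its non-trivial mass at higher degrees. The loss factor of $K\widetilde{O}(\cdot)$ in the error guarantee reflects the fact that the witness found at each iteration might only eliminate the influence of a single halfspace at a time, so the union over $K$ halfspaces and the logarithmic factors from the GSA propagate multiplicatively into the final error, producing the $K\widetilde O(\opt)+\eps$ bound in the theorem statement. Once this structural inequality is established, feeding $(m,\zeta,\tau,\Gamma) = (2,\,\opt+\eps,\,K\widetilde O(\opt+\eps),\,O(\sqrt{\log K}))$ into \Cref{thm:SimpleMetaTheorem} produces the claimed sample complexity $d^{2}\,2^{\poly(K/\eps)}$, runtime $\poly(d,N)$, and error $K\widetilde O(\opt)+\eps$.
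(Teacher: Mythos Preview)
Your high-level plan matches the paper exactly: reduce to \Cref{thm:SimpleMetaTheorem} by verifying that $\mathcal{F}_{K,d}$ is $(2,\zeta,K\widetilde{O}(\zeta),O(\sqrt{\log K}))$-well-behaved, cite Klivans--O'Donnell--Servedio for the GSA bound, and handle case~(b) by slicing on $V$ and reducing to a per-slice statement about the restricted intersection. The reduction step (if $f$ is far from every $V$-measurable function then on a non-negligible set of fibers the restriction is far from constant) is also what the paper does in \Cref{it:LargeError2FractionofStripsInter}.

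The gap is in how you justify the per-slice structural claim. Your argument that ``the degree-2 part must absorb at least an $\Omega(1)$ fraction of the variance by the GSA tail bound'' does not go through: the GSA/noise-stability inequality (Ledoux--Pisier) only controls a weighted Hermite tail and hence a kind of average degree; it is perfectly consistent with bounded GSA that the non-constant Hermite mass lives entirely at degrees $\ge 3$. You need a fact specific to the geometry of intersections of halfspaces. The paper's engine (\Cref{lem:progressIntersections}) is Vempala's Gaussian variance-reduction fact (\Cref{fact:VarianceMonotonicity}): along any defining normal $\vec w^{(i)}$ one has $\mathrm{Var}[\vec w^{(i)}\cdot\x\mid f(\x)=1]\le 1-c\,e^{-\max(0,t_i)^2/2}$. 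Assuming all degree-1 moments $\E[\vec w^{(i)}\cdot\x\,y]$ are small, one writes $\E[(\x\x^\top-\vec I)y]$ as a clean term plus a noise term; a union bound finds some $i$ with $t_i\le 2\sqrt{\ln(K/\Pr[f\neq 1])}$, and variance reduction then drives the clean quadratic form along $\vec w^{(i)}$ down to roughly $-\Pr[f=1]\Pr[f\neq 1]/K$, while the noise term is $O(\opt\log(1/\opt))$ by a direct tail estimate (\Cref{cl:truncatedmeans}). This is also where the $K\widetilde{O}(\opt)$ threshold actually comes from---the $1/K$ variance-reduction gain must dominate the $\opt\log(1/\opt)$ noise on a single slice---not from ``eliminating one halfspace per iteration'' as you suggest.
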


\noindent \Cref{thm:SimpleLearningIntersections} is obtained from 
\Cref{thm:SimpleMetaTheorem} via a new structural result,  
roughly showing that the class $\mathcal{F}_{d,K}$ is 
$(2,\zeta,K\tilde{O}(\zeta),O(\sqrt{\log(K)}))$-well 
behaved for any $\zeta\in (0,1)$ (see \Cref{prop:GoodCondIntersections}).

\medskip

\noindent {To highlight the 
significance of \Cref{thm:SimpleLearningIntersections}, some remarks are in order. 
Note that the information-theoretically optimal 
error in our setting is $\opt+\eps$. The only known algorithm 
achieving this guarantee is the $L_1$ polynomial regression~\cite{KKMS:05, KOS:08}, 
which has complexity $d^{O( {\log K}/{\epsilon^4})}$ for the 
class $\mathcal{F}_{d,K}$. (This complexity bound was recently shown to be near-optimal 
in the SQ model~\cite{DiakonikolasKPZ21, HsuSSV22}.)

A fundamental open question in the literature has been whether
one can achieve {\em near-optimal error} guarantees with 
qualitatively improved sample and computational complexity. 
\Cref{thm:SimpleLearningIntersections} answers this question in the 
affirmative. Specifically, for $K=O(1)$, it achieves 
near-optimal error guarantee (up to a $O(\log(1/\opt))$ factor), 
and its complexity has {\em fixed-polynomial dependence in the 
dimension} (namely, $d^2$ as opposed to $d^{f(k, 1/\eps)}$). 

In comparison, \cite{DKS18-nasty} developed 
a $\poly(d) (1/\eps)^{\poly(K)}$ time algorithm, achieving an 
error guarantee of $\poly(K) \tilde{O}(\opt^{1/11})$. The key 
distinction is that the error guarantee achieved by the 
\cite{DKS18-nasty} algorithm is highly suboptimal 
as a function of $\opt$ (by a large polynomial factor). Additionally,
the dependence on $K$ in our \Cref{thm:SimpleLearningIntersections} is significantly better by a polynomial factor. 
}

\vspace{-0.3cm}

\subsection{Overview of Techniques} \label{ssec:techniques}
\paragraph{Prior Techniques and Obstacles for Learning Multi-Index Models}  
Recall that our goal is, 
given access to noisy samples from a well-behaved MIM 
with finite label set, to learn the underlying function $f$ to high accuracy. 
We will use $W$ to denote the hidden $K$-dimensional subspace 
to whose projection $f$ depends.
The standard approach for solving such learning problems 
(see, e.g., \cite{Vempala10}) is to first learn some suitable approximation $V$ to $W$, and then use exhaustive search 
to learn a function of the projection onto $V$.
Computing such an approximation to $W$ 
is usually achieved by estimating the low-order moments 
of the level sets of $f$.
In particular, for each output value $i$ and 
each low-degree polynomial $p$, we want to approximate 
the expectation of $p(\x)\Ind(f(\x) = i)$. 
In the absence of label noise, these moments should depend 
solely on how $p$ interacts with the directions aligned with $W$.
Using this property, one can extract an approximation 
of a subspace of $W$, by choosing $V$ to be the space 
spanned by all directions with reasonably large moments.  
This approach yields an algorithm with runtime approximately 
$d^{O(m)}$, where $m$ is the number of moments being computed, 
plus some (often exponential) function of $\dim(V)$---which is 
typically regarded as ``constant'', since $d$ is much larger than $\dim(V)$.

Unfortunately, there may exist directions within $W$ 
where all low-degree moments vanish.
An innocuous such example occurs when $f$ depends only 
on the projection of $\x$ onto a subspace $W' \subseteq W$. 
In this case, all moments would vanish for directions in $W \cap 
W'^{\perp}$. However, if such moments vanish for different 
reasons---for example, due to label noise---the aforementioned 
approach could be problematic.
Therefore, in order to analyze this class of algorithms, 
one needs to prove that every direction $\vec v\in W$ 
for which the moments of $f$ vanish is a direction where $f$ approximately does not depend on---i.e., $f$ is close 
to a function of the projection onto $\vec v^{\perp}$.
Unfortunately, showing that the moments of $f$ are non-vanishing 
for all relevant directions can be challenging, which limits the applicability of this approach.

\vspace{-0.2cm}

\paragraph{New Dimensionality Reduction Technique} The main 
algorithmic contribution of this work is the design of a 
new, more general, approach 
that leads to significant improvements on the aforementioned dimensionality reduction technique. 
At a high-level, we design an iterative dimensionality reduction procedure (\Cref{alg:MetaAlg1,alg:MetaAlg2}) working as follows.
At each step{/iteration}, after learning an approximating subspace $V$, we partition $V$ into small rectangular regions $S$, each {of} probability mass at least $\alpha$, 
{where $\alpha$ is a carefully selected parameter.} 
We then compute the first $m$ moments of {$y(\x)$, where $y(\x)$ 
is the noisy version of $f(\x)$,}
conditioned on the projection of $\x$ to $V$ lying in each region $S$; 
this step requires roughly $d^{m}(1/\alpha)$ samples and runtime 
(see \Cref{alg:MetaAlg2}).
In the noiseless case (i.e., with clean labels), 
non-vanishing moments should still align with $W$, 
providing us with a better approximation.
We then find all directions with non-vanishing moments 
across all regions $S$, add them to $V$, and repeat the process.
We terminate this procedure once $y$ is sufficiently close 
to a function $g$ of the projection of $\x$ onto $V$.
Finally, using exhaustive search methods, 
we can recover an $\eps$-approximation to $g$, 
using roughly $(1/\eps)^{\dim(V)}$ samples and runtime.

The main advantage of our new method is that it 
succeeds under much weaker conditions, as compared to previous approaches.
As mentioned earlier, {prior} techniques require that 
every direction in $W$ to be either unimportant 
or have a non-trivial low-degree moment.
In contrast, our approach only requires that for each subspace $V$, 
either $f$ is approximately a function of the projection onto $V$, 
or that there exists some region $S$ with a non-trivial low-degree moment 
in a direction {in $V^{\perp}$} correlated with $W$ (see \Cref{def:GoodCondition}).
If we can {establish this structural property}, it follows 
that unless $f$ is already close to a function 
of the projection onto $V$ (in which case we can apply a brute-force method), 
one iteration of the algorithm will find a direction in $V^{\perp}$ 
that is at least slightly correlated with $W$.
As a result, in the next iteration, the distance between $V$ and $W$ decreases; 
hence, our algorithm will eventually terminate.

{
This generalization of the previously used one-shot dimension reduction technique has two main advantages.
Practically speaking, the more relaxed condition is 
often easier to prove. This allows, for example, our 
qualitative improvements in the error for robustly 
learning intersections of halfspaces. As far as we 
know, the one-shot dimension reduction technique may 
suffice to obtain error $\tilde O(\opt)$ in this 
case. However, the difficulty of proving such results 
means that the best analysis we know of gives 
substantially worse bounds (and only after a quite 
involved analysis). On the other hand, 
the condition needed for our iterative dimension 
reduction is somewhat easier to obtain, 
giving us a substantially better $\tilde O(K\opt)$ 
error with a simpler analysis.

Furthermore, our new technique allows us to 
efficiently learn a broader family of MIMs compared 
to what was previously possible 
with one-shot dimension reduction techniques, even in principle. 
In particular, as we show in \Cref{app:ss}, 
there exist Multi-Index Models that  
cannot be learned to better than constant error 
with any constant number of moments 
using the one-shot technique; and on the other hand, 
can be efficiently learned to arbitrary 
accuracy using only second moments via 
our iterative technique.
}

One disadvantage of our method {as described above} is that, 
while the runtime is typically polynomial 
in the dimension $d$, the dependence on other parameters can be problematic. 
Specifically, the number of regions $S$ is usually exponential in $\dim(V)$; 
hence, at each iteration, $\dim(V)$ can increase in the worst-case 
by an exponential amount. This can lead to power-tower dependencies 
on some parameters. We circumvent this issue by assuming that 
our function has a small fraction of regions $S$ with non-vanishing moments.
With this assumption, we can only add directions with large moments 
in a reasonable fraction of regions using a filtering step (see Line \ref{line:matrixMeta}), keeping the increase in $\dim(V)$ polynomial.
This bound can be further improved if {the noisy version of $f(\x)$ that we observe} 
is actually a function of the projection onto $W$, 
which holds in both the realizable and the RCN settings (see \Cref{sec:RCNAlgo}).
In this case, non-vanishing moments will exactly align with directions along $W$, 
up to a measurement error, ensuring that $\dim(V)$ never exceeds $\dim(W)$ 
(see \Cref{thm:MetaAlgRCN}).
{
For an illustration 
of our iterative procedure for the special case of multiclass linear classification, see \Cref{fig:algorithm-snapshot} in \Cref{sec:MulticlassAlgorithm}.}

\paragraph{SQ Lower Bounds} An appealing feature of our algorithmic 
technique is that it is provably qualitatively tight 
in the Statistical Query (SQ) model (see \Cref{sec:SQ-lower-bounds}).
In particular, suppose that a function $f$ has a subspace $V$ 
so that on the preimage of the projection onto any point $\x$ in $V$, 
the low-degree moments of $f$ are nearly vanishing. 
Then we prove an SQ lower bound showing that it is SQ-hard 
to distinguish between (i) the distribution $(\x,h(\x))$, 
where $h$ is a random rotation of the function $f$ about $V$, 
and (ii) a product distribution $(\x,y)$, where $\x$ and $y$ are conditionally
independent on $\x^V$ (see \Cref{thm:generalSQ}).
This means that unless $f$ is close to some function of the projection onto $V$, 
it is hard to learn $f$ in the SQ model.
In more detail, unless $f$ has the property that for each $V$, 
either $f$ is $\tau$-close to a function of the projection onto $V$, 
or there is some preimage with a non-trivial moment of degree at most $m$, 
then any SQ algorithm that learns $f$ to accuracy better than roughly $\tau$ 
either uses exponentially many queries or at least one query of accuracy 
smaller than $d^{-\Omega(m)}$. In summary, our algorithm learns $f$ 
if and only if there is an SQ algorithm that does so with comparable parameters 
as a function of $d$.

Finally, we give an overview of our SQ lower bound for learning multiclass linear classifiers with RCN (see \Cref{thm:RCNLowerBoundK} in \Cref{sec:RCNSQ}). 
This result leverages the fact that statistical queries correspond 
to evaluations of the confusion matrix (see \Cref{def:RCNdistribution}). 
If this confusion matrix is close to being non-invertible, 
then there is missing information between the noisy and noiseless queries.
To prove this, we find a low-dimensional concept, $f$, and a corresponding 
confusion matrix, $\vec{H}$, for which the first $\Omega(K)$ moments 
of the noisy label sets match the moments of the standard Gaussian distribution. 
As a result, we establish SQ-hardness in distinguishing between 
the distribution of random rotations of $f$ in the $d$-dimensional space 
and a product distribution.

\paragraph{Applications to Well-studied Concept Classes}   
We show that our iterative dimension-reduction technique 
yields significantly improved learners for 
multiclass linear classifiers and intersections of halfspaces in the presence of noise. 
In the regime of agnostically learning multiclass classifiers with $K$ classes 
(see \Cref{sec:MulticlassAlgorithm}), we show that our algorithm achieves 
an error of $O(\opt)+\eps$, using roughly $d2^{\poly(K/\eps)}$ samples and runtime 
(see \Cref{thm:constantApprox}). We show that if the noisy labels $y$ 
are at least $\Theta(\opt)+\eps$ far  from any function depending 
on the projection onto a subspace $V$ (otherwise, an exhaustive search in $V$ 
would yield a function $g$ with the desired error bound), 
then at least an $\epsilon$ fraction of our partition $S$ of $V$, 
will have the optimal $f$ more than $\Theta(\pr[f(\x) \neq y \mid S])$ 
far from any constant function. In this case, we establish a new 
structural result (see \Cref{prop:alg2}) showing 
that the first moments of the sub-level sets of $y$ 
align with $W$ on a non-trivial fraction of the partition sets $S$.
{By using this structural result, 
we conclude that the class of multiclass linear classifiers is  
$(1,\zeta,O(\zeta),O(K))$-well behaved, as in 
\Cref{def:SimpleGoodCondition}, for any $\zeta\in (0,1)$. Therefore, 
by applying our general algorithm we can obtain a learner with the 
aforementioned efficiency (\Cref{thm:SimpleMetaTheorem})}

Furthermore, we obtain an algorithm with improved time-accuracy tradeoff 
for agnostically learning intersections of $K$ halfspaces under the 
Gaussian distribution (see \Cref{sec:applications}). 
Specifically, we show that an error of $K\widetilde{O}(\opt)$ 
can be achieved with complexity $d^2 2^{\poly(K/\eps)}$ 
(see \Cref{thm:LearningIntersections}). An analogous structural result 
to the one we showed for multiclass linear classifiers can be established 
to prove that unless the optimal classifier is $K\widetilde{O}(\opt)$-close 
to a constant function, there exists a low-order moment 
(of degree at most two) that correlates non-trivially 
with the unknown subspace $W$ (see \Cref{lem:progressIntersections}). 
{Using this result, we deduce that the class of intersections 
of $K$ halfspaces is $(2,\zeta,K\tilde{O}(\zeta),O(\sqrt{\log(K)}))$-well behaved, 
as in \Cref{def:SimpleGoodCondition}, for any $\zeta\in (0,1)$. 
Hence, our learner again follows by applying our general 
iterative dimension-reduction technique. 
}

\subsection{Prior Work}
The task of learning MIMs is fundamental in machine learning, 
with a long line of research in both statistics and computer 
science~\cite{Friedman:1980tu, Huber85-pp, Li91, 
HL93,xia2002adaptive, Xia08}.
The majority of the literature focuses on learning various structured families of MIMs under natural 
distributional assumptions with clean labels~\cite{GLM18,JSA15,DH18,BJW18,GeKLW19,DKKZ20,DLS22,BBSS22,CDGJM23}.
A recent line of work has shown that 
achieving optimal error in the adversarial label setting 
under the Gaussian distribution cannot be achieved in fixed-degree
polynomial time, even for very basic MIM classes \cite{DiakonikolasKPZ21, DKMR22b, Tieg23}. These hardness results motivate the design of faster algorithms with relaxed error guarantees, which is the focus of this paper. Despite this being a natural direction, no general algorithmic framework was previously known in this setting.

The recent work~\cite{DKKTZfocs24} developed 
agnostic learners for MIMs under the Gaussian distribution with 
fixed-parameter tractability 
guarantees similar to ours. The major difference is that, 
while our framework relies on learning from random examples, 
the latter work~\cite{DKKTZfocs24} assumes query access to the target 
function---a much stronger access model.  

The task of efficiently learning intersections of halfspaces 
under Gaussian marginals is one of the most well-studied problems 
in the computational learning theory literature. A 
long line of work focused on achieving strong provable 
guarantees for this problem in the realizable setting \cite{Vempala10,ArriagaVempala:99,KlivansLT09,Kos:04}. 
However, the aforementioned work either fails 
in the presence of noise or scales superpolynomially with respect to the dimension (i.e., does not achieve fixed-degree polynomial runtime). 
An exception is the recent work~\cite{DKS18-nasty} that 
sacrifices the optimal error guarantee for a polynomial dependence 
in the input dimension.

Binary linear classification in the presence of noise 
under well-behaved distributions is a fairly well-understood problem, 
with a substantial body of work; see, 
e.g.,~\cite{KKMS:05,ABL17, DKS18-nasty, DKTZ20, DKTZ20c, 
DiakonikolasKKT21, DKKTZ22} and references therein. However, in the 
multiclass setting, most known results focus on online or bandit 
classification \cite{banditron,BPSBDC19}. That is, 
the subject of designing efficient algorithms in the multiclass setting 
that work in the presence of noise is rather poorly understood.
It is worth pointing out that the multiclass regime 
offers several new interesting models of noisy information that are 
challenging, even if the noise is random~\cite{PRMNQ17,VanRooyencorrupted,
Fotakis2021coarselabellearning,cour2011partiallabel}.

\subsection{Notation and Preliminaries}
\label{sec:prelims}

\medskip

\noindent {\bf Basic Notation}
For $n \in \Z_+$, let $[n] := \{1, \ldots, n\}$ {and $\overline \R\eqdef \R\cup \{\pm \infty\}$}. {We denote by $2^S=\{T\mid T\subseteq S\}$ the power set of the set $S$.}  We use small boldface characters for vectors
and capital bold characters for matrices.  For $\bx \in \R^d$ and $i \in [d]$, $\bx_i$ denotes the
$i$-th coordinate of $\bx$, and $\|\bx\|_2 := (\littlesum_{i=1}^d \bx_i^2)^{1/2}$ denotes the
$\ell_2$-norm of $\bx$. {Throughout this text, we will often omit the subscript and simply write $\|\x\|$ for the $\ell_2$-norm of $\bx$.}
We will use $\bx \cdot \by $ for the inner product of $\bx, \by \in \R^d$
and $ \theta(\bx, \by)$ for the angle between $\bx, \by$. 
{For vectors $\x, \vec v\in \R^d$ and a subspace $V\subseteq \R^d$ denote by $\x^{V}$ the projection of $\x$ onto $V$, {$\x^{\perp V}$ the projection of $\x$ onto the orthogonal complement of $V$} and by $\x^{\vec v}$ the projection of $\x$ onto the line spanned by $\vec v$.
For two subspaces $V,W\subseteq \R^d$, we denote by $W^{V}=\{\w^{V}:\w\in W\}$ and by $V+W=\{\w+\vec v: \w\in W, \vec v\in V\}$, note that both  $W^{V}$ and $V+W$  are subspaces.
Furthermore, for a set of vectors  $L\subseteq \R^d$, we denote by $\spaning(L)$ the subspace of $\R^d$ defined by their span.}
We slightly abuse notation and denote {by}
$\vec e_i$ the $i$-th standard basis vector in $\R^d$.  
{For a matrix $A \in \mathbb{R}^{m \times n}$, the Frobenius norm of $A$ is denoted by 
$\norm{A}_{F}=(\sum_{i=1}^m\sum_{j=1}^nA_{i,j}^2)^{1/2}$. 
For a set $A\subseteq \R^d$ and a point $\x\in \R^d$, we define the distance from $\x$ to $A$ as $\mathrm{dist}(\x,A)=\min_{\vec y\in A}\norm{\x-\vec y}$.
}

We use the standard asymptotic notation, where 
$\wt{O}(\cdot)$ is used to omit polylogarithmic factors.
{Furthermore, we use $a\lesssim b$ to denote that there exists an absolute universal constant $C>0$ (independent of the variables or parameters on which $a$ and $b$ depend) such that $a\le Cb$, $\gtrsim$ is defined similarly.
We use the notation $g(t)\le \poly(t)$ for a quantity $t\ge1$ to indicate   that there exists constants $c,C>0$ such that $g(t)\le Ct^c$. Similarly we use $g(t)\ge \poly(t)$ for a quantity $t<1$ to denote  that there exists constants $c,C>0$ such that $g(t)\ge Ct^c$.
}

\medskip

\noindent {\bf Probability Notation}
We use $\E_{x\sim D}[x]$ for the expectation of the random variable $x$ according to the
distribution $D$ and $\pr[\mathcal{E}]$ for the probability of event $\mathcal{E}$. For simplicity
of notation, we may omit the distribution when it is clear from the context.  For $(\x,y)$
distributed according to $D$, we denote {by} $D_\x$ to be the distribution of $\x$ and {by} $D_y$ to be the
distribution of $y$.
Let $\normal( \boldsymbol\mu, \vec \Sigma)$ denote the $d$-dimensional Gaussian distribution with mean $\boldsymbol\mu\in  \R^d$ and covariance $\vec \Sigma\in \R^{d\times d}$. We denote by $\phi_d(\cdot)$ the pdf of the $d$-dimensional {standard normal} and we use  $\phi(\cdot)$ for the pdf of the {$1$-dimensional} standard normal.

For additional preliminaries, we refer the reader to \Cref{sec:Addprelims}.

\subsection{Organization}
{The paper is structured as follows: 
In \Cref{sec:MulticlassAlgorithm}, we present a special case of our 
subspace approximation procedure for learning multiclass linear 
classifiers under agnostic noise, 
establishing \Cref{thm-intro:SimplerAlgMulticlassClass}.
In \Cref{sec:generalAlgorithm}, we generalize  this approach, stating 
the necessary conditions (see \Cref{def:SimpleGoodCondition}) for our 
algorithm to work for general classes of MIMs, establishing \Cref{thm:SimpleMetaTheorem}. In \Cref{sec:applications}, we present 
additional  applications of our algorithmic approach. In \Cref{sec:intersections},
we apply our general theorem to the class of intersections of halfspaces, establishing \Cref{thm:SimpleLearningIntersections}. 
In \Cref{sec:RCNAlgo}, we adapt our general algorithm 
for the case of random noise, and demonstrate that it yields 
qualitative stronger complexity guarantees 
(\Cref{thm-intro:AlgRCN-linear}). 
Finally, \Cref{sec:SQ-lower-bounds} gives our SQ lower bounds.
First,  we demonstrate that the complexity of our algorithm is qualitatively optimal as a function of the dimension, establishing \Cref{thm:simplegeneralSQ}. Second, we give an SQ lower bound 
(\Cref{thm-intro:SQ-RCN}) 
for learning multiclass linear classifiers under random classification noise (and in the presence of coarse labels).

\section{Learning Multiclass Linear Classifiers with Adversarial Label Noise}
\label{sec:MulticlassAlgorithm}

{In this section, we establish \Cref{thm-intro:SimplerAlgMulticlassClass}.
Note that we introduce this special case as a preliminary step to build intuition for our general algorithm in \Cref{sec:generalAlgorithm}.
First, we state the formal version of the theorem and  provide the algorithm pseudocode along with the  necessary definitions.}

\begin{theorem}[Learning Multiclass Linear Classifiers with Agnostic Noise]\label{thm:constantApprox} Let $\mathcal{L}_{d,K}$ be the class of $K$-class linear classifiers in $\R^d$.
   Let $D$ be a distribution over $\mathbb{R}^d \times [K]$ whose $\x$-marginal is $\mathcal{N}(\vec0, \vec I)$,  let $\opt=\inf_{f\in \cL_{d,K}}\pr_{(\x,y)\sim D}[f(\x)\neq y]$ and let $\epsilon, \delta \in (0,1)$. {There exists a sufficiently  large universal constant $C>0$ such that} \Cref{alg:constantApprox} draws $N = d 2^{\poly(1/\eps,K)}\log(1/\delta)$ i.i.d.\ samples from $D$, runs in $\poly(N)$ time, and returns a  hypothesis $h$ such that, with probability at least $1 - \delta$, it holds that $\pr_{(\x,y)\sim D}[h(\x)\neq y] \leq C\cdot\opt +\eps$.
\end{theorem}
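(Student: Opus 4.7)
I would prove \Cref{thm:constantApprox} as a direct instantiation of the iterative subspace approximation scheme outlined in \Cref{ssec:techniques}. Concretely, the plan is to (i) show that the class $\cL_{d,K}$ fits the well-behaved framework of \Cref{def:SimpleGoodCondition} with parameters $(m,\tau,\Gamma)=(1,\,O(\zeta),\,O(K))$ for every noise level $\zeta\in(0,1)$, and then (ii) run the general iterative procedure with $\zeta = \opt+\eps$, which by \Cref{thm:SimpleMetaTheorem} yields error $\tau+\opt+\eps = O(\opt)+\eps$. Since the first non-vanishing moment we exploit is of degree $m=1$, the sample/runtime cost scales only linearly in $d$ (times $2^{\poly(K/\eps)}$), giving the claimed $d\cdot 2^{\poly(K/\eps)}$ bound. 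The Gaussian surface area of $\cL_{d,K}$ is $O(K)$ by standard bounds, so $\Gamma = O(K)$ is automatic.

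\textbf{Key structural step.} The heart of the argument is a degree-one structural lemma for MLCs: fix a subspace $V$ and a $K$-class linear classifier $f=\argmax_i(\w^{(i)}\cdot\x+t_i)$, and suppose the noisy label distribution $y(\x)$ satisfies $\Pr[y(\x)\neq f(\x)]\le \zeta$. I would show that either there exists $g:V\to[K]$ with $\Pr[g(\x^V)\neq f(\x)]\le C\zeta$ (option (a) of \Cref{def:SimpleGoodCondition}), or else on an $\eps$-fraction of the cells $S$ of a Gaussian partition of $V$ of mass $\alpha=2^{-\poly(K/\eps)}$, some indicator $\Ind(y(\x)=z)$ has a nontrivial first moment in a direction lying in $U=(V+W)\cap V^\perp$. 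The mechanism is Gaussian integration by parts: the vector $\E[\x^{V^\perp}\Ind(f(\x)=z)\mid \x^V=\x_0^V]$ equals a weighted sum over the class boundaries $\{\w^{(z)}-\w^{(z')}\}$ of their surface-integrals on the conditional slice, so it lives in $W^{V^\perp}=U$. The task is to argue that if $f$ is \emph{far} from every function of $\x^V$, then on many slices at least two labels remain geometrically active, and hence these surface integrals cannot all vanish. I would formalize this by defining, on each slice, the closest constant function and noting that being far from a constant forces a boundary of nontrivial surface area inside the slice, which in turn produces a first moment of magnitude $\poly(\zeta/K)$. The noise only contributes an additive error of order $\zeta/\sqrt{\alpha}$ to these estimates, which is dominated by the signal as long as $\alpha$ is chosen large enough---this is why $\alpha=2^{-\poly(K/\eps)}$ suffices.

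\textbf{Algorithm and progress measure.} With the structural lemma in hand, \Cref{alg:constantApprox} iterates the following: partition $V$ into $\alpha$-mass cells, draw $d\cdot(1/\alpha)\cdot\polylog$ samples, estimate for each cell $S$ and each label $z$ the vector $\widehat M_{S,z}=\widehat\E[\x^{V^\perp}\Ind(y=z)\mid \x^V\in S]$, form the aggregated matrix $\sum_{S,z}\Pr[S]\,\widehat M_{S,z}\widehat M_{S,z}^\top$, and append its top singular directions (those with singular value above a $\poly(\zeta/K)$ threshold) to $V$. The progress measure is $\Phi(V)=\|\proj_W - \proj_W \proj_V\|_F^2 = \dim(W)-\dim(W^V)$, which decreases by $\Omega(\poly(\zeta/K))$ per iteration because every added direction has nontrivial projection onto $W$. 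Since $\dim(W)\le K$, the loop terminates in $\poly(K/\eps)$ iterations, during which $\dim(V)$ grows by at most $\poly(K/\eps)$; hence $\dim(V)=\poly(K/\eps)$ throughout, keeping $(1/\alpha)^{\dim(V)}=2^{\poly(K/\eps)}$. On termination, option (a) of the structural lemma holds, so there exists $g:V\to[K]$ with $\Pr[g(\x^V)\neq f(\x)]\le O(\opt)+\eps/2$; I recover such a $g$ by brute-force plurality voting over a finer $\poly(\eps/K)$-mass partition of $V$, using standard VC/uniform-convergence bounds, and output $h(\x)=g(\x^V)$.

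\textbf{Main obstacle.} The one genuinely delicate step is the structural lemma, specifically ruling out cancellations: a priori, even when $f$ depends on many directions in $U$, the first moments $\E[\x^{V^\perp}\Ind(f=z)\mid \x^V\in S]$ could conceivably average out across slices or across labels $z$. My plan to handle this is to work slice-by-slice: pick a slice on which $f\mid_S$ is $\Omega(\opt/K+\eps)$-far from every constant classifier (such slices exist on an $\eps$-fraction by the contrapositive of option (a)), and on that slice identify two labels $z,z'$ each occupying a non-negligible mass; then the boundary $\{\w^{(z)}-\w^{(z')}\}\cdot\x=\mathrm{const}\}$ crosses the slice with nontrivial Gaussian surface area, and Stein's identity forces $\E[\x^{V^\perp}\Ind(f=z)\mid S]$ to have component of magnitude $\poly(\eps/K)$ along $\w^{(z)}-\w^{(z')}\in U$. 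The subtlety is that we do not get to pick $z$: the aggregate matrix must still have a large eigenvalue after summing over $z$, which I would verify by a second-moment/variance argument showing that the $\widehat M_{S,z}$ cannot cancel in the Frobenius norm without $f$ itself being $\opt$-close to a $V$-measurable function. Everything else (sample concentration, the filtering step, the final brute-force search) is routine given this structural ingredient and the general \Cref{thm:SimpleMetaTheorem}.
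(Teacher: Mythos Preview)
Your overall architecture---iterative subspace approximation, potential $\Phi(V)=\sum_i\|(\vec b^{(i)})^{\perp V}\|^2$, aggregating first moments over a partition of $V$, and a final piecewise-constant brute-force---matches the paper's proof of \Cref{thm:constantApprox} essentially line for line. The only substantive divergence is in how you plan to prove the structural lemma (the analogue of \Cref{lem:progress}), and there your sketch has a real gap.

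The paper does \emph{not} go through Stein's identity or surface-area. Instead, after recentering so that $j=\argmax_i t_i$ has $\w^{(j)}=\vec 0,\;t_j=0$, it introduces the specific test vectors $\vec v^{(i)}=\w^{(i)}/(\gamma-t_i)$ and analyzes the single scalar $\sum_i \vec v^{(i)}\cdot\E[\x\,\Ind(y=i)]$. The point of this choice is that the clean part $\vec v^{(f(\x))}\cdot\x$ is \emph{always} nonnegative (since $L_{f(\x)}(\x)\ge L_j(\x)=0$), so Markov immediately gives $I_1\ge\Pr[g(\x)\ge 1]$; no cancellation analysis is needed. The noise term $I_2$ is then controlled by a tail-bound lemma (\Cref{lem:tails-gen}) showing $\Pr[s(\x)\ge n\gamma]\lesssim n^{-2}\Pr[s(\x)\ge\gamma]$, which yields $|I_2|\lesssim\sqrt{\opt\cdot\Pr[g\ge 1]}+\opt$. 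Combining, if $\min_i\Pr[f\neq i]\ge C\cdot\opt+\eps$ then $\Pr[g\ge 1]\gtrsim\eps$ dominates and some $\vec v^{(i)}\cdot\E[\x\,\Ind(y=i)]\ge\poly(\eps/K)$.

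Your surface-area route does not obviously close. First, ``two labels $z,z'$ each of non-negligible mass'' on a slice does not imply that $z$ and $z'$ share a boundary of non-negligible Gaussian surface area (a third class could separate them). Second, even when they do, Stein gives $\E[\x^{V^\perp}\Ind(f=z)\mid S]$ as a sum over \emph{all} boundary pieces of class $z$, each along a different $\w^{(z)}-\w^{(z'')}$, and these can offset one another in the $\w^{(z)}-\w^{(z')}$ direction; you have not explained why the component along $\w^{(z)}-\w^{(z')}$ survives with magnitude $\poly(\eps/K)$. Third, your noise bookkeeping ``additive error of order $\zeta/\sqrt{\alpha}$ with $\alpha=2^{-\poly(K/\eps)}$'' is inconsistent: that quantity is exponentially large, not small. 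The paper sidesteps all three issues via the weighted-sum trick and the tail-bound lemma; if you want to salvage your approach you would need a replacement for both, and neither is routine.
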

Before we present the algorithm, we provide the following definitions, which characterize how we discretize a low-dimensional subspace and approximate {distributions} over it.

To effectively approximate the distribution over a specific subspace \( V \), we aim to preserve a substantial portion of its mass and partition it into fine regions of non-negligible mass. By selecting an orthonormal basis for \( V \) and excluding regions where any coordinate exceeds \( \sqrt{\log(k/\epsilon)} \), we ensure that at least a \( 1 - \epsilon \) fraction of the mass remains. The remaining space is then partitioned into cubes aligned with \( V \), each with edges of length \( \epsilon \) and containing approximately \( \epsilon^{\dim(V)} \) mass.

{\begin{definition}[$\eps$-Approximating Partition]\label{def:approximatingPartition}
{Let $V$ be a $k$-dimensional subspace of $\mathbb{R}^d$ with an orthonormal basis $\vec v^{(1)}, \dots, \vec v^{(k)}$, and let $\eps \in (0,1)$.} An $\eps$-approximating partition with respect to $V$ is a collection of sets 
$\mathcal{S}$ defined as follows: For each multi-index $\vec{j}=(\vec j_1,\dots,\vec j_k)\in [M_\eps]^k$, define $S_{\vec j} = \{ \x \in \mathbb{R}^d : z_{\vec j_{i}-1} \leq \vec{v}^{(i)} \cdot \x \leq z_{\vec j_{i}}, i\in [k] \}$ where  $z_i$'s are defined as \( z_i = -\sqrt{2 \log(k/\epsilon)} + i \epsilon+t\), for $i\in \{0,\dots, M_{\eps}\},t\in(0,\eps/2)$ and \( M_{\eps} = \left\lceil (2\sqrt{2 \log(k/\epsilon)})/\epsilon \right\rceil \). Furthermore, to each set $S_{\vec j}\in \mathcal{S}$, we associate  a set of linear equations  $\cG_{S_{\vec j}}=\{\vec v^{(1)}\cdot \x=t_1,\dots,\vec v^{(k)}\cdot \x=t_k\}$, where for each $i\in[k]$, $t_i$ is chosen so that $z_{\vec j_{i}-1} \leq t_i \leq z_{\vec j_{i}}$.
\end{definition}}

    {Furthermore, we define the piecewise constant approximation of a distribution over labeled examples as the function that minimizes the 0-1 loss with respect to the labels in each subset of a partition. Our algorithm will first estimate the hidden subspace of the multiclass linear classifier and then return a piecewise constant approximation over it.}
    \begin{definition}[Piecewise Constant Approximation]\label{def:h}
    {Let $D$ be a distribution over $\R^d\times [K]$, let  $V$ be a subspace of $\mathbb{R}^d$ and let $\eps\in (0,1)$. Let $\mathcal{S}$ be an $\eps$-approximating partition with respect to $V$ (see \Cref{def:approximatingPartition}).
    A piecewise constant approximation of the distribution $D$, with respect to $\mathcal{S}$, is any classifier $h_{\mathcal{S}}:\mathbb{R}^d\to [K]$ such that for each $S\in \mathcal{S}$ and $\x\in S$, 
    $h_{\mathcal{S}}$ is defined as follows $h_{\mathcal{S}}(\x)\in\argmin_{i\in [K]} \pr_{(\x,y)\sim D}[i\neq y\mid \x\in S]$.}
    \end{definition}
    
We now present our {subspace approximation} method for learning  multiclass linear classifiers under agnostic noise (see \Cref{alg:constantApprox}).
\begin{algorithm}
    \centering
    \fbox{\parbox{5.1in}{
            {\bf Input:} Accuracy $\eps > 0$, failure probability $\delta>0$  sample access to a distribution $D$ over $\mathbb{R}^d\times [K]$ { whose $\x$-marginal is $\cN(\vec 0,\vec I)$}.
            \\
            {\bf Output:} A hypothesis $h$ such that with probability at least $1-\delta$ it holds that $\pr_{\x\sim \cN(\vec 0,\vec I)}[h(\x)\neq y]\le C\cdot\opt{+}\eps$, where $C$ is a universal constant {and $\opt\eqdef \inf_{f\in \mathcal{L}_{d,K}} \pr_{(\x,y)\sim D}[f(\x)\neq y]$}.
            \begin{enumerate}
\item  Let $c$ be  a sufficiently large constant, and {let $T$ and $p$ be sufficiently large, constant-degree polynomials in $1/\eps$ and $K$.} \label{line:init}
             \item {Let $N\gets d2^{T^c}\log(1/\delta)$, $\eps'\gets 1/p$, $L_1\gets\emptyset$ and $t\gets 1$.}\label{line:init_vars}
                \item  While $t\le T$
            \label{line:loop}
             \begin{enumerate}
             \item {Draw a set {$S_t$} of $N$ i.i.d.\ samples from $D$.}
             \item $\mathcal{E}_t\gets$ \Cref{alg:correlating-vectors}($\eps, \eps',\delta, \spaning(L_t), S_t$).
          
                \item $L_{t+1}\gets L_{t}\cup \mathcal{E}_t $.\label{line:update}
                \item $t\gets t+1$.
            \end{enumerate}
             \item
             Construct $\mathcal{S}$, an $\eps'$-approximating partition with respect to $\spaning(L_t)$ (see \Cref{def:approximatingPartition}).
             \item { {Draw $N$ i.i.d.\ samples from $D$ and construct the piecewise constant classifier $h_{\mathcal{S}}$ as follows: For each $S \in \mathcal{S}$, assign a label that appears most frequently among the samples falling in $S$. Return $h_{\mathcal{S}}$.}}\label{line:return-statement}

            \end{enumerate}
    }}
    \medskip
        \caption{Learning Multiclass Linear Classifiers under Agnostic Noise }
 \label{alg:constantApprox}
\end{algorithm}

\begin{algorithm}
    \centering
    \fbox{\parbox{5.1in}{
            {\bf Input:}  $\eps,\eps',\delta>0$, a subspace $V$ {of $\R^d$, and a set of samples $\{(\x^{(1)},y_1),\dots, (\x^{(N)},y_N)\}$ from a distribution $D$ over  $\R^d\times [K]$ whose $\x$-marginal is $\cN(\vec 0,\vec I)$}.\\
            {\bf Output:} A set of unit vectors $\mathcal E$.
            \begin{enumerate}
\item {Let $\sigma$ be a sufficiently small, constant-degree polynomial in $\eps, 1/K$.} 
            \item {Construct the empirical distribution $\widehat{D}$ of $\{(\x^{(1)},y_1),\dots, (\x^{(N)},y_N)\}$.} 
\item {Construct} an $\eps'$-approximating partition $\mathcal{S}$ with respect to $V$ (see \Cref{def:approximatingPartition}).

            \item For each $S\in \mathcal{S}$ and for each label $i\in[K]$ compute $\widehat{\vec u}^{(S,i)}=\E_{(\x,y)\sim \widehat{D}}[\x \Ind(y=i)\mid \x\in S]$.\label{line:moments} 
            \item Let $\widehat{\vec U}= \sum_{(S,i) \in \mathcal{S}\times [K]} (\widehat{\vec u}^{(S,i)})^{\perp V}((\widehat{\vec u}^{(S,i)})^{
                \perp V
                })^{\top}\pr_{(\x,y)\sim {\widehat{D}}}[S]$.  \label{line:matrix}
            \item Return the set $\mathcal{E}$ of {unit} eigenvectors of $\widehat{\vec U}$  {corresponding to eigenvalues at least $\sigma$.      }
            \end{enumerate}
    }}
    \medskip
        \caption{{Estimating a relevant direction}}
 \label{alg:correlating-vectors}
\end{algorithm}

\newpage 

{Let $f^*$ be a $K$-multiclass linear classifier with error $\opt$, i.e., $\Pr_{(\x,y)\sim D}[f(\x)\neq y]=\opt$.} \Cref{alg:constantApprox} proceeds in a sequence of iterations: In each iteration $t$, it maintains and updates a set of unit vectors $L_t$ so that its span $V_t$ provides a better approximation to the span of the weight vectors of $f^*$, denoted by $W^*$. To this end, \Cref{alg:constantApprox} uses \Cref{alg:correlating-vectors} to construct a new list  $\mathcal{E}_t$ that contains  vectors with non-trivial correlation to $(W^*)^{\perp V_t}$.

{First, in \Cref{sec:struct-multiclass}, we show that if  $f^*$ is not $C\cdot\opt$-close to any constant classifier, then there exists at least one label \(i\in[K]\) for which the first moment $\E[\x \1(y=i)]$ correlates non-trivially (by at least $\poly(\eps/K)$) with a vector in $W^*$.} This, in fact, suffices to initialize a list of vectors that correlate with a vector in $W^*$. In order to expand this list and at the same time improve the approximation of  $W^*$, we need to show that,  we can find vectors that not only correlate with $W^*$ but are also orthogonal to the current list of vectors $L_t$.

To this end, \Cref{alg:correlating-vectors} uses the current set $L_t$ of weight vectors to partition {$\R^d$} into sufficiently small regions $S$.
In each such region $S$, the  classifier $f^*$ has a limited dependence on the vectors in $L_t$.
Specifically, we show that $f^*$ is $\eps$-close to a function $f_S^*$ that does not depend on weight vectors contained in $L_t$.  
Thus, within each region $S$, we obtain a multiclass classifier that depends only on vectors in $(W^*)^{\perp V_t}$.
Therefore, in every region \(S\), either the classifier can be well-approximated by a constant function or the first moments will correlate non-trivially with a vector in \((W^*)^{\perp V_t}\), thus enabling further improvement of the approximation.

{Furthermore, for \(f^*\) to depend mostly on directions in 
\((W^*)^{\perp V_t}\), the cubes \(S\) in the partition 
must have width on the order of \(\poly(\eps/K)\). 
Such a fine partitioning would typically yield a number of cubes 
that is exponential in \(\dim(V_t)\) and, hence 
could lead to a power-tower update in the dimension of \(V_t\) 
when updating \(L_t\). In order to avoid this power-tower dependence, 
we leverage the fact that the total probability mass of cubes 
with non-trivial correlation is at least \(\poly(\eps/K)\) 
(this will be proved in the analysis of 
\Cref{alg:correlating-vectors} in \Cref{prop:alg2}). 
By weighting each moment vector by the probability mass 
of its corresponding cube 
(see Line~\ref{line:matrix} of \Cref{alg:correlating-vectors}), we are guaranteed 
to find a vector with non-trivial correlation 
by examining only a \(\poly(K/\eps)\)-sized list of candidate vectors.
}

As follows from the above, our algorithm is not proper; that is, it 
does not return a multiclass linear classifier. Instead, it 
produces a piecewise constant function over a sufficiently 
fine partition of $\R^d$. 

{See \Cref{fig:algorithm-snapshot} for an illustration.} 

\paragraph{Organization of \Cref{sec:MulticlassAlgorithm}}
{The body of this section is structured as follows:
In \Cref{sec:struct-multiclass}, we state and prove the key structural result about the class of multiclass linear classifiers, \Cref{lem:progress}, which will be exploited by our algorithm. In \Cref{sec:project-multiclass}, we provide the analysis of \Cref{alg:correlating-vectors} in \Cref{prop:alg2}. In \Cref{sec:proofMulticlass},  we analyze \Cref{alg:constantApprox}, providing the full proof of \Cref{thm:constantApprox}. Finally, in \Cref{sec:2.7}, we provide the proof of \Cref{lem:tails}, a supporting lemma that is used in the  proof of \Cref{lem:progress}. }

\begin{figure}[H]
  \centering
  \begin{minipage}[b]{0.48\textwidth}
    \centering
    \begin{tikzpicture}[scale=0.7]
\fill[red!60, opacity=0.6] 
    (-5,5) -- (5,5) -- (5,2.5) -- (0,0) -- (-5,5) -- cycle;
  
\fill[green!60, opacity=0.6]
    (0,0) -- (5,2.5) -- (5,-5) -- (0,0) -- cycle;
  
\fill[orange!60, opacity=0.6]
    (0,0) -- (-5,-2.5) -- (-5,5) -- (0,0) -- cycle;
  
\fill[blue!60, opacity=0.6]
    (-5,-5) -- (5,-5) -- (0,0) -- (-5,-2.5) -- cycle;

\draw[white,thick] (-5,5)   -- (5,-5);   \draw[white,thick] (-5,-2.5) -- (5,2.5);  

\fill[blue!30, opacity=0.4] (-5,-2) rectangle (5,0);
\fill[blue!30, opacity=0.4] (-5,0) rectangle (5,2);
  
\draw[dashed,white,thick] (-5,-4) -- (5,-4);
  \draw[dashed,white,thick] (-5,-2) -- (5,-2);
  \draw[dashed,white,thick] (-5,0)  -- (5,0);
  \draw[dashed,white,thick] (-5,2)  -- (5,2);
  \draw[dashed,white,thick] (-5,4)  -- (5,4);
\draw[cyan, thick, ->, >=stealth, line width=2pt] (0,0) -- (0,1)
  node[left, color=cyan] {$\vec{v}^{(1)}$};

\end{tikzpicture}
   \end{minipage}
  \hfill
  \begin{minipage}[b]{0.48\textwidth}
    \centering
    \begin{tikzpicture}[scale=0.7]
\fill[red!60, opacity=0.6] 
    (-5,5) -- (5,5) -- (5,2.5) -- (0,0) -- (-5,5) -- cycle;
  
\fill[green!60, opacity=0.6]
    (0,0) -- (5,2.5) -- (5,-5) -- (0,0) -- cycle;
  
\fill[orange!60, opacity=0.6]
    (0,0) -- (-5,-2.5) -- (-5,5) -- (0,0) -- cycle;
  
\fill[blue!60, opacity=0.6]
    (-5,-5) -- (5,-5) -- (0,0) -- (-5,-2.5) -- cycle;

\draw[white,thick] (-5,5)   -- (5,-5);   \draw[white,thick] (-5,-2.5) -- (5,2.5);  

\draw[dashed,white,thick] (-4,-5) -- (-4,5);
  \draw[dashed,white,thick] (-2,-5) -- (-2,5);
  \draw[dashed,white,thick] (0,-5)  -- (0,5);
  \draw[dashed,white,thick] (2,-5)  -- (2,5);
  \draw[dashed,white,thick] (4,-5)  -- (4,5);
\draw[dashed,white,thick] (-5,-4) -- (5,-4);
  \draw[dashed,white,thick] (-5,-2) -- (5,-2);
  \draw[dashed,white,thick] (-5,0)  -- (5,0);
  \draw[dashed,white,thick] (-5,2)  -- (5,2);
  \draw[dashed,white,thick] (-5,4)  -- (5,4);
  
\draw[cyan, thick, ->, >=stealth, line width=2pt] (0,0) -- (1,0)  node[above, color=cyan] {$\vec{v}^{(2)}$};
\draw[cyan, thick, ->, >=stealth, line width=2pt] (0,0) -- (0,1) node[left, color=cyan] {$\vec{v}^{(1)}$};
\end{tikzpicture}   \end{minipage}
  \caption{Illustration of two consecutive steps of our iterative subspace approximation procedure for 4-class Multiclass Linear Classification. In the left figure, 
  the space $\R^d$ is partitioned according to one already discovered direction. The highlighted rectangular regions correspond to areas where a constant function over the current subspace $V$ yields an insufficient approximation, indicating that additional relevant directions can be found.
  In contrast, the non-highlighted regions already permit a sufficiently good constant approximation.
  The right figure illustrates that we discovered the second hidden direction so that partitioning the space to sufficiently 
  small cubes leads to a good approximation.
  }
  \label{fig:algorithm-snapshot}
\end{figure}
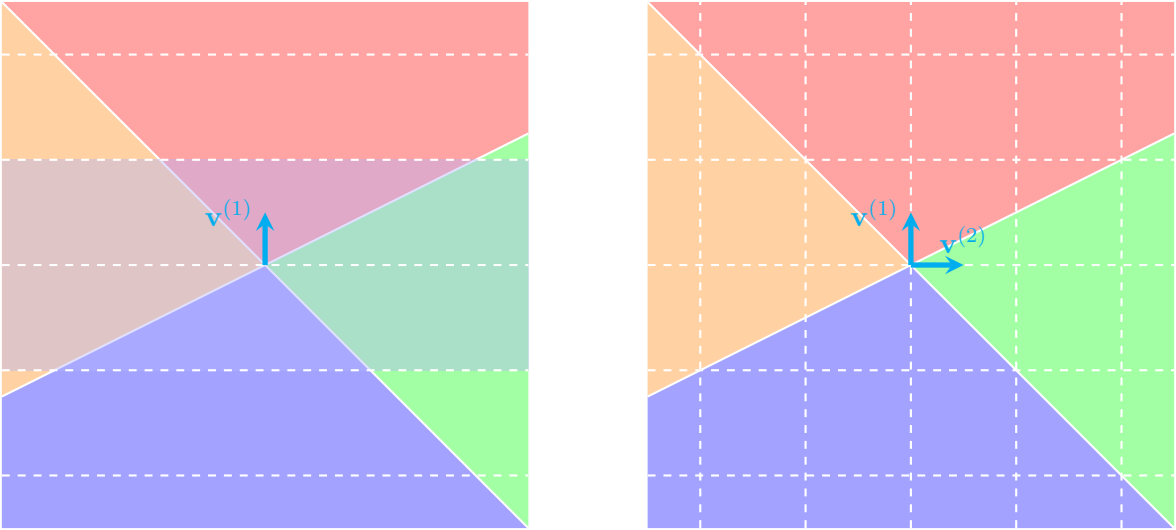

\subsection{Structural Result for Multiclass Linear Classification}  \label{sec:struct-multiclass}
In this section, we prove our main structural result, \Cref{lem:progress}, which enables us to efficiently learn multiclass linear classifiers under adversarial noise. In particular, we show that for a multiclass linear classifier $f$, if $f$ is far from any constant function by more than a constant multiple of its error with respect to the noisy label $y$, then the first moment of at least one of the noisy label regions has a {non-trivial} correlation with a vector in the subspace of weights {defining $f$}. 

{We prove this by finding carefully chosen vectors $\vec v^{(i)}{\in} \R^d$, $i{\in} [K]$, 
such that $\sum_{i=1}^K\E[\vec v^{(i)}\cdot \x\Ind(y=i)]$ 
is guaranteed to be non-trivially large.
Note that this sum is equal to $\sum_{i=1}^K\E[\vec v^{(i)}\cdot \x\Ind(f(\x)=i)]$ 
plus an error term for instances when $y$ is not equal to $f(\x)$.
By carefully selecting these vectors, we ensure that even a small error probability does not significantly reduce the overall sum.
}
Formally, we show the following:

\begin{proposition}[Structural Result]\label{lem:progress}Let $d\in \mathbb{Z}_+$, $\eps\in (0,1)$, $D$ be a distribution on labeled examples $(\vec x,y)\in \mathbb{R}^d\times [K]$ whose $\x$-marginal is $\cN(\vec0,\vec I)$. Also, let $f$ be a multiclass linear classifier, $f(\vec x)=\argmax_{i\in [K]}(\vec{w}^{(i)} \cdot \vec{x}+t_i)$ with $\vec{w}^{(1)},\ldots,\vec{w}^{(K)}\in \mathbb{R}^d$, and let  $C$ be a sufficiently large positive universal constant. If $\eps\le 1/C$ and $f(\x)$ is not close to any constant classifier, i.e., $\min_{i\in [K]} \pr_{\vec x\sim \cN(\vec 0,\vec I)}[f(\x)\neq i]\ge C\cdot \pr_{(\x,y)\sim D}[f(\x)\neq y]+\eps$, then for $j =\argmax_{i\in [K]} t_i$ 
there exists $i\in [K]$ with $\vec{w}^{(i)}-\vec{w}^{(j)}\neq \vec 0$ such that \[\frac{\vec{w}^{(i)}-\vec{w}^{(j)}}{\norm{\vec{w}^{(i)}-\vec{w}^{(j)}}}\cdot \E_{(\vec x,y)\sim D}[ \vec{x} \Ind[y=i] ]\ge \poly(1/K,\eps)\;.\]
\end{proposition}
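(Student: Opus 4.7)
The plan is to establish a clean-label lower bound on the sum $S := \sum_{i \neq j,\, \w^{(i)} \neq \w^{(j)}} \vec v^{(i)} \cdot \E_{\x \sim \mathcal N(\vec 0, \vec I)}[\x \1(f(\x)=i)]$ in terms of $\delta := C\opt + \eps$, extract a single large term by pigeonhole over the (at most $K{-}1$) summands, and absorb the discrepancy between $\1(y=i)$ and $\1(f(\x)=i)$ via a Gaussian tail truncation; the universal constant $C$ is then chosen large enough so that the clean bound dominates the noise correction uniformly in $\opt$. For setup, I fix $j := \argmax_i t_i$ and note that the argmax tie-breaking rule forces $\Pr[f(\x) = i] = 0$ whenever $i \neq j$ and $\w^{(i)} = \w^{(j)}$, so I may restrict to indices $i$ with $\w^{(i)} \neq \w^{(j)}$. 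For such $i$, set $\vec v^{(i)} := (\w^{(i)} - \w^{(j)})/\|\w^{(i)} - \w^{(j)}\|$ and $c_i := (t_j - t_i)/\|\w^{(i)} - \w^{(j)}\| \ge 0$; the defining constraint of $U_i := \{f(\x) = i\}$ against class $j$ reads $\vec v^{(i)} \cdot \x \ge c_i$ on $U_i$, so in particular $U_i \subseteq \{\vec v^{(i)} \cdot \x \ge 0\}$.

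For the clean lower bound, a direct rewriting gives $S = \E[\vec v^{(f(\x))} \cdot \x \, \1(f(\x) \neq j)] \ge 0$, since the integrand is $\ge c_{f(\x)} \ge 0$ on $\{f \neq j\}$. The hypothesis provides $\Pr[f \neq j] \ge \delta$, and invoking the Gaussian anticoncentration estimate of \Cref{lem:tails} applied to the polyhedral decomposition $\{U_i\}_{i \neq j}$ of this event yields $S \gtrsim \poly(\delta, 1/K)$. Pigeonhole then produces some $i^*$ with $\xi_{i^*} := \vec v^{(i^*)} \cdot \E[\x \1(U_{i^*})] \gtrsim \poly(\delta, 1/K)$.

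For the noise correction, decompose
\[
\vec v^{(i^*)} \cdot \E[\x \1(y=i^*)] = \xi_{i^*} + \vec v^{(i^*)} \cdot \E\bigl[\x(\1(y=i^*) - \1(U_{i^*}))\bigr],
\]
and bound the error term in absolute value by $\E[|Z| \1(y \neq f(\x))]$ where $Z := \vec v^{(i^*)} \cdot \x \sim \mathcal N(0,1)$. Truncating $|Z|$ at level $R := \Theta(\sqrt{\log(K/\eps)})$ and using $\E[|Z| \1(|Z|>R)] \lesssim \phi(R) \lesssim \eps/K$, the error is at most $R \opt + O(\eps/K) = \tilde O(\opt) + O(\eps/K)$. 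Thus $\vec v^{(i^*)} \cdot \E[\x \1(y=i^*)] \gtrsim \poly(\delta, 1/K) - \tilde O(\opt) - O(\eps/K)$, and choosing $C$ a sufficiently large universal constant guarantees the right-hand side is $\gtrsim \poly(1/K, \eps)$.

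The main obstacle is the quantitative clean bound $S \gtrsim \poly(\delta, 1/K)$. The trivial per-cell estimate yields only $\xi_i \gtrsim p_i^2$, attained when $U_i$ degenerates to a thin slab $\{c_i \le \vec v^{(i)} \cdot \x \le c_i + O(p_i/\phi(c_i))\}$, which alone is too weak to survive the noise subtraction with universal $C$. The saving observation is that a thin slab for $U_i$ forces an adjacent class $k$ to occupy the region above the slab in the $\vec v^{(i)}$ direction, inflating $\xi_k$ and hence $S$. Balancing this trade-off---case-splitting on whether each $c_i$ is small or large, all bundled into the Gaussian tail estimate of \Cref{lem:tails}---is where the multiclass geometry is used essentially; the noise-balancing step is then routine given the clean bound.
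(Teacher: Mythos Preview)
Your skeleton is reasonable, but two load-bearing pieces are missing or misplaced, and together they prevent the argument from closing with a \emph{universal} constant $C$.

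First, you invoke \Cref{lem:tails} to lower-bound the clean sum $S$, but that lemma is a tail \emph{upper} bound on $s(\x)=\max_i\max(\w^{(i)}\cdot\x+t_i,0)$; in the paper it is used to control the \emph{noise} term, not to certify that moments are large. Your assertion $S\gtrsim\poly(\delta,1/K)$ is therefore unsupported: the ``saving observation'' about adjacent cells is only heuristic, and you yourself note the per-cell estimate gives only $\xi_i\gtrsim p_i^2$. The paper's clean bound, by contrast, is a one-line Markov inequality---but for this it is essential that the test vectors are \emph{not} unit vectors. After centering so $\w^{(j)}=\vec 0,\,t_j=0$, the paper sets $\vec v^{(i)}=\w^{(i)}/(\gamma-t_i)$ for a carefully chosen $\gamma$; then $\{\vec v^{(i)}\cdot\x\ge 1\}=\{L_i(\x)\ge\gamma\}$, so $I_1:=\E[\vec v^{(f(\x))}\cdot\x]\ge\Pr[s(\x)\ge\gamma]\gtrsim\delta$ directly.

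Second, and more damagingly, your noise correction is $R\,\opt+O(\eps/K)$ with $R=\Theta(\sqrt{\log(K/\eps)})$, so even granting a clean bound of order $\delta/\poly(K)$, you would need $C\gtrsim\poly(K)\sqrt{\log(K/\eps)}$ to make $C\opt$ absorb $R\,\opt$---not universal. The paper avoids this entirely: because $g(\x):=\max_i\vec v^{(i)}\cdot\x$ (with the scaled $\vec v^{(i)}$) inherits the quadratic tail decay of \Cref{lem:tails}, the \emph{summed} noise satisfies $|I_2|\lesssim\sqrt{\opt\cdot\Pr[g\ge1]}+\opt$. Since $\Pr[g\ge1]\gtrsim\delta\ge C\opt$, this is at most $(C^{-1/2}+C^{-1})\Pr[g\ge1]$, dominated by $I_1$ for an absolute $C$. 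Only after showing $\sum_i\vec v^{(i)}\cdot\E[\x\1(y=i)]\gtrsim\eps$ does the paper pigeonhole and convert to unit directions, paying one factor of $\eps$ via $\gamma-t_i\ge\eps\|\w^{(i)}\|$. In short: the scaling $\w^{(i)}/(\gamma-t_i)$ and the use of \Cref{lem:tails} on the noise side are the two ideas you are missing; with unit vectors and per-index noise control, the $\tilde O(\opt)$ you obtain cannot be beaten by $C\opt$ uniformly in $K,\eps$.
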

\begin{proof}
Define $\opt\eqdef\pr_{(\x,y)\sim D}[f(\x)\neq y]$.
{Without loss of generality, we may assume that $\vec w^{(1)}=\vec{0}$ and $t_1=0$. This follows from the observation that subtracting the same function from every classifier does not change the output label.
In particular, if we subtract $\vec{w}^{(j)}\cdot \vec{x}+t_j$, where $j\in\argmax_{i\in [K]} t_i$, from all affine functions and then rename the labels accordingly, one of the resulting affine functions becomes identically $\vec{0}$ and we obtain $t_i\le 0$ for all $i\in [K]$.
}

We define  $L_i:\mathbb{R}^d \to \R, L_{i}(\x)=\vec{w}^{(i)} \cdot \vec{x}+t_i$ the corresponding affine functions such that $f(\x)=\argmax_{i\in [K]} L_i(\x)$ and we define the function $s(\x)\eqdef \max_{i\in [K]} L_i(\x)$.

For some parameter $\gamma>0$ which we will {chose appropriately below}, let $\vec v^{(i)}\eqdef \vec w^{(i)}/(\gamma-t_i)$ (this is well-defined as $\gamma> 0$ and $t_i\le 0$) for all $i\in[K]$. {Let $g(\vec x)\eqdef\max_{i\in [K]}\vec v^{(i)}\cdot \vec{x}$. Note that for all nonzero vectors $\vec v^{(i)}$, we have that $L_i(\x)\ge \gamma$ if and only if $\vec v^{(i)}\cdot \x \ge 1$ which means that $g(\x)\geq 1$. Furthermore, because $\vec w^{(1)}=\vec 0$ and $t_1=0$, we have that $g(\x)\geq 0$ for all $\x\in\R^d$.} We can decompose the first moment {of label $i$ as follows:}
\begin{align}\E_{(\vec x,y)\sim D}[\vec x\Ind(y=i)]&=  \E_{(\vec x,y)\sim D}[ \vec x\Ind(f(\vec x)=i)]-\E_{(\vec x,y)\sim D}[ \vec x\Ind(y\neq i,f(\vec x)=i)]+\E_{(\vec x,y)\sim D}[ \vec x\Ind(y=i,f(\vec x)\neq i )]\notag\\
 &=\E_{\vec x\sim \cN(\vec 0,\vec I)}[ \vec x\Ind(f(\vec x)=i)]-\E_{(\vec x,y)\sim D}[ \vec xS_i(\vec x,y)]+\E_{(\vec x,y)\sim D}[ \vec xT_i(\vec x,y)]\;,\label{eq:decomposition}
\end{align}
where $S_i(\vec x,y)=\Ind(y\neq i,f(\vec x)=i)$ and $T_i(\vec x,y)=\Ind(y=i,f(\vec x)\neq i )$ for  $i\in [K]$. 
Taking the inner product {of $\E[x\Ind(y=i)]$} with $\vec{v}^{(i)}$ and summing \Cref{eq:decomposition} {over} $i\in[K]$, we have that
\begin{align}\label{eq:2.4-2}
    \E_{(\vec x,y)\sim D}[\vec v^{(y)} \cdot \vec{x}]&=\underbrace{\E_{\vec x\sim \cN(\vec 0,\vec I)}[\vec v^{(f(\vec{x}))}\cdot \vec{x}]}_{I_1}\underbrace{\; -\sum_{i=1}^K\E_{(\vec x,y)\sim D}[ \vec{v}^{(i)}\cdot \vec{x}S_i(\vec x,y)]+\sum_{i=1}^K\E_{(x,y)\sim D}[ \vec{v}^{(i)} \cdot \vec{x}T_i(\vec x,y)]}_{I_2}\;.
\end{align}
Note that $I_1$ is the 
{the contribution of each $\vec v^{(i)}$ over $\x$ in the region where $f(\x)=i$},
and $I_2$ is the contribution that corresponds to the noise.
 We first bound $I_1$ from below.
\begin{claim}\label{clm:good-region}
    It holds that $I_1\geq \pr_{\x\sim \mathcal{N}(\vec 0,\vec I)}[g(\x)\geq 1]$.
\end{claim}
\begin{proof}[Proof of \Cref{clm:good-region}]
    
{Note that the random variable  $\vec v^{(f(\x))}\cdot \x$ is non-negative since $L_{f(\x)}(\x)=\max_i L_i(\x)\geq L_1(\x)= 0$ which implies that  $\vec w^{(f(\x))}\cdot\x\geq |t_i|$ and consequently, $\vec v^{(f(\x))}\cdot\x\ge 0$.}
Hence, by applying Markov's inequality, we have  \[ I_1=
\E_{\vec{x}\sim \cN(\vec 0, \vec{ I})}[\vec v^{(f(\vec x))} \cdot \vec x ]\ge  \pr_{\x\sim \cN(\vec 0, \vec I)}[\vec v^{(f(\vec x))} \cdot \vec x\ge 1]=\pr_{\x\sim \cN(\vec 0, \vec I)}[L_{f(\vec x)} ( \vec x)\ge \gamma]\;,\]
where the last equality holds by the definition of $\vec v^{(i)}$. {It remains to show that $\pr_{\x\sim \cN(\vec 0, \vec I)}[L_{f(\vec x)} ( \vec x)\ge \gamma]\geq \pr_{\x\sim \cN(\vec 0, \vec I)}[g(\vec x)\ge 1]$.
To see this, note that the event $L_{f(\vec x)} ( \vec x)\ge \gamma$ 
contains the event $g(\x)\ge 1$. If $g(\x)\ge 1$ then by definition there exists some $i\in [K]$ such that $\vec v^{(i)} \cdot \vec x\ge 1$, 
which implies that $L_{i} ( \vec x)\ge \gamma$ (recall that $\vec v^{(i)}= \vec w^{(i)}/(\gamma-t_i)$), 
and consequently, $L_{f(\vec x)} ( \vec x)\ge \gamma$. 
This gives that 
$\pr_{\x\sim \cN(\vec 0, \vec I)}[L_{f(\vec x)} ( \vec x)\ge \gamma]\geq \pr_{\x\sim \cN(\vec 0, \vec I)}[g(\vec x)\ge 1]$ 
and completes the proof of \Cref{clm:good-region}.}
\end{proof}

It remains to bound the contribution from the noise, i.e., the quantity $I_2$.
\begin{claim}\label{clm:bad-region}
    {There exists an absolute constant $C'>0$ and $\gamma>0$ such that \[I_2\geq - C' \left(\sqrt{\opt \pr_{x\sim \cN(\vec 0, \vec I)}[g(\vec x)\ge 1]}+\opt\right)\;,\] and $1/48\ge  \pr_{\x\sim \cN(\vec 0,\vec I)}[s(\vec x)\ge \gamma]\ge \pr_{\x\sim \cN(\vec 0,\vec I)}[s(\x)\neq 0]/96$.}
\end{claim}
\begin{proof}
We show how to bound the contribution of the terms {involving} $S_i$; similar arguments can be used to bound the contribution of the terms involving $T_i$.
    In order to bound  the contribution of the $S_i$ terms {we can write}
\begin{align*}
     \sum_{i=1}^K\E_{(\vec x,y)\sim D}[ \vec{v}^{(i)}\cdot \vec{x}S_i(\vec x,y)] 
     &\leq \sum_{i=1}^K\E_{(\vec x,y)\sim D}\left[  \max_{j\in [K]}(\vec v^{(j)}\cdot \vec{x})S_i(\vec x,y)\right]\\ 
     &=\E_{(\vec x,y)\sim D}\left[  \max_{j\in [K]}(\vec v^{(j)}\cdot \vec{x})S(\vec x,y)\right]\\
     &= \E_{\vec x \sim \cN(\vec 0,\vec I)}\left[ \max_{j\in [K]}(\vec v^{(j)}\cdot \vec{x})\E_{(\x,y)\sim D}[ S(\vec x,y)\mid \vec x]\right]\\&=\E_{\vec x \sim \cN(\vec 0,\vec I)}\left[ g(\x)\E_{(\x,y)\sim D}[ S(\vec x,y)\mid \vec x]\right]\;,
\end{align*}
where $S= \sum_{i=1}^K S_i$. 
{Since $f$ has misclassification error $\opt$, we have $\E_{(\vec x,y)\sim D}[S(\vec x,y)]\le \opt$.
Moreover, for each point $(\vec x,y)$, at most one among $\{S_i, T_i : i\in[K]\}$ can be nonzero which implies that $\E_{(\x,y)\sim D }[ S(\vec x,y)\mid \vec x]\le 1$. Define $R(\vec x)\eqdef \E_{(\x,y)\sim D}[ S(\vec x,y)\mid \vec  x]$. Then, by the law of total expectation, we have that $\E_{\vec x\sim \cN(\vec 0,\vec I)}[R(\vec x)]=\E_{(\vec x,y)\sim D}[S(\vec x,y)]\le \opt$ and $R(\vec x)\in [0,1]$.
}

{Recall that $g(\x)\geq 0$ by construction, hence for any $t>0$}, 
\begin{align*}
 \E_{\vec x \sim \cN(\vec 0,\vec I)}[ g(\vec x)R(\vec x)]&\leq  t\E_{\vec x \sim \cN(\vec 0,\vec I)}[R(\vec x)\1(g(\x)\leq t)]+\E_{\vec x \sim \cN(\vec 0,\vec I)}[ g(\vec x)R(\vec x)\1(g(\x)> t)]\\&\leq t \opt+ \E_{\vec x \sim \cN(\vec 0,\vec I)}[ g(\vec x)\1(g(\x)> t)]\;,    
\end{align*}
where we use the fact that $\E_{\vec x \sim \cN(\vec 0,\vec I)}[R(\x)]\leq \opt$ and $R(\x)\leq 1$. We next bound the term $\E_{\vec x \sim \cN(\vec 0,\vec I)}[ g(\vec x)\1(g(\x)> t)]$. Using that for any positive function $G(\x)$ it holds that $G(\x)=\int_{0}^\infty \1(G(\x)\geq z)\d z$, we have that
\begin{align}
    \E_{\vec x \sim \cN(\vec 0,\vec I)}[ g(\vec x)\1(g(\x)> t)]=\int_{t}^\infty \E_{\vec x \sim \cN(\vec 0,\vec I)}[\1(g(\x)\geq y)]\d y
    &\leq \int_{t}^\infty \E_{\vec x \sim \cN(\vec 0,\vec I)}[\1(g(\x)\geq \lfloor y\rfloor)]\d y\nonumber
      \\ &= \sum_{n=\lfloor t\rfloor}^\infty\pr_{\vec x \sim \cN(\vec 0,\vec I)}[ n\le g(\vec x)] \label{eq:one-eq}\;,
\end{align}
where in the first equality, we exchange the order of the integrals using Tonelli's theorem and in the  inequality, we used that $\1(g(\x)\geq y)$ is a decreasing function of $y$.
In order to bound the terms $\pr_{\vec x \sim \cN(\vec 0,\vec I)}[ n\le g(\vec x)]$, we show that {the tails of $g$ decay at least quadratically}. We show the following:
\begin{restatable}{lemma}{BRUTEFORCE}\label{lem:tails-gen}  Let $\vec w^{(1)},\dots, \vec w^{(K)}\in \mathbb{R}^d$, $t_1,\dots, t_K\in \mathbb{R}_-$ and 
$s:\mathbb{R}^d\to \mathbb{R}$ such that 
$s(\vec x)= \max_{i\in [K]} \max(\vec{w}^{(i)} \cdot \vec{x} +t_i,0)$. There exists  $\gamma>0$ such that  $$\pr_{\x\sim \cN(\vec 0,\vec I)}[s(\x)\neq 0]/96\le \pr_{\x\sim \cN(\vec 0,\vec I)}[s(\vec x)\ge \gamma]\le 1/48$$ and 
$$\pr_{\vec x\sim \cN(\vec 0,\vec I)}[s(\vec x)\ge n\gamma]\lesssim (1/n^2)\pr_{\vec x\sim \cN(\vec 0,\vec I)}[s(\vec x)\ge \gamma]$$ 
for all $n\in \mathbb{Z}_+$.
\end{restatable}
{For the proof of \Cref{lem:tails-gen}, we refer the reader to \Cref{sec:2.7}.}
 Now, we leverage \Cref{lem:tails-gen} to complete the proof. {Recall that the function $s(\x)$ of \Cref{lem:tails-gen} is defined as $s(\x)=\max_iL_i(\x)$. We need to show that $\pr_{\x\sim \mathcal{N}(\vec{0},\vec{I})}[g(\x)\geq n] \leq \pr_{\x\sim \mathcal{N}(\vec{0},\vec{I})}[s(\x)\geq n\gamma]$. First, note that \(
\pr_{\x\sim \mathcal{N}(\vec{0},\vec{I})}[g(\x)\geq 1] = \pr_{\x\sim \mathcal{N}(\vec{0},\vec{I})}[s(\x)\geq \gamma],
\) by construction.
{Furthermore, recall that $g(\x)=\max_{i\in [K]}\vec v^{(i)}\cdot\x$ where $\vec v^{(i)}=\vec w^{(i)}/(\gamma-t_i)$ and that $L_i(\x)=\vec w^{(i)}\cdot\x +t_i$. Since \( t_i \leq 0 \) for all \( i \in [K] \), it follows that if \( \w^{(i)}\cdot\x \geq n(\gamma - t_i) \), then \( \w^{(i)} \cdot\x\geq n\gamma - t_i \).} This, in turn, implies that
\(
\pr_{\x\sim \mathcal{N}(\vec{0},\vec{I})}[g(\x)\geq n] \leq \pr_{\x\sim \mathcal{N}(\vec{0},\vec{I})}[s(\x)\geq n\gamma].
\)
Therefore, using \Cref{lem:tails-gen} into \Cref{eq:one-eq}, we have that} 
\begin{align*}
    \E_{\vec x \sim \cN(\vec 0,\vec I)}[ g(\vec x)\1(g(\x)> t)] 
    \lesssim \sum_{n=\lfloor t\rfloor}^\infty\pr_{\vec x \sim \cN(\vec 0,\vec I)}[ 1\le g(\vec x)]/n^2\lesssim \frac{\pr_{\vec x \sim \cN(\vec 0,\vec I)}[ 1\le g(\vec x)]}{t-2}\;,
\end{align*}
where in the last inequality we used the approximation of the sum by an integral.
Therefore, by choosing $t= 2+ ( \pr_{\vec x \sim \cN(\vec 0,\vec I)}[1\le g(\vec x)]/\opt)^{1/2}$, we have  that 
\begin{align*}
    \E_{\vec x \sim \cN(\vec 0,\vec I)}[ g(\vec x)R(\vec x)]
    \lesssim t\opt+ \pr_{\vec x \sim \cN(\vec 0,\vec I)}[1\le g(\vec x)]/(t-2)\lesssim \sqrt{\opt\pr_{\vec x \sim \cN(\vec 0,\vec I)}[1\le g(\vec x)]}+\opt.
\end{align*}
For the noisy terms defined by $T_i$, we have that 
\begin{align*}
\sum_{i=1}^K\E_{(x,y)\sim D}[ \vec{v}^{(i)} \cdot \vec{x}T_i(\vec x,y)]&=-\sum_{i=1}^K\E_{(x,y)\sim D}[ \vec{v}^{(i)} \cdot (-\vec{x})T_i(\vec x,y)]=-\sum_{i=1}^K\E_{(x,y)\sim D'}[ \vec{v}^{(i)} \cdot \vec{x}T_i(\vec x,y)]\;,
\end{align*}
where $D'$ is a reflection of distribution $D$ with 
respect to $\vec 0$. As a result, we can arrive at the 
same bound as for the term defined by the $S_i$'s. 
Therefore, we conclude that the total contribution of the noise is at most 
$O(\sqrt{\opt\pr_{\vec x \sim \cN(\vec 0,\vec I)}[1\le g(\vec x)]}+\opt)${, which completes the proof of \Cref{clm:bad-region}}.
\end{proof}

{Now we show that $I_2\geq - \pr_{x\sim \cN(\vec 0, \vec I)}[g(\vec x)\ge 1]/2$.
Recall that by the assumptions of \Cref{lem:progress} we have
\[
\pr_{\x\sim \cN(\vec 0,\vec I)}\big[f(\x)\neq i\big]\ge C\,\opt+\eps \quad\text{for all } i\in [K].
\]
We show that $\pr_{\x\sim \cN(\vec 0,\vec I)}\big[s(\x)\neq 0\big]\ge \pr_{\x\sim \cN(\vec 0,\vec I)}\big[f(\x)\neq 1\big]$. To see this, observe that for any $i\in [K]$ for which $L_i$ is not identically zero, we have that $L_i(\x)\neq 0$ almost surely (with respect to the Gaussian measure). Moreover, if for some $i\in [K]$ the function $L_i$ is identically zero and $i\neq 1$, then without loss of generality we may assume that $\pr_{(\x,y)\sim D}[y=i]=0$, since we could set $1$ to be the label with minimum index in the set $\argmax_{i\in[K]} t_i$ (for the original biases $t_i$). Hence, in every case for any $i\neq 1$ we have $L_i(\x)\neq 0$ almost surely.
Therefore, if $f(\x)\neq 1$ then, except for a measure zero set it holds that $L_i(\x)> 0$ for some $i\neq 1$, which implies that $s(\x)\neq 0$. As a result, it holds that $\pr_{\x\sim \cN(\vec 0,\vec I)}\big[s(\x)\neq 0\big]\ge \pr_{\x\sim \cN(\vec 0,\vec I)}\big[f(\x)\neq 1\big]$.
Hence, it follows that
\[
\pr_{\x\sim \cN(\vec 0,\vec I)}\big[s(\x)\neq 0\big]\ge \pr_{\x\sim \cN(\vec 0,\vec I)}\big[f(\x)\neq 1\big]\ge C\,\opt+\eps.
\]
By the choice of $\gamma$ in \Cref{clm:bad-region}, we also have
\[
\pr_{\x\sim \cN(\vec 0,\vec I)}\big[s(\x)\ge \gamma\big]\ge \frac{1}{96}\,\pr_{\x\sim \cN(\vec 0,\vec I)}\big[s(\x)\neq 0\big].
\]
Since
$
\pr_{\x\sim \cN(\vec 0,\vec I)}\big[g(\x)\ge 1\big]=\pr_{\x\sim \cN(\vec 0,\vec I)}\big[s(\x)\ge \gamma\big]$,
we get that that $
\pr_{\x\sim \cN(\vec 0,\vec I)}[g(\x)\ge 1]\geq ({C\,\opt+\eps})/{96}$.
Finally, by choosing $C$ sufficiently large (in particular, ensuring that $C\geq 256/((C')^2+1)$), from \Cref{clm:bad-region}, we conclude that 
\[
I_2\ge -\frac{1}{2}\,\pr_{\x\sim \cN(\vec 0,\vec I)}\big[g(\x)\ge 1\big]\;.
\]
and that $\pr_{\x\sim \cN(\vec 0,\vec I)}\big[g(\x)\ge 1\big]\gtrsim \eps$.}
{Combining the above with \Cref{eq:2.4-2} and \Cref{clm:good-region}, we obtain}
\begin{equation}
    \sum_{i:\vec w^{(i)}\neq \vec 0} \vec w^{(i)}/(\gamma-t_i)\cdot\E_{(\x,y)\sim D}[\x\Ind(y=i)]\gtrsim \pr_{\x \sim \cN(\vec 0,\vec I)}[g(\x)\ge 1]\gtrsim\eps\;.\label{eq:almost-final-statement}\notag
\end{equation}
{This implies that there exists some $j\in [K]$ such that
\[
\frac{\vec w^{(j)}}{\gamma-t_j}\cdot \E[\vec x\,\Ind(y=j)]\gtrsim \eps/K.
\]}
{
To conclude the proof, we must show that $\gamma-t_j\ge \eps\,\norm{\vec{w}^{(j)}}$. Suppose, for the sake of contradiction, that for some $i\in [K]$ we have $\gamma-t_i\le \eps\norm{\vec{w}^{(i)}}$.
Then, by the definition of $L_i$, we have
\[
\pr_{\x \sim \cN(\vec 0,\vec I)}\Big[L_i(\x)\ge \gamma\Big]
=\pr_{x \sim \cN(0,1)}\Bigg[x\ge \frac{\gamma-t_i}{\norm{\vec{w}^{(i)}}}\Bigg]
\ge \pr_{x \sim \cN(0,1)}[x\ge \eps]\;.
\]
However, for $\eps<2\sqrt{\log(48)}$, \Cref{fact:gaussianfacts} implies that
\[
\pr_{\vec x \sim \cN(\vec 0,\vec I)}\Big[\max_{i\in [K]} L_i(\vec x)\ge \gamma\Big]> \frac{1}{48},
\]
which leads to a contradiction from \Cref{lem:tails-gen} (and the choice of $\gamma$). Thus, we must have $
\gamma-t_i\ge \eps\,\norm{\vec{w}^{(i)}}$, for all  $i\in [K]$.
}
{
As a result, we obtain
\[
\left(\frac{\vec w^{(j)}}{\norm{\vec w^{(j)}}}\right)\cdot\E_{(\vec x,y)\sim D}[\vec x\,\Ind(y=j)]
\gtrsim \eps^2/K.
\]
Finally, using the fact that we centered the affine functions defining $f$ by subtracting $\vec w^{(1)}\cdot \vec x+t_1$ from each of them, we obtain the desired statement.
This completes the proof of \Cref{lem:progress}.}

\end{proof}

\subsection{Finding a Relevant Direction}\label{sec:project-multiclass}

This section contains the  proof of \Cref{prop:alg2}, which states that if our current subspace approximation does not lead to a good enough error, then \Cref{alg:correlating-vectors} would return {at least one} vector with non-trivial projection onto the {hidden} subspace.

{\begin{proposition}\label{prop:alg2} Let $d,k, K\in \Z_+$, $\eps,\delta\in(0,1)$ and $\eps'\le c\eps^2/(K^2\sqrt{k})$, for some sufficiently small absolute constant $c>0$.
Let $D$ be a distribution supported on $ \mathbb{R}^d \times [K]$ whose $\x$-marginal is $\mathcal{N}(\vec 0, \vec I)$ and $V$ be a $k$-dimensional subspace of $\mathbb{R}^d$. Let $f$ be a multiclass linear classifier with weight vectors $\vec w^{(1)},\dots,\vec w^{(K)}\in \R^d$ and define $\opt=\pr_{(\x, y)\sim D}[f(\x)\neq y]$. 
 Furthermore, let $h_{\mathcal{S}}$ be a piecewise constant approximation of $D$, with respect to an $\eps'$-approximating partition $\mathcal{S}$ defined relative to the subspace $V$ (as in \Cref{def:approximatingPartition,def:h}).

    There exists a sample size $N=d( k/\eps')^{O(k)}(K/\eps)^{O(1)}\log(1/\delta)$ and a universal constant $C>0$, such that if $\pr_{(\x, y)\sim D}[h_{\mathcal{S}}(\x)\neq y]> C\cdot\opt+\eps$,  then \Cref{alg:correlating-vectors} given $N$ samples, runs in $\poly(N)$ time and 
    with probability at least $1-\delta$ returns a set of unit vectors $\mathcal E$ of size $|\mathcal E|=\poly(1/\eps,K)$ with the following property: 
    there exists a $\vec v\in \mathcal E$ and $i,j\in [K]$, $i\neq j$, 
    such that 
    $$\frac{(\vec{w}^{(i)}-\vec{w}^{(j)})^{\perp V}}{\norm{(\vec{w}^{(i)}-\vec{w}^{(j)})^{\perp V}}} \cdot \vec v\ge   \poly(1/K,\eps)\;.$$
\end{proposition}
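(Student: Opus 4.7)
The plan is to apply the structural result \Cref{lem:progress} to the conditional distributions on each cube $S$ of the partition $\mathcal{S}$, then aggregate the per-cube moment information spectrally. Within a cube $S \in \mathcal{S}$, the constraint $\vec v^{(\ell)} \cdot \x \in [z_{\vec j_\ell - 1}, z_{\vec j_\ell}]$ for every coordinate $\ell$ of an orthonormal basis of $V$ means that for any $\x_0 \in S$ we have $\vec w^{(i)} \cdot \x + t_i = (\vec w^{(i)})^{\perp V} \cdot \x + ((\vec w^{(i)})^V \cdot \x_0^V + t_i) + \xi_i(\x)$, where $|\xi_i(\x)| \le O(\eps' \sqrt{k}\,\|(\vec w^{(i)})^V\|)$ uniformly over $\x \in S$. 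Thus $f|_S$ is, up to a small perturbation on the thresholds, exactly a multiclass linear classifier on $V^\perp$ with weight vectors $(\vec w^{(i)})^{\perp V}$. The conditional marginal of $\x^{\perp V}$ given $\x \in S$ is still standard Gaussian on $V^\perp$, which is what we need in order to invoke \Cref{lem:progress} on $D|_S$.

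First I would prove that the assumption $\pr_{(\x,y)\sim D}[h_{\mathcal{S}}(\x) \neq y] > C\cdot\opt + \eps$ forces a non-negligible mass of ``bad'' cubes on which $f|_S$ is far from every constant. Indeed, if $f|_S$ were within $\tau_S$ (in probability under $D_\x|_S$) of some constant $j_S \in [K]$, then choosing $j_S$ as the label on $S$ would give an error on $S$ of at most $\pr[f(\x)\neq y\mid S] + \tau_S$, and since $h_{\mathcal{S}}$ is the \emph{minimum}-error constant on $S$ this bounds its error too. Averaging with weights $\pr[S]$ and comparing to $\opt = \sum_S \pr[f(\x)\neq y\mid S]\pr[S]$ shows that the weighted mass of cubes with $\tau_S \ge C_1\cdot \pr[f(\x)\neq y\mid S] + \eps/2$ is at least $\Omega(\eps/K)$.

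Next, for each such bad cube $S$, I would apply \Cref{lem:progress} to $D|_S$ viewed as a distribution on $V^\perp \times [K]$. The choice $\eps' \le c\eps^2/(K^2\sqrt{k})$ ensures that the within-cube perturbation $\xi_i$ is much smaller than the gap between $\tau_S$ and $C\cdot\pr[f(\x)\neq y\mid S]$, so the hypothesis of \Cref{lem:progress} applies with the effective multiclass linear classifier on $V^\perp$; this produces indices $i_S \neq j_S$ and a direction $\vec w_S \eqdef (\vec w^{(i_S)} - \vec w^{(j_S)})^{\perp V}$ with
\[
\frac{\vec w_S}{\|\vec w_S\|} \cdot \E_{(\x,y)\sim D}\!\bigl[\x^{\perp V} \Ind(y = i_S)\,\bigm|\, \x\in S\bigr] \;\geq\; \poly(\eps/K).
\]
Summing the resulting rank-one contributions to $\vec U = \sum_{(S,i)}(\vec u^{(S,i)})^{\perp V}((\vec u^{(S,i)})^{\perp V})^\top \pr[S]$ (the population version of $\widehat{\vec U}$), and noting that all summands are positive semidefinite so the good cubes cannot be cancelled by the others, we obtain a unit vector in $\spaning\{\vec w_S/\|\vec w_S\|\}$ along which $\vec U$ has quadratic form value $\poly(\eps/K)$. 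By the pigeonhole principle over the $\poly(K/\eps)$-many bad-cube directions, at least one single $(i,j)$ pair contributes $\poly(\eps/K)$ spectral mass, so $\vec U$ has an eigenvector (with eigenvalue $\ge \sigma$) having non-trivial projection onto $(\vec w^{(i)} - \vec w^{(j)})^{\perp V}/\|(\vec w^{(i)} - \vec w^{(j)})^{\perp V}\|$.

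Finally I would handle bookkeeping. The trace of $\vec U$ is bounded by $\sum_{S,i}\|\vec u^{(S,i)}\|^2 \pr[S] \le \E_{\x}[\|\x\|^2] = d$ after the $V^\perp$ projection reduces it to $\le d - k$, but with the $K$-label indicators summing to one the relevant bound is $O(K \log(k/\eps'))$ from Gaussian moment estimates inside each cube; this shows $|\mathcal{E}| = \poly(K/\eps)$. For sample complexity, estimating each $\widehat{\vec u}^{(S,i)}$ uniformly to additive $\ell_2$-error $\poly(\eps/K)$ over the $|\mathcal{S}| \cdot K = \poly(k/\eps')^{O(k)} K$ pairs, each cube having mass at least $(\eps')^k\phi(\sqrt{2\log(k/\eps')})^k$, yields via standard Gaussian concentration and a union bound the sample bound $N = d\,\poly(k/\eps')^{O(k)}(K/\eps)^{O(1)}\log(1/\delta)$. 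Finally, transferring the spectral guarantee from $\vec U$ to $\widehat{\vec U}$ is a Weyl/Davis--Kahan argument using that $\|\widehat{\vec U}-\vec U\|_{\mathrm{op}} \ll \sigma$. The main obstacle will be step two: the quantitative comparison between the $(C\cdot\opt + \eps)$ error lower bound on $h_{\mathcal S}$ and the per-cube gap $\tau_S \ge C_1\pr[f(\x)\neq y\mid S] + \eps/2$ needed to trigger \Cref{lem:progress}, while simultaneously absorbing the within-cube approximation error $\xi_i = O(\eps'\sqrt{k}\,\|(\vec w^{(i)})^V\|)$ into a $\poly(\eps/K)$-sized slack.
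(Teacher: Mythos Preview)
Your proposal is correct and follows essentially the same approach as the paper: find a $\poly(\eps/K)$-mass of cubes where the projected classifier is far from constant (the paper's \Cref{it:LargeError2FractionofStrips} via the Projection \Cref{lem:projection}), invoke \Cref{lem:progress} on each such cube (\Cref{it:GoodStript2GoodCorrelation}), and extract a correlating eigenvector from the PSD matrix $\widehat{\vec U}$ via \Cref{cl:coorelationofeigenvalues}. The paper differs only in that it concentrates the per-cube moments directly (\Cref{it:MommentConcentration}) rather than transferring from $\vec U$ to $\widehat{\vec U}$ via Davis--Kahan, and obtains the dimension-free bound $\|\widehat{\vec U}\|_F\le 2K$ cleanly from $\|(\vec u^{(S,i)})^{\perp V}\|\le 1$ (using that $\x^{\perp V}$ remains standard Gaussian conditional on $\x\in S$), which is the sharp version of your trace-bound step.
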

}

Before proving \Cref{prop:alg2}, we  develop the necessary tools for working with  multiclass linear classifiers restricted to cubes  extending along the orthogonal complement of a subspace.
First, we define the notion of a projected classifier onto a set of linear equations  (\Cref{def:projectedClassifier}).

The intuition behind this definition is that when we restrict our 
classifier to sufficiently small cubes on \(V\), 
its behavior mainly depends  on its components in the orthogonal 
complement of \(V\). In other words, the classifier 
is nearly equivalent to one that fixes its component on \(V\) 
to a specific point. We denote by the \emph{projected classifier} 
of \(f\) with respect to a set of linear equations \(\mathcal{G}\) 
the classifier obtained by fixing the coordinates 
of \(\mathbf{x}^V\) according to \(\mathcal{G}\).

\begin{definition}[Projected Classifier]\label{def:projectedClassifier}
Let  $f(\x)=\argmax_{i\in [K]}(\vec w^{(i)}\cdot\x +t_i)$ with $\vec w^{(i)}\in \R^d,t_i\in \R$ for $i\in [K]$, let $\mathcal G=\{\vec v^{(1)}\cdot\x =z_1, \ldots, \vec v^{(k)}\cdot\x =z_k\}$ be a set of $k$ linear equations and let $V$ be the subspace spanned by orthonormal vectors $\vec v^{(1)},\ldots, \vec v^{(k)}\in \R^d$. We denote by $f_{\mathcal{G}}(\x)$ the linear classifier $f$ projected onto  $\mathcal G$, i.e., $f_{\mathcal{G}}(\x)=\argmax_{i\in [K]}((\vec w^{(i)})^{\perp_{V}}\cdot\x +t_i+t_{i,\cG}')$, where $t_{i,\cG}'=\sum_{j=1}^k z_j a_{ij}$ where $a_{ij}=\vec w^{(i)}\cdot \vec v^{(j)}$. 
\end{definition}
{
Now we prove \Cref{lem:projection}, which states that 
for any partition of a subspace into sufficiently small cubes, 
the target function is well-approximated by a piecewise  
function constructed using a projected classifier for each region.
This will be our main supporting result for proving \Cref{prop:alg2}, 
as it allows us to reduce the problem to instances on the orthogonal 
complement and use \Cref{lem:progress}.}

\begin{lemma}[Projection Lemma]\label{lem:projection}
     Let $d, K,k\in \Z_+,\eps, \eps' >0$, $f:\mathbb{R}^d\to [K]$ be a multiclass linear classifier,  $V$ be a $k$-dimensional subspace of $\mathbb{R}^d, k\ge 1$. 
     Let  $\mathcal{S}$ be an $\eps'$-approximating partition of $\mathbb{R}^d$ with respect to $V$ as in \Cref{def:approximatingPartition}{, with $\eps'\leq \eps^2/(K^2\sqrt{k})$}.  Then there exists a set  $\mathcal{T}\subseteq \mathcal{S}$ such that $\sum_{S\in \mathcal{T}} \pr_{\cN(\vec 0,\vec I)}[S]\ge 1-\eps$ and 
     for any $S\in \mathcal{T}$, it holds that
     $$ \pr_{\x\sim \cN(\vec 0,\vec I)}[f(\x)\neq f_{\cG_{S}}(\x)\mid \x\in S] \le \eps \;,$$
where $f_{\mathcal G_{S}}$ {denotes} the projected classifier onto a set of linear equations $\mathcal G_{S}$ that {is associated with}  $S$, as defined in \Cref{def:projectedClassifier}.
\end{lemma}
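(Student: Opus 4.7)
The plan is to establish the two conclusions separately: first, that a large-measure sub-collection $\mathcal{T}$ of the partition exists, and second, that the average disagreement between $f$ and the piecewise-projected classifier is $O(K^{2}\sqrt{k}\,\eps)$.

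For the first claim, I take $\mathcal{T}$ to be the cubes of $\mathcal{S}$ contained inside the central box $B=\{\x:|\vec v^{(i)}\cdot\x|\le \sqrt{2\log(k/\eps)}-\eps,\ i\in[k]\}$. Since the projections $\vec v^{(i)}\cdot\x$ are i.i.d.\ standard normals, a standard Gaussian tail estimate together with a union bound over the $k$ coordinates gives $\pr_{\x\sim\mathcal{N}(\vec 0,\vec I)}[\x\notin B]=O(\eps)$, so rescaling $\eps$ by an absolute constant yields $\sum_{S\in\mathcal{T}}\pr[S]\ge 1-\eps$.

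For the second claim, the key observation is a clean reformulation of Definition~\ref{def:projectedClassifier}. Fix $S\in\mathcal{S}$ with associated equations $\mathcal{G}_S=\{\vec v^{(j)}\cdot\x=z_j^S\}_{j=1}^k$, and set $\tilde\x_S:=\x^{V^\perp}+\sum_{j=1}^k z_j^S\vec v^{(j)}$. Using $(\vec w^{(i)})^{\perp V}\cdot\x=(\vec w^{(i)})^{\perp V}\cdot\x^{V^\perp}$ and $(\vec w^{(i)})^{V}\cdot\tilde\x_S^{V}=\sum_j z_j^S(\vec w^{(i)}\cdot \vec v^{(j)})=t'_{i,\mathcal{G}_S}$, the argmax defining $f_{\mathcal{G}_S}(\x)$ coincides exactly with the argmax defining $f(\tilde\x_S)$, so $f_{\mathcal{G}_S}(\x)=f(\tilde\x_S)$. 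For every $\x\in S$ we have $|\vec v^{(j)}\cdot\x-z_j^S|\le \eps$ in each of the $k$ coordinates, so $\|\x-\tilde\x_S\|\le\sqrt{k}\,\eps$. Verifying this identification is the step that needs the most care, since Definition~\ref{def:projectedClassifier} writes the projected classifier in terms of constant offsets rather than a modified argument.

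Now, for $\x\in S$, the disagreement event $\{f(\x)\neq f_{\mathcal{G}_S}(\x)\}=\{f(\x)\neq f(\tilde\x_S)\}$ forces the segment $[\x,\tilde\x_S]$ to cross some decision hyperplane $H_{i,j}=\{\vec y:(\vec w^{(i)}-\vec w^{(j)})\cdot\vec y=t_j-t_i\}$ with $i\neq j\in[K]$, and in particular $\mathrm{dist}(\x,H_{i,j})\le\|\x-\tilde\x_S\|\le \sqrt{k}\,\eps$. Since the cubes in $\mathcal{S}$ are disjoint,
\[
\sum_{S\in\mathcal{S}}\pr\!\left[f(\x)\neq f_{\mathcal{G}_S}(\x)\mid\x\in S\right]\pr[S]\ \le\ \pr_{\x\sim\mathcal{N}(\vec 0,\vec I)}\!\left[\exists\, i\neq j:\ \mathrm{dist}(\x,H_{i,j})\le\sqrt{k}\,\eps\right].
\]
Each $H_{i,j}$ has the form $\{\vec y:\vec u\cdot\vec y=c\}$ for a unit vector $\vec u$, so the Gaussian mass of its $\sqrt{k}\,\eps$-neighborhood is $\pr[|\vec u\cdot\x-c|\le\sqrt{k}\,\eps]\le 2\sqrt{k}\,\eps/\sqrt{2\pi}=O(\sqrt{k}\,\eps)$. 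A union bound over the $\binom{K}{2}\le K^{2}/2$ pairs delivers the desired $O(K^{2}\sqrt{k}\,\eps)$ bound, completing the proof after adjusting constants.
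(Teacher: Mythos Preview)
Your proof is correct and follows essentially the same approach as the paper. Both arguments union-bound over the $\binom{K}{2}$ pairs $(i,j)$ and, for each pair, show that disagreement forces $\x$ into a slab of width $O(\sqrt{k}\,\eps)$ around the hyperplane $H_{i,j}$; the Gaussian anticoncentration then gives $O(\sqrt{k}\,\eps)$ per pair. Your geometric repackaging $f_{\mathcal{G}_S}(\x)=f(\tilde\x_S)$ with $\|\x-\tilde\x_S\|\le\sqrt{k}\,\eps$ is a clean way to see it, whereas the paper works directly with the sign disagreement $\sign(\vec w^{(i,j)}\cdot\x+t_{i,j})\neq\sign((\vec w^{(i,j)})^{\perp V}\cdot\x+t_{i,j}+t'_{i,j})$ and obtains the slightly sharper slab width $\|(\vec w^{(i,j)})^V\|\sqrt{k}\,\eps$ before discarding the factor $\|(\vec w^{(i,j)})^V\|/\|\vec w^{(i,j)}\|\le 1$ anyway.
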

\begin{proof}
Let $\vec w^{(1)},\dots,\vec w^{(K)}\in \mathbb{R}^d$ and $t_1,\dots,t_K\in \mathbb{R}$ such that $f(\x)=\argmax_{i\in [K]}(\vec w^{(i)}\cdot \x+t_i)$.
Let $\vec v^{(1)},\dots, \vec v^{(k)}$ be an orthonormal basis of $V$.
Fix a set of the partition $S\in \mathcal{S}$ and denote by $f_{\mathcal G_{S}}=\argmax((\vec w^{(i)})^{\perp {V}}\cdot\x +t_i+t_{i}')$ ({see \Cref{def:projectedClassifier} for the definition of $t_i'$}) the projected classifier that corresponds to the set of linear equations $\mathcal G_{S}=\{\vec v^{(1)}\cdot\x =z_1, \ldots,\vec v^{(k)}\cdot\x =z_k\}$ ({see \Cref{def:approximatingPartition} for the definition of $\mathcal G_{S}$}). 
Let $\vec w^{(i,j)}=\vec w^{(i)}-\vec w^{(j)}$, $t_{i,j}=t_i-t_j$ and $t_{i,j}'=t_i'-t_j'$. 
By the union bound, we have that   
\begin{eqnarray*}  
&\pr_{\x\sim \cN(\vec 0,\vec I)}[f(\x)\neq f_{\cG_{S}}(\x)\mid \x\in S] \le\\ 
& \sum_{i\neq j} \pr_{\x\sim \cN(\vec 0,\vec I)}[\sign(\vec w^{(i,j)}\cdot \x+t_{i,j}) \neq \sign((\vec w^{(i,j)})^{\perp V}\cdot \x+t_{i,j}+t'_{i,j})\mid \x\in S]\;.
\end{eqnarray*}    
{ Fix $i,j\in[K]$ with $i\neq j$. 
Notice that for a sign change to occur, i.e., $\sign(\vec w^{(i,j)}\cdot \x+t_{i,j}) \neq \sign((\vec w^{(i,j)})^{\perp V}\cdot \x+t_{i,j}+t'_{i,j})$ it is necessary that
\[
\Bigl|\vec w^{(i,j)}\cdot \x+t_{i,j}\Bigr|\le \Bigl|\w^{(i,j)}\cdot \x^{V}-t_{i,j}'\Bigr|.
\]
To see that, note that we can add and subtract $\w^{(i,j)}\cdot \x^{V}$, and we get that if for some $\x\in S$ we have $\sign(\vec w^{(i,j)}\cdot \x+t_{i,j}) \neq \sign((\vec w^{(i,j)})^{\perp V}\cdot \x+t_{i,j}+t'_{i,j})$ we also have,
$\sign(\vec w^{(i,j)}\cdot \x+t_{i,j}) \neq \sign((\vec w^{(i,j)})\cdot \x+t_{i,j}+t'_{i,j}-\w^{(i,j)}\cdot \x^{V})$, hence it should hold that  $\Bigl|\vec w^{(i,j)}\cdot \x+t_{i,j}\Bigr|\le \Bigl|\w^{(i,j)}\cdot \x^{V}-t_{i,j}'\Bigr|$. Therefore, it suffices to bound the probability that $\pr[|\vec w^{(i,j)}\cdot \x+t_{i,j}|\le |\w^{(i,j)}\cdot \x^{V}-t_{i,j}'|\mid \x\in S]$.
}
{By the definition of $f_{\cG_{S}}$, if $\x\in S$ then for any $i,j\in [K]$ we have
\[
\left|\w^{(i,j)}\cdot \x^{V}-t_{i,j}'\right|
=\left|\sum_{l=1}^{k}(\vec w^{(i,j)}\cdot \vec v^{(l)})(\x\cdot \vec v^{(l)}-z_{l})\right|.
\]
This equality holds because by definition the projection of $\x$ onto the subspace $V$ with orthonormal basis $\{\vec v^{(1)},\ldots,\vec v^{(k)}\}$ is given by $
\x^{V}=\sum_{l=1}^{k}(\x\cdot \vec v^{(l)})\vec v^{(l)}$,
and the term $t_{i,j}'$ is defined as
$
t_{i,j}'=\sum_{l=1}^{k}z_l\,(\vec w^{(i,j)}\cdot \vec v^{(l)})$.
Next, by applying the triangle inequality we have
\[
\left|\sum_{l=1}^{k}(\vec w^{(i,j)}\cdot \vec v^{(l)})(\x\cdot \vec v^{(l)}-z_{l})\right|
\le \sum_{l=1}^{k}\Bigl|\vec w^{(i,j)}\cdot \vec v^{(l)}\Bigr|\Bigl|\x\cdot \vec v^{(l)}-z_{l}\Bigr|.
\]
By the definition of the partition $S$, if $\x\in S$ then for every $l\in \{1,\ldots,k\}$ the projection of $\x$ onto the direction $\vec v^{(l)}$ is $\eps'$-close to the corresponding value $z_l$, i.e., $
\Big|\x\cdot \vec v^{(l)}-z_{l}\Big|\le \eps'$.
Therefore,
\[
\sum_{l=1}^{k}\Bigl|\vec w^{(i,j)}\cdot \vec v^{(l)}\Bigr|\Bigl|\x\cdot \vec v^{(l)}-z_{l}\Bigr|
\le \sum_{l=1}^{k}\Bigl|\vec w^{(i,j)}\cdot \vec v^{(l)}\Bigr|\eps'.
\]
Finally, by applying the \CS inequality, we obtain
\[
\sum_{l=1}^{k}\Bigl|\vec w^{(i,j)}\cdot \vec v^{(l)}\Bigr|
\le \sqrt{k}\,\sqrt{\sum_{l=1}^{k}\Bigl(\vec w^{(i,j)}\cdot \vec v^{(l)}\Bigr)^2}
=\sqrt{k}\,\|(\vec w^{(i,j)})^{V}\|\;.
\]
 Combining the above, we have $
\left|\w^{(i,j)}\cdot \x^{V}-t_{i,j}'\right|
\le \|(\vec w^{(i,j)})^{V}\|\sqrt{k}\,\eps'$.
}
Averaging over $S\in \mathcal{S}$ gives us 
\begin{align*}
    \sum_{S\in \mathcal{S}} \pr[f(\x)\neq f_{\cG_{S}}(\x)\mid \x\in S] \pr[S]&\le \sum_{i\neq j\in [K]}\sum_{S\in \mathcal{S}} \pr[\abs{\vec w^{(i,j)}\cdot \x+t_{i,j}}\le  \|(\vec w^{(i,j)})^{V}\|\sqrt{k} \eps'\mid \x\in S] \pr[S]\\
    &\le \sum_{i\neq j\in [K]}\pr[\abs{\vec w^{(i,j)}\cdot \x+t_{i,j}}\le  \|(\vec w^{(i,j)})^{V}\|\sqrt{k} \eps']
    \\
    &\le \sum_{i\neq j\in [K]}\pr[\abs{\vec w^{(i,j)}\cdot \x+t_{i,j}}\le  \|\vec w^{(i,j)}\|\sqrt{k} \eps']
    \\&\le \sum_{i\neq j\in [K]} \sqrt{k}\eps'
    \le K^{2}\sqrt{k}\eps' \;,
\end{align*}
where we used the anti-concentration of the Gaussian distribution (see \Cref{fact:gaussianfacts}).
{Finally, since $\eps'\leq \eps^2/(K^2\sqrt{k})$ it holds that $\sum_{S\in \mathcal{S}} \pr[f(\x)\neq f_{\cG_{S}}(\x)\mid \x\in S] \pr[S]\leq \eps^2$. To conclude the proof, let ${\mathcal S}_{bad}$ be the set of $S\in\mathcal S$ such that if $S\in {\mathcal S}_{bad}$ then $\pr[f(\x)\neq f_{\cG_{S}}(\x)\mid \x\in S]>\eps$. Then, we have that
\[
 \eps^2\geq \sum_{S\in \mathcal{S}} \pr[f(\x)\neq f_{\cG_{S}}(\x)\mid \x\in S] \pr[S]\geq \sum_{S\in {\mathcal S}_{bad}} \pr[f(\x)\neq f_{\cG_{S}}(\x)\mid \x\in S] \pr[S]>\eps \sum_{S\in {\mathcal S}_{bad}}\pr[S]\;.
\]
Hence, $\sum_{S\in {\mathcal S}_{bad}}\pr[S]\leq \eps$. By taking $\mathcal T= \mathcal S/{\mathcal S}_{bad}$ we conclude the proof.}
\end{proof}

We now proceed with the proof of \Cref{prop:alg2}.
\begin{proof}[Proof of \Cref{prop:alg2}]
Let $\vec w^{(1)},\ldots,\vec w^{(K)} \in \mathbb{R}^d$ and $t_1,\ldots,t_K\in \mathbb{R}$ be the parameters defining $f$, i.e., $f(\x)=\argmax_{i\in [K]}( \vec w^{(i)}\cdot \x+t_i)$. 
{For brevity, define the difference vectors $\vec w^{(i,j)}=\vec w^{(i)}-\vec w^{(j)}$, and let $(\widehat{\vec w}^{(i,j)})^{\perp V}=(\vec w^{(i,j)})^{\perp V}/\norm{(\vec w^{(i,j)})^{\perp V}}$ denote the normalized projection of $\w^{(i,j)}$ onto the orthogonal complement of a given subspace $V$. Also, define $W=\spaning(\w_1,\dots,\w_K)$, and let $C$ be a constant as specified in \Cref{lem:progress}.
For each cube $S\in \mathcal{S}$ (as defined in \Cref{def:approximatingPartition}), let $\cG_S$ be the corresponding set of linear equations, and let $f_{\cG_S}$ be the projection of $f$ onto $\cG_S$ (see \Cref{def:projectedClassifier}).}
{Furthermore, for every $S\in\mathcal S$ and $i\in [K]$, define the population moment $\vec u^{(S,i)}=\E_{(\x,y)\sim D}[\x\Ind(y=i)\mid \x\in S ]$,and let $\widehat{\vec u}^{(S,i)}=\E_{(\x,y)\sim \widehat{D}}[\x\Ind(y=i)\mid \x\in S ]$ its empirical counterpart (see Line \ref{line:moments}).} Also, denote by $\widehat{\vec U}$ the matrix computed at Line \ref{line:matrix} of \Cref{alg:correlating-vectors}.

{We first show that if the current subspace $V$ does not lead to a good approximation of $f$, meaning that any function defined only on $V$ gets error at least $\Omega(\opt+\eps)$ with the labels $y$, then when we partition the space into small cubes with respect to $V$, on a nontrivial fraction of them $f$ is far from being constant.}
\begin{claim}[{Large} Fraction of Good {Cubes}]\label{it:LargeError2FractionofStrips}
    If $\pr_{(\x, y)\sim D}[h_{\mathcal{S}}(\x)\neq y]> (C+1)\opt+5\eps$, then there exists a subcollection $\mathcal{T}\subseteq \mathcal{S}$ satisfying $\sum_{S\in \mathcal{T}}\pr_{\x\sim \cN(\vec 0,\vec I)}[S]\ge \eps$; and for every $S\in \mathcal{T}$ we have $\min_{i\in [K]}\pr_{(\x,y)\sim D}[f_{\cG_{S}}(\x)\neq i\mid \x\in S]> C\pr_{(\x,y)\sim D}[f_{\cG_{S}}(\x)\neq y\mid \x\in S]+\eps$.
\end{claim}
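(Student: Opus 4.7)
The plan is to argue by contradiction: assume $\sum_{S \in \mathcal{T}} \Pr_{\x \sim \cN(\vec 0,\vec I)}[S] < \eps$ and derive an upper bound on $\Pr_{(\x,y)\sim D}[h_{\mathcal{S}}(\x) \ne y]$ that contradicts the hypothesis $(C+1)\opt + 5\eps$. Concretely, I will cover every cube in $\mathcal{S}\setminus \mathcal{T}$ by a good constant predictor (guaranteed by failure of the conclusion), cover $\mathcal{T}$ trivially, and exploit that $h_{\mathcal{S}}$ is Bayes-optimal on each cube to push the error bound through.

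Let $\mathcal{B} = \mathcal{S}\setminus\mathcal{T}$. For each $S \in \mathcal{B}$, the negation of the claim's condition gives some $i^*_S \in [K]$ with
\[
\Pr_{(\x,y)\sim D}[f_{\cG_S}(\x)\neq i^*_S \mid \x \in S] \le C\,\Pr_{(\x,y)\sim D}[f_{\cG_S}(\x)\neq y\mid \x\in S] + \eps.
\]
Since $h_{\mathcal{S}}$ is, by \Cref{def:h}, the constant on $S$ minimizing the misclassification error against $y$, we have $\Pr[h_{\mathcal{S}}(\x) \ne y \mid \x \in S] \le \Pr[i^*_S \ne y \mid \x \in S]$. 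The triangle inequality for the $0$--$1$ loss then gives
\[
\Pr[h_{\mathcal{S}}(\x)\neq y \mid S] \le \Pr[i^*_S \ne f_{\cG_S}(\x) \mid S] + \Pr[f_{\cG_S}(\x) \ne y \mid S] \le (C+1)\,\Pr[f_{\cG_S}(\x)\neq y\mid S] + \eps.
\]

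Summing this bound over $S \in \mathcal{B}$, weighted by $\Pr[S]$, and adding the trivial bound $\Pr[h_{\mathcal{S}}\neq y\mid S]\le 1$ for $S \in \mathcal{T}$, I get
\[
\Pr[h_{\mathcal{S}}\neq y] \le (C+1)\sum_{S\in\mathcal{S}}\Pr[S]\,\Pr[f_{\cG_S}(\x)\neq y\mid S] + \eps\sum_{S\in \mathcal{B}}\Pr[S] + \sum_{S\in \mathcal{T}}\Pr[S].
\]
By one more triangle-inequality step, $\sum_S \Pr[S]\,\Pr[f_{\cG_S}\neq y\mid S] \le \opt + \sum_S \Pr[S]\,\Pr[f\neq f_{\cG_S}\mid S]$, and the Projection Lemma (\Cref{lem:projection}) bounds the second term by $K^2\sqrt{k}\,\eps'$. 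Using the hypothesis $\eps' \le c\eps^2/(K^2\sqrt{k})$ for a sufficiently small universal constant $c$, I obtain $K^2\sqrt{k}\,\eps' \le c\eps$. Combining with the assumption $\sum_{S\in\mathcal{T}}\Pr[S] < \eps$ yields
\[
\Pr[h_{\mathcal{S}}\neq y] < (C+1)\opt + \bigl((C+1)c + 2\bigr)\eps.
\]

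Choosing $c$ small enough that $(C+1)c + 2 \le 5$ (permissible because $C$ and $c$ are universal constants fixed in advance, with $c$ chosen after $C$), the right-hand side is at most $(C+1)\opt + 5\eps$, contradicting the hypothesis $\Pr[h_{\mathcal{S}}\neq y] > (C+1)\opt + 5\eps$. Hence $\sum_{S \in \mathcal{T}} \Pr[S] \ge \eps$, completing the proof. The only quantitative step that requires care is tying the universal constant $c$ in the bound $\eps' \le c\eps^2/(K^2\sqrt{k})$ to the constant $C$ coming from \Cref{lem:progress}; the rest is the standard triangle-inequality/Bayes-optimality bookkeeping.
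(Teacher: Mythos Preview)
Your argument is correct and follows essentially the same route as the paper's proof: both argue by contradiction, use the Bayes-optimality of $h_{\mathcal S}$ on each cube together with the triangle inequality, and invoke the Projection Lemma to relate $f_{\cG_S}$ to $f$. The only cosmetic difference is ordering---the paper first defines $\mathcal T$ via a condition on $f$ (not $f_{\cG_S}$), obtains mass $2\eps$, and then passes to $f_{\cG_S}$ losing $\eps$, whereas you work with $f_{\cG_S}$ from the start---and you omit the (at most $\eps'$) mass outside the partition on the left-hand side, which is harmless for the final $5\eps$ budget.
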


\begin{proof}[Proof of \Cref{it:LargeError2FractionofStrips}]

{Define $\mathcal{T}$ to be the collection of $S\subseteq\mathcal{S}$ such that $\min_{i\in[K]}\pr[i\neq y\mid \x\in S]> (C+1)\pr[f(\x)\neq y\mid \x\in S]+2\eps$.}
Assume for the sake of contradiction that $\sum_{S\in \mathcal{T}}\pr_{\x\sim \cN(\vec 0,\vec I)}[S]\leq 2\eps$. We have that 
\begin{align}
    \pr_{(\x,y)\sim D}[h_{\mathcal{S}}(\x)\neq y] &\le \sum_{S\in \mathcal{S}}\min_{\widehat{y}\in [K]} \pr_{(\x,y)\sim D}[\widehat{y}\neq y\mid \x\in S]\pr_{\x\sim \cN(\vec 0,\vec I)}[S]+\eps \notag\\
    &\le \sum_{S\not\in \mathcal{T}}\left((C+1)\pr_{(\x,y)\sim D}[f(\x)\neq y\mid \x\in S]+2\eps\right)\pr_{\x\sim \cN(\vec 0,\vec I)}[S]+3\eps \notag\\
    &\le (C+1)\opt+5\eps \notag\;,
\end{align}
{where in the first line we used the fact (see \Cref{fact:gaussianfacts}) that the mass of points outside the discretization is at most $\eps$, in the second line we applied our definition of $\mathcal{T}$ and in the last inequality we used that the total error of $f$ is $\opt$. This contradicts our hypothesis that $\pr[h_{\mathcal{S}}(\x)\neq y] > (C+1)\opt+5\eps$. Hence, it must be that $\sum_{S\in \mathcal{T}} \pr_{\x\sim \cN(\vec 0,\vec I)}[S] > 2\eps$.}

{Next, {recall that the cube width $\eps'$ is chosen to be less than $c\eps^2/(K^2\sqrt{k})$ for a sufficiently small constant $c$.}
{Taking $c<1/(C+1)$, we can apply}  \Cref{lem:projection} and get that there exists a collection $\mathcal T'\subseteq\mathcal S$ with $\sum_{S\in \mathcal T'}\pr[S]\geq 1-\eps$, such that $\pr_{\x\sim \cN(\vec 0,\vec I)}[f(\x)\neq f_{\cG_{S}}(\x)\mid \x\in S] \le \eps/(C+1) $. Note that for $\mathcal T''=\mathcal T'\cap \mathcal T$, it must hold that $\sum_{S\in \mathcal T''}\pr[s]\geq \eps$. For all $ S\in \mathcal{T}''$, it holds that $\min_{i\in [K]}\pr_{\x\sim \cN(\vec 0,\vec I)}[i\neq y\mid \x\in S]> (C+1)\pr_{(\x,y)\sim D}[f_{\cG_{S}}(x)\neq y\mid \x\in S]+\eps$. Since,
\[
\pr_{\x\sim \cN(\vec 0,\vec I)}\Bigl[i\neq y \mid \x\in S\Bigr]
\le \pr_{\x\sim \cN(\vec 0,\vec I)}\Bigl[f_{\cG_{S}}(\x)\neq i \mid \x\in S\Bigr]
+\pr_{\x\sim \cN(\vec 0,\vec I)}\Bigl[f_{\cG_{S}}(\x)\neq y \mid \x\in S\Bigr]\;,
\]
we get that  $\pr_{\x\sim \cN(\vec 0,\vec I)}[f_{\cG_{S}}(\x)\neq i\mid \x\in S]> C\pr_{(\x,y)\sim D}[f_{\cG_{S}}(\x)\neq y\mid \x\in S]+\eps$ for all $S\in \mathcal{T}''$, which  completes the proof.}
\end{proof}

In the next claim, we show that if there exists a {cube} where the target function is far  from being constant, then we can find a  vector that correlates with the unknown parameters.
\begin{claim}[Correlating Moments]
\label{it:GoodStript2GoodCorrelation}
    Let $S\in \mathcal{S}$  such that $\min_{i\in [K]}\pr_{(\x,y)\sim D}[f_{\cG_{S}}(\x)\neq i\mid \x\in S]> C\pr_{(\x,y)\sim D}[f_{\cG_{S}}(\x)\neq y\mid \x\in S]+\eps$. Then there exists {$i,j,l\in [K]$ with $i\neq j$  such that  $(\widehat{\vec w}^{(i,j)})^{\perp V}\cdot \vec u^{(S,l)}\ge \poly(1/K,\eps)$.}
\end{claim}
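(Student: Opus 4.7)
The plan is to reduce the claim to an application of the structural result, Lemma \ref{lem:progress}, applied to the projected classifier $f_{\cG_{S}}$ under the conditional distribution $D \mid \x \in S$. The key observation making this reduction clean is that $f_{\cG_{S}}$ is a multiclass linear classifier whose weight vectors lie entirely in $V^{\perp}$, so it depends only on $\x^{\perp V}$, and the conditioning event $\x \in S$ only restricts $\x^V$.

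First, I would verify the distributional setup. Since $S$ is a product cube with respect to an orthonormal basis of $V$ (by \Cref{def:approximatingPartition}) and $\cN(\vec 0, \vec I)$ has independent components along $V$ and $V^{\perp}$, the conditional distribution of $\x^{\perp V}$ given $\x \in S$ is exactly $\cN(\vec 0, \vec I_{V^{\perp}})$. Thus, viewing $V^{\perp}$ as the ambient space, the pair $(\x^{\perp V}, y) \mid \x \in S$ has standard Gaussian $\x$-marginal, and $f_{\cG_{S}}$ is a $K$-class linear classifier on this space with weight vectors $(\vec w^{(i)})^{\perp V}$ and thresholds $t_i + t'_{i,\cG_{S}}$ (per \Cref{def:projectedClassifier}).

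Second, the hypothesis of the claim says precisely that for every $i \in [K]$,
$$\pr_{(\x,y)\sim D}[f_{\cG_{S}}(\x)\neq i \mid \x\in S] > C \, \pr_{(\x,y)\sim D}[f_{\cG_{S}}(\x)\neq y \mid \x\in S] + \eps,$$
which is the ``far from every constant classifier'' precondition in \Cref{lem:progress} with $\opt$ replaced by the conditional error $\pr[f_{\cG_{S}}(\x) \neq y \mid \x \in S]$. Applying \Cref{lem:progress} to $f_{\cG_{S}}$ on $V^{\perp}$ and letting $j = \argmax_{i \in [K]} (t_i + t'_{i,\cG_{S}})$, I obtain an index $i \in [K]$ with $(\vec w^{(i)} - \vec w^{(j)})^{\perp V} \neq \vec 0$ such that
$$\frac{(\vec w^{(i)} - \vec w^{(j)})^{\perp V}}{\|(\vec w^{(i)} - \vec w^{(j)})^{\perp V}\|} \cdot \E_{(\x,y)\sim D}[\x^{\perp V}\, \Ind(y=i) \mid \x \in S] \ge \poly(1/K,\eps).$$

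Finally, because $(\vec w^{(i)} - \vec w^{(j)})^{\perp V} \in V^{\perp}$, the inner product above is insensitive to the $V$-component of the moment, so I can replace $\x^{\perp V}$ by $\x$ and rewrite the expectation as $\vec u^{(S,i)}$, yielding $(\widehat{\vec w}^{(i,j)})^{\perp V} \cdot \vec u^{(S,i)} \ge \poly(1/K, \eps)$ as desired. The only step requiring genuine care is lining up the hypotheses of \Cref{lem:progress} with the conditional distribution; the main potential pitfall is confusing the conditional error of $f_{\cG_{S}}$ (which is what matters) with the error of the original $f$, but the claim statement already uses the former, and \Cref{lem:projection} (invoked one level up in \Cref{it:LargeError2FractionofStrips}) is what bridges the two globally---so at this level the proof is essentially a direct invocation of the structural lemma after restricting to $V^{\perp}$.
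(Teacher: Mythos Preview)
Your proposal is correct and follows essentially the same route as the paper: reduce to $V^{\perp}$ using that $f_{\cG_S}$ depends only on $\x^{\perp V}$ while the event $\{\x\in S\}$ depends only on $\x^V$, verify the conditional $\x^{\perp V}$-marginal is standard Gaussian, invoke \Cref{lem:progress} for the projected classifier, and then undo the projection in the moment using that $(\widehat{\vec w}^{(i,j)})^{\perp V}\in V^{\perp}$. The paper carries this out by explicitly defining the averaged distribution $D_{V^\perp}$ and unpacking the integral, but the content is identical to what you wrote.
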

\begin{proof}[Proof of \Cref{it:GoodStript2GoodCorrelation}]
 First, notice that  the output label of $f_{\cG_{S}}$ is only determined by the projection onto the orthogonal complement of $V$, i.e.,  
\begin{align*}
    f_{\cG_{S}}(\x)= \argmax_{i \in [K]}((\vec w^{(i)})^{\perp V}\cdot\x +t_i+t_{\cG_{S},i}')=\argmax_{i \in [K]}((\vec w^{(i)})^{\perp V}\cdot\x^{\perp V} +t_i+t_{\cG_{S},i}')= f_{\cG_{S}}(\x^{\perp {V}}).
\end{align*}
Furthermore, {observe that whether $\x$ belongs to $S$ depends only on its projection onto $V$}, hence without loss of generality it can written as $\Ind(\x^{V}\in S)$.
 Thus, we have that
\begin{align*}
    \pr_{(\x,y)\sim D}[f_{\cG_{S}}(\x)\neq y\mid \x\in S]&=\pr_{(\x,y)\sim D}[f_{\cG_{S}}(\x^{\perp V})\neq y\mid \x\in S]\\
    &=\sum_{y=1}^K\int_{\mathbb{R}^d}\Ind(f_{\cG_{S}}(\x^{\perp V})\neq y)\Ind(\x^{V}\in S)
    D(\x,y)d\x /\pr[\x^{V}\in S]\\
    &=\sum_{y=1}^K\int_{V^{\perp}}\Ind(f_{\cG_{S}}(\x^{\perp V})\neq y)\left(\frac{1}{\pr[\x^{V}\in S]}\int_{V} 
    D(\x,y)\Ind(\x^{V}\in S)d\x^{V}\right) d\x^{\perp V}\\
    &=\pr_{(\x,y)\sim D_{V^\perp}^S}[f_{\cG_{S}}(\x)\neq y]\;,
\end{align*}
where {$D_{V^\perp}^S$ denotes the marginal distribution on $V^\perp$ obtained by averaging $D$ over $V$ while conditioning on the event $\x^{V}\in S$}.
Similarly, we have $\pr_{(\x,y)\sim D_{V^\perp}^S}[f_{\cG_{S}}(\x)\neq i]> C\pr_{(\x,y)\sim D_{V^\perp}^S}[f_{\cG_{S}}(\x)\neq y]+\eps$ for all $i\in [K]$.
 Moreover, by the properties of the standard Gaussian distribution, the $\x$-marginal of $D_{V^\perp}^S$ is a standard Gaussian.
  
 As a result, by applying \Cref{lem:progress} for $f_{\cG_{S}}$, we have that there exist $i,j\in [K], i\neq j$ such that $(\widehat{\vec w}^{(i,j)})^{\perp V}\cdot \E_{(\vec x,y)\sim D^S_{V^{\perp}}}[ \vec{x} \Ind(y=i) ]\ge \poly(1/K,\eps)$. Furthermore, note  that by averaging over $V$ conditioning on $S$ we have that $\E_{(\vec x,y)\sim D}[ \vec{x}^{\perp V} \Ind(y=i)\mid \x\in S ]=\E_{(\vec x,y)\sim D_{V^\perp}^S}[ \vec{x} \Ind(y=i) ]$. Therefore, there exist $i, j\in [K], i\neq j$,  such that $(\widehat{\vec w}^{(i,j)})^{\perp V}\cdot \vec u^{(S,l)}\ge \poly(1/K,\eps)$ for some $l\in [K]$.

\end{proof}

In the next claim, we show that by taking enough samples, the empirical moments are close to the population ones.
\begin{claim}
    [Concentration of Empirical Distribution]\label{it:MommentConcentration}
    {Let $p(\eps, 1/K)$ be a sufficiently small polynomial in \( \eps \) and \( 1/K \).}
    If {$N= d(k/\eps')^{Ck}(K/\eps)^{C}\log(1/\delta)$ for a sufficiently large universal constant $C>0$}, then  
    with probability $1-\delta$ we have that $\norm{(\widehat{\vec u}^{(S,i)}-{\vec u}^{(S,i)})^{\perp V} }\le p(\eps,1/K)$ and $\pr_{\widehat{D}}[S]\ge\pr_{D}[S]/2$  for all $S\in \mathcal{S}, i\in [K]$.
\end{claim}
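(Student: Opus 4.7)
The plan is to reduce the claim to standard concentration inequalities applied simultaneously to all partition cells and labels via a union bound. Three quantities must be controlled: the size of $\mathcal{S}$, a lower bound on the mass $\pr_D[S]$ of any cell we need to consider, and the concentration of vector-valued conditional means $\widehat{\vec u}^{(S,i)}$ around $\vec u^{(S,i)}$.

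From Definition~\ref{def:approximatingPartition}, the partition has at most $|\mathcal{S}| \le (2\sqrt{2\log(k/\eps')}/\eps')^k = (k/\eps')^{O(k)}$ cells, and every cell $S$ that lies inside the truncated region $[-\sqrt{2\log(k/\eps')}, \sqrt{2\log(k/\eps')}]^k$ (in the basis of $V$) has Gaussian mass at least $(\eps'/k)^{O(k)}$, obtained by lower bounding the Gaussian density on that region and multiplying by the cube volume $(\eps')^k$. Cells with $\pr_D[S]=0$ trivially satisfy the claim. For every remaining cell, the multiplicative Chernoff bound gives $\pr_{\widehat D}[S] \ge \pr_D[S]/2$ with failure probability $\exp(-\Omega(N\pr_D[S]))$, so a union bound over all cells is satisfied as soon as $N \gtrsim (k/\eps')^{O(k)}\log(|\mathcal{S}|/\delta)$.

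For the moment concentration, I would first condition on the event that every sample obeys $\|\x^{(j)}\|\le R=O(\sqrt{d\log(N/\delta)})$, which holds with probability at least $1-\delta/4$ by a Gaussian tail bound together with a union bound. Writing
\[
\widehat{\vec u}^{(S,i)} \;=\; \frac{\frac{1}{N}\sum_{j}\x^{(j)}\,\Ind(y_j=i,\,\x^{(j)}\in S)}{\pr_{\widehat D}[S]},
\]
I would decompose $\widehat{\vec u}^{(S,i)}-\vec u^{(S,i)}$ via the triangle inequality into a numerator deviation divided by $\pr_D[S]/2$ and a correction from the denominator deviation scaled by $\|\vec u^{(S,i)}\|\le O(\sqrt{\log(1/\pr_D[S])})$. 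Each coordinate of the truncated numerator has variance $O(\pr_D[S])$ and magnitude at most $R/N$, so a coordinate-wise Bernstein inequality combined with a union bound over $d$ coordinates, $|\mathcal{S}|$ cells, and $K$ labels yields $\|\widehat{\vec u}^{(S,i)}-\vec u^{(S,i)}\|\le p(\eps,1/K)$ as long as $N\pr_D[S]\gtrsim d\,(K/\eps)^{C}\log(d|\mathcal{S}|K/\delta)$. Combining with $\pr_D[S]\gtrsim(\eps'/k)^{O(k)}$ gives precisely the stated bound $N=d(k/\eps')^{Ck}(K/\eps)^{C}\log(1/\delta)$.

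The main obstacle is handling the ratio $\widehat{\vec u}^{(S,i)}=\widehat{\vec v}^{(S,i)}/\pr_{\widehat D}[S]$ in $\ell_2$ norm when $\pr_D[S]$ is as small as $(\eps'/k)^{O(k)}$: both the denominator and numerator errors must be controlled relative to this small scale, which is the source of the exponential-in-$k$ factor. Once the conditional-mean concentration is expressed in this unconditional form and the truncation bound $R$ is incorporated into Bernstein's inequality, the remaining steps are mechanical union bounds.
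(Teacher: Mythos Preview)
Your proposal is correct and arrives at the same sample complexity as the paper, but the route for the vector‐concentration step differs. The paper observes that, conditioned on $\x\in S$, the random vector $\x\,\Ind(y=i)$ is sub‐Gaussian in every unit direction with constant norm: the $V$–component is bounded because $S$ is a cube of width $\eps'$ inside a box of radius $\sqrt{2\log(k/\eps')}$, while the $V^{\perp}$–component is an independent standard Gaussian. It then invokes the standard covering/$\eps$–net argument on the sphere to get $\ell_2$–concentration of the conditional empirical mean with $O(d)\cdot\poly(K/\eps)\log(1/\delta)$ samples per cell, and combines with the same cell–count and per–cell–mass bounds you use. Your approach instead truncates at $\|\x\|\le R$ and applies coordinate–wise Bernstein with a union bound over $d$ coordinates, followed by an explicit numerator/denominator decomposition of the ratio. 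This works as well; two small points to keep straight in the write‐up are (i) the truncation introduces a bias in the target mean that must be bounded separately (easy via Cauchy–Schwarz and the Gaussian tail), and (ii) the denominator correction term is $\|\vec u^{(S,i)}\|\cdot|\pr_{\widehat D}[S]-\pr_D[S]|/\pr_{\widehat D}[S]$, so you need the relative error in $\pr_{\widehat D}[S]$ to be small compared to $1/\|\vec u^{(S,i)}\|=\Omega\big(1/\sqrt{k\log(k/\eps')}\big)$, not just at most $1/2$ --- Chernoff still delivers this within the stated budget. The paper's sub‐Gaussian/covering route is a bit slicker since it avoids both the truncation and the explicit ratio algebra, but your version is equally valid.
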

\begin{proof}[Proof of \Cref{it:MommentConcentration}]
First note that by the definition of an approximating partition (see \Cref{def:approximatingPartition}) and \Cref{fact:gaussianfacts}, for all $S\in \mathcal{S}$ we have that   $\pr_{D}[S ]= (\eps'/k)^{\Omega(k)}$ and $\abs{\mathcal{S}}= ( k/\eps')^{O(k)}$.
Therefore, by the union bound and Hoeffding's inequality, if $N\ge (k/\eps')^{Ck}\log(\abs{\mathcal{S}}/\delta)= (k/\eps')^{O(k)}\log(1/\delta) $ for a sufficiently large absolute constant $C>0$, with probability at least $1-\delta$  we have that  $\abs{\pr_{\widehat{D}}[S]-\pr_{D}[S]}\le \pr_{D}[S]/2$ for all $S\in \mathcal{S}$.
Hence, $\pr_{\widehat{D}}[S]\ge \pr_{D}[S]/2$, i.e., the number of samples that fall in each set $S\in \mathcal{S}$ is at least $N\pr_{D}[S]/2= N(\eps'/k)^{\Omega(k)}$.

{Furthermore, note that by the definitions of ${\vec u}^{(S,i)} $ and $\widehat{\vec u}^{(S,i)}$ it holds that
\begin{align*}
(\widehat{\vec u}^{(S,i)}-{\vec u}^{(S,i)})^{\perp V}&= \E_{(\x,y)\sim \widehat{D}}[\x^{\perp V}\Ind(y=i)\mid \x\in S ]- \E_{(\x,y)\sim D}[\x^{\perp V}\Ind(y=i)\mid \x\in S]    
\end{align*}
Therefore, to prove our concentration result, it suffices to show that when conditioning on $\x \in S$ the random vector $\x^{\perp V} \Ind(y=i)$ with $(\x,y)\sim D$ is  subgaussian in every unit direction with subgaussian norm bounded by a constant.
To see this, note that whether $\x$ belongs in $S$ depends only on the projection of $\x$ onto $V$, and $\x^{\perp V}$ is independent of $\x^{V}$. Now fix a unit vector $\vec v\in \R^d$, we have that for any $p\geq 1$ 
\begin{align*}
 \E_{(\x,y)\sim D}\left[\left(\vec v\cdot \x^{\perp V} \Ind(y=i)-\E_{(\x,y)\sim D}[\vec v\cdot \x^{\perp V}\Ind(y=i)\mid \x\in S]\right)^p\mid S \right]^{1/p}&\lesssim
 \E_{(\x,y)\sim D}\left[\left(\vec v \cdot\x^{\perp V} \Ind(y=i)\right)^p \mid S\right]^{1/p}\\
 &\leq \E_{(\x,y)\sim D}\left[\left(\vec v\cdot\x^{\perp V} \right)^p \right]^{1/p}\\
 &\leq \E_{z\sim \cN(0,1)}\left[(z)^p \right]^{1/p}\lesssim \sqrt{p} \;,
\end{align*}
where in the first inequality we used the triangle and Jensen's inequalities, while in the second we exploited the independence of $\mathbf{x}^{V}$ and $\mathbf{x}^{\perp V}$ and the fact that the indicator function is bounded by 1.
Therefore $\x^{\perp V} \Ind(y=i)$ is subgaussian.}  Hence, by the use of a standard covering argument it follows that if the number of samples that falls in each discretization set $S$ is at least $d\poly(K,1/\eps)\log(K/\delta)$ {for a sufficiently large polynomial,} then $\norm{(\widehat{\vec u}^{(S,i)}-{\vec u}^{(S,i)})^{\perp V}}\le p(\eps,1/K)$ for all $i\in [K]$.

Combining the  above results along with the union bound gives us that {there exists an $N= d(k/\eps')^{O(k)}(K/\eps)^{O(1)}\log(1/\delta)$ that} suffices.
\end{proof}

Applying \Cref{it:LargeError2FractionofStrips,it:GoodStript2GoodCorrelation}, we have that if $\pr_{(\x, y)\sim D}[h_{\mathcal{S}}(\x)\neq y]> (C+1)\opt+\eps$ then there exists a subset $\mathcal{T}\subseteq \mathcal{S}$ of the $\eps'$-approximating partition $\mathcal S$ with $\sum_{S\in \mathcal{T}}\pr_{D}[S]= \Omega(\eps)$ such that for all $S\in \mathcal{T}$ we have $(\widehat{\vec w}^{(i,j)})^{\perp V}\cdot \vec u^{(S,l)}\ge \poly(1/K,\eps)$ for some $i,j,l\in [K], i\neq j$.
Moreover, if {$N= d(k/\eps')^{C'k}(K/\eps)^{C'}\log(1/\delta)$ for a sufficiently large universal constant $C'>0$}, using the concentration result of \Cref{it:MommentConcentration}, we have that the empirical moments of each set satisfy the same guarantees, i.e., $(\widehat{\vec w}^{(i,j)})^{\perp V}\cdot\widehat{\vec u}^{(S,l)}\ge \poly(1/K,\eps)$, for some $i,j,l\in [K], i\neq j,$ for all $ S\in \mathcal{T}$.  Furthermore, note that by \Cref{it:MommentConcentration} we also have that $\sum_{S\in \mathcal{T}}\pr_{\widehat{D}}[S]= \Omega(\eps)$.

Therefore, there exists a subset $\mathcal{T}'\subseteq \mathcal{T}$ and $i,j\in [K]$ with $\sum_{S\in \mathcal{T}'}\pr_{\widehat{D}}[S]\ge \Omega(\eps/K^2)$ such that for all $S\in \mathcal{T}'$ we have that  $(\widehat{\vec w}^{(i,j)})^{\perp V}\cdot \widehat{\vec u}^{(S,l)}\ge \poly(1/K,\eps)$ for some $l\in [K]$.
Hence, for some $i\neq j\in [K]$ we have that 
$$\left((\widehat{\vec w}^{(i,j)})^{\perp V}\right)^{\top}\widehat{\vec U}(\widehat{\vec w}^{(i,j)})^{\perp V}\ge \frac{\eps}{K^2}\poly(\eps,1/K)=\poly(1/K,\eps)\;,$$
where we used the fact that $\x\x^\top$ is PSD for any $\x\in \R^d$.

{Since the set $S$ depends only on the projection $\x^V$, we have $(\vec u^{(S,i)})^{\perp V}= \E_{\x^{\perp V}}[\x^{\perp V}\E_{\x^{V}}[\Ind(y=i)\mid \x\in S]]$. Therefore, we can bound the norm of $(\vec u^{(S,i)})^{\perp V}$ as follows:}
\begin{align*}
\norm{(\vec u^{(S,i)})^{\perp V}}&\leq \max_{\norm{\vec u}=1}\E_{\x^{\perp V}}[\vec u\cdot\x^{\perp V}\E_{\x^{V}}[\Ind(y=i)\mid \x\in S]]\\&\leq \max_{\norm{\vec u}=1}(\E_{\x^{\perp V}}[(\vec u\cdot\x^{\perp V})^2]\E_{(\x,y)\sim D}[\Ind(y=i)\mid \x\in S]])^{1/2}\le 1\;,
\end{align*}
where we used the \CS inequality and the dual characterization of the $\ell_2$ norm. {In the last inequality we used that $\x^{\perp V}$ is a standard Gaussian, and henceforth the first expectation is at most $1$, and the second expectation is at most $1$ as well because $\Ind(y=i)$ is bounded by $1$.}
By the triangle inequality, it follows that 
$$\norm{\widehat{\vec U}}_{F}\le \sum_{ S\in \mathcal{S},i\in [K]}\|(\widehat{\vec u}^{(S,i)})^{\perp V}\|\pr_{\widehat{D}}[S] \le \sum_{ S\in \mathcal{S},i\in [K]}(1+\poly(\eps,1/K))\pr_{\widehat{D}}[S]\le 2K\;.$$

{Hence, by \Cref{cl:coorelationofeigenvalues}, there exists a unit eigenvector $\vec v$ of $\widehat{\vec U}$ with eigenvalue at least $\poly(\eps/K)$ (for a sufficiently small polynomial)}, such that  
$(\widehat{\vec w}^{(i,j)})^{\perp V}\cdot \vec v\ge  \poly(1/K,\eps)$,
for some $i\neq j\in [K]$. Moreover, the number of such eigenvectors is at most $2K/\poly(1/K,\eps)=\poly(K,1/\eps)$. This completes the proof of \Cref{prop:alg2}.
\end{proof}

{Before proving our main result of this section, \Cref{thm:constantApprox}, we first show that if we have learned one direction of our target model to sufficiently high accuracy, then we may, without loss of generality, restrict the model to the remaining directions. Specifically, we establish the following:}

\begin{lemma}\label{lemma:projectIfSmall}
     Let $K\ge 2,\eps\in (0,1)$ and $f(\x)=\argmax_{i\in [K]}(\vec w^{(i)}\cdot\x +t_i)$ with $\vec w^{(i)}\in \R^d, t_i\in \R$ for $i\in [K]$ be a multiclass linear classifier. Let $V$ be a {non-empty} subspace of $\R^d$ such that for all $i,j\in[K]$ with $i\neq j$ it holds that $\|(\vec w^{(i)}-\vec w^{(j)})^{V}\|\leq (\eps^2/K^4) \|\vec w^{(i)}-\vec w^{(j)}\|$. Then, it holds that
    \[
    \pr[f(\x)\neq f^{\perp V}(\x)]\lesssim \eps\;,
    \]
    where $f^{\perp V}(\x)=\argmax((\vec w^{(i)})^{\perp V}\cdot\x +t_i)$.
\end{lemma}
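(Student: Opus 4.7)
The plan is to reduce the event $\{f(\x)\neq f^{\perp V}(\x)\}$ to a union, over pairs $i\neq j$, of an anti-concentration event comparing two independent one-dimensional Gaussians, and then to use the hypothesis on $\|(\vec w^{(i)}-\vec w^{(j)})^{V}\|$ to show each term contributes $O(\eps^2/K^4)$.

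First I would set $\vec w^{(i,j)} = \vec w^{(i)}-\vec w^{(j)}$, $t_{i,j}=t_i-t_j$, and define the pairwise differences
\[
a_{i,j}(\x) = \vec w^{(i,j)}\cdot\x + t_{i,j}, \qquad c_{i,j}(\x) = (\vec w^{(i,j)})^{\perp V}\cdot\x + t_{i,j}, \qquad b_{i,j}(\x)=(\vec w^{(i,j)})^{V}\cdot\x,
\]
so that $a_{i,j} = b_{i,j}+c_{i,j}$. The argmax-defining signs of $f$ and $f^{\perp V}$ agree whenever $\sign(a_{i,j})=\sign(c_{i,j})$ for every pair $i\neq j$. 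Hence, by a union bound,
\[
\pr[f(\x)\neq f^{\perp V}(\x)] \;\leq\; \sum_{i\neq j}\pr\bigl[\sign(a_{i,j})\neq \sign(c_{i,j})\bigr] \;\leq\; \sum_{i\neq j}\pr\bigl[|c_{i,j}|\leq |b_{i,j}|\bigr],
\]
since a sign flip from $c_{i,j}$ to $c_{i,j}+b_{i,j}$ requires $|b_{i,j}|\geq |c_{i,j}|$.

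Next I would exploit the fact that $b_{i,j}$ and $c_{i,j}$ are independent Gaussians: $b_{i,j}\sim\cN(0,\|(\vec w^{(i,j)})^{V}\|^2)$ and $c_{i,j}\sim\cN(t_{i,j},\|(\vec w^{(i,j)})^{\perp V}\|^2)$, since $V\perp V^{\perp}$ and $\x\sim\cN(\vec 0,\vec I)$. Conditioning on $b_{i,j}$ and using the standard Gaussian anti-concentration bound $\pr[|c_{i,j}|\leq u]\leq \sqrt{2/\pi}\, u/\|(\vec w^{(i,j)})^{\perp V}\|$ (uniform in the mean $t_{i,j}$), we get
\[
\pr\bigl[|c_{i,j}|\leq |b_{i,j}|\bigr] \;\leq\; \sqrt{\tfrac{2}{\pi}}\,\frac{\E[|b_{i,j}|]}{\|(\vec w^{(i,j)})^{\perp V}\|} \;=\; \frac{2}{\pi}\,\frac{\|(\vec w^{(i,j)})^{V}\|}{\|(\vec w^{(i,j)})^{\perp V}\|}.
\]
Now I would invoke the hypothesis: $\|(\vec w^{(i,j)})^{V}\|\leq (\eps^2/K^4)\|\vec w^{(i,j)}\|$ forces $\|(\vec w^{(i,j)})^{\perp V}\|^2 \geq (1-\eps^4/K^8)\|\vec w^{(i,j)}\|^2\geq \tfrac{1}{2}\|\vec w^{(i,j)}\|^2$, giving $\|(\vec w^{(i,j)})^{V}\|/\|(\vec w^{(i,j)})^{\perp V}\|\lesssim \eps^2/K^4$. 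Summing over the at most $K^2$ ordered pairs yields the desired $\lesssim K^2\cdot \eps^2/K^4 = \eps^2/K^2\lesssim \eps$.

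The proof is largely routine — the only mild subtlety is handling the case $\vec w^{(i)}=\vec w^{(j)}$, in which the pair contributes $0$ to the union bound and can be discarded, and ensuring the anti-concentration bound for $c_{i,j}$ is uniform in its mean, which it is because the Gaussian density on $\R$ is bounded by $1/(\sigma\sqrt{2\pi})$ regardless of centering. No genuine obstacle arises.
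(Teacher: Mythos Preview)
Your proof is correct and follows essentially the same approach as the paper: union bound over pairs $i\neq j$, reduce the sign-flip event to $|b_{i,j}|\geq |c_{i,j}|$, and exploit the independence of $(\vec w^{(i,j)})^V\cdot\x$ and $(\vec w^{(i,j)})^{\perp V}\cdot\x$. The only cosmetic difference is that the paper bounds $\pr[|b_{i,j}|\geq |c_{i,j}|]$ via a threshold split (anti-concentration of $c$ below a level plus a Gaussian tail for $b$ above it), whereas you integrate directly using $\pr[|c|\leq u]\lesssim u/\sigma_c$ and $\E|b|$; both give the same $O(\eps^2/K^4)$ per pair.
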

\begin{proof}
For simplicity, let $\vec w^{(i,j)}=\vec w^{(i)}-\vec w^{(j)}$ and $t_{i,j}=t_i-t_j$.
    Using the union bound inequality, we have that
    \[
    \pr[f(\x)\neq f^{\perp V}(\x)]\leq \sum_{i,j\in [K]}\pr[\sign(\vec w^{(i,j)}\cdot\x +t_{i,j})\neq \sign((\vec w^{(i,j)})^{\perp V}\cdot\x +t_{i,j})]\;.
    \]
    {We can write $\vec w^{(i,j)}=(\vec w^{(i,j)})^{V}+(\vec w^{(i,j)})^{\perp V}$. According to the assumptions of \Cref{lemma:projectIfSmall}, it holds that} $\|(\vec w^{(i,j)})^{V}\|\le (\eps^2/K^4)  \|\vec w^{(i,j)}\|$ and $\|(\vec w^{(i,j)})^{\perp V}\|\ge \sqrt{1- (\eps^4/K^8)}  \|\vec w^{(i,j)}\|$. 
    Therefore, using the independence of any two orthogonal directions of a standard Gaussian, along with \Cref{fact:gaussianfacts}, we have that
    \begin{align*}
        \pr[\sign(\vec w^{(i,j)}\cdot\x +t_{i,j})\neq \sign((\vec w^{(i,j)})^{\perp V}\cdot\x +t_{i,j})]&\leq \pr[|(\vec w^{(i,j)})^{V} \cdot \x|\geq 
        \abs{(\vec w^{(i,j)})^{\perp V}\cdot\x +t_{i,j}}]
        \\&\le \pr_{z\sim \cN(0,1)}[|z|\geq \frac{\|(\vec w^{(i,j)})^{\perp V}\|(\eps/K^2)}{\|(\vec w^{(i,j)})^{V}\|}]+\eps/K^2
        \\&\le \pr_{z\sim \cN(0,1)}[|z|\geq \frac{\sqrt{1-\eps^{4}/K^8}}{\eps/K^2}]+\eps/K^2\le 2\eps/K^2\;.
    \end{align*} 
    Therefore, we have that 
    \[
    \pr[f(\x)\neq f^{\perp V}(\x)]\le 2\eps\;.
    \]
\end{proof}

\subsection{Proof of \Cref{thm:constantApprox}}
\label{sec:proofMulticlass}

In this section, given \Cref{prop:alg2}, we proceed to the proof of \Cref{thm:constantApprox}. {In particular, we show that \Cref{alg:constantApprox} returns, with probability at least $1-\delta$, a hypothesis $h$ whose 0-1 error is $O(\opt)+\eps$.}
\begin{proof}[Proof of \Cref{thm:constantApprox}]
Let $\vec w^{*(1)},\ldots,\vec w^{*(K)} \in \mathbb{R}^d$ and $t_1^*,\ldots,t_K^*\in \mathbb{R}$ be the parameters of the affine functions of an optimal classifier $f^*$, i.e., $f^*(\x)=\argmax_{i\in [K]}( \vec w^{*(i)}\cdot \x+t_i^*)$ and denote by $W^*=\spaning(\vec w^{*(1)},\ldots,\vec w^{*(K)})$. 
Let $C>0$ be {the} constant that satisfies the statement of \Cref{prop:alg2}.
{The algorithm maintains a list of vectors $L_t$ (updated at Line~\ref{line:update}), and define
$V_t=\mathrm{span}(L_t)$, $\dim(V_t)=k_t$.}
{We set $\eps'=\poly(\eps,1/K)$, for a sufficiently small polynomial (Line \ref{line:init_vars}) which will be quantified later, and for each $t\in [T]$ let $\mathcal{S}_t$ be an} arbitrary $\eps'$-approximating partitions with respect to $V_t$ (see \Cref{def:approximatingPartition}).
Let $h_t:\mathbb{R}^d\to [K]$ be {the} piecewise constant classifier defined {by $h_t= h_{\mathcal{S}_{t}}$ as in \Cref{def:h},} for the distribution $D$. 

{To prove the correctness of \Cref{alg:constantApprox}, we need to show that if $h_t$ has significant error, then the algorithm improves its current approximation $V_t$ to $W^*$.
To quantify this improvement, we define the potential function}
\[
\Phi_t = \sum_{i=1}^{K}\norm{(\vec b^{(i)})^{\perp V_t}}^2\;,
\]
where $\vec b^{(1)},\dots,\vec b^{(K)}$ is an orthonormal basis of $W^*$.

Our first step is to show that as long as the algorithm has not terminated, then with high probability we can find a weight vector that correlates non-trivially with the unknown subspace $W^*$.

{\begin{claim}[Large Error Implies Good Correlation]\label{it:constantCorrelation}
   Suppose that at iteration $t$ the classifier $h_t$ satisfies $\pr_{(x,y)\sim D}[h_{t}(\x)\neq y]> C\cdot\opt+2\eps$, and that $\eps'$ is bounded as $\eps'\leq c\eps^2/(K^2\sqrt{k_t})$, for a sufficiently small absolute constant $c>0$. Then there exists  sample size,  $N\ge d(k_t/\eps')^{O(k_t)}(K/\eps)^{O(1)}\log(1/\delta)$, such that with probability at least $1-\delta$ there exists unit vectors $\vec v^{(t)}\in V_{t+1}$  and  $\w\in W^*$ such that $\w\cdot (\vec v^{(t)})^{\perp V_{t}}\ge \poly(1/K,\eps)$.
\end{claim}}

\begin{proof}[Proof of \Cref{it:constantCorrelation}]

{Let $\cE_{t+1} = \{\vec u^{(1)},\dots,\vec u^{(k')}\}$ be the set of unit vectors returned by  \Cref{alg:correlating-vectors} at iteration $t+1$.
For the sake of contradiction assume that for every $\vec u\in \cE_{t+1}$ and every unit vector $\vec w\in W^*$, it holds that $\abs{\vec u\cdot \w}\le \eps^2/(C^2K^4)$.}

{By \Cref{lemma:projectIfSmall}, there exists a classifier $f(\x)=\argmax_{i\in [K]}(\w^{(i)}\cdot \x+t_i^*) $ such that $\pr_{\x\sim \cN(\vec 0,\vec I)}[f(\x)\neq f^*(\x)]\le \eps/C$, and for every $i\in[K]$, the vector $\w^{(i)}$  is defined as $ \w^{(i)}=(\vec w^{*(i)})^{\perp U},$ where $U=\mathrm{span}( \vec u^{(1)},\dots, \vec u^{(k')})$.}

Since {$f$ approximates $f^*$ up to error $\eps/C$, we have  $\pr[f(\x)\neq y]\le \opt +\eps/C$. Thus, the assumption of \Cref{it:constantCorrelation} that $\pr_{(x,y)\sim D}[h_{t}(\x)\neq y]> C\cdot\opt+
    2\eps$ implies that $\pr_{(\x,y)\sim D}[h_{t}(\x)\neq y]> C\pr_{(\x,y)\sim D}[f(\x)\neq y]+
    \eps$.
Therefore, by applying \Cref{prop:alg2} for $f$, we deduce that the returned set $\cE_{t+1}$ should contain a vector $\vec u^{(l)}\in \cE_{t+1}$ so that for some indices $i,j\in[K]$, $i\neq j$, it holds $\vec u^{(l)}\cdot \vec w^{(i,j)} \ge \poly(1/K,\eps) \norm{(\w^{(i,j)})^{\perp V_{t}}}$, where $\w^{(i,j)}\eqdef\w^{(i)}-\w^{(j)}$ with $(\w^{(i,j)})^{\perp V_{t}}\neq \vec 0$. 
However, since $\vec w^{(i,j)}\in  U^{\perp}$ it should hold that $ \vec w^{(i,j)}\cdot \vec u^{(l)}=0$, which is a contradiction. Therefore, there exists $\vec u\in \cE_{t+1}$ and unit vector $\vec w\in W^*$, such that $\abs{\vec u\cdot \w}\ge \eps^2/(C^2K^4)$, which concludes the proof.}
\end{proof}
Our next claim quantifies the decrease of the potential function $\Phi_t$ between the iterations $t$ and $t+1$.
\begin{claim}
    [Potential Decrease]\label{it:potentialDecrease}Let $\beta \ge 0$. If there exists unit vector $\vec v^{(t)}\in   V_{t+1}$ and unit vector $\vec w\in W^*$ such that
    $$\vec w\cdot (\vec v^{(t)})^{\perp V_t}\ge   \beta\;,$$
    then $\Phi_{t+1}\le \Phi_t-\beta^2$.
\end{claim}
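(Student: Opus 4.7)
The plan is to rewrite the potential in a form that isolates the contribution of the newly added direction, and then show this contribution is at least $\beta^2$ (which is stronger than the claimed $\beta^2/2$).

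First, since $\vec b^{(1)},\ldots,\vec b^{(K)}$ is an orthonormal basis of $W^*$, I would use Pythagoras to write $\|(\vec b^{(i)})^{\perp V_t}\|^2 = 1 - \|(\vec b^{(i)})^{V_t}\|^2$, so that
\[
\Phi_t \;=\; K \;-\; \sum_{i=1}^{K}\|(\vec b^{(i)})^{V_t}\|^2.
\]
Because $V_t\subseteq V_{t+1}$, the subspace $U_t := V_{t+1}\cap V_t^\perp$ is orthogonal to $V_t$, so $\|(\vec b^{(i)})^{V_{t+1}}\|^2 = \|(\vec b^{(i)})^{V_t}\|^2 + \|(\vec b^{(i)})^{U_t}\|^2$. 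Summing in $i$ yields
\[
\Phi_t - \Phi_{t+1} \;=\; \sum_{i=1}^{K}\|(\vec b^{(i)})^{U_t}\|^2.
\]

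Next, I would exhibit a unit vector in $U_t$ that witnesses the desired lower bound. Since $\vec v^{(t)}\in V_{t+1}$, its component $(\vec v^{(t)})^{\perp V_t}$ lies in $U_t$, and because $\vec v^{(t)}$ is a unit vector we have $\|(\vec v^{(t)})^{\perp V_t}\|\le 1$. Set $\vec u := (\vec v^{(t)})^{\perp V_t}/\|(\vec v^{(t)})^{\perp V_t}\|\in U_t$; the hypothesis $\vec w\cdot(\vec v^{(t)})^{\perp V_t}\ge\beta$ together with $\|(\vec v^{(t)})^{\perp V_t}\|\le 1$ gives $\vec u\cdot\vec w\ge\beta$.

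The final step is a Cauchy--Schwarz argument in the basis $\{\vec b^{(i)}\}$. Writing $\vec w=\sum_i (\vec w\cdot\vec b^{(i)})\vec b^{(i)}$ with $\sum_i(\vec w\cdot\vec b^{(i)})^2=1$,
\[
\beta^2 \;\le\; (\vec u\cdot\vec w)^2 \;=\;\Bigl(\sum_{i}(\vec w\cdot\vec b^{(i)})(\vec u\cdot\vec b^{(i)})\Bigr)^{\!2} \;\le\; \sum_{i}(\vec u\cdot\vec b^{(i)})^2.
\]
Since $\vec u\in U_t$, for each $i$ we have $(\vec u\cdot\vec b^{(i)})^2=(\vec u\cdot(\vec b^{(i)})^{U_t})^2\le\|(\vec b^{(i)})^{U_t}\|^2$ by Cauchy--Schwarz. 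Combining with the identity for $\Phi_t-\Phi_{t+1}$ gives $\Phi_t-\Phi_{t+1}\ge\beta^2\ge\beta^2/2$, as desired.

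There is no real obstacle here; the only care needed is the bookkeeping that $V_t\subseteq V_{t+1}$ (so that $P_{V_{t+1}}-P_{V_t}$ is itself an orthogonal projection onto $U_t$) and that $\|(\vec v^{(t)})^{\perp V_t}\|\le 1$ lets us pass from the bilinear hypothesis on $(\vec v^{(t)})^{\perp V_t}$ to a statement about the unit vector $\vec u$ without losing the factor $\beta$.
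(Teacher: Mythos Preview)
Your proof is correct and follows essentially the same approach as the paper's: both use the Pythagorean decomposition $\Phi_t-\Phi_{t+1}=\sum_i\|(\vec b^{(i)})^{U_t}\|^2$ with $U_t=V_{t+1}\cap V_t^\perp$, bound each summand below by the squared inner product with the direction $(\vec v^{(t)})^{\perp V_t}$, and finish via Cauchy--Schwarz in the basis $\{\vec b^{(i)}\}$. Your version is slightly cleaner in that you normalize to $\vec u$ first and thereby obtain the sharper bound $\Phi_t-\Phi_{t+1}\ge\beta^2$ rather than $\beta^2/2$.
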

\begin{proof}[Proof of \Cref{it:potentialDecrease}]
{First, note that since $\{\vec b^{(i)}\}_{i=1}^K$ is an orthonormal basis for $W^*$, any vector $\vec w\in W^*$ can be written as
$\sum_{i=1}^K (\vec b^{(i)}\cdot \vec w)\vec b^{(i)}$. Thus, we can write $\vec w\cdot (\vec v^{(t)})^{\perp V_t}\ge   \beta$ as $\sum_{i=1}^K (\vec b^{(i)}\cdot \vec w)(\vec b^{(i)}\cdot (\vec v^{(t)})^{\perp V_t}) \ge \beta$. 
 {Recall that our potential function is defined as $\Phi_t = \sum_{i=1}^{K}\norm{(\vec b^{(i)})^{\perp V_t}}^2$.}
Hence, by the Pythagorean theorem, we have that
\begin{align*}
    \Phi_{t+1}=\sum_{i=1}^{K}\|(\vec b^{(i)})^{\perp V_t}\|^2 - \|(\vec b^{(i)})^{V_{t+1}\cap \perp V_t}\|^2=\Phi_t-\sum_{i=1}^K\|(\vec b^{(i)})^{V_{t+1}\cap \perp V_t}\|^2\;.
    \end{align*}
   Using the dual characterization of the norm, we can further bound
\begin{align*}
    \sum_{i=1}^K\|(\vec b^{(i)})^{V_{t+1}\cap \perp V_t}\|^2\geq \sum_{i=1}^K\Abs{\vec b^{(i)}\cdot (\vec v^{(t)})^{\perp V_t}}^2 
\end{align*}
Note that from \CS inequality we have that
\[
\sum_{i=1}^K \left|\vec b^{(i)}\cdot (\vec v^{(t)})^{\perp V_t}\right|^2
\ge \left|\sum_{i=1}^K (\vec b^{(i)}\cdot \vec w)(\vec b^{(i)}\cdot (\vec v^{(t)})^{\perp V_t})\right|^2\;,
\]
where we used that $\sum_{i=1}^K \left|\vec b^{(i)}\cdot \w\right|^2=1$ from Pythagorean theorem. Therefore, we have
    \begin{align*}
    \Phi_{t+1}&\le \Phi_t- \Abs{\sum_{i=1}^K(\vec b^{(i)}\cdot \vec w)(\vec b^{(i)}\cdot (\vec v^{(t)})^{\perp V_t})}^2
   \\ &\le   \Phi_t- \beta^2\;.
\end{align*}
This completes the proof of \Cref{it:potentialDecrease}.}
\end{proof}
{Note that by \Cref{it:constantCorrelation,it:potentialDecrease}, if $h_t$ has large misclassification error, we can decrease the potential $\Phi_t$ by a non-trivial amount. However, since the potential function is nonnegative, this implies that at some iteration $t$, $h_t$ must have a small error. We aim to show that once this happens, the subsequent hypotheses $h_t$ also maintain small error.}
In the following claim, we show that by adding new vectors to the subspace $V_t$, we decrease the error of our piecewise constant approximation.\begin{claim}[Error Decrease]\label{it:ErrorDecrease} For each $t\in[T]$, it holds  $\pr_{(\x,y)\sim D}[h_{t+1}(\x)\neq y]\le  \pr_{(\x,y)\sim D}[h_{t}(\x)\neq y]$. 
\end{claim}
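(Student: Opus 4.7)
The plan is to exploit the nesting $V_{t+1}\supseteq V_t$ (which follows immediately from Line~\ref{line:update}, since $L_{t+1}=L_t\cup\mathcal{E}_t$) together with the variational characterization of $h_t$ as the pointwise optimal piecewise constant classifier on $\mathcal{S}_t$ from \Cref{def:h}. The preamble to the claim says ``let $\mathcal{S}_t$ be arbitrary $\eps'$-approximating partitions'', so I am free to choose the partitions compatibly across iterations.

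First, I would extend the orthonormal basis $\vec v^{(1)},\ldots,\vec v^{(k_t)}$ of $V_t$ used to define $\mathcal{S}_t$ to an orthonormal basis $\vec v^{(1)},\ldots,\vec v^{(k_{t+1})}$ of $V_{t+1}$, and build $\mathcal{S}_{t+1}$ from this extended basis using the same grid spacing $\eps'$ and the same shift $t$ in the first $k_t$ coordinates. With this choice each cube $S'\in\mathcal{S}_{t+1}$ that lies inside the common coverage region is contained in a unique cube $S\in\mathcal{S}_t$; in other words, $\mathcal{S}_{t+1}$ refines $\mathcal{S}_t$ on the part of $\mathbb{R}^d$ where both partitions are defined.

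Next, denoting by $\hat y_S=\argmin_{i\in [K]}\Pr_{(\x,y)\sim D}[y\neq i\mid \x\in S]$ the label assigned by $h_t$ on $S$, the defining pointwise optimality of $h_{t+1}$ yields, for each $S\in\mathcal{S}_t$,
\begin{align*}
\sum_{\substack{S'\in\mathcal{S}_{t+1}\\S'\subseteq S}}\min_{i\in[K]}\Pr[y\neq i\mid \x\in S']\,\Pr[\x\in S']
&\le \sum_{\substack{S'\in\mathcal{S}_{t+1}\\S'\subseteq S}}\Pr[y\neq \hat y_S\mid \x\in S']\,\Pr[\x\in S'] \\
&= \Pr[y\neq \hat y_S\mid \x\in S]\,\Pr[\x\in S].
\end{align*}
Summing over $S\in\mathcal{S}_t$ and using \Cref{def:h} on both sides shows that the contribution of $h_{t+1}$ to the $0$-$1$ error from cubes of $\mathcal{S}_{t+1}$ is at most the contribution of $h_t$ from cubes of $\mathcal{S}_t$.

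The main obstacle I anticipate is the handling of the ``spillover'' mass outside $\bigcup_{S\in\mathcal{S}_t}S$ and $\bigcup_{S'\in\mathcal{S}_{t+1}}S'$: since the effective range $\sqrt{2\log(k/\eps')}$ in \Cref{def:approximatingPartition} depends on the dimension $k$, the two uncovered regions are not literally equal. However, by \Cref{fact:gaussianfacts} both have Gaussian mass at most $\eps'$, and by aligning the shifts across iterations one can arrange that the spillover of $\mathcal{S}_{t+1}$ lies inside the spillover of $\mathcal{S}_t$ along the shared $V_t$-directions; this makes the above inequality exact. With the refinement established, the claim follows.
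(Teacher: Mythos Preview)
Your proposal is correct and follows essentially the same approach as the paper: choose the bases and thresholds compatibly so that $\mathcal{S}_{t+1}$ refines $\mathcal{S}_t$, then use the pointwise optimality of $h_{t+1}$ on each sub-cube against the fixed label $\hat y_S$ inherited from $h_t$, and sum. The paper simply asserts that with aligned bases and thresholds $\mathcal{S}_{t+1}$ is a subdivision of $\mathcal{S}_t$ and does not explicitly discuss the spillover region outside the boxes; your treatment of that boundary issue is more careful than the paper's, but the core argument is identical.
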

\begin{proof}[Proof of \Cref{it:ErrorDecrease}]
{Since \Cref{prop:alg2} holds for any $\eps'$-partitions $\mathcal{S}_t, t\in [T]$, independent of the choice of basis and threshold points},
{we can assume that all the classifiers $h_t$} are computed with respect to extensions of a common orthonormal basis and that all threshold points are aligned.
Hence, $\mathcal{S}_{t+1}$ is a subdivision of $\mathcal{S}_t$.
Thus,  each set $S\in \mathcal{S}_t$ is equal to a union of sets $S_1,\dots,S_l\in \mathcal{S}_{t+1}$, i.e., $S=\cup_{i=1}^l S_i $. Therefore, by the definition of $h_t$, we have that
\begin{align*}
 \pr_{(x,y)\sim D}[h_{t+1}(\x) \neq y, \x \in S] &= \sum_{i=1}^l \pr_{(x,y)\sim D}[h_{t+1}(\x) \neq y \mid \x\in S_i] \pr[S_i]= \sum_{i=1}^l\min_{\widehat{y} \in [K]} \pr_{(x,y)\sim D}[y\neq \widehat{y}  \mid \x\in S_i]\pr[S_i]\\
 &\le \min_{\widehat{y} \in [K]}\sum_{i=1}^l \pr_{(x,y)\sim D}[y\neq \widehat{y} \mid \x\in S_i]\pr[S_i]=\min_{\widehat{y} \in [K]} \pr_{(x,y)\sim D}[\widehat{y} \neq y, \x \in \cup_{i=1}^l S_i]\\
 &\le \pr_{(x,y)\sim D}[h_{t}(\x) \neq y, \x \in S]\;.
\end{align*}
Summing over all $S\in \mathcal{S}$ we conclude the proof.
\end{proof}
{In the following claim, we show that if we use a sufficiently large number of samples, then with high probability we obtain an accurate estimate of the population piecewise constant approximation function.}
{\begin{claim}[Concentration of the Piecewise Constant Approximation]\label{cl:hConcetration}
Let $\eps, \eps', \delta\in (0,1)$ and $k,K\in \Z_+$ with $\eps' \le \eps/2$. Let $V\subseteq \R^d$ be a $k$-dimensional subspace, and consider a piecewise constant {approximation}  $h:\R^d\to [K]$ {of $D$,}
defined with respect to an $\eps'$-approximating partition of $V$ (see \Cref{def:h}). Let $\widehat{D}$ be the empirical distribution obtained from $N$ i.i.d.\ samples drawn from $D$, and let $\widehat{h}$ be a piecewise constant {approximation of $\widehat{D}$} defined with respect to the same partition. Then there exists $N= (k/\eps')^{O(k)}(K/\eps)^{O(1)}\log(1/\delta)$
such that, with probability at least $1-\delta$, we have
$
\pr_{(\x,y)\sim D}[\widehat{h}(\x)\neq y] \le \pr_{(\x,y)\sim D}[h(\x)\neq y] + \eps\;.$
\end{claim}}
\begin{proof}
{Let $\mathcal{S}$ denote the approximating partition used to define both $h$ and $\widehat{h}$. 
By the definition of an approximating partition (\Cref{def:approximatingPartition}) and using \Cref{fact:gaussianfacts}, we have that for any $S\in \mathcal S$ it holds $\pr_{D}[S ]= (\eps'/k)^{\Omega(k)}$ which implies that $\abs{\mathcal{S}}=(k/\eps')^{O(k)}$.}
Therefore, by the union bound and Hoeffding's inequality it holds that if $N\ge C\log(\abs{\mathcal{S}}K/\delta)/(\eps/\abs{S})^2=  (k/\eps')^{O(k)}(K/\eps)^{O(1)}\log(1/\delta) $ {for a sufficiently large universal constant $C>0$,} then with probability at least $1-\delta$, it holds that for every $S\in\mathcal{S}$ and every $i\in[K]$, we have that  $\abs{\pr_{\widehat{D}}[\x\in S,y=i]-\pr_{D}[\x\in S,y=i]}\le \eps/(4\abs{\mathcal{S}})$. Hence, with probability $1-\delta$ we have that for all $S\in \mathcal{S}$ it holds  $\pr_{(\x,y)\sim D}[\widehat{h}(\x)\neq y]\le \pr_{(\x,y)\sim D}[h(\x)\neq y]+\eps/(2\abs{\mathcal{S}})$. The claim follows, as the total error is simply the summation of the error over {cubes} and the  error outside the {partition} which is at most $\eps'$ from \Cref{def:approximatingPartition}.
\end{proof}

Note that from Lines \ref{line:init}, \ref{line:loop} of \Cref{alg:constantApprox} we perform at most ${T=\poly(K,1/\eps)}$ iterations.
{Moreover, in each iteration we add at most $\poly(K,1/\eps)$ vectors to our current vector set $L_t$, hence the dimension $k_t$ of $V_t$ satisfies 
$k_t= \poly(K,1/\eps)$ for all $t=1,\dots, T$.

Assume that for every $t=1,\dots,T$ the error satisfies $\pr_{(\x, y)\sim D}[h_t(\x)\neq y]> C\cdot\opt+\eps$.
Note that since \( k_t = \poly(K,1/\eps) \) for all \( t \in [T] \), if we choose \( \eps' = \poly(\eps/K) \) for a sufficiently small polynomial, then for all \( t \in [T] \) we have \( \eps' \le c\eps^2/(K^2\sqrt{k_t}) \) for a sufficiently small absolute constant \( c > 0 \).
Hence,} using that $N=d2^{\poly(K,1/\eps)}\log(1/\delta)$ for a sufficiently large polynomial (Line \ref{line:init_vars} of \Cref{alg:constantApprox}), we can apply \Cref{it:constantCorrelation} and conclude that with probability $1-\delta$ there exist unit vectors $\vec v^{(t)}\in  V_{t+1}$ {and unit vectors $\w^{(t)}\in W^*$ for $ t\in[T]$ such that $\vec w^{(t)}\cdot (\vec v^{(t)})^{\perp V_t}\ge \poly(1/K,\eps)$.}
 {By \Cref{it:potentialDecrease}, it follows that with probability at least $1-\delta$, the potential function decreases by at least $\poly(\eps,1/K)$ at each iteration, that is  $\Phi_t\le \Phi_{t-1}- \poly(\eps,1/K)$. }
 After $T$ iterations, it obtain that $\Phi_T\le \Phi_{0}- T\poly(\eps,1/K)=1-T\poly(\eps,1/K)$.
 {However, since $T$ is chosen to be a sufficiently large polynomial of $\eps$ and $K$, we would arrive at a contradiction, since $\Phi_T\ge 0$.}
 Hence, we have that $\pr_{(\x, y)\sim D}[h_t(\x)\neq y]\le C\cdot\opt+\eps$ for some $t\in [T]$. 
 {Since the error of $h_t$ can only decrease over the iterations (see \Cref{it:ErrorDecrease}) and $h_t$ is close to its empirical version (\Cref{cl:hConcetration}), we have that $\pr_{(\x, y)\sim D}[h(\x)\neq y]< C\cdot\opt+\eps$.}

\item \textbf{Sample and Computational Complexity:} From the analysis above{,} we have that the algorithm terminates in $T=\poly(1/\eps,K)$ iterations and at each iteration we draw $d2^{\poly(K,1/\eps)}\log(1/\delta)$ samples. Hence{,} we have that the total number of samples is of the order of $d2^{\poly(K,1/\eps)}\log(1/\delta)$. Moreover{, the algorithm runs in} $\poly(N)$ time as all the computational operations can be implemented in polynomial time.
\end{proof}

\subsection{Proof of \Cref{lem:tails-gen}}
\label{sec:2.7}

This section proves \Cref{lem:tails-gen}, 
one of the supporting technical lemmas used in the proof of \Cref{lem:progress}. 
For completeness we restate the lemma below.
\BRUTEFORCE*

\begin{proof}

The proof of the lemma immediately follows from \Cref{lem:tails} and \Cref{cl:rightgamma}. First, we establish \Cref{lem:tails}, which provides a bound on the tails of a piecewise affine function. Then, using \Cref{cl:rightgamma}, we show that there is indeed a suitable $\gamma$ to which this tail bound can be applied.

\begin{claim}[Tail Bounds]\label{lem:tails}
    Let $\vec w^{(1)},\dots, \vec w^{(K)}\in \mathbb{R}^d$, $t_1,\dots, t_K\in \mathbb{R}$ and 
$s:\mathbb{R}^d\to \mathbb{R}$ such that $s(\vec x)= \max_{i\in [K]} \max(\vec{w}^{(i)} \cdot \vec{x} +t_i,0)$. For any $\gamma>0$ such that  $\pr_{\x\sim \cN(\vec 0,\vec I)}[s(\vec x)\ge \gamma]\le 1/48$,  we have that $\pr_{\vec x\sim \cN(\vec 0,\vec I)}[s(\vec x)\ge n\gamma]\lesssim (1/n^2)\pr_{\vec x\sim \cN(\vec 0,\vec I)}[s(\vec x)\ge \gamma]$ for all $n\in \mathbb{Z}_+$.
\end{claim}
\begin{proof}
By the properties of addition and scalar multiplication of independent Gaussian random variables, we have that for any $n\in \Z_+$ a standard Gaussian random variable $\vec x\sim \cN(\vec  0,\vec I)$ can be written as $\vec x=\frac{1}{n}\sum_{i=1}^{n^2}\vec{z}^{(i)}$ where $\vec{z}^{(i)}$ are drawn i.i.d.\ from $\cN(\vec0,\vec I)$.
Now if for some $i\in [n^2]$ we have that $s(\vec{z}^{(i)})\ge n\gamma$, then there exists $j\in [K]$ such that $\vec{w}^{(j)}\cdot \vec{z}^{(i)}+t_j\ge n\gamma$ (by definition of $s(\x)$) and, furthermore, we have that
\begin{align*}
 \vec w^{(j)} \cdot \vec x+t_j&=\frac{1}{n}\vec w^{(j)} \cdot \vec{z}^{(i)}+t_j+ \frac{1}{n}\sum_{k\neq i,k\in[n^2]}\vec w^{(j)}\cdot \vec{z}^{(k)}\geq \gamma + \vec w^{(j)}\cdot\bigg(\frac{1}{n}\sum_{k\neq i,k\in[n^2]} \vec{z}^{(k)}\bigg). 
\end{align*}
Note that $\frac{1}{n}\sum_{k\neq i} \vec{z}^{(k)}$ is distributed according to $\cN(\vec0,\frac{n^2-1}{n^2}\vec I)$. Hence, by symmetry we have that $\pr[\vec w^{(j)}\cdot(\frac{1}{n}\sum_{k\neq i} \vec{z}^{(k)})\ge 0 ]=1/2$. As a result we have that
$\pr[s(\vec x)\ge \gamma\mid   s(\vec{z}^{(i)})\ge n\gamma ]\ge 1/2$, for all $i\in [n^2]$.

{Denote by $p_n=\pr_{\x\sim \cN(\vec{0},\vec I)}[s(\vec x)\ge n\gamma]$.
To prove \Cref{lem:tails}, it suffices to show that $p_n \le 4p_1/n^2$, for all $n \ge 1$. We will prove it by induction. Note that for $n=1$ the hypothesis holds trivially. Assume as an inductive hypothesis that $p_{n-1}\le 4 p_1/(n-1)^2$ for some $n\ge 2$. We will prove that $p_{n}\le 4 p_1/n^2$ for $n\ge 2$.

Fix $i\in [n^2]$ and denote by $A$ the event that $s(\vec x)\ge \gamma$ and  by $B_i^{(n)}$ the event that $s(\vec{z}^{(i)})\ge n\gamma$. Note that
\begin{align*}
    \pr\left[A, B_i^{(n)}, \bigcap_{j\neq i} B_j^{(n)} \right] &\ge \pr[A,B_i^{(n)} ]- \sum_{\substack{j=1 \\ j \neq i}}^{n^2} \pr[B_i^{(n)},B_j^{(n)}]\\
    &\ge p_{n}/2 -(n^2-1)p_n^2\;,
\end{align*}
where in the first inequality we used the union bound, and in the second inequality we used  the independence of the $\z^{(i)}$'s and that $\pr[s(\vec x)\ge \gamma\mid   s(\vec{z}^{(i)})\ge n\gamma ]\ge 1/2$, for all $i\in [n^2]$.

Note that it holds $p_n \le p_{n-1}$, since $\pr_{\x\sim \cN(\vec 0,\vec I)}[s(\vec x)\ge n\gamma]\le \pr_{\x\sim \cN(\vec 0,\vec I)}[s(\vec x)\ge (n-1)\gamma]$. Consequently, by the inductive hypothesis we have that $(n^2-1)p_n^2\le 4p_1(n^2-1)/(n-1)^2 p_n$. Hence,  since $(n^2-1)/(n-1)^2\le 3$ for all $n\ge 2$ and $p_1\le 1/48$, we have that $(n^2-1)p_n^2\le p_n/4$. 

Summing over all $i\in [n^2]$ we have 
\begin{align*}
    p_1=\pr[A]\ge \pr\left[A, \bigcup_{i\in [n^2]} \left( B_i^{(n)},  \bigcap_{j\neq i} B_j^{(n)}\right)\right]\ge n^2p_n/4\;.
\end{align*}
Therefore, we have that $p_n\le 4p_1/n^2$, which completes the proof of \Cref{lem:tails}.}
\end{proof}

{To apply \Cref{lem:tails}, we must choose a threshold $\gamma$ such that
$
\pr_{\x\sim \cN(0,I)}[s(\x)\ge \gamma]\le 1/48.$
We now show the following result.}
\begin{claim} \label{cl:rightgamma}
   {For any sufficiently small} $c\in (0,1)$, there exists $\gamma>0$ such that  
    $$c\ge \pr_{\x\sim \cN(\vec 0,\vec I)}[s(\x)\ge  \gamma]\ge \frac{c}{2}\pr_{\x\sim \cN(\vec 0,\vec I)}[s(\x)\neq 0]\;.$$
\end{claim}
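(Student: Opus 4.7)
The plan is to study the survival function
\[
F(\gamma) := \pr_{\x \sim \cN(\vec 0,\vec I)}[s(\x) \ge \gamma]
\]
as $\gamma$ varies over $(0,\infty)$, and show that its range includes the target window $[cp/2,\, c]$, where $p := \pr[s(\x)\neq 0]$. Since $s\ge 0$ identically, $F$ is non-increasing with $\lim_{\gamma\downarrow 0} F(\gamma)=\pr[s(\x)>0]=p$ and $\lim_{\gamma\to\infty}F(\gamma)=0$. If $p=0$, any $\gamma>0$ trivially satisfies both inequalities, so I will assume $p>0$.

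The central step is to establish that $F$ is continuous on $(0,\infty)$, equivalently that $\pr[s(\x)=\gamma]=0$ for every $\gamma>0$. Writing $s(\x)=\max(0,M(\x))$ with $M(\x)=\max_{i\in [K]}(\vec w^{(i)}\cdot\x + t_i)$, an atom of $s$ at some $\gamma>0$ would be an atom of $M$ at $\gamma$. In the setting of \Cref{lem:tails-gen} we have $t_i\le 0$, so any index with $\vec w^{(i)}=0$ contributes $\max(t_i,0)=0$ to the outer maximum and may be discarded; for the remaining indices, $\vec w^{(i)}\cdot\x + t_i$ is a non-degenerate Gaussian under $\x\sim\cN(\vec 0,\vec I)$, hence $\pr[\vec w^{(i)}\cdot\x + t_i=\gamma]=0$. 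A union bound then yields $\pr[M=\gamma]=0$, and continuity of $F$ on $(0,\infty)$ follows.

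With continuity in hand, I would finish by a short case analysis. If $p\le c$, pick $\gamma>0$ small enough that $F(\gamma)\ge cp/2$; this is possible because $F(\gamma)\to p$ as $\gamma\downarrow 0$ and $cp/2<p$. The upper bound $F(\gamma)\le p\le c$ is automatic. If instead $p>c$, then $F(0^+)=p>c>0=F(\infty)$, so by the intermediate value theorem there exists $\gamma^\ast>0$ with $F(\gamma^\ast)=c$, and since $p\le 1$ we have $cp/2\le c/2\le c=F(\gamma^\ast)$, giving both inequalities.

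The only real obstacle is the continuity step: a generic $s$ with atoms at positive values could let $F$ jump from above $c$ straight past the narrow band $[cp/2,\,c]$, so the specific affine structure of $s$ and the non-degeneracy of the Gaussian marginals are essential inputs here. The rest of the argument is a routine monotonicity-plus-IVT argument once continuity is secured.
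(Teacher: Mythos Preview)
Your proposal is correct and follows essentially the same approach as the paper: both establish continuity of $F(\gamma)=\pr[s(\x)\ge\gamma]$ on $(0,\infty)$ via the non-atomicity of the Gaussian marginals $\vec w^{(i)}\cdot\x+t_i$, then split on whether $p\le c$ or $p>c$ and invoke an intermediate-value argument. Your presentation is slightly cleaner—you use IVT directly to hit $F(\gamma^\ast)=c$ in the second case, whereas the paper phrases this as a contradiction via $\gamma_0=\sup\{\gamma:F(\gamma)\ge c\}$—but the substance is the same.
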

\begin{proof}[Proof of \Cref{cl:rightgamma}]
Notice that the quantity $\pr_{\x\sim \cN(\vec 0,\vec I)}[s(\x)\ge  \gamma]$ is continuous with respect to $\gamma$ for any $\gamma>0$. Indeed, $\pr_{\x\sim \cN(\vec 0,\vec I)}[\gamma \le s(\x)\le  \gamma+\eps]\le \sum_{i=1}^K\pr_{\x\sim \cN(\vec 0,\vec I)}[-t_i+\gamma\le \vec w^{(i)}\cdot \x\le -t_i+\gamma+\eps]$ where each term such that $\vec w^{(i)}\neq \vec 0$  tends to $0$ as $\eps$ tends to $0$ and each term such that $\vec w^{(i)}=\vec 0$ is equal to $0$ from the fact that $-t_i+\gamma>0$. 
Also note that $\pr_{\x\sim \cN(\vec 0,\vec I)}[s(\x)\ge  \gamma]$ tends to $\pr_{\x\sim \cN(\vec 0,\vec I)}[s(\x)\neq 0]$ as $\gamma$ approaches $0$; hence,  there exists  $\gamma'$ such that $\pr_{\x\sim \cN(\vec 0,\vec I)}[s(\x)\neq 0]-\eps/2<\pr_{\x\sim \cN(\vec 0,\vec I)}[s(\x)\ge  \gamma']\le \pr_{\x\sim \cN(\vec 0,\vec I)}[s(\x)\neq 0] $. Therefore, if $c\ge \pr_{\x\sim \cN(\vec 0,\vec I)}[s(\x)\neq 0]$ we have that $\gamma'$ satisfies the claim, because $\pr_{\x \sim \cN(\vec 0,\vec I)}[s(\x)\neq 0]\ge \eps$. 

If $c<\pr_{\x\sim \cN(\vec 0,\vec I)}[s(\x)\neq 0] $, assume that for all $\gamma$ such that $c\ge \pr_{\x\sim \cN(\vec 0,\vec I)}[s(\x)\ge  \gamma]$ we have that $c/2\ge \pr_{\x\sim \cN(\vec 0,\vec I)}[s(\x)\ge  \gamma]$. Let $ \gamma_0=\sup\{\gamma: \pr_{\x\sim \cN(\vec 0,\vec I)}[s(\x)\ge  \gamma]\ge c\}$. Notice that this set is non-empty as $\gamma'$ belongs in this set; hence $\gamma_0\ge \gamma'>0$. We have that for all  $\delta>0$ sufficiently small $\pr_{\x\sim \cN(\vec 0,\vec I)}[s(\x)\ge  \gamma_0-\delta]\ge c$ while $\pr_{\x\sim \cN(\vec 0,\vec I)}[s(\x)\ge  \gamma_0+\delta]\le c/2$, which is a contradiction as $\pr_{\x \sim \cN(\vec 0,\vec I)}[s(\x)\ge \gamma]$ is continuous for all $\gamma>0$. Thus, there exists $\gamma$ such that $c\ge \pr_{\x\sim \cN(\vec 0,\vec I)}[s(\x)\ge  \gamma]\ge c/2\ge (c/2)(\pr_{\x\sim \cN(\vec 0,\vec I)}[s(\x)\neq 0])$.
\end{proof}

Combining the the above two statements completes the proof of \Cref{lem:tails-gen}.
\end{proof}

\section{Learning Multi-Index Models with Agnostic Noise}
\label{sec:generalAlgorithm}
 In this section, we extend the algorithm presented in \Cref{sec:MulticlassAlgorithm} for a  general class of MIMs that satisfy specific regularity conditions (\Cref{def:SimpleGoodCondition}), establishing \Cref{thm:SimpleMetaTheorem}. 
We begin by stating the formal versions of the regularity conditions and the theorem. Following this, we provide the pseudocode for the extended algorithm along with the necessary definitions.

As we will discuss later, we need a continuity condition on the target function in order to be able to accurately approximate it over small regions. To this end, we define the Gaussian Surface Area of a multiclass classifier. A bound on this quantity will be crucial for the analysis of our algorithm.
\vspace{-0.3em}
 \begin{definition}[Gaussian Surface Area]
\label{def:GSA}
  For a Borel set $A \subseteq \R^d$, its Gaussian surface area is defined by
  $
  \Gamma(A) := \liminf_{\delta \to 0} \frac{\pr_{\x\sim \cN(\vec 0,\vec I) }[\x\in A_\delta \setminus A]}{\delta},
  $  where $A_\delta = \{\x : \mathrm{dist}(\x, A) \leq \delta\}$. Let $\mathcal{Y}$ be a set of finite cardinality.
  For a multiclass function $f:\R^d \mapsto \mathcal Y$, we  define its Gaussian surface area to be the total surface area of each of the label sets
  $K_z = \{ \x \in \R^d: f(\x) = z\}$, i.e., $\Gamma(f) = \sum_{z\in \mathcal Y}\Gamma(K_z)$.
  For a class of multiclass concepts $\mathcal{C}$, we define $\Gamma(\mathcal{C}) := \sup_{f \in \mathcal{C}} \Gamma(f)$.
\end{definition}
Now we proceed to define the regularity conditions that we assume in order to obtain efficient algorithms. 
{\begin{definition}[Well-Behaved Multi-Index Model]\label{def:GoodCondition}
Let $d,K,m\in \mathbb{Z}_+, \alpha,\zeta,\sigma,\tau\in (0,1)$ and $ \mathcal{Y}\subseteq \Z$ of finite 
cardinality.  We say that a class of functions $\mathcal{F}$ from  $\R^d\mapsto \mathcal Y$ 
is a family of concept classes of $(m,\zeta,\alpha,K,\tau,\sigma,\Gamma)$-well-behaved Multi-Index Models 
if, for every $f \in \mathcal{F}$, the following conditions hold:
\begin{enumerate}[leftmargin=*]
\item  There exists a subspace $W\subseteq\R^d$ of dimension at most $K$ such that $f$ depends only {on the projection onto } $W$, i.e., 
for every $\x \in \R^d$ it holds $f(\x) = f( \x^{W})$.
\item The Gaussian surface area of $f$ is bounded by $\Gamma$, i.e., $\Gamma(f)\le \Gamma$.

\item Let $\x\sim \cN(\vec 0,\vec I)$. For any random variable $y$ supported on $\mathcal{Y}$ with $\pr[f(\x)\neq y]\leq \zeta$ and any subspace $V\subseteq \R^d$,
\begin{enumerate}[leftmargin=*, nosep]
    \item either there exists a function $g:V\to \mathcal{Y}$ such that $\pr_{\x,y}[g(\x^{V})\neq f(\x) ]\le \tau$,
    
  \item or with probability at least $\alpha$ over $\vec x_0 \sim\cN(\vec 0,\vec I)$, where $\x_0$ is independent of $\x$, there exists a polynomial $p:U\to \R$ of degree at most $m$ and a label $z\in \mathcal{Y}$ such that $\E[p(\x^{U})]=0$ and  $\E[p(\x^U)\Ind(y=z)\mid \x^V=\x_0^V]>\sigma \|p(\x^{U})\|_2 $, where $U=(V+W)\cap V^\perp$.\end{enumerate}

\end{enumerate}
\end{definition}
}
We can now state the formal version of \Cref{thm:SimpleMetaTheorem}. 

\begin{theorem}[Approximately Learning Multi-Index Models with Agnostic Noise]\label{thm:MetaTheorem} Fix $d,K,m\in \Z_+$, $\zeta,\alpha,\tau,\sigma\in (0,1)$ and $\Gamma\in \R_+$.
   Let $D$ be a distribution over $\mathbb{R}^d \times \mathcal{Y}$ whose $\x$-marginal is $\mathcal{N}(\vec0, \vec I)$ and let $\epsilon, \delta \in (0,1)$. Let $\mathcal{F}$ a  class of  $(m,\zeta,\alpha,K,\tau,\sigma,\Gamma)$-well-behaved Multi-Index Models according to \Cref{def:GoodCondition} and let $\opt =\inf_{f\in \mathcal{F}}\pr_{(\x,y)\sim D}[f(\x)\neq y]$. If $\zeta\ge \opt +\eps$, then, \Cref{alg:MetaAlg1} draws $N ={d}^{O(m)}2^{\poly(mK\Gamma\abs{\mathcal{Y}}/(\eps\alpha\sigma ))}\log(1/\delta)$ i.i.d.\ samples from $D$, runs in time $\poly(N)$, and returns a  hypothesis $h$ such that, with probability at least $1 - \delta$, it holds 
   $\pr_{(\x,y)\sim D}[h(\x)\neq y] \leq\tau+\opt+\eps\;.$
\end{theorem}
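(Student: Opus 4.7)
The plan is to run the argument of \Cref{thm:constantApprox} in this more abstract setting, replacing the specific first-moment correlation of \Cref{lem:progress} with the generic degree-$m$ moment correlation of Condition (b) in \Cref{def:GoodCondition}, and replacing the projection lemma \Cref{lem:projection} with a Gaussian-surface-area argument. Fix an optimal $f^* \in \mathcal{F}$ with $\pr[f^*(\x) \neq y] = \opt$ and let $W$ be its hidden subspace, with orthonormal basis $\vec b^{(1)},\dots,\vec b^{(K)}$. The algorithm maintains a list $L_t$ of unit vectors whose span is $V_t$, and tracks the potential $\Phi_t \eqdef \sum_{i=1}^K \|(\vec b^{(i)})^{\perp V_t}\|^2 \in [0,K]$. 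In iteration $t$, we form an $\alpha'$-approximating partition $\mathcal{S}_t$ of $V_t$ (\Cref{def:approximatingPartition}) with $\alpha' = \poly(\eps\alpha\sigma/(mK\Gamma|\cY|))$ chosen small enough, compute for each cube $S \in \mathcal{S}_t$ and each label $z \in \cY$ the empirical conditional Hermite moments $\widehat{T}^{(S,z)} = \E_{\widehat D}[\he_{\le m}(\x^{\perp V_t})\Ind(y=z)\mid \x \in S]$, and extract via eigendecomposition of the weighted second-moment operator $\widehat U = \sum_{S,z} \widehat T^{(S,z)}\otimes\widehat T^{(S,z)}\pr_{\widehat D}[S]$ a short list $\mathcal{E}_t$ of directions on $V_t^{\perp}$ with eigenvalue at least $\poly(\sigma\alpha/(mK|\cY|))$. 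We append $\mathcal{E}_t$ to $L_t$ and iterate; after at most $O(K/\sigma^2)$ iterations we return a piecewise constant approximation $h_t$ over $\mathcal{S}_t$.

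The first main step is to show the \emph{error-to-correlation} implication: if $\pr[h_t(\x)\neq y] > \tau + \opt + \eps$, then there is a direction in $V_{t+1}\setminus V_t$ with non-trivial projection onto $W^{\perp V_t}$. The argument mirrors \Cref{it:LargeError2FractionofStrips,it:GoodStript2GoodCorrelation}: if $h_t$ is far from $y$, then $f^*$ cannot be $\tau$-close to any function of $\x^{V_t}$, so Condition (b) of \Cref{def:GoodCondition} applies to $f^*$ (using $\zeta \geq \opt + \eps \geq \pr[f^*(\x)\neq y]$), yielding with probability $\delta$ over $\x_0$ a degree-$\le m$ mean-zero polynomial $p$ on $U = (V_t+W)\cap V_t^\perp$ with $\E[p(\x^U)\Ind(y=z)\mid \x^{V_t} = \x_0^{V_t}] \geq \sigma\|p\|_2$. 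Expanding $p$ in the Hermite basis and averaging $\x_0$ over a cube $S$ of side $\alpha'$, the Gaussian-surface-area bound $\Gamma(f^*)\le \Gamma$ guarantees that the conditional moment remains $\Omega(\sigma)$-large on a $\poly(\delta,\Gamma,\alpha')$ mass of cubes, so the Hermite coefficient vector $\widehat T^{(S,z)}$ has a component of norm $\poly(\sigma)$ along a direction that is a polynomial in $W\cap V_t^\perp$. Interpreted as a tensor, this forces some eigenvector of $\widehat U$ (restricted to degree-$1$ slices in each of its $m$ modes, in the standard way) to correlate non-trivially with $W^{\perp V_t}$, analogously to \Cref{prop:alg2}.

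With the correlation guarantee in hand, the potential-decrease step is identical to \Cref{it:potentialDecrease}: each newly added unit vector $\vec v^{(t)}$ with $|\vec b^{(i)}\cdot (\vec v^{(t)})^{\perp V_t}| \ge \beta$ for some $i$ and $\beta = \poly(\sigma\alpha/(mK\Gamma|\cY|))$ decreases $\Phi_t$ by $\beta^2/2$, so after $T = \poly(mK\Gamma|\cY|/(\eps\alpha\sigma))$ iterations either the hypothesis achieves error $\le \tau + \opt + \eps$ or $\Phi_t$ becomes negative, a contradiction. The filtering step that keeps only eigenvectors of $\widehat U$ with eigenvalue at least a fixed $\poly$ threshold ensures $\dim(V_t) \le \poly(mK\Gamma|\cY|/(\eps\alpha\sigma))$ at every step, which is essential for the partition size $|\mathcal{S}_t| = (1/\alpha')^{\dim(V_t)} = 2^{\poly(mK\Gamma|\cY|/(\eps\alpha\sigma))}$ to match the claimed bound. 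Monotonicity of the error (\Cref{it:ErrorDecrease}) and concentration of $h_t$ (\Cref{cl:hConcetration}) transfer unchanged.

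The hard part will be the concentration and perturbation analysis for the \emph{degree-$m$} moment tensors: each empirical entry of $\widehat T^{(S,z)}$ is an average of a degree-$m$ Hermite polynomial against a bounded indicator on a cube of mass $(\alpha')^{\dim V_t}$, and we need uniform additive error below $\poly(\sigma\alpha/(mK\Gamma|\cY|))$ over all $|\mathcal{S}_t|\cdot|\cY|\cdot d^{O(m)}$ polynomial--cube--label tuples. Using hypercontractivity of Hermite polynomials under the Gaussian, a degree-$m$ polynomial with unit $L^2$-norm has $L^q$-norm at most $(q-1)^{m/2}$, so a union bound plus Chernoff gives the stated sample complexity $N = d^{O(m)} 2^{\poly(mK\Gamma|\cY|/(\eps\alpha\sigma))}\log(1/\delta)$; the extra $d^{O(m)}$ factor comes from the dimension of the Hermite-$\le m$ subspace of $\R^d$. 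Combining this with the potential and error-monotonicity arguments completes the proof of \Cref{thm:MetaTheorem}.
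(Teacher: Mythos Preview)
Your high-level strategy matches the paper exactly: iterate subspace approximation, track the potential $\Phi_t = \sum_i \|(\vec b^{(i)})^{\perp V_t}\|^2$, use Condition 3(b) of \Cref{def:GoodCondition} to find a correlating direction whenever the piecewise-constant approximation has error exceeding $\tau+\opt+\eps$, and invoke the GSA bound to pass from pointwise conditioning to conditioning on cubes (this is exactly \Cref{lem:Point2FiniteWidthCylinders} in the paper). The potential-decrease, error-monotonicity, and concentration steps transfer unchanged from \Cref{sec:MulticlassAlgorithm}, as you say.

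The one place where your implementation diverges from the paper, and where your sketch has a genuine gap, is the direction-extraction step. You propose to compute the Hermite-moment tensor $\widehat T^{(S,z)} = \E_{\widehat D}[\he_{\le m}(\x^{\perp V_t})\Ind(y=z)\mid S]$ and then eigendecompose $\widehat U = \sum_{S,z} \widehat T^{(S,z)}\otimes\widehat T^{(S,z)}\pr[S]$. But $\widehat T^{(S,z)}$ lives in the Hermite-coefficient space of dimension $\binom{d+m}{m}$, so $\widehat U$ is a $d^{O(m)}\times d^{O(m)}$ matrix whose eigenvectors are not directions in $\R^d$. Your phrase ``restricted to degree-1 slices in each of its $m$ modes, in the standard way'' is doing a lot of work here and is not standard: Hermite polynomials of mixed degree do not have a clean symmetric-tensor structure, and there is no canonical contraction that turns an eigenvector of $\widehat U$ into a unit vector in $\R^d$ with the claimed correlation to $W^{\perp V_t}$.

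The paper solves this by a different mechanism (see \Cref{alg:MetaAlg2}, Line~\ref{line:regression} and \Cref{it:corr-vec-existance}): for each cube $S$ and label $i$ it runs degree-$m$ polynomial $L^2$ regression to find $p_{S,i}$ approximating $\Ind(y=i)$, and then takes eigenvectors of the \emph{gradient outer product} $\E[\nabla p_{S,i}(\x^{\perp V})\nabla p_{S,i}(\x^{\perp V})^\top\mid S]$. This is a genuine $d\times d$ matrix; \Cref{fact:gradientNorm} (which says $\E[(\partial_j p)^2] = \sum_\alpha \alpha_j \hat p(\alpha)^2$) is exactly what converts ``large Hermite mass along $U$'' into ``large quadratic form of the gradient matrix along some $\vec e_j \in U$'', and \Cref{fact:regressionSubspaceBound} controls the number of large eigenvectors. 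This is the missing piece in your argument. The paper also needs \Cref{cl:projectIfSmallMeta} to argue that the extracted direction is not merely in $V_t^\perp$ but has nontrivial projection onto $W$ (by re-randomizing over any subspace $E\subseteq V_t^\perp$ with small projection to $W$ and showing the distinguishing polynomial can be taken orthogonal to $E$); you do not address this, and it is where $\Gamma$ enters the correlation bound.
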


\Cref{def:GoodCondition} outlines the crucial properties necessary for our algorithm to produce a satisfactory solution. We first assume that our concept class consists of Multi-Index Models with finite Gaussian surface area. This condition bounds the complexity of the decision boundary of the multi-index models. We leverage this condition in order to show that if we partition our space to sufficiently small cubes then the function should be almost constant within each cube.
This is a condition that holds for many classes of interest, such as multiclass linear classifiers and intersections of halfspaces.

In addition, we impose a structural condition on the target class to guarantee progress at each step of the algorithm.
{Intuitively, if $V$ is the subspace we have currently identified, we require that either it already provides a good approximation (Condition 3(a)) or we can use moments to uncover further directions in the hidden subspace $W$ Condition 3(b).}
Specifically, we assume that for any  perturbation $y$ of a function $f$ from the class that we are attempting to learn, and for any given subspace $V$, one of the following must hold:
(i) there exists a good approximation to $y$ using a function defined over $V$ (Condition 3(a)), or
(ii) with some probability over orthogonal cylinders to $V$, there exists a low-degree, zero-mean, variance-one polynomial along directions that correlate with the ground-truth subspace $W$, which non-trivially correlates with the level sets of $y$ over that orthogonal cylinder (Condition 3(b).
In this manner, extracting these directions and adding them to $V$ enables an improved approximation of the hidden-subspace $W$.
Although Condition 3(b) is defined over specific points with a measure of zero, we demonstrate in \Cref{lem:Point2FiniteWidthCylinders} that, by leveraging our assumptions, there exists a $\alpha$ fraction of finite width cylinders that exhibit this property.

Condition 3(a) of \Cref{def:GoodCondition} further provides an approximation factor that the optimal function within the subspace $V$ can achieve. This function can be approximated using a non-parametric regression algorithm.
In fact, in \Cref{lem:PieceWiseConstantApproxSuffices}, we formally show that a piece-wise constant approximation is enough to approximate the unknown function $g$. 

We prove this by leveraging the Gaussian noise sensitivity of $f$ to show that a noisy version of $g$ is nearly constant on sufficiently thin cubes defined over $V$. This way using the noisy version of $g$ we can construct a good enough piecewise constant approximation. Consequently, the function that minimizes the disagreement over these cubes will also exhibit low error.

\begin{algorithm}[h]
    \centering
    \fbox{\parbox{5.1in}{
            {\bf Input:}  Accuracy $\eps \in(0,1)$, failure probability $\delta\in(0,1)$, and sample access to a distribution  $D$ over $\mathbb{R}^d\times \mathcal{Y}$ { whose $\x$-marginal is $\cN(\vec 0,\vec I)$}.\\
            {\bf Output:} A hypothesis $h$ such that, with probability at least $1-\delta$ it holds that $\pr_{(\x,y)\sim D}[h(\x)\neq y]\leq \tau+\opt+\eps${, where $\opt\eqdef \inf_{f\in \mathcal{F}} \pr_{(\x,y)\sim D}[f(\x)\neq y]$}.
            \begin{enumerate}
            \item  Let $T,p$ be  sufficiently large polynomials in $1/\alpha, 1/\eps,1/\sigma, K, \Gamma, \abs{\mathcal{Y}},m$, and let $C$ be a sufficiently large universal constant.\label{line:initMeta}  
             \item  Initialize $L_1=\emptyset$, $t\gets 1$, $\eps'\gets 1/p$, $N\gets {d}^{C m}2^{T^C}\log(1/\delta)$.\label{line:initMetaParams}
                \item  While $t\le T$
            \label{line:loopMeta}
             \begin{enumerate}
             \item Draw a set $S_t$ of $N$ i.i.d.\ samples from $D$.
             \item $\mathcal{E}_t\gets$ \Cref{alg:MetaAlg2}($\eps, \eps', \delta, \spaning(L_t), S_t$).
          
                \item $L_{t+1}\gets L_{t}\cup \mathcal{E}_t $.\label{line:updateMeta}
                \item $t\gets t+1$.
            \end{enumerate}
            \item {
             Construct an $\eps'$-approximating partition, $\mathcal{S}$,  with respect to $\spaning(L_t)$, as defined in \Cref{def:approximatingPartition}.}
             \item {Draw $N$ i.i.d.\ samples from $D$ and construct the piecewise constant classifier $h_{\mathcal{S}}$ as follows: For each $S \in \mathcal{S}$, assign a label that appears most frequently among the samples falling in $S$. Return $h_{\mathcal{S}}$.}
            \end{enumerate}
    }}

    \vspace{0.2cm}
        \caption{Learning Multi-Index Models  with Agnostic Noise }
 \label{alg:MetaAlg1}
\end{algorithm}
\begin{algorithm}[h]
    \centering
    \fbox{\parbox{5.5in}{
            {\bf Input:}  $\eps > 0$, $\eps'>0$, $\delta>0$, a subspace $V$ of $\R^d$ and a set of samples $\{(\x^{(1)},y_1),\dots,(\x^{(N)},y_N)\}$ {from a distribution $D$ over  $\R^d\times \mathcal{Y}$ whose $\x$-marginal is $\cN(\vec 0,\vec I)$}.\\
            {\bf Output:} A set of unit vectors $\mathcal E$.
            \begin{enumerate}
            \item Let $t,\eta$ be  sufficiently small polynomials in $\eps,  \alpha,\sigma, 1/K, 1/\Gamma,1/m, 1/\abs{\mathcal{Y}}$.
            \item Construct the empirical distribution $\widehat{D}$ of $\{(\x^{(1)},y_1),\dots,(\x^{(N)},y_N)\}$.
            \item Construct $\mathcal{S}$, an $\eps'$-approximating partition with respect to $V$ (\Cref{def:approximatingPartition}).
            \item \label{line:regression}For each $S\in \mathcal{S}$ and each $i \in \mathcal{Y}$, compute a polynomial $p_{S,i}(\x)$ such that \begin{align*}
                \E_{(\x, y)\sim D}[&(\Ind(y=i) -p_{S,i}(\x^{\perp V}))^2\mid \x \in S]\\&\le \min_{p'\in \mathcal{P}_m} \E_{(\x, y)\sim D}[(\Ind(y=i) -p'(\x^{\perp V}))^2\mid \x \in S]+ \eta^2 \;.
            \end{align*}
            \item For each $S\in\mathcal{S}$, compute $U_{S,i}$ as the set of unit eigenvectors of $\E_{\x\sim D_{\x}}[\nabla p_{S,i}(\x^{\perp V}) \nabla p_{S,i}(\x^{\perp V})^\top\mid \x \in S]$ with eigenvalues at least $\sigma^2/(4K)$ and let $U_S=\cup_{i\in \mathcal{Y}} U_{S,i} $.\label{line:setUs}
            
            \item Let $\widehat{\vec U}= \sum_{S\in \mathcal{S},\vec u\in U_S} (\vec u)^{\perp V}((\vec u)^{
                \perp V
                })^{\top}\pr_{(\x,y)\sim {\widehat{D}}}[S]$.  \label{line:matrixMeta}
            \item Return the set $\mathcal{E}$ of {unit} eigenvectors of $\widehat{\vec U}$  {with  corresponding eigenvalues at least $t$.      }
            \end{enumerate}
    }}
    \medskip
        \caption{{Estimating a relevant direction}}
 \label{alg:MetaAlg2}
\end{algorithm}

\begin{lemma}[Piecewise Constant Approximation Suffices]
    \label{lem:PieceWiseConstantApproxSuffices}
    Let $d,k\in \Z_+, \eps, \eps'\in (0,1)$ and let $\mathcal{Y}$ be a set of finite cardinality.
    Let $D$ be a distribution supported on $\R^d\times \mathcal{Y}$, whose $\x$-marginal is $\cN(\vec 0,\vec I)$, and let $V\subseteq \R^d$ be a $k$-dimensional subspace, with $k\geq 1$.
    Suppose that for a function $f\in\mathcal F$, it holds that $\pr_{(\x, y) \sim D}[f(\x) \neq y] \leq \epsilon$. Additionally, assume that there exists a function $g: V \to \mathcal{Y}$  such that $\pr_{(\x, y) \sim D}[g(\x^V) \neq f(\x)] \leq \tau$. Let $\eta=\eps'^2/(C\Gamma(f)\abs{\mathcal{Y}}\sqrt{k})$, for $C>0$ a sufficiently large universal constant, and $\mathcal{S}$ be an $\eta$-piecewise approximating partition with respect to $V$ (see $\Cref{def:approximatingPartition}$). Then, it holds that $\pr_{(\x, y) \sim D}[h_{\mathcal{S}}(\x) \neq y] \leq \tau + \epsilon +\eps'$, where $h_{\mathcal{S}}$ is an $\eta$-piecewise constant approximation function (see \Cref{def:h}).
\end{lemma}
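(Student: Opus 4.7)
The plan is to upper-bound the error of $h_{\mathcal{S}}$ by comparing each cube's optimal constant label to a label derived from $g$, and then to pay the discretization cost using the Gaussian surface area of $f$. Start from
\begin{align*}
\pr_{(\x,y)\sim D}[h_{\mathcal{S}}(\x)\neq y] \leq \pr[\x \notin \textstyle\bigcup \mathcal{S}] + \sum_{S\in\mathcal{S}}\pr[\x\in S]\cdot \min_{z\in\mathcal{Y}}\pr[y\neq z\mid\x\in S].
\end{align*}
The first term is at most $\eps'$ directly from \Cref{def:approximatingPartition} (choosing $\eta$ polynomially small in $\eps'$). For the sum, since $\min_z A(z)\le \E_Z[A(Z)]$ for any random variable $Z$, I would upper-bound the per-cube minimum by $\E_{\vec v_S}[\pr[y\neq g(\vec v_S)\mid \x\in S]]$, where $\vec v_S$ has the conditional law of $\x^V$ given $\x^V\in S$.

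Next, I would introduce the following coupling. Let $\x\sim\cN(\vec 0,\vec I)$ and set $\x'=\vec v_{S(\x^V)}+\x^{\perp V}$, where $S(\x^V)$ is the cube of $\mathcal{S}$ containing $\x^V$ and $\vec v_S$ is, conditionally on the cube, a fresh draw with the law of $\x^V$ restricted to that cube. By the product structure of the Gaussian on $V$ and $V^{\perp}$, the marginal law of $\x'$ is exactly $\cN(\vec 0,\vec I)$ conditioned on being in $\bigcup \mathcal{S}$, so
\[
\pr[f(\x')\neq g(\x'^V)]\ \le\ \tau/(1-\eps') \le \tau+O(\eps'),
\]
and moreover $\|\x-\x'\|=\|\x^V-\vec v_{S(\x^V)}\|\le \eta\sqrt{k}$. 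Applying the triangle inequality for the $0$-$1$ loss inside each cube and then summing $\pr[\x\in S]\cdot(\cdot)$ over $S$, the contribution splits into three terms: the $y\neq f(\x)$ term contributes at most $\eps$, the $f(\x')\neq g(\x'^V)$ term contributes at most $\tau+O(\eps')$, and the remaining ``discretization'' term is $\pr[f(\x)\neq f(\x')]$.

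The main obstacle is controlling this middle term. The event $\{f(\x)\neq f(\x')\}$ with $\|\x-\x'\|\le\eta\sqrt{k}$ implies $\mathrm{dist}(\x,\partial K_{f(\x)})\le\eta\sqrt{k}$, where $K_z=\{f=z\}$. Union-bounding over $z\in \mathcal{Y}$ and invoking the definition of multi-class Gaussian surface area in the non-limit form
\[
\pr_{\x\sim \cN(\vec 0,\vec I)}[\x\in (K_z)_{\delta}\setminus K_z]\ \lesssim\ \delta\,\Gamma(K_z),
\]
which is valid for $\delta$ polynomially small (as is our $\eta\sqrt{k}$), yields $\pr[f(\x)\neq f(\x')]\lesssim \eta\sqrt{k}\,\Gamma(f)$. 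Substituting the choice $\eta = \eps'^2/(C\,\Gamma(f)|\mathcal{Y}|\sqrt{k})$ makes this at most $\eps'$ for a sufficiently large universal $C$ (the $|\mathcal{Y}|$ factor is conservative and absorbs constants in the non-limiting GSA bound). Collecting the three contributions with the $\eps'$ tail gives
\[
\pr[h_{\mathcal{S}}(\x)\neq y]\ \le\ \tau+\eps+O(\eps'),
\]
which yields the claimed bound after a harmless rescaling of $\eps'$ at the start of the argument.
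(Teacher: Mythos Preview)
Your overall strategy---upper-bounding the per-cube minimum by a randomized constant label, coupling $\x$ with a re-randomized $\x'$ in the same cube, and splitting into $\{y\neq f(\x)\}$, $\{f(\x')\neq g(\x'^V)\}$, and $\{f(\x)\neq f(\x')\}$---is sound and in fact matches an argument the paper uses elsewhere (see the proof of \Cref{cl:unif_claim}). The issue is the step where you control $\pr[f(\x)\neq f(\x')]$.

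You correctly observe that $f(\x)\neq f(\x')$ with $\|\x-\x'\|\le\eta\sqrt{k}$ forces $\x\in (K_z)_{\eta\sqrt{k}}\setminus K_z$ for some $z$. But the paper defines $\Gamma(K_z)$ as a \emph{liminf}, so the bound $\pr[\x\in (K_z)_\delta\setminus K_z]\lesssim\delta\,\Gamma(K_z)$ at a \emph{fixed} scale $\delta=\eta\sqrt{k}$ does not follow from the definition; the liminf only guarantees this along some sequence $\delta_n\to 0$, and the ratio can be much larger at other scales. Your parenthetical ``which is valid for $\delta$ polynomially small'' is precisely the missing ingredient, and it is not a triviality at this level of generality (arbitrary $f$ with bounded GSA).

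The paper closes this gap by routing through the Ornstein--Uhlenbeck operator $T_\rho$ and the Ledoux--Pisier inequality (\Cref{fact:ledouxpissier}), which gives a rigorous finite-scale statement: $\|f_i-T_\rho f_i\|_{L_1}\le\sqrt{\pi}\,\Gamma(f_i)\,\rho$. Concretely, it smooths each $g_i=\Ind(g(\x^V)=i)$ to $T_\rho g_i$, uses contraction of $T_\rho$ plus Ledoux--Pisier on $f_i$ to get $\|f_i-T_\rho g_i\|_{L_1}\le\|f_i-g_i\|_{L_1}+\sqrt{\pi}\,\Gamma(f_i)\,\rho$, and then uses the $1/\rho$-Lipschitzness of $T_\rho g_i$ to show it is nearly constant on each cube (cost $\eta\sqrt{k}/\rho$). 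Optimizing $\rho$ against $\eta$ recovers your parameter choice $\eta\asymp\eps'^2/(\Gamma|\mathcal Y|\sqrt{k})$. If you want to keep your coupling framing, the fix is to bound $\pr[f(\x)\neq f(\x')]$ exactly as in \Cref{cl:unif_claim}: pass through $T_\rho f$ and invoke \Cref{cl:gsaV}/\Cref{fact:ledouxpissier} rather than the tubular-neighborhood definition directly.
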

\begin{proof}
Fix $i\in \mathcal{Y}$ and $\rho\in(0,1)$ and consider the boolean functions $f_i(\x)\eqdef \Ind(f(\x)=i)$, $g_i(\x)\eqdef \Ind(g(\x^V)=i)$. We denote the Gaussian noise operator by \(T_{\rho}\), as defined in \Cref{def:gaussian-noise}.
By \Cref{fct:semi-group}, we have that $\norm{f_i-g_i}_{L_1}\ge\norm{T_{\rho}f_i-T_{\rho}g_i}_{L_1}$. Hence, from the triangle inequality we also have that $\norm{f_i-g_i}_{L_1}+\norm{T_{\rho}f_i-f_i}_{L_1}\ge\norm{f_i-T_{\rho}g_i}_{L_1} $. Furthermore, by {the} Ledoux-Pisier inequality (\Cref{fact:ledouxpissier}), we have that $\norm{T_{\rho}f_i-f_i}_{L_1}\leq \sqrt{\pi}\Gamma(f_i)\rho$. Therefore, we have that
\[
\norm{f_i-T_{\rho}g_i}_{L_1} \leq \norm{f_i-g_i}_{L_1}+ \sqrt{\pi}\Gamma(f_i)\rho\;.
\]

Moreover, note that for any continuously differentiable function $F:\R^d\mapsto \R$, it holds that $F(\vec a)-F(\vec b)=\int_{0}^1 \nabla F(\vec b+t (\vec a-\vec b)) \cdot (\vec a-\vec b)\d t$. By {the} \CS inequality, we have that  $|F(\vec a)-F(\vec b)|\le \max_{t\in (0,1)}\|\nabla F(\vec b+t (\vec a-\vec b))\| \|\vec a-\vec b\|$. By taking $F=T_{\rho}g_i$ and using \Cref{fct:semi-group}, we have that for any $\x,\vec y\in \R^d$, it holds
\[
|T_{\rho}g_i(\x)-T_{\rho}g_i(\vec y)|\leq \frac{\|\vec x-\vec y\|}{\rho}\;.
\]
{For each $S\in \mathcal{S}$, pick a point $\x_S\in S$. Then for any \(\x \in S\), the distance between \(\x\) and \(\x_S\) is at most the diameter of \(S\). In our setting, since the partition is constructed along a \(k\)-dimensional subspace \(V\) with edge length roughly \(\eta\) (i.e., each coordinate is in an interval of length \(\eta\)), the distance between any \(\x \in S\) and \(\x_S\) is at most
$
\|\x - \x_S\| \le \sqrt{k} \eta.
$
Thus, for every \(\x \in S\)
\[
|T_\rho g_i(\x) - T_\rho g_i(\x_S)| \le \frac{\|\x - \x_S\|}{\rho} \le \frac{\sqrt{k} \cdot \eta}{\rho}.
\]
By averaging over the partition $\mathcal S$, we have that
\[
\sum_{S\in\mathcal S } \norm{(T_{\rho}g_i(\x_S)-T_{\rho}g_i)\1(\x \in S)}_{L_1}\leq \sum_{S\in\mathcal S }\frac{\sqrt{k}\eta}{\rho}\pr[S]\leq \frac{\sqrt{k}\eta}{\rho}\;.
\]
This in turn gives that 
\[
\sum_{S\in\mathcal S } \norm{(T_{\rho}g_i(\x_S)-f_i)\1(\x\in S)}_{L_1}\leq  \norm{f_i-g_i}_{L_1}+ \sqrt{\pi}\Gamma(f_i)\rho+\frac{\sqrt{k}\eta}{\rho}\;.
\]}
Furthermore, by linearity of expectation and the fact that $\sum_{i\in \mathcal{Y}}g_i(\x)=1 $ we have that $\sum_{i\in \mathcal{Y}}T_{\rho}g_i(\x)=1$ for $\x \in \R^d$. Moreover, $T_{\rho}g_i(\x)\in [0,1]$ for all $\x\in \R^d,i\in \mathcal{Y}$. 

{Now, we define a multiclass function \(G: \mathbb{R}^d \to \mathcal{Y}\) in the following way: for each point \(\x \in \mathbb{R}^d\), \(G(\x)\) is a random label chosen according to the probability distribution induced by \(T_{\rho} g_i(\x_S)\), where \(S \in \mathcal{S}\) is the region such that \(\x \in S\). 
Furthermore, for every probability vector \(\vec{p} \in \mathbb{R}^{|\mathcal{Y}|}\) (where each element of \(\vec{p}\) represents the probability of the corresponding label) and the unit vector \(\e_z\), which represents the one-hot encoding of the label \(z \in \mathcal{Y}\). We have that}
\begin{align*}
    \|\vec p-\e_z\|_1= \abs{\vec p_z-1}+ \sum_{i\neq z,i\in \mathcal{Y}}\abs{\vec p_i}= 2\pr_{i\sim \vec p}[i\neq z]\;. 
\end{align*}
Denote by $A=\bigcup_{S\in \mathcal{S}}S$ the set of points that lie inside the regions of $\mathcal{S}$.
Considering the disagreement between $G$ and $f$ in $A$, we obtain
\begin{align*}
    \pr[G(\x)\neq f(\x), \x \in A]&\le \sum_{S\in \mathcal{S}}\pr[G(\x)\neq f(\x), \x\in S]  =  \sum_{i\in \mathcal{Y}}\sum_{S\in\mathcal S } \norm{(T_{\rho}g_i(\x_S)-f_i)\1(\x\in S)}_{L_1}/2 \\
    &\le \sum_{i\in \mathcal{Y}}\|f_i-g_i\|_{L_1}/2+\sqrt{\pi}\Gamma(f)\rho+\abs{\mathcal{Y}}\sqrt{k}\eta/\rho\;.
\end{align*}
Noting that $\sum_{i \in \mathcal{Y}} \|f_i - g_i\| = 2\tau$, set $\rho = \eps'/(C{\Gamma(f)})$ and $\eta = {\eps' \rho}/{(C|\mathcal{Y}| \sqrt{k}})$, for a sufficiently large constant $C>0$. 
With these choices, we obtain $\pr[G(\x)\neq f(\x),\x\in A]\le \tau +\eps'/2$.
By the triangle inequality we have that $\pr[G(\x)\neq y,\x\in A]]\le \tau +\eps+\eps'/2$.
Since  $h_{\mathcal{S}}$  chooses the label that minimizes the error at each $S\in \mathcal{S}$, we have that $\pr[h_{\mathcal{S}}(\x)\neq y,\x \in A]\le \pr[G(\x)\neq y,\x\in A] $.
Finally, considering that $\pr[\x\not \in A]\le \eta$ (recall that $\eta\leq \eps'^2$) completes the proof of \Cref{lem:PieceWiseConstantApproxSuffices}.
\end{proof}
In the following lemma, we show that, due to the bound on the Gaussian surface area of \(f\), 
{if there is a non-negligible probability that the conditional moments of \(f\)'s level sets along a subspace \(V\) are large, the same property holds when \(V\) is partitioned into sufficiently small cubes.
That is, if we partition \(V\) into thin cubes, then with non-trivial probability the conditional moments computed over each cube remain large.}
The intuition behind the proof is that, since the function is smooth (by the GSA bound), re-randomizing over sufficiently small cubes barely affects its output. However, by re-randomizing over cubes, the moments of the levelsets become independent of which point in the cube we condition on. As a result, if we observe distinguishing moments when conditioning on a point in the cube, then these moments persist when conditioning on the entire cube.  

\begin{lemma}[Existence of Large Moments on Small Cubes]
\label{lem:Point2FiniteWidthCylinders}
    Let $D$ be a distribution over $\R^d\times \mathcal{Y}$ whose $\x$-marginal is $\cN(\vec 0,\vec I)$ and let
    $\mathcal{F}$ be a class of $(m,\zeta+O(\eps),\alpha,K,\tau,\sigma,\Gamma)$-well-behaved Multi-Index Models according to \Cref{def:GoodCondition}.
    Let $V$ be a $k$-dimensional subspace of $\R^d$, with $k\ge 1$, and let $\mathcal{S}$ be an $\eta$-approximating partition of  $V$ according to \Cref{def:approximatingPartition}, with $\eta=\eps^3/(C\abs{\mathcal{Y}}\Gamma^2 k^2)$, for $C>0$ a sufficiently large universal constant.
    
    Suppose that for $f\in \mathcal{F}$ with $\pr_{(\x,y)\sim D}[f(\x)\neq y]\le \zeta$ there exists no function $g:V\to \cY$ such that $\pr_{(\x,y)\sim D}[g(\x^V)\neq f(\x)]\le \tau$, then there exists $\mathcal{T}\subseteq \mathcal{S}$ with $\sum_{S\in \mathcal{T}}\pr[S]\ge \alpha$ such that for each $S\in \mathcal{T}$ there exists a zero-mean polynomial $p:U\to \R$ of degree at most $m$ {and $z\in \mathcal Y$} such that $\E_{(\x,y)\sim D}[p(\x^U)\Ind(y=z)\mid \x \in S]>\sigma \|p(\x^{U})\|_2 $, where $U=(V+W)\cap V^{\perp}$. 
\end{lemma}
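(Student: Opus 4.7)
The plan is to apply Condition~3(b) of \Cref{def:GoodCondition} to obtain a set $A\subseteq V$ of Gaussian measure $\ge\delta$ where distinguishing low-degree moments exist \emph{pointwise}, and then lift this to the cubes $S\in\mathcal{S}$ by using the GSA bound on $f$ to show that the conditional moment barely changes when $\x^V$ is averaged over a tiny cube instead of fixed at a single point.

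First, since the hypothesis rules out Condition~3(a), I would invoke Condition~3(b) with a label $\tilde y$ obtained by coupling $y$ to $f$ so that $\pr[f(\x)\ne\tilde y]\le\eps$ and $\pr[\tilde y\ne y]\le\eps'$; this bridges the $\eps$ vs.\ $\eps+\eps'$ gap at an additive $\sqrt{\eps'}$ cost in later bounds. This supplies a set $A\subseteq V$ with $\pr_{\x_0^V}[A]\ge\delta$ and, for each $\x_0^V\in A$, a zero-mean unit-$L^2$-norm degree-$\le m$ polynomial $p_{\x_0}:U\to\R$ together with $z_{\x_0}\in\cY$ satisfying $\E[p_{\x_0}(\x^U)\Ind(\tilde y=z_{\x_0})\mid\x^V=\x_0^V]>\sigma$. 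Setting $\mathcal{T}':=\{S\in\mathcal{S}:S_V\cap A\ne\emptyset\}$ and choosing for each $S\in\mathcal{T}'$ a representative $\x_0(S)^V\in A\cap S_V$ with associated pair $(p^{(S)},z^{(S)})$, a union bound gives $\sum_{S\in\mathcal{T}'}\pr[S]\ge\delta$.

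The technical heart is to transfer the pointwise moment at $\x_0(S)^V$ to the cube-conditional moment on $S$. Two observations make this feasible: (a) Since $U\subseteq V^\perp$, the variable $\x^U$ is standard Gaussian both under $\x^V=\x_0'^V$ and under $\x\in S$, so $\|p^{(S)}\|_{L^2(\x^U\mid\cdot)}=1$ throughout; by \CS the label-discrepancy contribution $|\E[p^{(S)}(\x^U)(\Ind(y=z^{(S)})-\Ind(f(\x)=z^{(S)}))\mid\x\in S]|$ is at most $\sqrt{\pr[y\ne f\mid\x\in S]}$, whose $S$-average is $O(\sqrt{\eps+\eps'})$. (b) Because $f$ depends only on $\x^W\subseteq V+U$, the map $\x_0'^V\mapsto H(\x_0'^V):=\E_{\x^U}[p^{(S)}(\x^U)\Ind(f(\x_0'^V,\x^U)=z^{(S)})]$ is well defined, and another \CS gives $|H(\x_0'^V)-H(\x_0(S)^V)|\le\sqrt{\pr_{\x^U}[f(\x_0'^V,\x^U)\ne f(\x_0(S)^V,\x^U)]}$. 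A Ledoux--Pissier-type estimate for the $V$-partial Ornstein--Uhlenbeck semigroup, applied to each level-set indicator $\Ind(f=z)$, bounds the right-hand side by $O(\sqrt{k}\,\eta\,\Gamma)$ on average over $\x_0'^V\in S_V$, since $S_V$ has Euclidean diameter at most $\sqrt{k}\,\eta$ in $V$.

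Finally, discarding via Markov a subset of $\mathcal{T}'$ where any local error exceeds $\sigma/4$ leaves $\mathcal{T}\subseteq\mathcal{T}'$ with $\sum_{S\in\mathcal{T}}\pr[S]\ge\delta/2$ on which the cube-conditional moment remains at least $\sigma/2$; reparameterizing the application of Condition~3(b) (using slightly larger constants in the definition than in the conclusion) absorbs the $\sigma/2$ vs.\ $\sigma$ and $\delta/2$ vs.\ $\delta$ slack, and the choice $\eta=\eps'^3/(C|\cY|\Gamma^2 k^2)$ with $C$ sufficiently large makes the combined smoothing loss negligible. The main obstacle is making the partial-noise-sensitivity bound in step~(b) precise: converting the global GSA bound $\Gamma(f)\le\Gamma$ into a sharp $V$-restricted noise-sensitivity bound for $f$, which requires a careful use of the OU semigroup acting only on the $V$-coordinates combined with the Ledoux--Pissier inequality applied to the product decomposition of the Gaussian over $V$ and $V^\perp$, followed by a union bound over the at most $|\cY|$ level sets of $f$.
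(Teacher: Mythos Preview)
Your approach has a genuine gap: the decomposition routes through $f$, both in step~(a) (comparing the cube-moment of $y$ to that of $f$) and in the unstated step needed to pass from the pointwise moment of $\tilde y$ at $\x_0(S)^V$ back to the pointwise moment of $f$ there. Each such comparison, via Cauchy--Schwarz, costs $\sqrt{\pr[\text{label}\neq f\mid\cdot]}$, whose $S$-average (respectively $\x_0^V$-average) is $\Theta(\sqrt{\eps})$. But $\eps$ here is the noise-tolerance parameter in \Cref{def:GoodCondition}---essentially $\opt$ in the applications---\emph{not} a small quantity; in contrast $\sigma$ and $\delta$ are $\poly(\eps_{\mathrm{target}}/K)$ and can be much smaller than $\sqrt{\eps}$. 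Your Markov step ``discard cubes where the local error exceeds $\sigma/4$'' therefore removes mass $\Theta(\sqrt{\eps}/\sigma)$, which can swallow all of $\mathcal T'$; there is no assumed relation like $\eps\lesssim\sigma^2\delta^2$ to rescue this, and hypercontractivity only improves the loss to $\eps\cdot\polylog$, which still need not be $\ll\sigma\delta$.

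The paper avoids comparing $y$ to $f$ altogether via a re-randomization trick. Define $\x'$ to agree with $\x$ on $V^\perp$ but with $\x'^V$ resampled from $\cN$ conditioned on the cube $\mathcal S_\x$ containing $\x^V$, and set $y'$ so that the law of $y'\mid\x'=\z$ equals that of $y\mid\x=\z$. Two \emph{exact} identities now do the work: (i) since $\pr[y'=z\mid\x]$ depends only on $\x^{\perp V}$ and on which cube contains $\x^V$, the pointwise moment $\E[p(\x^U)\Ind(y'=z)\mid\x^V=\x_0^V]$ is constant on each cube, so Condition~3(b) applied to $y'$ gives cube-level moments directly; and (ii) because $(\x',y')$ has the same joint law as $(\x,y)$, $\x'^U=\x^U$, and $\x'\in S\Leftrightarrow\x\in S$, the cube-conditional $U$-moments of $y'$ and of $y$ coincide exactly. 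The GSA bound enters only once, to show $\pr[f(\x)\neq f(\x')]=O(\eps')$ (both $\x,\x'$ are marginally standard Gaussian, so the $V$-restricted OU/Ledoux--Pisier argument you describe at the end applies cleanly to this \emph{global} statement), whence $\pr[f(\x)\neq y']\le\eps+O(\eps')$ and Condition~3(b) is applicable to $y'$. This simultaneously sidesteps your step-(a) loss and the pointwise-versus-fixed-reference difficulty you flag in step~(b).
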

\begin{proof}
Since $\mathcal{F}$ satisfies \Cref{def:GoodCondition}, with probability at least $\alpha$ over 
$\x_0 \sim \mathcal{N}(\mathbf{0}, \mathbf{I})$, independent of $\x$, there exists a zero-mean polynomial 
$p: U \to \mathbb{R}$ of degree at most $m$ such that 
\[
   \E_{(\x,y)\sim \D}[p(\x^U)\Ind(y=z)\mid \x^V=\x_0^V]>\sigma \|p(\x^{U})\|_2 \;,
\]
where $U = (V + W) \cap V^{\perp}$.

In order to prove the statement, we need to show that the above condition can be extended from a single point to encompass an entire cube $S$.
 To achieve this, we introduce a random variable $y'$, which is $(\zeta+O(\eps))$-close to $f$, such that, when conditioned on any $S \in \mathcal{S}$, $y'$ is distributed uniformly within $S$ along the directions defined by $V$, while the other directions remain consistent with those of $y$.

{For any $\x\in \R^d$, denote by $\mathcal{S}_\x$ the set $S\in \mathcal{S}$ such $\x\in S$.} 
We define a random vector $\x'$ such that $\x'^{\perp V} = \x^{\perp V}$ and $\x'^V$ {is the sampled from the standard Gaussian over $V$ conditioned on $\mathcal{S}_{\x}$, i.e., $\mathcal{N}(\vec 0,\vec I)\mid \mathcal{S}_\x$.
Note that \(\x'\) also follows the standard normal distribution over \(\R^d\) because its \(V\)-component is resampled from \(\mathcal{N}(\vec{0}, \vec I)\).}

In the following claim we will prove that $f(\x)$ and $f(\x')$ are approximately close.
\begin{claim} \label{cl:unif_claim}
It holds that    $\pr[f(\x) \neq f(\x')] =O( \eps)$.
\end{claim}
\begin{proof}
{For each $i\in\mathcal Y$, define the boolean functions $f_i(\x)\eqdef \Ind(f(\x)=i)$ and let
$g_{\x_0}^{(i)}(\x)\eqdef T_{\rho}f_i(\x_0^{\perp V}+\x^{V})$, where \(T_{\rho}\) is the Gaussian noise operator as given in \Cref{def:gaussian-noise}.
 Fix a point $\x_0\in \R^d$ and a cube $S\in \mathcal{S}$. We first show that for any two points $\x,\z\in S$, the difference $\abs{g_{\x_0}^{(i)}(\x)-g_{\x_0}^{(i)}(\z) }$ is small.}
 
{By using that the gradient of $g_{\x_0}^{(i)}(\x)$ is bounded by $1/\rho$ and that the distance between any \(\x,\vec z \in S\)  is at most
$
\|\x - \x_S\| \le \sqrt{k} \eta
$, we get that

\[
|g_{\x_0}^{(i)}(\x)-g_{\x_0}^{(i)}(\z)|\leq \frac{\eta\sqrt{k}}{\rho}\;,
\]
for any $\x,\vec \z\in S$.} Consequently by the triangle inequality and \Cref{cl:gsaV}, we have
\begin{align*}
   &\sum_{S\in \mathcal{S}} \E_{\x_0,\x^V}[\abs{(f_i(\x_0^{\perp V}+\x^V)- g_{\x_0}^{(i)}(\x))\Ind(\x\in S)}]\\&\le
   \sum_{S\in \mathcal{S}} \E_{\x_0,\x^V}[\abs{(f_i(\x_0^{\perp V}+\x^V) -g_{\x_0}^{(i)}(\x_S))\Ind(\x\in S)}] +\eta\sqrt{k}/\rho\lesssim \sqrt{\rho}\Gamma(f_i)+\eta\sqrt{k}/\rho\;,
\end{align*}
for any collection of points $\x_S\in S$.
Furthermore, by linearity of expectation and the fact that $\sum_{i\in \mathcal{Y}}f_i(\x)=1$, we have that $\sum_{i\in \mathcal{Y}}g_{\x_0}^{(i)}(\x)=1$ and $g_{\x_0}^{(i)}(\x)\in [0,1]$ for all $\x,\x_0\in \R^d,i\in \mathcal{Y}$. 
Therefore, we can consider the multiclass function $G:\R^d\to \mathcal{Y}$ such that $G(\x)$ is a random label sampled from the probability distribution induced by $g_{\x}^{(i)}(\x_S)$, where $S \in \mathcal{S} $ such that $ \x \in S$. Note that  for every probability vector $\vec p\in \R^{|\mathcal{Y}|}$ and label $z\in \mathcal{Y}$ we have that $
    \|\vec p-\e_z\|_1=  2\pr_{i\sim \vec p}[i\neq z]$.
Therefore, we have that the $\pr[f(\x)\neq G(\x)]\lesssim \sqrt{\rho}\Gamma(f)+\eta\sqrt{k}\abs{\mathcal{Y}}/\rho\lesssim \eps$. 

{Notice that since $\x'$ is distributed as $\cN(\vec 0,\vec I)$, we have that $\pr[f(\x')\neq G(\x')]=O(\eps)$. However, as $G(\x)$ is independent $\x^V$, we have that $\pr[f(\x')\neq G(\x')]=\pr[f(\x')\neq G(\x)]=O(\eps)$. Therefore, by the triangle inequality we have that $\pr[f(\x)\neq f(\x')]=O(\eps) $, which completes the proof of \Cref{cl:unif_claim}.}\qedhere

\end{proof}

{Define $y'$ such that or all $\z\in \R^d$, the distribution of $y'$ given $\x'=\z$ is the same as the distribution of $y$ given $\x=\z$.}
By the definition of $y'$, it follows that 
$\pr_{\x', y'}[f(\x') \neq y'] = \pr_{\x'}\left[\E_{y'}[f(\x') \neq y' \mid \x']\right] \leq \zeta$.
Therefore, it holds that $\pr[f(\x) \neq y'] \leq \zeta + O(\eps)$.

Therefore, with probability at least $\alpha$ over $\x_0 \sim \mathcal{N}(\mathbf{0}, \mathbf{I})$ independent of $\x$, there exists a zero-mean polynomial $p: U \to \mathbb{R}$ of degree at most $m$ and a label $y\in\mathcal Y$ such that 
\[
\E_{\x,y'}[p(\x^U)\Ind(y'=z)\mid \x^V=\x_0^V]>\sigma \|p(\x^{U})\|_2\;.
\]
However, since the moments of $y'$ are identical when conditioned on each $\x_0^V$ that belongs to the same region in $S$, we deduce that if the moment condition holds for some polynomial at a particular point $\x_0^V$, then it must hold for the entire cube $S$, such that $\x_0^V\in S$.
{Therefore, there exists a subset $\mathcal{T}\subseteq\mathcal{S}$ of the partition with $\sum_{S\in \mathcal{T}}\pr[S]\ge \alpha$ satisfying the following: 
for all $S\in \mathcal{T}$ there exists 
a zero-mean polynomial $p: U \to \mathbb{R}$ 
of degree at most $m$ such that 
\(
\E_{\x,y'}[p(\x^U)\Ind(y'=z)\mid \x\in  S]>\sigma \|p(\x^{U})\|_2\;.
\)

Noticing that the moments with respect to $U$ on each $S \in \mathcal{S}$ are the same for $y$ and $y'$, completes the proof of \Cref{lem:Point2FiniteWidthCylinders}.}
\end{proof}

 The remaining part of the section is structured as follows: In \Cref{sec:rel-dir-find}, we provide the analysis of \Cref{alg:MetaAlg2} with \Cref{prop:MetaAlg2}. 
 In \Cref{sec:meta-thm-proof},  we analyze \Cref{alg:MetaAlg1} providing the full proof of \Cref{thm:MetaTheorem}.

\subsection{Finding a Relevant Direction}
\label{sec:rel-dir-find}

In this section, we prove \Cref{prop:MetaAlg2}, which states that if the target function class satisfies \Cref{def:GoodCondition} and the current approximation of the ground-truth subspace  provides a suboptimal piecewise approximation, then \Cref{alg:MetaAlg2} outputs a small set of vectors, at least one of which has a non-trivial projection onto the ground-truth subspace.

\begin{proposition}\label{prop:MetaAlg2}
Let $d,k\in \Z_+$, $\eps,\delta\in(0,1)$ and $\eps'\leq c\eps^3/(\abs{\mathcal{Y}}\Gamma^2 k^2))$, for $c$ a sufficiently small universal constant. 
Let $D$ be a distribution supported on $ \mathbb{R}^d \times \mathcal{Y}$ whose $\x$-marginal is $\mathcal{N}(\vec 0, \vec I)$, let $V\subseteq \R^d$ be a $k$-dimensional subspace, and
let $\mathcal{F}$ be a class of $(m,\zeta+O(\eps),\alpha,K,\tau,\sigma,\Gamma)$-well-behaved Multi-Index Models according to \Cref{def:GoodCondition}.
Let $f\in \mathcal{F}$ with $\pr_{(\x,y)\sim D}[f(\x)\neq y]=\opt$ and let $W$ be a subspace such that $f(\x)=f(\x^{W})$ for all $\x\in \R^d$, with $K=\dim(W)$.  Additionally, let $h_{\mathcal{S}}$ be a piecewise constant approximation of $D$, with respect to an $\eps'$-approximating partition $\mathcal{S}$, according to $V$, as defined in \Cref{def:approximatingPartition,def:h}.

There exists $N=(dm)^{O(m)}(k/\eps')^{O(k)}\log(1/\delta)(K\Gamma\abs{\cY} /(\sigma\eps\alpha))^{O(1)}$ such that: If $\zeta\ge \opt$ and $\pr_{(\x,y)\sim D}[h_{\mathcal{S}}(\x)\neq y]> \tau +\opt+3\eps $, then \Cref{alg:MetaAlg2} {given $N$} i.i.d.\ samples from $D$, runs in $\poly(N)$ time, and with probability at least $1-\delta$, returns a list of unit vectors $\mathcal E$ of size $|\mathcal E|=\poly(mK\Gamma\abs{\mathcal{Y}}/(\eps\alpha\sigma))$ such that: For some $\vec v\in \mathcal E$ and unit vector $\vec w\in W$ it holds that
    \begin{align*}
        \vec w^{\perp V} \cdot \vec v\ge  \poly(\eps\alpha\sigma/(m K\Gamma\abs{\mathcal{Y}}))\;.
    \end{align*}
\end{proposition}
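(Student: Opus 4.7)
My plan is to combine the structural guarantees of \Cref{def:GoodCondition} with a gradient-covariance analysis of the polynomials $p_{S,z}$ computed by \Cref{alg:MetaAlg2}, together with standard Gaussian concentration for polynomial regression. The argument proceeds in four conceptual steps.

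First, I would use the error lower bound to trigger the moment condition. Since $\pr_{(\x,y)\sim D}[h_{\mathcal{S}}(\x)\ne y]>\tau+\opt+3\eps$, the contrapositive of \Cref{lem:PieceWiseConstantApproxSuffices}, applied to a near-optimal $f\in\mathcal F$ (so that $\pr[f(\x)\ne y]\le \opt+\eps$), implies that no $g:V\to\mathcal Y$ satisfies $\pr[g(\x^V)\ne f(\x)]\le\tau$. Hence Condition~3(a) of \Cref{def:GoodCondition} fails and Condition~3(b) must hold for the noisy label $y$. By \Cref{lem:Point2FiniteWidthCylinders} (whose hypotheses are met by the choice of $\eps'$), this point-wise moment condition is upgraded to a subset $\mathcal{T}\subseteq\mathcal{S}$ with $\sum_{S\in\mathcal{T}}\pr[S]\ge \alpha$, such that on each $S\in\mathcal{T}$ there exist a zero-mean polynomial $p$ on $U=(V+W)\cap V^\perp$ of degree $\le m$ and a label $z\in\mathcal Y$ with $\E[p(\x^U)\Ind(y{=}z)\mid \x\in S] > \sigma\|p\|_2$.

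Second, I translate this moment into a lower bound on the gradient covariance of the regressed polynomial $p_{S,z}$. By $L^2$-optimality of best-fit regression, $\E[p(\x^U)\, p_{S,z}(\x^{\perp V})\mid S]=\E[p(\x^U)\,\Ind(y{=}z)\mid S]$; writing $q_{S,z}$ for the $L^2$-projection of $p_{S,z}$ onto polynomials in $\x^U$ and applying Cauchy-Schwarz yields $\|q_{S,z}-\E q_{S,z}\|_2>\sigma$. The Hermite-gradient identity for Gaussian polynomials then gives $\E[\|\nabla q_{S,z}\|^2]\ge \|q_{S,z}-\E q_{S,z}\|_2^2 > \sigma^2$, with the global bound $\E[\|\nabla p_{S,z}\|^2]\le m\|p_{S,z}\|_2^2\le m$. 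Since $\nabla q_{S,z}$ is supported in $U$, the covariance $M_{S,z}=\E[\nabla p_{S,z}(\nabla p_{S,z})^\top\mid S]$ satisfies $\mathrm{Tr}(\Pi_U M_{S,z})>\sigma^2$ with top eigenvalue at most $m$. A two-level thresholding argument now extracts the good directions: collecting eigenvectors with eigenvalue $\ge \sigma^2/(4K)$ into $U_{S,z}$, small-eigenvalue directions contribute at most $\sigma^2/4$ to $\mathrm{Tr}(\Pi_U M_{S,z})$, so $\sum_{u\in U_{S,z}}\|\Pi_U u\|^2 \gtrsim \sigma^2/m$; the aggregate $\widehat{\vec U}=\sum_{S,z,u\in U_{S,z}}\pr[S]\, uu^\top$ therefore obeys $\mathrm{Tr}(\Pi_U\widehat{\vec U})\gtrsim \alpha\sigma^2/m$ while $\mathrm{Tr}(\widehat{\vec U})\lesssim mK|\mathcal Y|/\sigma^2$. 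Repeating the thresholding with an appropriate polynomial cutoff $t$ produces $\mathcal E$ of size $\poly(mK|\mathcal Y|/(\alpha\sigma))$, and some $v\in\mathcal E$ satisfies $\|\Pi_U v\|\ge \poly(\alpha\sigma/(mK|\mathcal Y|))$. To pass from $\|\Pi_U v\|$ to the target bound $w^{\perp V}\cdot v$ for a unit $w\in W$, take $u=\Pi_U v/\|\Pi_U v\|$ and write $u=\tilde w^{\perp V}/\|\tilde w^{\perp V}\|$ for the minimum-norm preimage $\tilde w\in W$ of $u$ under the surjection $W\to U$, $w\mapsto w^{\perp V}$, then normalize.

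Finally, the sample complexity is routine but lengthy: each cube $S$ has Gaussian mass $(\eps'/k)^{O(k)}$ and there are $(k/\eps')^{O(k)}$ cubes, while each degree-$m$ regression in $\x^{\perp V}$ has $d^{O(m)}$ Hermite coefficients and hence can be solved to $L^2$-error $\eta$ using $d^{O(m)}\poly(1/\eta)$ samples per cube. Union-bounding over cubes and labels, together with standard matrix-perturbation bounds for the eigendecompositions of $M_{S,z}$ and $\widehat{\vec U}$, yields the stated $N$. The hard part, as I see it, is the subspace translation at the end of the third step: the map $w\mapsto w^{\perp V}$ from $W$ to $U$ can be badly conditioned precisely when $V$ already captures most of $W$, so the smallest non-zero eigenvalue of $\sum_i b_i^{\perp V}(b_i^{\perp V})^\top$ may be tiny; stating the conclusion in terms of the unnormalized inner product $w^{\perp V}\cdot v$ is what permits the polynomial bound to absorb this potential degeneracy without cascading into the sample complexity.
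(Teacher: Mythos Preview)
Your outline tracks the paper's architecture, and you correctly flag the hard step---but your resolution of it is wrong. Passing from $\|\Pi_U v\|\ge\rho$ to $\vec w^{\perp V}\cdot \vec v\ge\poly(\cdot)$ for a \emph{unit} $\vec w\in W$ costs exactly the smallest nonzero singular value of the map $W\to U$, $\vec w\mapsto \vec w^{\perp V}$, and nothing in your argument bounds this by the proposition's parameters. The claim that ``the unnormalized inner product $\vec w^{\perp V}\cdot \vec v$ absorbs this degeneracy'' is false: take $V=\mathrm{span}(\vec e_1)$, $W=\mathrm{span}(\vec e_1+\epsilon\,\vec e_2)$, so $U=\mathrm{span}(\vec e_2)$. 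The population regressor $p_{S,z}$ then depends on $\x^{\perp V}$ only through $x_2$, the algorithm outputs $\vec v=\vec e_2$ with $\|\Pi_U \vec v\|=1$, yet $\max_{\vec w\in W,\|\vec w\|=1}\vec w^{\perp V}\cdot \vec v=\epsilon/\sqrt{1+\epsilon^2}$. Your argument never ties $\epsilon$ to $\eps,\alpha,\sigma,m,K,\Gamma,|\mathcal Y|$.

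The paper does not attempt the algebraic conversion; it instead strengthens the structural input by invoking \Cref{cl:projectIfSmallMeta} rather than \Cref{lem:Point2FiniteWidthCylinders}. That lemma uses the GSA bound to show that re-randomizing $\x$ along any subspace $E\subseteq V^\perp$ whose unit vectors satisfy $\|\vec v^W\|\le\eps'/\Gamma$ perturbs $f$ by only $O(\eps')$, so the distinguishing polynomial $q_S$ may be taken with $\nabla q_S\perp E$. In the proof of \Cref{prop:MetaAlg2} one then sets $E_S=\mathrm{span}\{\vec u\in U_S:\|\vec u^W\|\le\eps/(\sqrt{|U_S|}\,\Gamma)\}$. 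Since $\nabla q_S\perp E_S$, the direction $\vec e_j$ witnessing $\vec e_j^\top M_{S,z}\vec e_j\ge\sigma^2/(2K)$ may be chosen in $E_S^\perp$; the eigenvector $\vec u\in U_S$ correlating with $\vec e_j$ therefore cannot lie in $E_S$, and by the very definition of $E_S$ this gives $\|\vec u^W\|>\eps/(\sqrt{|U_S|}\,\Gamma)$ directly. The conditioning of $\vec w\mapsto \vec w^{\perp V}$ never enters; the $\Gamma$ appearing in the final bound is precisely the price of this GSA-based re-randomization step, which your outline omits.
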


{Before proving \Cref{prop:MetaAlg2}, we show that once a direction of our target model is learned to sufficiently small accuracy, the observed moments depend only on the remaining directions. 

The intuition is as follows. Re-randomizing over directions that have negligible projection onto \(W\) barely affects the output of the function. Thus, the function is close to a version \(y'\) of the noisy label \(y\) in which these components have been averaged. Consequently, if the function is far from being a function of the current subspace \(V\) (i.e., it still depends significantly on directions outside \(V\)), then the level sets of \(y'\) will exhibit large distinguishing moments. In this way, the influence of the re-randomized (negligible) directions is eliminated, leaving only the significant, distinguishing moments on the remaining directions.

The proof follows a structure similar to that of \Cref{lem:Point2FiniteWidthCylinders}, with the main difference being the set of directions over which re-randomization occurs.
}

\begin{lemma}\label{cl:projectIfSmallMeta}{
Let $D$ be a distribution over $\mathbb{R}^d \times \mathcal{Y}$  whose $\x$-marginal is $\cN(\vec 0,\vec I)$. Let $\mathcal{F}$ be a class of $(m, \zeta + O(\eps), \alpha, K, \tau, \sigma, \Gamma)$-well-behaved Multi-Index Models  as defined in \Cref{def:GoodCondition}. Let $V$ be a $k$-dimensional subspace of $\mathbb{R}^d$, with $k\ge 1$, and let $\mathcal{S}$ be an $\eta$-approximating partition of $V$ (see \Cref{def:approximatingPartition}), where $\eta=\eps^3/(C\abs{\mathcal{Y}}\Gamma^2 k^2)$ and $C>0$ is a sufficiently large universal constant. 
Let  $f \in \mathcal{F}$ be a function that depends only on a $K$-dimensional subspace $W$, and let $y$ be random variable  over $\mathcal{Y}$ such that $\pr_{(\x, y) \sim D}[f(\x) \neq y] \leq \zeta$. 

Suppose that no function $g: V \to \mathcal{Y}$ exists with $\pr_{(\x, y) \sim D}[g(\x^V) \neq f(\x)] \leq \tau$. 
Then, there exists $\mathcal{T}\subseteq \mathcal{S}$ with $\sum_{S\in \mathcal{T}} \pr[S]\ge\alpha$ such that for each $S\in \mathcal{T}$ the following hold:
\begin{enumerate}
    \item  There exists a label $z\in\mathcal Y$ and a zero-mean, variance-one polynomial $p:U\to \R$ of degree at most $m$ such that $\E_{(\x,y)\sim D}[p(\x^{U})\Ind(y=z)\mid \x \in S]>\sigma $, where $U{=}(V{+}W)\cap V^{\perp}$. \label{item1}
    \item For each such polynomial $p$ satisfying (1), it holds that $\nabla p(\x^U)\cdot\vec  v=0$ for every $\vec v\in E$ and all $\x\in \R^d$, where $E\subseteq{V^{\perp}}$ is a subspace satisfying  $\|\vec v^{W}\|\leq \eps/\Gamma(f)$ for every unit vector $\vec v\in E$.
\end{enumerate} }
    \end{lemma}
    \begin{proof}
   Let $\x,\x'$ be random variables defined as $\x= \x^{\perp E}+\x^{E}$ and $\x'=\x^{\perp E}+{\vec z}^{E}$, where $\z$ is an independent standard Gaussian vector.  Denote by $\mathcal{S}$ an $\eta$-approximating partition with respect to $V$.
Notice that for any unit direction $\w\in W$, both $\w \cdot \x$ and $\w \cdot \x'$ are distributed according to a standard Gaussian. Furthermore, by independence we have that
\begin{align*}
  \E[(\w \cdot \x)(\w \cdot \x')]&= \E[(\w \cdot \x^{\perp E})^2]=\|\w^{\perp E}\|^2\;.
\end{align*}
Given that $E$ is the subspace of small projections,  (i.e., for any unit vector \(\w\in W\), we have \(\|\w^E\| \le \eps/\Gamma(f)\)), it follows that $\E[(\w \cdot \x^{\perp E})^2] \ge 1-(\eps/\Gamma(f))^2$ {for any unit vector $\w\in W$. 
This means that $\x$ and $\x'$ are at least $1-(\eps/\Gamma(f))^2$ correlated in $W$.
By the Gaussian noise sensitivity bound (see \Cref{fact:ledouxpissier}), we have that
\begin{align*}
    \pr_{\x,\x'}[f(\x)\neq f(\x')]\le O(\eps)\;.
\end{align*}
Next, define a new random variable $y'$ over $\mathcal{Y}$ which is distributed like $y$ but with respect to \(\x'\); that is, for every \(\z\in \R^d\) and \(i\in \mathcal{Y}\), $D(y'=i\mid \x'=\z)=D(y=i\mid \x=\z)$.} We have that
\begin{align*}
    \pr[f(\x)\neq y']\le\pr[f(\x')\neq y']+O(\eps) \le \pr[f(\x)\neq y]+O(\eps)\le \zeta +O(\eps)  \;.
\end{align*}
As a result, by \Cref{lem:Point2FiniteWidthCylinders}, we have that there exists $\mathcal{T}\subseteq \mathcal{S}$ with $\sum_{S\in \mathcal{T}} \pr[S]\ge\alpha$ such that for each $S\in \mathcal{T}$ there exists a zero-mean, variance-one polynomial $p:U\to \R$ of degree at most $m$ and a label $z\in \mathcal{Y}$ such that $\E_{\x,y'}[p(\x^U)\Ind(y'=z)\mid \x \in S]>\sigma  $, where $U=(V+W)\cap V^\perp$. However we have that for all $S\in \mathcal{T}$ 
\begin{align*}
\E_{\x,y'}[p(\x^U)\Ind(y'=z)\mid \x \in S]= \E\left[\E_{\x^{E} }[p(\x^U)\mid \x^{\perp E}]\E_{\x^{E}}[\Ind(y=i)\mid \x^{\perp E}] \mid \x \in S\right]\;, 
\end{align*}
{where we used that $y'$ is independent of $\x^E$}.
Notice that $p'(\x^U)\eqdef \E_{\x^{E} }[p(\x^U)\mid \x^{\perp E}]$ is a zero-mean polynomial of degree at most $m$ with variance at most one by Jensen's inequality. 
Furthermore, as $p'$ is independent of $\x^{E}$ we have that $\nabla p'(\x^U)\cdot\vec  v=0$ for any $\vec v\in E$ and $\x\in \R^d$.
\end{proof}
We now proceed with the proof of \Cref{prop:MetaAlg2}.
\begin{proof}[Proof of \Cref{prop:MetaAlg2}]

Denote by $\vec w^{(1)}, \dots, \vec w^{(K)}$ an orthonormal basis for the subspace $W$.
Let $\widehat{\vec U}$ be the matrix computed in Line \ref{line:matrixMeta} of \Cref{alg:MetaAlg2} and let $\eta^2$ be the $L_2^2$ error chosen in the polynomial-regression step (see Line \ref{line:regression} of \Cref{alg:MetaAlg2}).

{First, observe that since $\pr_{(\x,y)\sim D}[h_{\mathcal{S}}(\x)\neq y]> \tau +\opt+3\eps$, \Cref{lem:PieceWiseConstantApproxSuffices} for every implies that for every function $g:V\to \cY$  it holds that $\pr_{(\x,y)\sim D}[g(\x^V)\neq f(\x)]\ge \tau$. Then, by 
\Cref{cl:projectIfSmallMeta} there exists a $\mathcal{T}\subseteq \mathcal{S}$ with $\sum_{S\in \mathcal{T}}\pr[S]\ge \alpha$, such that for each $S\in \mathcal{T}$ there exists a label $z\in \mathcal{Y}$ and a zero-mean variance-one  polynomial $q_S:U\to \R$ of degree at most $m$ satisfying $\E_{(\x,y)\sim D}[q_S(\x^U)\Ind(y=z)\mid \x \in S]>\sigma $}, where $U=(V+W)\cap(V^{\perp})$. 
{Furthermore, {for each $S\in\mathcal S$}, define $E_S\eqdef \mathrm{span}(\{\vec v\in U_S:\norm{\vec v^{W}}\le \eps/(\sqrt{\abs{U_S}}\Gamma)\})$.
Then, for every unit vector $\vec v\in E_S$, we have  $\nabla q_S(\x^{U})\cdot \vec v=0$  since $\norm{\vec v^{W}}\le \eps/\Gamma$ holds for every unit vector $\vec v\in E_S$.}

In the following claim, we leverage the existence of non-trivial  moments over $U$ for regions  $S\in \mathcal{T}$. We show that if the sample size is sufficiently large then for each region $S\in \mathcal{T}$ the associated set $U_S$ (see Line \ref{line:setUs} of \Cref{alg:MetaAlg2}) contains a vector that correlates with $W$.

\begin{claim}[Existence of Correlating Vectors]
\label{it:corr-vec-existance} Let $C>0$ be  a sufficiently large universal constant. Fix $S\in \mathcal{T}$. 
If the number of samples that fall in $S $ is $N_S\ge (dm)^{Cm}\log(\abs{\mathcal{Y}}/\delta)/\eta^{C}$, then with probability at least $1-\delta$ there exists a unit vector $\vec v\in U_S$ and $i\in[K]$ such that $$(\vec w^{(i)})^{\perp V} \cdot \vec v\ge \poly(\eps\sigma/(mK\abs{\mathcal{Y}}\Gamma)\;.$$ Moreover, it holds that $\abs{U_S}=\poly(mK\abs{\mathcal{Y}}/\sigma)$.
\end{claim}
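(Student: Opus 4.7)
The plan is to show that the gradient outer-product matrix $M^*_{S,z}:=\E_{\x\sim D_{\x}}[\nabla p_{S,z}(\x^{\perp V})\nabla p_{S,z}(\x^{\perp V})^\top\mid \x\in S]$ from Line~\ref{line:setUs} has a large eigenvector correlating with $\widetilde U:=U\cap E^\perp$, where $E$ is the subspace supplied by \Cref{cl:projectIfSmallMeta} of $V^\perp$-directions with $W$-projection at most $\eps'/\Gamma$, and then to transfer this into a correlation with some $(\vec w^{(i)})^{\perp V}$. Since $\pr_{(\x,y)\sim D}[h_{\mathcal S}(\x)\neq y]>\tau+\opt+3\eps$ and $\zeta\ge\opt$, \Cref{cl:projectIfSmallMeta} produces a mean-zero, variance-one polynomial $q_S$ of degree at most $m$ effectively supported on $\widetilde U$, together with a label $z\in\cY$, such that $\E[q_S(\x^U)\Ind(y=z)\mid\x\in S]>\sigma$; crucially every unit $\vec v\in\widetilde U$ must satisfy $\|P_W\vec v\|>\eps'/\Gamma$, since otherwise $\vec v\in E\cap E^\perp=\{\vec 0\}$.

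Because $q_S$ lies in the degree-$m$ polynomial hypothesis class used at Line~\ref{line:regression}, the $L_2$-optimal regressor $p^*_{S,z}$ satisfies $\E[p^*_{S,z}(\x^{\perp V})q_S(\x^U)\mid\x\in S]=\E[\Ind(y=z)q_S\mid\x\in S]>\sigma$. Decompose $p^*_{S,z}=p_{\widetilde U}+p^\perp$ in a Hermite basis aligned with $\widetilde U$, where $p_{\widetilde U}$ collects the terms whose multi-indices are supported on $\widetilde U$-coordinates. Since $q_S$ is mean-zero and orthogonal to every Hermite basis element of $p^\perp$, Cauchy-Schwarz forces $\|p_{\widetilde U}-\E p_{\widetilde U}\|_{L_2}\ge\sigma$. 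A direct Hermite computation then gives $\E\|\nabla p_{\widetilde U}\|^2=\sum_\alpha c_\alpha^2|\alpha|\ge\sigma^2$ with $\nabla p_{\widetilde U}\in\widetilde U$, and coordinate-wise Hermite orthogonality between $\partial_k p_{\widetilde U}$ and $\partial_l p^\perp$ yields the PSD bound $P_{\widetilde U}M^*_{S,z}P_{\widetilde U}\succeq\E[\nabla p_{\widetilde U}\nabla p_{\widetilde U}^\top]$. The right-hand side is supported on $\widetilde U$ (dimension $\le K$) with trace $\ge\sigma^2$, so some unit $\vec u\in\widetilde U$ has $\vec u^\top M^*_{S,z}\vec u\ge\sigma^2/K$. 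Choosing $\eta$ a sufficiently small polynomial in $\eps\sigma/(mK\Gamma|\cY|)$, standard polynomial-regression and concentration bounds (requiring $N_S\ge(dm)^{Cm}\log(|\cY|/\delta)/\eta^{C}$) give $\|p_{S,z}-p^*_{S,z}\|_{L_2}\le\eta$ and hence $\|M_{S,z}-M^*_{S,z}\|\lesssim m\eta$ in operator norm, preserving the Rayleigh lower bound (up to $\sigma^2/(8K)$) for the matrix actually used by the algorithm. Since $\tr(M_{S,z})\le m\|p_{S,z}\|_{L_2}^2\le m$, at most $O(mK/\sigma^2)$ of its eigenvalues exceed the threshold $\sigma^2/(4K)$, giving $|U_{S,z}|=O(mK/\sigma^2)$ and $|U_S|=\poly(mK|\cY|/\sigma)$.

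A spectral pigeonhole (splitting $M_{S,z}$'s eigenvalues above and below $\sigma^2/(4K)$ and using $\tr(M_{S,z})\le m$) yields some $\vec v\in U_{S,z}$ with $|\vec u\cdot\vec v|\gtrsim\sigma^2/(Km)$. The main obstacle is converting this $\widetilde U$-correlation into a correlation with some $(\vec w^{(i)})^{\perp V}$. Writing $\vec u=P_{V^\perp}\vec w$ with $\vec w=\sum_i a_i\vec w^{(i)}\in W$ of minimum norm, and using $\vec v\in V^\perp$, one has $\vec u\cdot\vec v=\vec w\cdot\vec v=\sum_i a_i(\vec w^{(i)})^{\perp V}\cdot\vec v$. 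The key estimate is $\|\vec w\|\lesssim\Gamma/\eps'$, which I plan to derive by combining Pythagoras ($\|\vec w\|^2=1+\|P_V\vec w\|^2$, since $\vec u\perp V$) with the identity $P_W\vec u=\vec w-P_WP_V\vec w$ to get $\|P_W\vec u\|\ge\|\vec w\|-\sqrt{\|\vec w\|^2-1}\gtrsim 1/\|\vec w\|$, and then invoking $\|P_W\vec u\|>\eps'/\Gamma$. Hence $\|\vec a\|_1\le\sqrt K\,\|\vec w\|\lesssim\sqrt K\,\Gamma/\eps'$, and a final pigeonhole over $i\in[K]$ produces $(\vec w^{(i)})^{\perp V}\cdot\vec v\gtrsim\sigma^2\eps'/(K^{3/2}m\Gamma)=\poly(\eps\sigma/(mK\Gamma|\cY|))$, using that $\eps'$ is itself a polynomial in $\eps/(|\cY|\Gamma K)$, as required.
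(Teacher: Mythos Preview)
Your overall strategy is reasonable and parallels the paper's, but two steps have genuine gaps.

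\textbf{The choice of $E$ and the claim about $\widetilde U$.} You invoke ``the subspace supplied by \Cref{cl:projectIfSmallMeta}'', but that lemma is universally quantified over subspaces $E\subseteq V^\perp$ whose unit vectors have small $W$-projection; it does not hand you a canonical one. Your sentence ``otherwise $\vec v\in E\cap E^\perp$'' implicitly assumes $E$ \emph{contains every} unit vector with $\|P_W\vec v\|\le\eps'/\Gamma$, but that collection is not a linear subspace, so no such $E$ exists in general. This is fixable: take $E$ to be the span of the right singular vectors of $P_W|_{V^\perp}$ with singular value at most $\eps'/\Gamma$; then every unit vector in $E^\perp\cap V^\perp$ (hence in $\widetilde U$) does have $\|P_W\vec v\|>\eps'/\Gamma$. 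But you need to say this.

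\textbf{The bound $\|\vec w\|\lesssim\Gamma/\eps'$.} Your chain of inequalities gives $\|P_W\vec u\|\ge\|\vec w\|-\sqrt{\|\vec w\|^2-1}\gtrsim 1/\|\vec w\|$, i.e.\ a \emph{lower} bound on $\|\vec w\|$ in terms of $1/\|P_W\vec u\|$. Combining this with the lower bound $\|P_W\vec u\|>\eps'/\Gamma$ cannot yield an upper bound on $\|\vec w\|$; the inequality points the wrong way. With the SVD choice of $E$ above, the desired bound $\|\vec w\|\le\Gamma/\eps'$ does hold (write $\vec u=\sum a_i\vec v_i$ over singular vectors with $s_i>\eps'/\Gamma$, so the minimum-norm preimage is $\vec w=\sum(a_i/s_i)\vec w_i$), but your stated derivation does not establish it.

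\textbf{Comparison with the paper.} The paper sidesteps both issues by a self-referential trick: it defines $E_S$ \emph{a posteriori} as the span of those output vectors $\vec v\in U_S$ with $\|\vec v^W\|\le\eps/(\sqrt{|U_S|}\Gamma)$. Then the found eigenvector $\vec u^{(i)}\in U_S$ has nonzero inner product with a direction in $E_S^\perp$, so $\vec u^{(i)}\notin E_S$; since $\vec u^{(i)}$ is itself an element of $U_S$, the definition of $E_S$ immediately forces $\|(\vec u^{(i)})^W\|>\eps/(\sqrt{|U_S|}\Gamma)$. This avoids any preimage norm estimate and gives the $W$-correlation directly for the eigenvector itself, with no dependence on $\eps'$.
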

\begin{proof}[Proof of \Cref{it:corr-vec-existance}]
{Denote by $p_{S,i}$ (for $S\in \mathcal{S}$ and $i\in \mathcal{Y}$) the degree at most $m$ polynomial computed at Line \ref{line:regression} of \Cref{alg:MetaAlg2}. 
Notice that membership of a point $\x$ in a cube $S$ depends solely on its projection onto $V$, i.e., on $\x^V$. Therefore, the error guarantee obtained in the regression step} $\E_{(\x, y)\sim D}[(\Ind(y=i) -p_{S,i}(\x^{\perp V}))^2\mid \x \in S]\le \min_{p'\in \mathcal{P}_m} \E_{(\x, y)\sim D}[(\Ind(y=i) -p'(\x^{\perp V}))^2\mid \x \in S]+\eta^2$ is equivalent to  
$\E_{(\x, y)\sim D^S_{V^{\perp}}}[(\Ind(y=i) -p_{S,i}(\x))^2]\le \min_{p'\in \mathcal{P}_m} \E_{(\x, y)\sim D^S_{V^{\perp}}}[(\Ind(y=i) -p'(\x))^2 ]+\eta^2$, where $D_{V^\perp}^S$ is  the marginal obtained by averaging $D$ over $V$ conditioned on  $S$, i.e., $D^S_{V^{\perp}}(\x^{\perp V},y)=\E_{\x^V}[D(\x,y)\mid \x \in S]$.

{Moreover, by the properties of the Gaussian distribution, we have that the $\x$-marginal of $D_{V^\perp}$ is a standard Gaussian.
Therefore, if $N_S\ge (dm)^{Cm}\log(\abs{\mathcal{Y}}/\delta)/\eta^{C}$ for a sufficiently large universal constant $C>0$, then by applying the union bound and \Cref{fact:regressionAlg}, we have that with probability at least $1-\delta$:
\[\E_{(\x, y)\sim D^S_{V^{\perp}}}[(\Ind(y=i) -p_{S,i}(\x))^2]\le \min_{p'\in \mathcal{P}_m} \E_{(\x, y)\sim D^S_{V^{\perp}}}[(\Ind(y=i) -p'(\x))^2 ]+\eta^2\;.\]
Furthermore, by the orthogonality of Hermite polynomials, we have that \[\E_{(\x, y)\sim D^S_{V^{\perp}}}[(\Ind(y=i) -p_{S,i}(\x))^2]=\sum_{\beta \subseteq \N^d} ( \E_{(\x,y)\sim D^S_{V^{\perp}}}[H_{\beta}( \x)\Ind(y=i)] - \E_{(\x,y)\sim D^S_{V^{\perp}}}[H_{\beta}( \x)p_{S,i}(\x)])^2\;.\] In particular, if we decompose the error into its Hermite polynomial components of degree $t$, we have that $\sum_{t=1}^m\E_{(\x, y)\sim D^S_{V^{\perp}}}[(\Ind(y=i)^{[t]} -p_{S,i}^{[t]}(\x))^2]  \le \eta^2$.}

{Next, by the rotational invariance of the Gaussian distribution, we may, without loss of generality, choose an orthonormal basis $\vec e_1,\dots, \vec e_{k'}$ for the subspace  $U$ projected onto $E_S^{\perp}$. Note that \(k'\ge 1\), otherwise, the polynomial \(q_S\) (which is defined over this space) could not exist.} 
Furthermore, since $U$ is a subset of $V^\perp$, we similarly have that  $\E_{(\x,y)\sim D^S_{V^{\perp}}}[q_S(\x^U)\Ind(y=z)]>\sigma$ for some $z\in \mathcal{Y}$.
Decomposing $q_S(\x^{U})$ in the basis of Hermite polynomials we have that $q_S(\x^{U})= \sum_{\beta \in \N^{d'}, 1\le \norm{\beta}_1\le m}\widehat{q}_S(\beta)H_\beta(\x^U)$, where $d'=\dim(U)$.
Moreover, note that since $q_S$ has no component outside $\e_1,\dots, \e_{k'}$, we have that $q_S(\x^{U})=\sum_{\beta \in J}\widehat{q}_S(\beta)H_\beta(\x^U)$, where $J$ denotes the set of $\beta \in \N^{d'}$ such that $\beta_i\ge 1$ for some $i\in [k']$.
Considering the correlation of $q_S$ with the region of the label $z$, we have that 
\begin{align*}
    \E_{(\x,y)\sim D^S_{V^{\perp}}}[q_S(\x^U)\Ind(y=z)]&= \sum_{\beta\in J}\widehat{q}_S(\beta)\E_{(\x,y)\sim D^S_{V^{\perp}}}[H_\beta(\x^U)\Ind(y=z)]\\&\le \left(\sum_{\beta\in J}\E_{(\x,y)\sim D^S_{V^{\perp}}}[H_\beta(\x^U)\Ind(y=z)]^2\right)^{1/2}\;,
\end{align*}
where we used the fact that $\|q_S\|_2=1$ and the \CS inequality. {Hence, we deduce that the sum of the squares of the Hermite coefficients corresponding to degrees up to $m$ in the expansion of the random variable $\Ind(y=z)$ is at least $\sigma^2$.}

Now evaluating the quadratic form of $\vec M=\E_{\x\sim D_{\x}}[\nabla p_{S,z}(\x^{\perp V}) \nabla p_{S,z}(\x^{\perp V})^\top\mid \x \in S]$ (Line \ref{line:setUs} of \Cref{alg:MetaAlg2}) using \Cref{fact:gradientNorm} gives us
\begin{align*}
\sum_{i=1}^{k'}\e_i^\top\vec M \e_i&= \sum_{i=1}^{k'}\E_{(\x,y)\sim D^S_{V^{\perp}}}[(\nabla p_{S,z}(\x)\cdot\e_i )^2]\ge \sum_{\beta \in \N^d} \left(\sum_{i=1}^{k'} \beta_i) (\widehat{p}_{S,z}(\beta)\right)^2\\
&\ge \sum_{\beta \in J} \E_{(\x,y)\sim D^S_{V^{\perp}}}[H_\beta(\x^U)\Ind(y=z)]^2- 2\eta  \\
&\ge \sigma^2- 2\eta \ge \sigma^2/2
\end{align*}
for $\eta\le \sigma/2$. Thus, since $\dim(U)\le K$ we have that $\e_j^{\top} \vec M\e_j\ge \sigma^2/(2K)$ for some $j \in [k']$. 

Since $\vec{M}$ is symmetric and therefore diagonalizable, we can denote by $\vec{u}^{(i)}$ the unit eigenvectors of $\vec{M}$ and by $\lambda_i$ the corresponding eigenvalues. Additionally, let $U_{S,z}$ denote the set of eigenvectors of $\vec{M}$ with eigenvalues greater than ${\sigma^2}/{4K}$ (see Line \ref{line:setUs} of \Cref{alg:MetaAlg2}).
{Decomposing $\vec e_j$ in the basis of eigenvectors gives us   
$$
\vec e_j^\top  \vec M\,\vec e_j = \sum^{d-k}_{i=1} \lambda_i\, (\vec e_j\cdot \vec u^{(i)})^2.
$$
Using that $U_{S,z}$ contains eigenvectors of $\vec{M}$ with eigenvalues greater than ${\sigma^2}/{4K}$, we get that 
\[\frac{\sigma^2}{4K}+\sum_{\vec u^{(i)} \in U_{S,z}} \lambda_i(\vec e_j\cdot \vec  u^{(i)})^2\ge \vec e_j^\top  \vec M\,\vec e_j\ge  \frac{\sigma^2}{2K}\;.
\]}
Hence, there exists an eigenvector $\vec u^{(i)}\in U_{S,z}$ such that  $\lambda_i(\vec e_j\cdot \vec  u^{(i)})^2  \ge  \sigma^2/(4K \abs{U_{S,z}})$. This implies that $(\vec e_j\cdot \vec  u^{(i)})^2  \ge  \sigma^2/(4 \abs{U_{S,z}}K \text{tr}(\vec M))$ as $\vec M$ is positive definite. Therefore, by \Cref{fact:regressionSubspaceBound}, we have that $\abs{U_{S,z}}=\poly(mK/\sigma), \text{tr}(\vec M)= O(m)$ which implies that  $\abs{\vec e_j\cdot \vec  u^{(i)} } \ge \poly(\sigma/(mK))$.
{Finally, note that since $\vec e_j\cdot \vec  u^{(i)}\neq 0 $,  
we have that $\vec u^{(i)}$ cannot belong in the space $E_S$.
Thus, there exists a unit vector $\w\in W$ such that 
$\abs{\vec u^{(i)}\cdot \vec w}\ge \eps/(\sqrt{\abs{U_S}}\Gamma)\ge \poly(\eps\sigma/(mK\abs{\mathcal{Y}}\Gamma) $, 
which concludes the proof of \Cref{it:corr-vec-existance}.
}
\end{proof}
 
By the definition of an approximating partition (see \Cref{def:approximatingPartition}) and \Cref{fact:gaussianfacts}, for all $S\in \mathcal{S}$ we have that   $\pr_{D}[S ]= (\eps'/k)^{\Omega(k)}$ and $\abs{\mathcal{S}}= ( k/\eps')^{O(k)}$.
Therefore, by the union bound and Hoeffding's inequality, it holds that if the sample size $N\ge (k/\eps')^{Ck}\log(\abs{\mathcal{S}}/\delta)= (k/\eps')^{O(k)}\log(1/\delta) $ for a sufficiently large constant $C>0$, then with probability at least $1-\delta$  we have that  $\abs{\pr_{\widehat{D}}[S]-\pr_{D}[S]}\le \pr_{D}[S]/2$ for all $S\in \mathcal{S}$.
Hence, $\pr_{\widehat{D}}[S]\ge \pr_{D}[S]/2$, i.e., the number of samples that fall in each set $S\in \mathcal{S}$ is at least $N\pr_{D}[S]/2= N(\eps'/k)^{\Omega(k)}$.

If we choose $N$ so that $N\geq (dm)^{Cm}(k/\eps')^{Ck}\log(1/\delta)(K\Gamma \abs{\cY} /(\sigma\eps\alpha))^{C}$ for a sufficiently large universal constant $C>0$, by \Cref{it:corr-vec-existance}, we have that  for all $S\in \mathcal{T}$ it holds that $\vec v\cdot (\w^{(i)})^{\perp V}\ge \poly(\eps\sigma/(mK\abs{\mathcal{Y}}\Gamma)$ for some $\vec v\in U_{S}, i\in [K]$.
Hence, we have that there exists a subset $\mathcal{T}'\subseteq \mathcal{T}$ and $i\in [K]$ with $\sum_{S\in \mathcal{T'}}\pr_{\widehat{D}}[S]\ge \sum_{S\in \mathcal{T'}}\pr_{D}[S]/2= \Omega(\alpha/K)$ such that for all $S\in \mathcal{T'}$ we have that  $\vec v\cdot (\w^{(i)})^{\perp V}\ge \poly(\eps\sigma/(mK\abs{\mathcal{Y}}\Gamma)$ for some $\vec v \in U_{S}$.

Denote by $\widehat{\vec U}$ the matrix computed in Line \ref{line:matrixMeta} of \Cref{alg:MetaAlg2}. For some $i \in [K]$, we have that 
$$\left((\vec w^{(i)})^{\perp V}\right)^{\top}\widehat{\vec U}(\vec w^{(i)})^{\perp V} \gtrsim \frac{\alpha}{K}\poly(\eps\sigma/(mK\abs{\mathcal{Y}}\Gamma)\ge\poly(\eps\alpha\sigma/(mK\abs{\mathcal{Y}}\Gamma)\;,$$
where we used the fact that $\widehat{\vec U}$ is a PSD matrix.

Moreover, as $\norm{\vec v}=1$ for all $\vec v\in U_S, S\in \mathcal{S}$, by the triangle inequality we can see that 
$$\norm{\widehat{\vec U}}_{F}\le \sum_{ S\in \mathcal{S},\vec v\in U_S}\norm{\vec v}\pr_{\widehat{D}}[S] \le \sum_{ S\in \mathcal{S},\vec v\in U_S}\pr_{\widehat{D}}[S]\le \poly(mK\abs{\mathcal{Y}}/\sigma)\;,$$
where we used the fact that  $\abs{U_S}=\poly(mK\abs{\mathcal{Y}}/\sigma)$ by \Cref{it:corr-vec-existance}. 

Hence, by \Cref{cl:coorelationofeigenvalues}, 
we have that there exists a unit eigenvector $\vec v$ of $\widehat{\vec U}$ with eigenvalue at least $\poly(\eps\alpha\sigma/(mK\abs{\mathcal{Y}}\Gamma)$ for a sufficiently small polynomial, such that  
$(\vec w^{(i)})^{\perp V}\cdot \vec v\ge \poly(\eps\alpha\sigma/(mK\Gamma\abs{\mathcal{Y}}))$
for some $i\in [K]$. Moreover, the number of such eigenvectors is at most $\poly(mK\abs{\mathcal{Y}}\Gamma/(\eps\alpha\sigma))$, which completes the proof of \Cref{prop:MetaAlg2}.
\end{proof}

\subsection{Proof of \Cref{thm:MetaTheorem}} \label{sec:meta-thm-proof}
In this section, given \Cref{prop:MetaAlg2}, we prove \Cref{thm:MetaTheorem}. {Recall that, in \Cref{prop:MetaAlg2}, we have shown that if our current approximation to the hidden subspace is not accurate enough to produce a classifier that has sufficiently small error, then \Cref{alg:MetaAlg2} efficiently finds a direction that has non-trivial correlation to the hidden subspace. In the proof that follows, we iteratively apply this argument to show that, after a moderate number of iterations, \Cref{alg:MetaAlg1} outputs a classifier with sufficiently small error. }
\begin{proof}[Proof of \Cref{thm:MetaTheorem}]
{
Denote by $f^*\in \mathcal{F}$ a classifier such that $\pr[f^*(\x)\neq y]\le  \opt+\eps/4$. }
We show that \Cref{alg:MetaAlg1}, with high probability, returns a hypothesis $h$ with 0-1 error at most $\tau+\opt+\eps$. Let $W^*$ be a subspace of dimension $K$ such that $f^*(\x)=f^*(\x^{W^*})$ and denote by $\vec w^{*(1)},\ldots,\vec w^{*(K)} \in \mathbb{R}^d$ an orthonormal basis of $W^*$.
Let $L_t$ be the list of vectors maintained by the algorithm (Line \ref{line:updateMeta} of \Cref{alg:MetaAlg1}) and $V_t=\mathrm{span}(L_t)$, $\dim(V_t)=k_t$.
Let $\eps'$ be the partition width parameter set at Line \ref{line:initMetaParams} of \Cref{alg:MetaAlg1}. For $t\in [T]$, let $\mathcal{S}_t$ be arbitrary $\eps'$-approximating partitions with respect to $V_t$ (see \Cref{def:approximatingPartition}).
Let $h_t:\mathbb{R}^d\to [K]$ be piecewise constant classifiers, defined as $h_t= h_{\mathcal{S}_{t}}$ according to \Cref{def:h} for the distribution $D$.

To prove the correctness of \Cref{alg:MetaAlg1}, we need to show that if $h_t$ has significant error, then the algorithm improves its approximation $V_t$ to $W^*$.
For quantifying the improvement at every step, we consider the following potential function
\[
\Phi_t = \sum_{i=1}^{K}\norm{(\vec w^{*(i)})^{\perp V_t}}^2\;.
\]

Note that from Lines \ref{line:initMeta} and \ref{line:loopMeta} of \Cref{alg:MetaAlg1}, we perform at most $\poly(mK\Gamma\abs{\cY}/(\eps\alpha\sigma))$ iterations.
{Furthermore, in each iteration, we update the vector set with at most  $\poly(mK\Gamma\abs{\cY}/(\eps\alpha\sigma))$ vectors (see \Cref{prop:alg2})}. Hence, it follows that $k_t\le \poly(mK\Gamma\abs{\cY}/(\eps\alpha\sigma))$, for all $t=1,\dots, T$.

 Assume that $\pr_{(\x, y)\sim D}[h_t(\x)\neq y]> \tau+\opt+\eps/2$ for all $t=1,\dots, T$.  
 Using the fact that $N={d}^{Cm}2^{(mK\Gamma\abs{\mathcal{Y}}/(\eps\alpha\sigma ))^C}\log(1/\delta)$ for a sufficiently large universal constant $C>0$ (Line \ref{line:initMetaParams} of \Cref{alg:MetaAlg1}), we can apply \Cref{prop:MetaAlg2} and conclude that, with probability $1-\delta$, there exist unit vectors $\vec v^{(t)}\in  V_{t+1}$ for $ t=[T]$ such that 
 $\vec w\cdot (\vec v^{(t)})^{\perp V_t}\ge \poly(\eps\alpha\sigma/(mK\Gamma\abs{\mathcal{Y}}))$.
 Thus, by \Cref{it:potentialDecrease}, we have that with probability $1-\delta$, for all $t\in [T]$,  \[\Phi_t\le \Phi_{t-1}- \poly(\eps\alpha\sigma/(mK\Gamma\abs{\mathcal{Y}}))\;.\]
 After $T$ iterations, it follows that $\Phi_T\le \Phi_{0}- T\poly(\eps\alpha\sigma/(mK\Gamma\abs{\mathcal{Y}}))=
 K-T\poly(\eps\alpha\sigma/(mK\Gamma\abs{\mathcal{Y}}))$.
 However, since $T$ is set to be a sufficiently large polynomial of $m,\Gamma,K,\abs{\cY},1/\eps, 1/\alpha$ and $1/\sigma$ we would arrive at a contradiction, since $\Phi_T\ge 0$.
 Hence, we have that $\pr_{(\x, y)\sim D}[h_t(\x)\neq y]\le \tau+\opt+\eps/2$, for some $t\in \{1,\dots, T\}$. 
 Since the error of $h_t$ can only be decreasing by \Cref{it:ErrorDecrease} and  $h_t$ is close to its sample variant by \Cref{cl:hConcetration}, we have that $\pr_{(\x, y)\sim D}[h(\x)\neq y]\le \tau+\opt+\eps$.

\item \textbf{Sample and Computational Complexity:} From the analysis above we have that the algorithm terminates in $\poly(mK\Gamma\abs{\mathcal{Y}}/(\eps\alpha\sigma))$ iterations and at each iteration we draw of order ${d}^{O(m)}2^{\poly(mK\Gamma\abs{\mathcal{Y}}/(\eps\alpha\sigma ))}\log(1/\delta)$ samples. Hence, we have that the total number of samples is  ${d}^{O(m)}2^{\poly(mK\Gamma\abs{\mathcal{Y}}/(\eps\alpha\sigma ))}\log(1/\delta)$. Moreover, we use at most $\poly(N)$ time as all operations can be implemented in sample polynomial time.
\end{proof}

\section{{Algorithmic Applications of General MIM Learner}}
\label{sec:applications}
In this section, we {highlight} {selected} applications of our main result.
First, in \Cref{sec:intersections} we present an application of our {algorithmic approach} to the class of intersections of halfspaces, establishing \Cref{thm:SimpleLearningIntersections}. Second, in \Cref{sec:RCNAlgo}, we show that a slight modification of our {general} algorithm can achieve improved {complexity} in the RCN setting, establishing \Cref{thm-intro:AlgRCN-linear}.
 
Additionally, we note that our result on learning Multiclass Linear Classifiers, as presented in \Cref{thm:constantApprox}, can also be derived as a corollary of  \Cref{thm:MetaTheorem}, using similar arguments. 

\subsection{Learning Intersections of Halfspaces with Adversarial Label Noise}\label{sec:intersections}
In this section, we present \Cref{thm:LearningIntersections}, on  learning intersections of halfspaces under agnostic noise, which is derived as a corollary of \Cref{thm:MetaTheorem}. The proof of this result follows from \Cref{prop:GoodCondIntersections}, which relies on a new structural result about the class of intersections of $K$-halfspaces, \Cref{lem:progressIntersections}.
{First, we state the formal version of the theorem.}

\begin{theorem}[Learning Intersections of Halfspaces with Adversarial Label Noise]\label{thm:LearningIntersections}
Fix the class $\mathcal{F}_{K,d}$ of intersections of $K$ halfspaces in $\R^d$.
    Let $D$ be a distribution over $\mathbb{R}^d \times \{0,1\}$ whose $\x$-marginal is $\mathcal{N}(\vec0, \vec I)$ and let $\epsilon, \delta \in (0,1),K,d\in \Z_+$. Then, \Cref{alg:MetaAlg1} draws $N =d^22^{\poly(K/\eps)}\log( 1/\delta)$ i.i.d.\ samples from $D$, runs in time $\poly(N)$, and returns a  hypothesis $h$ such that, with probability at least $1 - \delta$, it holds that $\pr_{(\x,y)\sim D}[h(\x)\neq y] \leq K \, O(\opt\log(1/\opt))+\eps$, where $\opt=\inf_{f\in \mathcal{F}_{K,d}}\pr_{(\x,y)\sim D}[f(\x)\neq y]$.
\end{theorem}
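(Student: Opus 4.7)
The plan is to obtain \Cref{thm:LearningIntersections} as a corollary of the general MIM learner \Cref{thm:MetaTheorem}. To do this I need to verify that the class $\mathcal{F}_{K,d}$ of intersections of $K$ halfspaces over $\R^d$ satisfies \Cref{def:GoodCondition} with parameters $m=2$, $\tau=K\widetilde O(\zeta)$, $\Gamma=O(\sqrt{\log K})$, $|\mathcal Y|=2$, and some $\sigma,\alpha$ that are fixed polynomials in $1/K$. Then substituting these into the sample-complexity bound $N=d^{O(m)}2^{\poly(mK\Gamma|\mathcal Y|/(\eps\alpha\sigma))}\log(1/\delta)$ of \Cref{thm:MetaTheorem} yields precisely $N=d^2\,2^{\poly(K/\eps)}\log(1/\delta)$ and final error $\tau+\opt+\eps=K\widetilde O(\opt)+\eps$.

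Condition 1 of \Cref{def:GoodCondition} is immediate, since an intersection of $K$ halfspaces depends only on the $K$-dimensional subspace $W=\spaning(\vec w^{(1)},\dots,\vec w^{(K)})$. Condition 2 is the classical Gaussian-surface-area bound $\Gamma(\mathcal F_{K,d})=O(\sqrt{\log K})$ of Klivans--O'Donnell--Servedio. The substantive content is Condition 3, the structural ``moment-detection'' property. I will prove it via an analogue of \Cref{lem:progress}, call it \Cref{lem:progressIntersections}: for any $f\in\mathcal F_{K,d}$ and noisy $y$ with $\pr[f(\x)\neq y]\le\zeta$, and for any subspace $V$, either (a) some $g:V\to\{0,1\}$ satisfies $\pr[g(\x^V)\neq f(\x)]\le K\widetilde O(\zeta)$, or (b) with probability at least $\alpha=\poly(1/K)$ over $\x_0\sim\cN(\vec 0,\vec I)$, there is a zero-mean unit-variance polynomial $p$ of degree at most $2$ on $U=(V+W)\cap V^\perp$ and a label $z$ with $\E[p(\x^U)\1(y=z)\mid \x^V=\x_0^V]\ge\poly(\zeta/K)$.

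The structural lemma is where the bulk of the work lies, and it is the part I expect to be the main obstacle. The reason we cannot use $m=1$ (as in the multiclass case) is that when several halfspaces are active near the boundary of $f$, the first moments of $\1(f=1)$ can cancel out; the directions $\vec w^{(i)}$ are instead exposed by the second Hermite coefficients $\E[H_2(\vec w^{(i)}\cdot\x)\1(f=1)]$, which reflect the curvature of each level set. To prove the lemma I will reduce to the one-dimensional/low-dimensional picture: use \Cref{lem:projection}-style arguments to restrict to a cube $S\subseteq V$ of small side $\eps'$ so that $f$ on $S$ is close to a projected intersection $f_{\cG_S}$ depending only on $\x^{\perp V}$; then, assuming (a) fails, some constant fraction of cubes $S$ have $f_{\cG_S}$ neither close to the all-zero nor to the all-one classifier after accounting for $\zeta$ label noise. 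On such an $S$ I will construct explicit test polynomials $p(\x^U)=\sum_i c_i H_2((\vec w^{(i)})^{\perp V}\cdot\x/\|\cdot\|)$ together with the degree-$1$ Hermite pieces, and show via a decomposition $\E[p\cdot\1(y=1)]=\E[p\cdot\1(f=1)]-\E[p\cdot S(\x,y)]+\E[p\cdot T(\x,y)]$ (mirroring \eqref{eq:decomposition}) that the noiseless contribution is at least $\poly(1/K)\pr[f=1\text{ on }S]$, while the noisy terms are bounded via Cauchy--Schwarz and a tail bound on $\max_i(\vec w^{(i)}\cdot\x+t_i)$ analogous to \Cref{lem:tails-gen}. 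This is the step that incurs the extra $\log(1/\opt)$ factor, through the $\widetilde O$ in $K\widetilde O(\opt)$, arising from integrating the polynomial tail of $p$ against the noise measure (the $\tilde O(\opt^{1/1})$ versus $\sqrt{\opt}$ trade-off).

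Given \Cref{lem:progressIntersections}, Condition 3 of \Cref{def:GoodCondition} follows for $(m,\zeta,\alpha,K,\tau,\sigma,\Gamma)=(2,\zeta,\poly(1/K),K,K\widetilde O(\zeta),\poly(1/K),O(\sqrt{\log K}))$, so $\mathcal F_{K,d}$ is well-behaved. Invoking \Cref{thm:MetaTheorem} with $\zeta=\opt+\eps$ outputs in $\poly(d,N)$ time a hypothesis $h$ with
\[
\pr_{(\x,y)\sim D}[h(\x)\neq y]\le \tau+\opt+\eps=K\cdot O(\opt\log(1/\opt))+\eps,
\]
and the sample count is
\[
N=d^{O(2)}\,2^{\poly(2\cdot K\cdot\sqrt{\log K}\cdot 2/(\eps\cdot\poly(1/K)))}\log(1/\delta)=d^2\,2^{\poly(K/\eps)}\log(1/\delta),
\]
which is exactly the stated bound. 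The main open work, as emphasized, is the careful analytic proof of \Cref{lem:progressIntersections} --- in particular matching the $K\widetilde O(\opt)$ approximation factor via tight control of the anti-concentration of the intersection margin and the degree-$2$ moment cancellations.
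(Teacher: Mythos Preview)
Your plan is essentially correct and mirrors the paper's approach: verify that $\mathcal F_{K,d}$ satisfies \Cref{def:GoodCondition} with $m=2$ (this is \Cref{prop:GoodCondIntersections}), then invoke \Cref{thm:MetaTheorem}. Two technical points are worth flagging. First, the key signal in the paper's structural result (\Cref{lem:progressIntersections}) does not come from a tail argument analogous to \Cref{lem:tails-gen}; it comes from Vempala's Gaussian variance-reduction fact (\Cref{fact:VarianceMonotonicity}): if $\{\x:\w^{(i)}\cdot\x+t_i\ge 0\}$ is a defining halfspace of $f$, then $\var(\w^{(i)}\cdot\x\mid f(\x)=1)\le 1-ce^{-\max(0,t_i)^2/2}$, and a pigeonhole on the biases $t_i$ yields some $i$ with $(\w^{(i)})^\top\E[(\x\x^\top-\vec I)f(\x)]\,\w^{(i)}\lesssim -\pr[f=1]\pr[f\neq 1]/K$. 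The noise side is then controlled by the elementary bound $\E[(\vec v\cdot\x)^2R(\x)]\lesssim \opt\log(1/\opt)$ whenever $\E[R]\le\opt$ (\Cref{cl:truncatedmeans}), and this is where the $\log(1/\opt)$ factor enters. Your proposed route via degree-$2$ Hermite test polynomials is morally the same object, but without the variance-reduction fact it is not clear how you would certify the noiseless lower bound. Second, a black-box invocation of \Cref{thm:MetaTheorem} yields $d^{O(m)}=d^{O(1)}$, not $d^2$; the paper obtains the stated $d^2$ by an additional observation (made just after \Cref{thm:LearningIntersections}) that for $m=2$ the regression in Line~\ref{line:regression} of \Cref{alg:MetaAlg2} reduces to Frobenius-norm covariance estimation, which needs only $O(d^2)$ samples. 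Finally, a minor parameter issue: in the paper $\alpha$ and $\sigma$ depend on $\eps$ as well as $K$ (they are $\eps$ and $\poly(\eps/K)$ respectively), not only on $1/K$ as you wrote; this does not change the final $2^{\poly(K/\eps)}$ bound but is needed for the argument to close.
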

\begin{proof}
First, note that without loss of generality we can assume that $\opt\geq \eps^2$ by randomly flipping an $\eps^2$ fraction of the labels with probability $1/2$. Therefore, by the mean value theorem we have that 
\begin{align*}
(\opt+\eps^2)\log(1/(\opt+\eps^2) +1)&\leq (\opt)\log(1/(\opt) +1)+\eps^2\log(1/\opt +1)\\&\leq (\opt)\log(1/(\opt) +1)+\eps\;.    
\end{align*}
 Hence,  by applying \Cref{prop:GoodCondIntersections} we have that the class of intersections of $K$-halfspaces is a class of $(2,\opt+\eps,\eps,K,KO(\opt\log(1/\opt+1)+\eps),\poly(\eps/K),O(\sqrt{\log K}))$-well-behaved Multi-Index Models. 
 
 Thus, by \Cref{thm:MetaTheorem} we have that \Cref{alg:MetaAlg1} draws $N=d^22^{\poly(K/\eps)}\log( 1/\delta)$ and in $\poly(N)$ time returns a hypothesis $h$ with error $KO(\opt\log(1/\opt+1))+O(\eps)$. 
Finally, note that without loss of generality $\opt\leq 1/2$, since one of the constant classifiers that predict $0$ and $1$ can achieve error at most $1/2$. Consequently, we have that $\pr_{(\x,y)\sim D}[h(\x)\neq y] \leq K \, O(\opt\log(1/\opt))+O(\eps)$, since $x\log(1/x+1)=\Theta(x\log(1/x))$ for all $x\in (0,1/2]$.
\end{proof}

{For the class of intersections of $K$ halfspaces, we adapt the general algorithm \Cref{alg:MetaAlg2} to perform polynomial regression with polynomials of degree at most $2$ in Line \ref{line:regression} of \Cref{alg:MetaAlg2}.
This degree suffices for learning intersections of halfspaces, as demonstrated in \Cref{prop:GoodCondIntersections}.} 
{Moreover, the regression task can be efficiently reduced to covariance estimation with respect to the Frobenius norm because the entries of the covariance matrix correspond directly to the Hermite coefficients. Consequently, the regression can be executed in polynomial time using $O(d^2)$ samples, so that the overall sample complexity of the algorithm scales as $d^2$ rather than $d^{O(1)}$.}

The following proposition shows that the class of intersections of halfspaces satisfies the regularity conditions in \Cref{def:GoodCondition}.

\begin{proposition}\label{prop:GoodCondIntersections}
Let $d,K\in \Z_+, \zeta, \eps \in (0,1)$.  $\mathcal{F}_{K,d}$ is a class of $(2,\zeta,\eps,K,KO(\zeta\log(1/\zeta+1) +\eps),\poly(\eps/K),O(\sqrt{\log K}))$-well-behaved Multi-Index Models according to \Cref{def:GoodCondition}.

\end{proposition}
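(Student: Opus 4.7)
The plan is to verify each requirement of \Cref{def:GoodCondition} separately. The GSA bound $\Gamma = O(\sqrt{\log K})$ for intersections of $K$ halfspaces under the Gaussian distribution is a classical consequence of Nazarov's theorem, and follows from known work (e.g., \cite{KOS:08}), so the main task is establishing the dichotomy in Condition 3 with the stated parameters. Throughout, fix $f \in \mathcal{F}_{K,d}$, a noisy label $y$ with $\Pr[f(\x) \neq y] \le \zeta$, and an arbitrary subspace $V \subseteq \R^d$; let $W$ denote the hidden subspace defining $f$ and $U = (V+W) \cap V^\perp$.

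The first step is to reduce the global dichotomy to a local one over cylinders in $V$. Fix an $\eta$-approximating partition $\mathcal{S}$ of $V$ with $\eta = \poly(\eps/K)$ sufficiently small, as in \Cref{def:approximatingPartition}. Using the projection lemma \Cref{lem:projection} (which is stated for multiclass linear classifiers but extends to intersections of halfspaces since each defining halfspace can be projected independently, losing at most $K \sqrt{k}\eta$ in total measure by a union bound over the $K$ halfspaces), we can approximate $f$ on each $S \in \mathcal{S}$ by an intersection of $K$ halfspaces $f_{\mathcal G_S}$ acting only on $\x^{\perp V}$. If Condition 3(a) fails with $\tau = K\widetilde O(\zeta) + O(\eps)$, then by an argument analogous to \Cref{it:LargeError2FractionofStrips}, on at least an $\Omega(\eps)$-mass of cylinders $S$, the restricted (and projected) function $f_{\mathcal{G}_S}$ is not $K\widetilde O(\zeta_S)$-close to any constant function, where $\zeta_S = \Pr[f_{\mathcal G_S}(\x) \neq y \mid \x \in S]$.

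The second step is to invoke the structural lemma for intersections of halfspaces, \Cref{lem:progressIntersections}, which is the analogue of \Cref{lem:progress} for this class and must be established separately. Its statement should be: if $f = \bigwedge_{i=1}^K \sign(\w^{(i)} \cdot \x - t_i)$ is an intersection of halfspaces that is $K\widetilde O(\zeta)+\eps$ far from any constant function (with respect to a noisy label $y$ with $\Pr[f \neq y] \le \zeta$), then there exists a mean-zero variance-one polynomial $p$ of degree at most $2$ in the directions $\w^{(i)}$ and a label $z \in \{0,1\}$ such that $\E[p(\x)\Ind(y=z)] \ge \poly(\eps/K)$. The need for degree $2$ (versus degree $1$ as in \Cref{lem:progress}) comes from the case where all $|t_i|$ are large: then $\E[\x \Ind(f(\x)=1)]$ can be exponentially small in $\max |t_i|$, but the degree-$2$ Hermite coefficients along the $\w^{(i)}$-directions remain of order $\poly(\eps/K)$ whenever the $\x$-marginal of the positive region is not approximately rotationally symmetric. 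Proving this should proceed by a case analysis on the magnitudes of the $t_i$: when some threshold is moderate, a degree-$1$ moment suffices (as in \Cref{lem:progress}); when all thresholds are large, the level set concentrates in a region whose second-moment matrix differs from $\vec I$ by a rank-$K$ perturbation aligned with $W$, yielding a degree-$2$ witness via the same noise-robust tail-bound technique as in \Cref{clm:bad-region} using the GSA bound to control noise corruption.

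The third step is to promote the per-cylinder structural guarantee to the probabilistic statement required by Condition 3(b). Applying \Cref{lem:Point2FiniteWidthCylinders} (or its direct analogue), the existence of a degree-$2$ correlating polynomial on an $\Omega(\eps)$-fraction of partition cubes $S$ implies that with probability at least $\eps$ over $\x_0 \sim \cN(\vec 0, \vec I)$, there exists such a polynomial conditional on $\x^V = \x_0^V$; here the loss from the cube-to-point conversion is controlled by the GSA bound $O(\sqrt{\log K})$ and the smoothness of the Gaussian noise operator, as in the proof of \Cref{lem:Point2FiniteWidthCylinders}. The main obstacle will be establishing the structural lemma \Cref{lem:progressIntersections} with a clean dependence on $K$: specifically, showing that the ``large-threshold'' regime contributes a degree-$2$ witness of magnitude $\poly(\eps/K)$ requires careful bookkeeping because the probability mass of the positive region can be as small as $\exp(-\Theta(\log K))$, yet the second-moment signal must survive adversarial corruption of up to $\zeta$ of the labels. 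This is exactly where the $\widetilde O(\log(1/\zeta))$ factor in $\tau$ enters, via a \CS-style bound on the noise contribution along the lines of \Cref{clm:bad-region}.
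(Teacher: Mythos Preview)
Your proposal has the right overall shape but diverges from the paper's proof in two substantive ways, one of which is a genuine gap.

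\textbf{Unnecessary detour through cubes.} The paper does not use the projection lemma or a partition of $V$ at all in proving \Cref{prop:GoodCondIntersections}. Instead, it conditions directly on points: for each $\x_0$, the function $f'(\x) = f(\x^{\perp V} + \x_0^V)$ is \emph{exactly} an intersection of $K$ halfspaces on $V^\perp$ (with shifted thresholds), so no approximation is needed. The paper's \Cref{it:LargeError2FractionofStripsInter} shows directly, via Jensen's inequality applied to the concave function $\tau(x)=x\log(1/x)$, that with probability at least $\eps$ over $\x_0$ the point-conditioned problem satisfies the hypotheses of the structural lemma. Your cube-based route is more roundabout, and your invocation of \Cref{lem:Point2FiniteWidthCylinders} is in the wrong direction: that lemma converts a point-level moment condition into a cube-level one (this is how the algorithm uses it), not the reverse. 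The reverse direction you need could plausibly be proved by a similar smoothness argument, but it is extra work the paper avoids entirely.

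\textbf{Wrong mechanism in the structural lemma.} Your sketch of \Cref{lem:progressIntersections} as a case analysis on the magnitudes of the $t_i$ is not what the paper does and misses the key idea. The paper assumes the degree-$1$ moments are small and then shows a degree-$2$ moment is large by invoking \emph{Gaussian variance monotonicity} for intersections of halfspaces (\Cref{fact:VarianceMonotonicity}, due to Vempala): along any defining direction $\w^{(i)}$, the conditional variance satisfies $\var(\w^{(i)}\cdot\x \mid f(\x)=1) \le 1 - c\,e^{-\max(0,t_i)^2/2}$. A simple union bound forces some $t_i \le 2\sqrt{\ln(K/\Pr[f\neq 1])}$, giving a variance deficit of order $\Pr[f=1]\Pr[f\neq 1]/K$ along that direction. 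The noise contribution to the second moment is then bounded by $O(\opt\log(1/\opt))$ via a direct Gaussian tail estimate (\Cref{cl:truncatedmeans}), not by the \Cref{clm:bad-region}-style argument you cite; this is where the $K\widetilde O(\zeta)$ factor in $\tau$ actually originates. Your heuristic about ``all thresholds large'' forcing degree $2$ is not the right picture: the variance-reduction argument works uniformly regardless of threshold size, and degree $2$ is needed simply because degree-$1$ moments can vanish by symmetry even when thresholds are moderate.
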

\begin{proof}
Let $f\in \mathcal{F}_{K,d}$ and denote by $W$ the subspace spanned by the normal vectors of the halfspaces that define $f$, which has dimension at most $K$. From \cite{KOS:08,chernozhukov2017detailed} we have that the Gaussian surface area of  $ \mathcal{F}_{K,d}$ is bounded by $O(\sqrt{\log(K)})$. 
It remains to show that  Condition 3 of \Cref{def:GoodCondition} is satisfied for the class $\mathcal{F}_{K,d}$.

{Let $D$ be a distribution over $\R^d\times \{0,1\}$ ,where the $\x$-marginals of $D$ are standard normal, such that $\pr_{(\x,y)\sim D}[f(\x)\neq y]\le \zeta$.
Let $V$ be a subspace of $\R^d$, and define $\tau(x)=x\log(1/x+1)$.}
{Assume that for every function \( g : V \to \{0,1\} \) it holds that \( \pr[f(\vec{x}) \neq g(\vec{x}^V)] > CK \tau(\zeta) + 2 \epsilon \), for a sufficiently large constant $C>0$. In particular, consider the function  $h(\x)=\argmin_{i\in \{0,1\}}\Pr_{(\x,y)\sim D}[y\neq i\mid \x^{V}]$ which depends only on $\x^V$ (i.e., $h(\x)=h(\x^V)$). Then the same lower bound applies to $h$, hence \( \pr_{(\x,y)\sim D}[h(\x) \neq y] > (C-4)K \tau(\zeta) + 2\epsilon \), since $\tau(x) \ge x/4$ for all $x\in (0,1].$}

{We first show that if the subspace $V$ is far from optimal, then when we partition the space into cylinders determined by points in $V$, the target function is far from constant on a nontrivial fraction of these cylinders. }
Specifically, we show the following

{\begin{claim}[Fraction of Good Points]\label{it:LargeError2FractionofStripsInter}
With probability at least $\eps$ over a random draw of $\x_0 \sim \cN(\vec{0},\vec{I})$ (independent of $\x$), the following holds: for every $i\in \{0,1\}$,
$$
\Pr_{(\x,y)\sim D}[y \neq i \mid \x^V=\x_0^V] > (C-4)K\,\tau\left(\Pr_{(\x,y)\sim D}[f(\x) \neq y \mid \x^V=\x_0^V]\right) + \eps\;,
$$
and $
\Pr_{(\x,y)\sim D}[f(\x)\neq y\mid \x^V=\x_0^V] \le 4/(C-4)
$.
\end{claim}
\begin{proof}[Proof of \Cref{it:LargeError2FractionofStripsInter}]
Assume for contradiction that the set $U$ that contains $\x_0\in \R^d$ such that for all $i\in\{0,1\}$ it holds $\pr_{(\x,y)\sim D}[y\neq i\mid \x^{V}=\x_0^V]> (C-4)K\tau\left(\pr_{(\x,y)\sim D}[f(\x)\neq y\mid \x^{V}=\x_0^V]\right)+\eps$
has probability mass less than $\eps$, i.e., $\pr[x\in U]< \eps$. 
Then, by the definition of the piecewise constant classifier
we can write
$$
\Pr_{(\x,y)\sim D}[h(\x)\neq y] = \E_{\x_0^V}\left[\min_{i\in\{0,1\}} \Pr_{(\x,y)\sim D}\left[i\neq y \mid \x^V=\x_0^V\right]\right].
$$

Note that for every $\x_0^V$ not in $U$, it holds that
$$
\min_{i\in\{0,1\}} \Pr_{(\x,y)\sim D}\bigl[i\neq y \mid \x^V=\x_0^V\bigr] \le (C-4)K\,\tau\Bigl(\Pr_{(\x,y)\sim D}\bigl[f(\x)\neq y \mid \x^V=\x_0^V\bigr]\Bigr) + \eps.
$$
Therefore, by taking the expectation over $\x_0^V$ and applying Jensen's inequality (noting that $\tau(x)=x\log(1/x+1)$ is concave for all $x>0$), we obtain
\begin{align*}
\Pr_{(\x,y)\sim D}[h(\x)\neq y] 
&\le (C-4)K\,\E_{\x_0^V}\Bigl[\tau\left(\Pr_{(\x,y)\sim D}\left[f(\x)\neq y \mid \x^V=\x_0^V\bigr]\right)\right] + \eps+\pr[\x^V\in U]\\
&\le (C-4)K\,\tau\left(\E_{\x_0^V}\left[\Pr_{(\x,y)\sim D}\left[f(\x)\neq y \mid \x^V=\x_0^V\right]\right]\right) + 2\eps\\
&= (C-4)K\,\tau\left(\Pr_{(\x,y)\sim D}\left[f(\x)\neq y\right]\right) + 2\eps\;,
\end{align*}
 which contradicts the assumed error lower bound for $h$. Therefore, it holds that $\Pr_{\x\sim \cN(\vec{0},\vec{I})}[\x \in U] \ge \eps$.
Finally, note that for every \(\x_0\) in \(U\) we have, by the definition of \(U\),
$$
(C-4)K\,\tau\left(\Pr_{(\x,y)\sim D}\left[f(\x)\neq y \mid \x^V=\x_0^V\right]\right) + \eps \le \Pr_{(\x,y)\sim D}\left[y\neq i \mid \x^V=\x_0^V\right] \le 1.
$$
By using that $\tau(x) \ge x/4$ for all $x\in (0,1]$ we deduce the desired claim.
\end{proof}}
Fix a point $\x_0\in\R^d$ that satisfies the statement of \Cref{it:LargeError2FractionofStripsInter}. Let $D_{V^{\perp}}$ be a  distribution supported on $V^{\perp}\times \{0,1\}$  defined as $D_{V^{\perp}}(\x^{\perp V},y)=D(\x,y\mid \x^V=\x_0^V)$. 
By the properties of the Gaussian distribution, we have that the $\x$-marginal of $D_{V^\perp}$ is a standard Gaussian distribution.
Moreover, by the definition of $D_{V^\perp}$, we have that for $f'(\x)=f(\x^{\perp V}+\x_0^{V})$ and all $i\in \{0,1\}$ it holds that  
\begin{align*}
\pr_{(\x,y)\sim D_{V^{\perp}}}[y\neq i]&> (C-4)K\tau(\pr_{(\x,y)\sim D_{V^{\perp}}}[f'(\x)\neq y])+\eps\\
&\geq ((C-4)K/4)\pr_{(\x,y)\sim D_{V^{\perp}}}[f'(\x)\neq y]\log(1/\pr_{(\x,y)\sim D_{V^{\perp}}}[f'(\x)\neq y])+\eps    
\end{align*}
 and $\pr_{(\x,y)\sim D}[f'(\x)\neq y]\le 4/(C-4)$.
Finally, applying \Cref{lem:progressIntersections}  for every intersection of halfspaces $f'(\x)=f(\x^{\perp V}+\x_0^{V})$ for $\x_0^V$ that satisfies the statement of \Cref{it:LargeError2FractionofStripsInter} completes the proof of \Cref{prop:GoodCondIntersections}.
\end{proof}
Below, we state and prove a structural result for intersections of halfspaces. 
 In particular, we show that if an intersection of halfspaces is far from being a constant function, then either its first moment or its second moment exhibits a non-trivial correlation with the weight vectors that define it. This structural property is crucial for our algorithm, as it guarantees that at every iteration we can identify a direction along which to improve our approximation.
{{
\begin{proposition}
    [Structural Result for Intersections of Halfspaces]\label{lem:progressIntersections}Let $d\in \mathbb{Z}_+$, $\eps,\opt\in (0,1)$, and let $D$ be a distribution over $ \mathbb{R}^d\times \{0,1\}$ whose $\x$-marginal is $\cN(\vec0,\vec I)$. Let $f\in \mathcal{F}_{K,d}$ be a classifier with error $\pr_{(\vec x,y)\sim D}[f(\vec x)\neq y]=\opt$, and assume that $f$ can be expressed in the form $f(\vec x)=\prod_{i\in [K]}\Ind(\vec{w}^{(i)} \cdot \vec{x}+t_i\ge 0)$ with $\w^{(1)},\dots, \w^{(K)}\in \mathbb{R}^d, t_1,\dots, t_K\in \R$. Let $C>0$ be a sufficiently large universal constant.
    If it holds that $\min(\pr_{(\vec x,y)\sim D}[y\neq 0],\pr_{(\vec x,y)\sim D}[y\neq 1])\ge C K \opt\log(1/\opt)+\eps$ and $\opt\le 1/C$, then there exists some $i\in [K]$ such that
 \[
 \max\left(\left|\E_{(\vec x,y)\sim D}\left[\vec{w}^{(i)}\cdot \x y\right]\right|,\left|\E_{(\x, y) \sim D}\left[\left((\vec{w}^{(i)}\cdot \x)^2 -1\right) y\right]\right|\right) \ge \poly(1/K,\eps)\;.\]
\end{proposition}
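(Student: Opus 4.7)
The plan is to locate a specific index $i^{*}\in[K]$ whose moment is provably large via three stages: (i) identify a ``most violated'' halfspace $i^{*}$ with $|t_{i^{*}}|$ bounded; (ii) lower bound either the first or second clean Hermite coefficient of $f$ along $\w^{(i^{*})}$; (iii) bound the noise contribution from $y-f$ using tail arguments analogous to those in Claim~\ref{clm:bad-region}.

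For Stage (i), I normalize $\|\w^{(i)}\|=1$ (rescaling does not change the moment's sign, and we absorb norms into the final $\poly(1/K,\eps)$ bound). The hypothesis combined with $\Pr[y\neq f]\leq \opt$ yields $\Pr[f=1],\Pr[f=0]\geq \eps/2$. Applying a union bound to $\Pr[f=0]\leq\sum_i \Pr[\w^{(i)}\cdot\x+t_i<0]=\sum_i\Phi(-t_i)$ gives some $i^{*}$ with $\Phi(-t_{i^{*}})\geq \eps/(2K)$; coupled with $\Phi(t_{i^{*}})\geq\Pr[h_{i^{*}}=1]\geq\Pr[f=1]\geq \eps/2$, this forces $|t_{i^{*}}|\leq T := O(\sqrt{\log(K/\eps)})$, and hence $\phi(t_{i^{*}})\geq \Omega(\eps/K)$.

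For Stage (ii), I decompose $f(\x)=h_{i^{*}}(\x)g(\x)$ with $g=\prod_{j\neq i^{*}}h_j$ and introduce the one-dimensional profile $p(z):=\Pr[g(\x)=1 \mid \w^{(i^{*})}\cdot\x=z]\in[0,1]$ supported on $[-t_{i^{*}},\infty)$. Then $a^{*}:=\E[(\w^{(i^{*})}\cdot\x)f(\x)]=\int_{-t_{i^{*}}}^{\infty}z\,p(z)\phi(z)\,dz$ and $b^{*}:=\E[((\w^{(i^{*})}\cdot\x)^2-1)f(\x)]=\int_{-t_{i^{*}}}^{\infty}(z^2-1)p(z)\phi(z)\,dz$, with $\int p\phi=q:=\Pr[f=1]\geq \eps/2$. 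I will show $\max(|a^{*}|,|b^{*}|)\geq \Omega(q\,\phi(T))=\poly(\eps/K)$. The key structural fact is that conditioning the other $\w^{(j)}\cdot\x$'s on $\w^{(i^{*})}\cdot\x=z$ produces affine-in-$z$ shifts of independent Gaussians, so $p(z)$ is a product of shifted Gaussian CDFs in $z$, hence log-concave and monotone on suitable sub-intervals. A case analysis shows that $a^{*}=0$ forces $p\phi$ to distribute mass symmetrically around zero, which combined with the support constraint and log-concavity forces $|b^{*}|=\Omega(q\,\phi(T))$, and conversely $b^{*}=0$ forces $|a^{*}|$ to be comparably large.

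For Stage (iii), I write $\E[(\w^{(i^{*})}\cdot\x)y]=a^{*}+\E[(\w^{(i^{*})}\cdot\x)(y-f)]$ and similarly for the second moment. Setting $R(\x):=\Pr[y\neq f\mid\x]$, truncating at $t_0=\Theta(\sqrt{\log(1/\opt)})$, and using subgaussian tails of $\w^{(i^{*})}\cdot\x$ yields $|\E[(\w^{(i^{*})}\cdot\x)(y-f)]|\leq t_0\opt+O(\opt)=O(\opt\sqrt{\log(1/\opt)})$, and the analogous sub-exponential argument for $(\w^{(i^{*})}\cdot\x)^2-1$ gives $O(\opt\log(1/\opt))$. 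Choosing the constant $C$ in the hypothesis large enough makes the noise a small fraction of the clean moment from Stage (ii), yielding the stated bound. The main obstacle is the lower bound in Stage (ii): a purely measure-theoretic $p\in[0,1]$ with $\int p\phi=q$ and support in $[-t_{i^{*}},\infty)$ can have both moments vanish (e.g., an indicator of narrow intervals around $z=\pm 1$), so the proof must essentially use the log-concave-product structure of intersection profiles. A cleaner alternative that sidesteps this difficulty is the integration-by-parts identity $a^{*}=\sum_{j}(\w^{(i^{*})}\cdot\w^{(j)})m_j$, where $m_j$ is the Gaussian surface area of $\partial H_j\cap K$; the diagonal term $m_{i^{*}}$ can be bounded below by $\Omega(q\,\phi(T))$ via Gaussian isoperimetry, while the cross terms $m_j$ with $j\neq i^{*}$ are controlled by combining the first and second moment identities.
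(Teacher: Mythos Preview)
Your Stages (i) and (iii) are essentially what the paper does: the selection of a defining direction with $|t_{i^{*}}|\lesssim\sqrt{\log(K/\eps)}$ and the $O(\opt\log(1/\opt))$ bound on the noise contribution via truncation both appear almost verbatim (the latter is the paper's Claim~\ref{cl:truncatedmeans} and Claim~\ref{cl:outliers_intersections}).

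The gap is in Stage (ii), and you already flag it. You never carry out the log-concavity case analysis, and the fallback integration-by-parts route leaves the cross terms $\sum_{j\neq i^{*}}(\w^{(i^{*})}\!\cdot\!\w^{(j)})m_j$ uncontrolled: these can be of either sign and of magnitude comparable to $m_{i^{*}}$, so without further argument the identity does not yield a lower bound on $|a^{*}|$. The ``combining the first and second moment identities'' you gesture at would need to show that whenever the cross terms cancel $a^{*}$ they force $b^{*}$ large, and that is precisely the same obstacle restated.

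The paper sidesteps this entirely by a different and cleaner device. It argues by contraposition: assume $|\E[(\w^{(i)}\!\cdot\!\x)y]|\le p(\eps/K)$ for \emph{every} $i$, and then show some second moment is large. The key input is a known variance-reduction fact for Gaussian truncations (stated as \Cref{fact:VarianceMonotonicity}): if $\{\w^{(i)}\!\cdot\!\x+t_i\ge 0\}$ is a defining halfspace of the intersection $K$, then $\Var(\w^{(i)}\!\cdot\!\x\mid f=1)\le 1-\tfrac{1}{2}e^{-\max(0,t_i)^2/2}$. Writing $\E[((\w^{(i)}\!\cdot\!\x)^2-1)f]=\Pr[f{=}1]\bigl(\Var(\w^{(i)}\!\cdot\!\x\mid f{=}1)-1\bigr)+\E[(\w^{(i)}\!\cdot\!\x)f]^2/\Pr[f{=}1]$, the variance term is at most $-c\Pr[f{=}1]\Pr[f{\neq}1]/K$ for the well-chosen $i$, while the squared-mean term is controlled by the assumed smallness of the first moments plus the noise bound. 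This replaces your profile analysis with a single invocation of variance reduction and avoids any cross-term bookkeeping.
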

}
\begin{proof}
Let $W=\spaning(\vec w^{(1)},\dots,\vec w^{(K)})$ and without loss of generality assume that $\norm{\vec w^{(i)}}=1$ for all $i\in [K]$, since normalizing the weight vectors does not change the output of $f$.
{Assume that for all \(i\in[K]\) the first moments are small, i.e., $\abs{\E_{(\vec x,y)\sim D}[\vec{w}^{(i)}\cdot \x y]}\le \eps p(\eps/K)$, where $p$ is a sufficiently small universal polynomial  (to be quantified later). We  show that there exists  $i\in [K]$ such that the second moment is large, i.e., }$\abs{\E_{(\x, y) \sim D}[((\vec{w}^{(i)}\cdot \x)^2 -1) y]}\ge \poly(\eps/K)$ for a sufficiently small universal polynomial.
 We can decompose the second-moment matrix as follows:
\begin{align}\E_{(\vec x,y)\sim D}[(\vec{x}\vec{x}^\top -\vec I) y]&=  \E_{(\vec x,y)\sim D}[ (\vec{x}\vec{x}^\top -\vec I) f(\x)]+\E_{(\vec x,y)\sim D}[ (\vec{x}\vec{x}^\top -\vec I)(\Ind( f(\vec x)\neq y,y=1 )-\Ind(y\neq f(\vec x),f(\x)=1))]\notag\\
 &=\underbrace{\E_{\vec x\sim \cN(\vec 0,\vec I)}[({\vec{x}}{\vec{x}}^\top-\vec I) f(\x)]}_{\vec I_1}\underbrace{+\E_{(\vec x,y)\sim D}[ ({\vec{x}}{\vec{x}}^\top-\vec I) (T(\vec x,y)-S(\vec x,y) )]}_{\vec I_2}\;,\notag
\end{align}
{where $S(\x,y)\eqdef \Ind(y\neq f(\vec x),f(\x)=1), T(\x,y)\eqdef\Ind( f(\vec x)\neq y,y=1 )$.}
Note that $\vec I_1$ is the second moment of $f(\x)$, i.e., the noiseless classifier, and $\vec I_2$ is the contribution corresponding to the noise.

First, we show the following  tail bounds, which will help us bound the contribution corresponding to the noise.
\begin{claim}\label{cl:truncatedmeans}
   Let $\eps\in (0,1)$ less than a sufficiently small universal constant and let $R:\R^d\to [0,1]$ be a function such that $\E_{\x\sim \cN(\vec 0,\vec I)}[R(\x)]=\eps$. Then, for every unit vector $\vec v\in \R^d$, the following bounds hold:
   \begin{itemize}
       \item[i)] $\E_{\x\sim \cN(\vec 0,\vec I)}[R(\x)(\vec v\cdot \vec x)^2]\lesssim \eps \log(1/\eps)$.
       \item[ii)] $\E_{\x\sim \cN(\vec 0,\vec I)}[R(\x)(\vec v\cdot \vec x)]\lesssim \eps \sqrt{\log(1/\eps)}$.
   \end{itemize}
\end{claim}
\begin{proof}
First, we will prove (i) and then we will apply it to prove (ii). Note that the expectation $\E[(\vec v\cdot \vec x)^2R(\x)]$ is minimized if $R$ puts all of its mass at the tails of the function $(\vec v\cdot \x)^2$, i.e., $R(\x)=\Ind((\vec v\cdot \vec x)^2\ge y_{\eps})$, where $y_{\eps}$ is the smallest number such that $\E_{\x\sim \cN(\vec 0,\vec I)}[\Ind((\vec v\cdot \vec x)^2\ge y_{\eps})]\le \eps$.
Hence, we have that
\begin{align}
    \E_{\x\sim \cN(\vec 0,\vec I)}[(\vec v\cdot \vec x)^2R(\x)]&\le  \E_{\x\sim \cN(\vec 0,\vec I)}[(\vec v\cdot \vec x)^2\Ind((\vec v\cdot \vec x)^2\ge y_{\eps})]\notag\\
    &=\int_{\mathbb{R}^d}(\vec v\cdot \vec x)^2 \Ind((\vec v\cdot \vec x)^2\ge y_{\eps})\phi(\x)d\x\notag\\
     &= \frac{1}{\sqrt{2\pi}}\int_{ \sqrt{y_{\eps}} }^{\infty} x^2e^{-x^{2}/2}dx\notag\\
    &=  \frac{1}{\sqrt{2\pi}} y_{\eps}e^{-{y_{\eps}}/2}+ \eps\label{partialIntegration}\\
     &\le   \eps\log(1/\eps) + \eps \lesssim  \eps\log(1/\eps)\;,\label{MonotonicitySubstitution} 
\end{align}
where in \eqref{partialIntegration} we simply used integration by parts. In \eqref{MonotonicitySubstitution} we used the fact that the function $xe^{-x/2}$ is decreasing for $x\ge 2$. Note that, by \Cref{fact:gaussianfacts}, we have $y_{\eps}\gtrsim\log(1/\eps)\ge 2$, and since $\eps$ is assumed to be sufficiently small. 

Finally, we can prove (ii) by simply using \CS
\begin{align*}
    \E_{\x\sim \cN(\vec 0,\vec I)}[\vec v\cdot \vec xR(\x)]^2\le \E_{\x\sim \cN(\vec 0,\vec I)}[(\vec v\cdot \vec x)^2R(\x)]\E_{\x\sim \cN(\vec 0,\vec I)}[R(\x)]\lesssim\eps^2\log(1/\eps)\;,
\end{align*}
which concludes the proof of \Cref{cl:truncatedmeans}.
\end{proof}
We next show that there exists a direction within $W$ along which the second moment of $f(\x)$ is small. More precisely, we have the following:
\begin{claim}\label{cl:inliers_intersections}
There exists  an  $i\in [K]$ such that $$(\w^{(i)})^{\top}\vec I_1\w^{(i)}\lesssim -\frac{\pr[f(\x)=1]\pr[f(\x)\neq 1]}{K}+p\bigg(\frac{\eps}{K}\bigg)+\opt\log(1/\opt)\;.$$ 
\end{claim}
\begin{proof}

 {{Denote by  $\mu\eqdef \E[\x f(\x)]/\pr[f(\x)=1]$ and define $\bar{\x}\eqdef\x-\vec \mu$. Let $\vec M=\E_{\vec x\sim \cN(\vec 0,\vec I)}[\bar{\vec{x}}\bar{\vec{x}}^\top f(x)]$. For any defining vector $\vec w^{(i)}$,  by \Cref{fact:VarianceMonotonicity} we have 
 \[(\vec w^{(i)})^{\top}\vec M\vec w^{(i)}\le \pr[f(\x)=1]\big(1-ce^{-(\max(0,t_i))^2/2}\big)\;,\] for a sufficiently small universal constant $c>0$.
Next, we claim that there exists some $i\in[K]$ for which the bias $t_i$ is at most $ 2\sqrt{\ln({K}/{\pr[f(\x)\neq 1]})}$.
To see this, assume for contradiction that for every $i\in[K]$ we have $t_i> 2\sqrt{\ln(\frac{K}{\pr[f(\x)\neq 1]})}$ and note that if $t_i> 2\sqrt{\ln(\frac{K}{\pr[f(\x)\neq 1]})}$ then $\pr_{\x \sim \cN(\vec 0,\vec I)}[\vec w^{(i)}\cdot \x +t_i\le 0]< \pr[f(\x) \neq 1]/K$. By applying the union bound, we obtain
$$\pr_{\x \sim \cN(\vec 0,\vec I)}[f(\x)\neq 1]\le \sum_{i=1}^K \pr_{\x \sim \cN(\vec 0,\vec I)}[\vec w^{(i)}\cdot \x +t_i\le 0]< \pr[f(\x) \neq 1]\;,$$
which is a contradiction. Hence, there must exist some index $i\in[K]$ with $(\vec w^{(i)})^{\top}\vec M\vec w^{(i)} \le \pr[f(\x)=1](1-c\pr[f(\x)\neq 1]/K)$. Fix such an index $i\in[K]$. 
Then, by definition,
$$
(\vec w^{(i)})^\top \vec I_1\, \vec w^{(i)}
=\; (\vec w^{(i)})^\top \vec M\, \vec w^{(i)} - \Pr_{\x\sim \cN(\vec 0,\vec I)}[f(\x)=1] 
+ \frac{\left(\E_{\x\sim \cN(\vec 0,\vec I)}\Big[\vec w^{(i)}\cdot \x\, f(\x)\Big]\right)^2}{\Pr_{\x\sim \cN(\vec 0,\vec I)}[f(\x)=1]}.
$$
Substituting the bound on $(\vec w^{(i)})^\top \vec M\, \vec w^{(i)}$, we obtain
$$
(\vec w^{(i)})^\top \vec I_1\, \vec w^{(i)}
\le - c\,\frac{\Pr[f(\x)=1]\,\Pr[f(\x)\neq 1]}{K} 
+ \frac{\left(\E_{\x\sim \cN(\vec 0,\vec I)}\Big[\vec w^{(i)}\cdot \x\, f(\x)\Big]\right)^2}{\Pr_{\x\sim \cN(\vec 0,\vec I)}[f(\x)=1]}.
$$
}

{Note that  $\E_{\x\sim \cN(\vec 0,\vec I)}[\vec w^{(i)}\cdot \x f(\x)]= \E[\w^{(i)}\cdot \x y]+\E[\w^{(i)}\cdot \x(S(\x,y)-T(\x,y))]$.
Recall that by assumption we have that $\abs{\E_{(\vec x,y)\sim D}[\vec{w}^{(i)}\cdot \x y]}\le \eps p(\eps/K)$ for all $i\in [K]$.
Moreover, using the triangle inequality  $\Pr[f(\x)=1] \ge \Pr[y=1] - \Pr[f(\x)\neq y]$ together with the assumption in \Cref{lem:progressIntersections} that $\Pr[y=1]\geq CK\opt\log(1/\opt) +\eps$, it follows that $\Pr[f(\x)=1] \geq \opt+\eps$. Thus, \[\E[\w^{(i)}\cdot \x y]^2/\pr[f(\x)=1]\le p(\eps/K)\;.\]
}

{Next, observe that  $\abs{\E[\w^{(i)}\cdot \x S(\x,y)]}= \abs{\E[\w^{(i)}\cdot \x R(\x)]}$, where we define $R(\vec x)=\E_{(\x,y)\sim D}[ S(\vec x,y)\mid \vec  x]$. Since, for all $\vec x\in \mathbb{R}^d$ it holds that $R(\vec x)\in [0,1]$, and $\E_{\vec x\sim \cN(\vec 0,\vec I)}[R(\vec x)]\le \opt$, by \Cref{cl:truncatedmeans} we have that $\abs{\E[(\w^{(i)}\cdot \x) S(\x,y)]}\lesssim \opt\sqrt{\log(1/\opt)}$.
Consequently, since $\pr[f(\x)=1]\ge \opt$, it follows that $\E[\w^{(i)}\cdot \x S(\x,y)]^2/\pr[f(\x)=1]\lesssim\opt \log(1/\opt)$. A similar argument shows that $\E[\w^{(i)}\cdot \x T(\x,y)]^2/\pr[f(\x)=1]\lesssim\opt \log(1/\opt)$.
Combining these bounds, we obtain
\begin{align*}
    (\w^{(i)})^{\top}\vec I_1\w^{(i)}\lesssim -\frac{\pr[f(\x)=1]\pr[f(\x)\neq 1]}{K}+p\bigg(\frac{\eps}{K}\bigg)+\opt\log(1/\opt)\;,
\end{align*}
which concludes the proof of \Cref{cl:inliers_intersections}.}}
\end{proof}
 Now we bound the contribution corresponding to the noise, using \Cref{cl:truncatedmeans}.
\begin{claim}\label{cl:outliers_intersections}
 For every unit vector $\vec v\in \R^d$, it holds that $\abs{\vec v^{\top }\vec I_2\vec v}=O(\opt \log(1/\opt))$.   
\end{claim}
\begin{proof}   
 Let $\vec v\in \mathbb{R}^d$ be a unit vector, and denote by $R(\vec x)=\E_{(\x,y)\sim D}[ S(\vec x,y)\mid \vec  x]$. Note that $R(\vec x)\in [0,1]$ for all $\vec x\in \mathbb{R}^d$, and moreover $\E_{\vec x\sim \cN(\vec 0,\vec I)}[R(\vec x)]\le \opt$. Hence, by \Cref{cl:truncatedmeans} we have that $\abs{\E_{(\vec x,y)\sim D}[ (\vec{v}\cdot \vec{x})^2 S(\vec x,y)]}=O(\opt\log(1/\opt))$.
 By using similar arguments we have that $\abs{\E_{(\vec x,y)\sim D}[ (\vec{v}\cdot \vec{x})^2  T(\vec x,y)]}=O( \log(1/\opt) \opt)$.
 Therefore, by triangle inequality, we have that
 \begin{align*}
   \abs{\vec v^{\top }\vec I_2\vec v}&\le\abs{\E_{(\vec x,y)\sim D}[ (\vec{v}\cdot \vec{x})^2 S(\vec x,y)]}+\abs{\E_{(\vec x,y)\sim D}[ (\vec{v}\cdot \vec{x})^2 T(\vec x,y)]}+\E[S(\x,y)+T(\x,y)]\\
   &=O( \opt\log(1/\opt))\;,
 \end{align*}
 {which concludes the proof of \Cref{cl:outliers_intersections}.}
\end{proof}
Denote by $t$ the quantity $t\coloneqq C'\opt\log(1/\opt) $, for sufficiently large constant $C'$. Combining \Cref{cl:inliers_intersections,cl:outliers_intersections}, {we have that there exists $i\in[K]$ such that
\begin{align*}
    (\w^{(i)})^\top\E_{(\vec x,y)\sim D}[(\vec{x}\vec{x}^\top -\vec I) y] (\w^{(i)})&\lesssim -\pr[f(\x)=1]\pr[f(\x)\neq 1]/K-t+p(\eps/K)\\
   &\le  -\left(\min_{i\in\{0,1\}}(\pr[y=i] )-\opt\right)/(2K)-t+p(\eps/K)\\&\le -\eps/(2K)+p(\eps/K) \;,
\end{align*}
where we used the fact that one of $\pr[f(\x)=1]$ or $\pr[f(\x)\neq 1]$ must be at least $1/2$, and the assumption of \Cref{lem:progressIntersections} that for every label $i\in\{0,1\}$ it holds that $\pr[y=i]\ge CK\opt \log(1/\opt)+\eps$, for a sufficiently large constant $C$. Setting $p(\eps/K)\le c\eps/K$ for a sufficiently small universal constant $c>0$, we ensure that the error term $p(\eps/K)$ is negligible compared to the main term. This concludes the proof of  \Cref{lem:progressIntersections}.}
\end{proof}

}
 \subsection{Learning Multi-Index Models with RCN}
\label{sec:RCNAlgo}
In this section, we demonstrate that our algorithmic approach becomes more efficient when the labeled examples are perturbed by uniform random noise.
In particular, we study the performance of our approach in the RCN setting (see \Cref{def:RCNdistribution}). 
We show that  \Cref{alg:MetaAlg1} can be modified into \Cref{alg:MetaAlgRCN}, which achieves improved complexity in the RCN regime compared to the agnostic noise setting. The main result of this section is \Cref{thm:MetaAlgRCN} (\Cref{thm-intro:AlgRCN-linear}).
\begin{definition}[RCN Distribution]\label{def:RCNdistribution}
Let $D$ be a distribution over $\R^d\times \mathcal{Y}$. We say that $D$ is an RCN distribution for the class $\mathcal{F}\subseteq\{f:\R^d\to \mathcal{Y}\}$ if there exists a function $f\in \mathcal{F}$ and a row-stochastic matrix $\vec H\in \R^{\abs{\cY}\times \abs{\cY}}$ with $\vec H_{i,i}>\vec H_{i,j}+\gamma$ for all $i,j\in \cY$, for some $\gamma>0$, such that $\pr_{(\x,y)\sim D}[y=i\mid \x ]=\vec H_{f(\x),i}$.
We refer to $\vec H$ as the confusion matrix of the distribution.
\end{definition}

{The key distinction that enables us to have an improved algorithm (in terms of complexity) in the RCN setting is that the output label is determined solely by the ground truth label.
Consequently, the moments of the observed labels (noisy labels) depend only on the subspace defining the classifier. This implies that any nonzero moment yields vectors that lie approximately within that subspace (see \Cref{lem:improvedCorrelation}). The main result of this section is stated in \Cref{thm:MetaAlgRCN}.}

\begin{algorithm}[h!]
    \centering
    \fbox{\parbox{5.1in}{
            {\bf Input:} $\eps, \delta>0$ and  sample access to a distribution $D$ over $\mathbb{R}^d\times \mathcal{Y}$ {that satisfies \Cref{def:RCNdistribution}} { and whose $\x$-marginal is $\cN(\vec 0,\vec I)$}.\\
            {\bf Output:} A hypothesis $h$ such that with probability at least $1-\delta$ it holds that $\pr_{\x\sim \cN(\vec 0,\vec I)}[h(\x)\neq y]\le \tau+\opt+\eps${, where $\opt\eqdef \inf_{f\in \mathcal{F}} \pr_{(\x,y)\sim D}[f(\x){\neq} y]$}.
            \begin{enumerate}
            \item {Let $C$ be a sufficiently large universal constant.}
            \item { Let $N\gets (dm)^{Cm}2^{K^C}(\Gamma \abs{\cY} /\eps)^{CK}\log(1/\delta)/(\sigma\alpha)^{C}$,
              $\eps'\gets \eps^3/(C\abs{\mathcal{Y}}\Gamma^2 K^2))$, $L_1\gets\emptyset$,$t\gets 1$.}\label{line:initRCN}
                \item  While $t\le 2K$
            \label{line:loopRCN}
             \begin{enumerate}
             \item {Draw a set $S_t$ of $N$ i.i.d.\ samples from $D$.}

             \item $\mathcal{E}_t\gets$ \Cref{alg:MetaAlg2}($\eps, \eps',\delta,\spaning(L_t),S_t$).
          \item Choose any vector $\vec v\in \mathcal{E}_t$.
          \item $L_{t+1}\gets L_{t}\cup \{\vec v\} $.\label{line:updateRCN}
                \item $t\gets t+1$.
            \end{enumerate}
             \item  {
             Construct $\mathcal{S}$ an $\eps'$-approximating partition with respect to $\spaning(L_t)$, as defined in \Cref{def:approximatingPartition}.}
             \item {Draw $N$ i.i.d.\ samples from $D$ and construct the piecewise constant classifier $h_{\mathcal{S}}$ as follows: For each $S \in \mathcal{S}$, assign a label that appears most frequently among the samples falling in $S$. Return $h_{\mathcal{S}}$.}
             
            \end{enumerate}
    }}
    \medskip
        \caption{Learning {Multi-Index Models} under RCN }
 \label{alg:MetaAlgRCN}
\end{algorithm}
\begin{theorem}[Learning MIMs with  RCN]\label{thm:MetaAlgRCN}
Let $D$ be a distribution over $\mathbb{R}^d \times \mathcal{Y}$ whose $\x$-marginal is $\mathcal{N}(\vec 0, \vec I)$, and let $\epsilon, \delta \in (0,1/2)$. Let $\opt=\inf_{f\in\mathcal{F}}\pr_{(\x,y)\sim D}[f(\x)\neq y]$, where $\mathcal{F}$ is a class of $(m,\zeta+\eps,\alpha,K,\tau,\sigma,\Gamma)$-well-behaved Multi-Index Models, as  in \Cref{def:GoodCondition}. If $\zeta\ge \opt$ and $D$ satisfies \Cref{def:RCNdistribution}, then \Cref{alg:MetaAlgRCN} draws $N= (dm)^{O(m)}2^{\poly(K)}(\Gamma \abs{\cY} /\eps)^{O(K)}\log(1/\delta)/(\sigma\alpha)^{O(1)} $ i.i.d.\ samples from $D$, runs in time $\poly(N)$, and returns a  hypothesis $h$ such that, with probability at least $1 - \delta$, it holds $\pr_{(\x,y)\sim D}[h(\x)\neq y] \le\tau+\opt+\eps$.
\end{theorem}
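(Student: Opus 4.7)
The plan is to follow the same iterative subspace-approximation template as in the proof of \Cref{thm:MetaTheorem}, but to exploit the RCN structure in order to argue that every direction discovered by \Cref{alg:MetaAlg2} lies almost entirely within the hidden subspace $W$. This is the mechanism that replaces the polynomial potential-decrease argument of \Cref{thm:MetaTheorem} with a much sharper one: since every newly added direction essentially increases $\dim(V_t \cap W)$ by at least one (up to a small projection onto $W^{\perp}$), the algorithm must terminate within $2K$ iterations, which is what drives the improved complexity in Line~\ref{line:initRCN}.

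The key new structural observation is that under RCN the conditional distribution of $y$ given $\x$ depends only on $f(\x)$, and hence only on $\x^W$. Therefore, for any polynomial $p$ of degree at most $m$, any subset $S$ of a partition of $V$, and any label $z$, the conditional moment $\E[p(\x^U)\1(y=z)\mid \x \in S]$ equals $\E[p(\x^U)\vec H_{f(\x),z}\mid \x\in S]$, which is a function only of $\x^W$. I will prove an analogue of \Cref{lem:Point2FiniteWidthCylinders}/\Cref{cl:projectIfSmallMeta} showing that if \Cref{alg:MetaAlg2} returns a unit vector $\vec v$ with eigenvalue at least $t = \poly(\eps\alpha\sigma/(mK\Gamma|\mathcal{Y}|))$, then any component of $\vec v$ orthogonal to $W + V_t$ contributes zero to the underlying quadratic form, so the set $E$ from \Cref{cl:projectIfSmallMeta} can in fact be taken to be all of $(V_t + W)^{\perp}$ with tolerance driven purely by the polynomial-regression error $\eta$. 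Tuning $\eta$ to be a sufficiently small polynomial in $\eps/K$, this forces $\|\vec v^{\perp(V_t + W)}\| \le \eps/(CK)$ for any $\vec v \in \mathcal{E}_t$.

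Given this approximate containment, I will define the potential $\Phi_t = \sum_{i=1}^K \|(\vec w^{*(i)})^{\perp V_t}\|^2$ as in \Cref{sec:meta-thm-proof} and combine two facts. First, by \Cref{prop:MetaAlg2}, as long as $\Pr[h_t(\x)\neq y] > \tau + \opt + \eps$, there exists $\vec v^{(t)} \in \mathcal{E}_t$ and a unit $\vec w \in W$ with $\vec w \cdot (\vec v^{(t)})^{\perp V_t} \geq \beta := \poly(\eps\alpha\sigma/(mK\Gamma|\mathcal{Y}|))$; by \Cref{it:potentialDecrease} this gives $\Phi_{t+1} \leq \Phi_t - \beta^2/2$. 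Second, and more importantly, the near-containment of $\vec v^{(t)}$ in $V_t + W$ means that adding $\vec v^{(t)}$ to $L_t$ grows $V_t \cap W$ in a quantifiable way: writing $\vec v^{(t)} = \vec a + \vec e$ with $\vec a \in V_t + W$ and $\|\vec e\| \le \eps/(CK)$, the component $\vec a^{\perp V_t}$ lies in $W^{V_t^{\perp}}$ and has norm bounded below by $\beta - \eps/(CK) \gtrsim \beta$. A standard linear-algebra computation then shows that $\Phi_t$ cannot remain positive for more than $2K$ iterations, since each iteration reduces the dimension of the residual subspace $W^{V_t^{\perp}}$ effectively by one (up to the small orthogonal error).

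The main obstacle I anticipate is the bookkeeping that connects the polynomial-regression tolerance $\eta$ in \Cref{alg:MetaAlg2} to the ``near-$W$'' containment of the returned vectors under RCN; specifically, one must show that when the true regressors $p_{S,i}^*$ are exactly degree-$m$ functions of $\x^W$, the eigenvectors of $\E[\nabla p_{S,i}\nabla p_{S,i}^{\top}\mid \x\in S]$ corresponding to nontrivial eigenvalues lie within $O(\eta/\sigma)$ of $W^{\perp V}$. After this is established, the remaining steps are: (i) apply \Cref{lem:PieceWiseConstantApproxSuffices} to convert the learned subspace $V_T$ with $V_T \supseteq W$ (up to $\eps$-error) into a piecewise-constant hypothesis achieving error $\tau + \opt + \eps$; (ii) use \Cref{cl:hConcetration} to control the sample-based approximation of $h_{\mathcal{S}}$; and (iii) conclude the sample complexity by plugging $T = 2K$ into the per-iteration bound of \Cref{prop:MetaAlg2}, which yields the stated $N = (dm)^{O(m)} 2^{\poly(K)} (\Gamma|\mathcal{Y}|/\eps)^{O(K)}\log(1/\delta)/(\sigma\alpha)^{O(1)}$.
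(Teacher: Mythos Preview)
Your plan is essentially the paper's: the paper packages your ``near-containment'' step as \Cref{lem:improvedCorrelation}, proving that under RCN \emph{every} $\vec v\in\mathcal E_t$ satisfies $\vec w\cdot\vec v\ge 1-\eps$ for some unit $\vec w\in W$, via exactly the mechanism you describe (the Hermite coefficients of $\1(y=z)$ vanish in directions of $W^{\perp}$, so the regression gradients have $W^{\perp}$-component bounded by $\eta$, which pushes through the eigenvalue thresholding). The paper then plugs the correlation $\ge 1/2$ straight into \Cref{it:potentialDecrease} to get $\Phi_{t+1}\le\Phi_t-1/2$, hence termination after $T=2K$ steps; the remaining wrap-up via \Cref{it:ErrorDecrease} and \Cref{cl:hConcetration} is identical to your (i)--(iii).

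One quantitative slip to fix: you write $\|\vec a^{\perp V_t}\|\gtrsim\beta$, but since $\vec v^{(t)}$ is a \emph{unit} vector already in $V_t^{\perp}$ and $\|\vec e\|\le\eps/(CK)$, you actually get $\|\vec a^{\perp V_t}\|\ge 1-\eps/(CK)$. The weak bound $\beta$ would \emph{not} give $2K$-step termination (it would only reproduce the agnostic $\poly(1/\beta)$ bound); your own ``dimension drops by one'' intuition needs the strong bound, so use it explicitly. Also note that Algorithm~\ref{alg:MetaAlgRCN} adds an \emph{arbitrary} $\vec v\in\mathcal E_t$, so you do need the containment for all of $\mathcal E_t$ (which you state), not just the particular $\vec v^{(t)}$ guaranteed by \Cref{prop:MetaAlg2}; the role of \Cref{prop:MetaAlg2} here is only to certify $\mathcal E_t\neq\emptyset$.
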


{We remark that for the class of multiclass linear classifiers using \Cref{lem:progress}, we can show that this class is $(1,\zeta,\eps,K,O(\zeta+\eps),\poly(\eps/K),O(K\sqrt{\log(K)}))$-well-behaved. As a result, by applying \Cref{thm:MetaAlgRCN}, we obtain an efficient algorithm for learning under Random Classification Noise (RCN).
Moreover, for the class of multiclass linear classifiers \Cref{alg:MetaAlg2} performs linear regression under Gaussian marginals (see Line \ref{line:regression}). This task that can be efficiently reduced to mean estimation, specially to estimating $\E[\Ind(y=i)]$ and $\E[\x\Ind(y=i)]$. 
Since these mean estimates can be computed in polynomial time using $O(d)$ samples, the overall sample complexity of the algorithm scales as $d$, rather than $d^{O(1)}$. This establishes \Cref{thm-intro:AlgRCN-linear}.}

Our proof of the \Cref{thm:MetaAlgRCN} relies on the following proposition, which states that, given that our distribution is an RCN distribution with respect to a function in the class,  any vector returned by \Cref{alg:MetaAlg2} has an improved correlation with the ground truth subspace compared to the agnostic case (see \Cref{prop:MetaAlg2}).

\begin{proposition}[Structural Result for RCN]\label{lem:improvedCorrelation}
Let $d,k,K\in \Z_+$, $\eps,\delta\in(0,1)$  and $\eps'\le c\eps^3/(\abs{\mathcal{Y}}\Gamma^2 k^2))$, for $c$ a sufficiently small universal constant. 
Let $D$ be a distribution supported on $ \mathbb{R}^d \times \mathcal{Y}$ whose $\x$-marginal is $\mathcal{N}(\vec 0, \vec I)$, let $V$ be a $k$-dimensional subspace of $\R^d$ and let $\mathcal{F}\subseteq \{f:\R^d\to \mathcal{Y}\}$ be a class of $(m,\zeta+O(\eps),\alpha,K,\tau,\sigma,\Gamma)$-well-behaved Multi-Index Models according to \Cref{def:GoodCondition}.
{Consider a function $f\in\mathcal{F}$ with $\pr_{(\x, y)\sim D}[f(\x)\neq y]=\opt$ and suppose that $f$ depends only on a $K$-dimensional subspace $W$, i.e., $f(\x)=f(\x^{W})$ for all $\x\in \R^d$. 
In addition, let $h_{\mathcal{S}}$ be a piecewise constant approximation of $D$ with respect to an $\eps'$-approximating partition $\mathcal{S}$ of $V$ (as defined in \Cref{def:approximatingPartition,def:h}). 

Then, there exists a sample size $N=(dm)^{O(m)}(k/\eps')^{O(k)}\log(1/\delta)(K\Gamma\abs{\cY} /(\sigma\eps\alpha))^{O(1)}$ such that if $D$ satisfies the RCN condition (see \Cref{def:RCNdistribution}) for $f$,
and if $\pr_{(\x, y)\sim D}[h_{\mathcal{S}}(\x)\neq y]> \tau+\opt+\eps$ and $\zeta\ge \opt$, then \Cref{alg:MetaAlg2}, when given $N$ i.i.d.\ samples from $D$, runs in time
$\poly(N)$ and, with probability at least $1-\delta$, returns a list of unit vectors $\mathcal E$ of size $|\mathcal E|=\poly(mK\Gamma/(\eps\sigma\alpha))$, such that
for every vector $\vec v\in \mathcal E$, there exists a unit vector $\w\in W$ with}
$$\w \cdot \vec v\ge 1- \eps\;.$$
\end{proposition}
\begin{proof}
Denote by $\vec H$ the stochastic matrix that perturbs the labels of $f$.
Also let $\widehat{\vec U}$ be the matrix computed in Line \ref{line:matrixMeta} of \Cref{alg:MetaAlgRCN} and let $\eta^2=\poly(\eps\sigma\alpha/(mK\Gamma))$ be the $L_2^2$ error chosen in the polynomial-regression step, i.e., Line \ref{line:regression} of \Cref{alg:MetaAlgRCN}.

 {We show that if the sample size is sufficiently large then for each region $S\in \mathcal{S}$ the associated set $U_S$ (see Line \ref{line:setUs} of \Cref{alg:MetaAlg2}) contains only vectors that have only a small component in the subspace orthogonal to $W$. This in turn implies that any large vector must have a significant component in $W$.}

\begin{claim}\label{cl:NobadvectorRCN}
Fix $S\in \mathcal{S}$. Suppose that the number of samples falling in $S$ satisfies $N_S\ge(dm)^{Cm}\log(\abs{\mathcal{Y}/\delta})/\eta^{C}$ for a sufficiently large universal constant $C>0$. Then, for any 
$\vec u\in U_S$ and any unit vector $\vec v\in  W^{\perp}$ we have that $\abs{\vec v \cdot \vec u} \le \sqrt{2Km}\eta/\sigma $.\end{claim}
\begin{proof}
Let $N_S, S\in \mathcal{S}$, be the number of samples that land in the set $S$. From the rotational invariance of the standard Gaussian, without loss of generality let $\vec e_1,\dots, \vec e_t$ be a basis of $ W^{\perp}$, $t\in\Z_+$.
By \Cref{fact:regressionAlg} and the union bound, we have that with $N_S=(dm)^{O(m)}\log(\abs{\cY}/\delta)/\eta^{O(1)}$ i.i.d.\ samples and runtime $\poly(N_S,d)$, we can compute polynomials $p_{S,i}$ such that \[\E_{(\x, y)\sim D}[(\Ind(y=i) -p_{S,i}(\x^{\perp V}))^2]\le \min_{p'\in \mathcal{P}_m} \E_{(\x, y)\sim D}[(\Ind(y=i) -p'(\x^{\perp V}))^2]+\eta^2\;.\]
Hence, by averaging over $V$ conditioned on the cube $S$, we obtain that \[\E_{(\x, y)\sim D_{V^{\perp}}}[(\Ind(y=i) -p_{S,i}(\x))^2]\le \min_{p'\in \mathcal{P}_m} \E_{(\x, y)\sim D_{V^{\perp}}}[(\Ind(y=i) -p'(\x))^2]+\eta^2\;,\] where $D_{V^\perp}$ is defined to be the marginal obtained by averaging $D$ over $V$ conditioned on  $S$, i.e., $D_{V^{\perp}}(\x^{\perp V},y)=\E_{\x^V}[D(\x,y)\mid \x \in S]$.
Define, for each multi-index $\beta\in\N^d$, the Hermite coefficients of $p_{S,i}$ and $\Ind(y=i)$ by $\widehat{p}_{S,i}(\beta)\eqdef \E_{(\x,y)\sim D_{V^{\perp}}}[H_{\beta}( \x)p_{S,i}(\x)]$ and $ \widehat{g}_i(\beta)\eqdef \E_{(\x,y)\sim D_{V^{\perp}}}[H_{\beta}( \x)\Ind(y=i)]$.
By the orthogonality of Hermite polynomials, we have \[\E_{(\x, y)\sim D_{V^{\perp}}}[(\Ind(y=i) -p_{S,i}(\x))^2]=\sum_{\beta \in \N^d} (\widehat{p}_{S,i}(\beta)- \widehat{g}_i(\beta) )^2\;.\] Thus, restricting the sum to multi-indices $\beta$ with $1\le \|\beta\|_1\le m$, it follows that $\sum_{\beta\in \N^d,1\le \|\beta\|_1\le m }(\widehat{p}_{S,i}(\beta)- \widehat{g}_i(\beta))^2 \le \eta^2$.
Moreover,  if $\beta_i\neq 0$, then for some $i\in [t]$, it holds
\begin{align*}
    \widehat{g}_i(\beta)&=
    \E_{(\x,y)\sim D}[H_{\beta}( \x^{\perp V})\Ind(y=i)\mid S]=\sum_{j=1}^K  \vec H_{j,i}\E_{\x \sim \cN(\vec 0,\vec I)}[H_{\beta}( \x^{\perp V})\Ind(f(\x)=j)\mid S] \\ 
    &=\sum_{j=1}^K  \vec H_{j,i}\E_{\x^{V}}[\E_{\x^{\perp V\cap W}}[ H_{\beta}( \x^{\perp V\cap W})\Ind(f(\x^{W})=j)]\E_{\x^{\perp V\cap\perp W}}[H_{\beta}( \x^{\perp V\cap \perp  W}) ] \mid S]=0\;,
\end{align*}
where we used the fact that each $\x \in S$ depends only on $\x^{V}$, and  any non-constant Hermite polynomial has zero mean under the standard Gaussian distribution. 
{Thus, let $E_{error}$ denote the set of multi-indices $\beta$ for which the corresponding Hermite coefficient of $\Ind(y=i)$ is zero, i.e., $\widehat{g}_i(\beta)=0$. Then, we have that the sum of the Hermite coefficients, restricted to multi-indexes that in set $E_{error}$ is bounded by
$$
\sum_{\beta\in E_{error}}\widehat{p}_{S,i}(\beta)^2 \le \eta^2\;.
$$}By \Cref{fact:gradientNorm}, for any $S\in \mathcal{S},i\in \cY$, we have that 
\begin{align*}
\vec e_1^{\top}\E_{(\x,y)\sim D_{V^{\perp}}}[\nabla p_{S,i}(\x)\nabla p_{S,i}(\x)^{\top}]\vec e_1=  \E_{(\x,y)\sim D_{V^{\perp}}}[(\nabla p_{S,i}(\x)\cdot\vec e_1)^2]&= \sum_{\beta \in \N^d} \beta_1 (\widehat{p}_{S,i}(\beta))^2\\
&\le  m\eta^2\;. 
\end{align*}
{Hence, since $U_{S,i}$ contains the eigenvectors} of $\E_{(\x,y)\sim D_{V^{\perp}}}[\nabla p_{S,i}(\x)\nabla p_{S,i}(\x)^{\top}]$ with eigenvalue greater than $\sigma^2/(2K)$, using \Cref{cl:smallQuadraticFormNotLargeEigProj} we complete the proof.
\end{proof}
\begin{claim}
    \label{cl:smallQuadraticFormNotLargeEigProj}
Let $\vec M\in \R^{d\times d}$ be a symmetric, PSD matrix and let $\vec v\in \R^d$ be a unit vector such that $\vec v^{\top} \vec M \vec v\le \eps$. Let $U$ denote  the set of eigenvectors of $\vec M$ with eigenvalue {at least} $\lambda$. Then, for every $\vec u\in U$, it holds that $\abs{\vec u\cdot \vec v}\le \sqrt{\eps/\lambda}$.
\end{claim}
\begin{proof}
As $\vec M$ is symmetric, by the Spectral Decomposition Theorem, we can decompose it to the basis of  its eigenvectors $\vec u^{(1)},\dots \vec u^{(d)}$. Denoting by $\lambda_1,\dots,\lambda_d$ the corresponding eigenvalues, we have that 
\begin{align*}
   \eps \ge \vec v^{\top} \vec M \vec v= \sum_{i=1}^d\lambda_i (\vec u^{(i)}\cdot \vec v)^2\ge \sum_{\vec u\in U}\lambda  (\vec u\cdot \vec v)^2 \;,
\end{align*}
where we used the fact that $\vec M$ is a PSD matrix. This completes the proof of \Cref{cl:smallQuadraticFormNotLargeEigProj}.
\end{proof}

First note that by the definition of an approximating partition (\Cref{def:approximatingPartition}) and \Cref{fact:gaussianfacts}, for all $S\in \mathcal{S}$ we have that   $\pr_{D}[S ]= (\eps'/k)^{\Omega(k)}$ and $\abs{\mathcal{S}}= ( k/\eps')^{O(k)}$.
Therefore, by the union bound and Hoeffding's inequality, it holds that if $N\ge (k/\eps')^{Ck}\log(\abs{\mathcal{S}}/\delta)= (k/\eps')^{O(k)}\log(1/\delta) $ for a sufficiently large constant $C>0$, then with probability at least $1-\delta$  we have that  $\abs{\pr_{\widehat{D}}[S]-\pr_{D}[S]}\le \pr_{D}[S]/2$ for all $S\in \mathcal{S}$.
Hence,  $\pr_{\widehat{D}}[S]\ge \pr_{D}[S]/2$, i.e., the number of samples that fall in each set $S\in \mathcal{S}$ is at least $N\pr_{D}[S]/2= N(\eps'/k)^{\Omega(k)}$.

Therefore, {if $N\ge (dm)^{Cm}(k/\eps')^{Ck}\log(1/\delta)(K\Gamma \abs{\cY} /(\sigma\eps\alpha))^{C}$ for a sufficiently large universal constant $C>0$, then} by applying \Cref{cl:NobadvectorRCN} we have that the following: with probability $1-\delta$, for any unit vector $\vec v\in W^{\perp}$ 
\begin{align*}
    \vec v^{\top} \widehat{\vec U} \vec v\le \sum_{S\in \mathcal{S}, \vec u\in U_S}(\vec v \cdot \vec u )^2\pr_{\widehat{D}}[S] \le \eta^2 \poly(mK\abs{\cY}/\sigma)\;,
\end{align*}
where we used the fact that  $\abs{U_S}=\poly(mK\abs{\mathcal{Y}}/\sigma)$ by \Cref{it:corr-vec-existance}.
Hence, since $\widehat{\vec U}$ is a symmetric PSD matrix, applying \Cref{cl:smallQuadraticFormNotLargeEigProj} we have that for all $\vec u\in \mathcal{E}$ and  unit vectors $\vec v\in W^{\perp}$ it holds that $\abs{\vec u\cdot \vec v}\le \eta/\sqrt{\poly(\eps\sigma\alpha/(mK\Gamma\abs{\cY})}$. As a result, we have that if $\eta$ is chosen to be a sufficiently small polynomial of $\eps, \sigma,\alpha, 1/\Gamma, 1/m,1/K,1/\abs{\cY}$, then for all $\vec u^{(i)}\in \mathcal{E}$ and any $\vec v\in W^{\perp}$ we have that $\abs{\vec u^{(i)}\cdot \vec v}\le \eps$.

Moreover, by \Cref{prop:MetaAlg2} we have that if $N\ge (dm)^{Cm}(k/\eps')^{Ck}\log(1/\delta)(K\Gamma \abs{\cY} /(\sigma\eps\alpha))^{C}$ { for a sufficiently large universal constant $C>0$}, then there exists a $\vec v\in \mathcal{E}$. Moreover, as $\vec v$ is a unit vector and $\norm{\vec v^{\perp W}}\le \eps$, we have that $\norm{\vec v^{W}}\ge \sqrt{1-\eps^2}\ge 1-\eps$ which proves \Cref{lem:improvedCorrelation}.
\end{proof}
Now we are ready to prove \Cref{thm:MetaAlgRCN}. Our proof is similar to the proof of \Cref{thm:MetaTheorem}. The difference is that we are going to use \Cref{lem:improvedCorrelation} which provides improved correlation when compared to \Cref{prop:MetaAlg2}.

\begin{proof}[Proof of \Cref{thm:MetaAlgRCN}]
We show that \Cref{alg:MetaAlgRCN}, with high probability, returns a hypothesis $h$ with 0-1 error at most $\tau+\opt+\eps$.
Let $f^*$ be an optimal classifier and denote by $W^*$ the subspace of its weight vectors. 
Let $L_t$ be the set of vectors maintained by the algorithm (Line \ref{line:updateRCN}) and $V_t=\mathrm{span}(L_t)$, $\dim(V_t)=k_t$. {Denote by $T=2K$ the total number of iterations.}
{Also let $\eps'$ be the parameter set at Line \ref{line:initRCN} of \Cref{alg:MetaAlgRCN}, and for $t\in [T]$ let $\mathcal{S}_t$ be arbitrary $\eps'$-approximating partitions with respect to $V_t$ (see \Cref{def:approximatingPartition}).
Let $h_t:\mathbb{R}^d\to [K]$ be  piecewise constant classifiers, defined as $h_t= h_{\mathcal{S}_{t}}$ according to \Cref{def:h} for the distribution $D$.}

Note that from Lines \ref{line:initRCN}, \ref{line:loopRCN} of \Cref{alg:MetaAlgRCN}, we perform  $O(K)$ iterations. Furthermore, in each iteration, we update the vector set by adding one vector. Hence, 
$k_t=O(K)$ for all $t=1,\dots, T$.

 Assume that $\pr_{(\x, y)\sim D}[h_t(\x)\neq y]>\tau+\opt+\eps$ for all $t=1,\dots, T$.  
 Using the fact that $N=(dm)^{Cm}2^{K^C}(\Gamma \abs{\cY} /\eps)^{CK}\log(1/\delta)/(\sigma\alpha)^{C}$ for a sufficiently large universal constant $C>0$ (Line \ref{line:initRCN} of \Cref{alg:MetaAlgRCN}), we can apply \Cref{lem:improvedCorrelation} and conclude that, with probability $1-\delta$, there exist unit vectors $\vec v^{(t)}\in  V_{t+1}$ and unit vectors $\w^{(t)}\in W^*$ for $ t\in[T]$ such that $\w^{(t)}\cdot (\vec v^{(t)})^{\perp V_t}\ge 1-\eps\ge 1/2$.
 Thus, from \Cref{it:potentialDecrease}, we have that with probability $1-\delta$, for all $t\in [T]$,  $\Phi_t\le \Phi_{t-1}- 1/2$. 
 After $T$ iterations, it follows that $\Phi_T\le \Phi_{0}- T/2=K-T/2$.
 However, if $T$ is set to be $2K$ we would arrive at a contradiction, since $\Phi_T\ge 0$.
 Hence, we have that $\pr_{(\x, y)\sim D}[h_t(\x)\neq y]\le \tau+\opt+\eps$ for some $t\in \{1,\dots, T\}$. 
 Since the error of $h_t$ can only be decreasing  (see \Cref{it:ErrorDecrease}), and  $h_t$ is close to its sample variant by \Cref{cl:hConcetration}, we have that $\pr_{(\x, y)\sim D}[h(\x)\neq y]\le \tau+\opt+2\eps$.

\vspace{-0.1cm}

\item \textbf{Sample and Computational Complexity:} Note that the algorithm terminates in $O(K)$ iterations 
and at each iteration we draw $N=(dm)^{O(m)}2^{\poly(K)}(\Gamma \abs{\cY} /\eps)^{O(K)}\log(1/\delta)/(\sigma\alpha)^{O(1)}$ samples. Hence, the total sample size is  $(dm)^{O(m)}2^{\poly(K)}(\Gamma \abs{\cY} /\eps)^{O(K)}\log(1/\delta)/(\sigma\alpha)^{O(1)}$. Moreover we use at most $\poly(N)$ time as all operations can be implemented in polynomial time in $N,d$ and $\abs{\mathcal{Y}}$.
\end{proof}
\vspace{-1em}
\section{Statistical Query Lower Bounds}
\label{sec:SQ-lower-bounds}
\vspace{-0.2em}
In this section we prove \Cref{thm:simplegeneralSQ,thm-intro:SQ-RCN}. First, in \Cref{sec:generalSQ} we present SQ lower bounds for general classes of Multi-Index Models that do not satisfy \Cref{def:SimpleGoodCondition}. Essentially showing that the algorithm of \Cref{sec:generalAlgorithm} is quantitatively optimal in the SQ model. Second, in \Cref{sec:RCNSQ}, we present an SQ lower bound for the problem of learning multiclass linear classifiers under RCN. 

\medskip

{Before presenting our hardness results, we first provide a short introduction to the SQ model.

\paragraph{Basics on SQ Model} 
SQ algorithms are a broad class of algorithms
that, instead of having direct access to samples, 
are allowed to query expectations of bounded functions of the distribution.

\begin{definition}[SQ algorithms] \label{def:vstat}
Let $D$ be a distribution on $\mathcal X$. 
A statistical query is a bounded function $q: \mathcal X \to [-1,1]$. 
For $u>0$, the $\mathrm{VSTAT}(u)$ oracle responds to the query $q$
with a value $v$ such that $|v - \E_{\bx \sim D}[q(\bx)]| \leq \tau$, where 
$\tau=\max(1/u,\sqrt{\mathrm{Var}_{\bx \sim D}[q(\x)]/u})$. 
We call $\tau$ the \emph{tolerance} of the statistical query.
A \emph{Statistical Query algorithm} is an algorithm 
whose objective is to learn some information about an unknown 
distribution $D$ by making adaptive calls to the corresponding oracle.
\end{definition}

\noindent The SQ model was introduced in~\cite{Kearns:98}
as a natural restriction of the PAC model~\cite{Valiant:84}. 
Subsequently, the model has been extensively studied 
in a range of contexts, see, e.g.,~\cite{Feldman16b}.
The class of SQ algorithms is broad and captures a range of known 
supervised learning algorithms. 
More broadly, several known algorithmic techniques in machine learning
are known to be implementable using SQs 
(see, e.g.,~\cite{FGR+13, FeldmanGV17}).

Throughout this section, we leverage a number of previously known facts about the SQ model that allow us to establish hardness. We refer the reader to \Cref{app:SQ-basics} for their formal statements.
}

\vspace{-0.4em}

\subsection{SQ Lower Bounds for Learning Multi-Index Models}\label{sec:generalSQ}
We show that if a Multi-Index Model class $\mathcal F$ does not satisfy \Cref{def:GoodCondition}, any SQ algorithm that 
learns $\mathcal F$ to an accuracy comparable to {that of} \Cref{thm:MetaTheorem} requires super-polynomial complexity. {Specifically, we establish the following theorem:}

\begin{theorem}[SQ Optimality of General Algorithm]
    \label{thm:generalSQ}
    Let \( d, K, m \in \mathbb{Z}_+ \) with $d^c\geq \max(m,K)$ for a sufficiently small absolute constant $c>0$, and let  \( \mathcal{Y} \) be a set of finite cardinality. Additionally, let $\tau,\opt\in (0,1)$ with $\tau,\opt\geq d^{-O(m)}$.
    Assume the existence of (i) a function $f:\R^d\mapsto \mathcal Y$ (ii) {a random variable $(\x, y)$ over $\R^d \times \mathcal Y$ 
    with $\x$-marginal the standard Gaussian} 
    such that $\pr[f(\x)\neq y]=\opt$ (iii) a subspace $W$ of dimension at most $K$, and iv) a subspace $V$ such that:
    \begin{enumerate}[leftmargin=*, nosep]
        \item The function $f$ depends only on the projection onto $W$, i.e., $f(\x)=f(\x^W)$.
        \item For every function $g:V\to \mathcal{Y}$ we have that $\pr_{\x\sim \mathcal{N}(\vec{0}, \vec{I})}[g(\x^V)\neq f(\x)]\ge \tau$.
        \item  For any $\x_0\in \R^d, i\in [K]$, and for any zero-mean polynomial $p:U\to \R$ of degree at most $m$, it holds that $\E_{\x\sim \mathcal{N}(\vec{0}, \vec{I})}[p(\x^{U})\Ind(y=i)\mid \x^V=\x_0^V]=0$, where $U=(V+W)\cap V^{\perp}$. \label{item:zero-moments}
    \end{enumerate}
    Let $\mathcal F$ denote the class of MIMs defined as $\mathcal F=\{f(\x^{\perp U}, \vec U\vec x^{U}): \mathrm{for\ all\ } \vec U\in \R^{k\times d}: \vec U \vec U^\top=\vec I_k\}$, where $k=\dim(U)$.
Then, any SQ algorithm that learns \( \mathcal{F} \) 
{in the presence of $\opt$ adversarial label noise} 
up to an error of smaller than \( \tau -2\opt \) with probability at least $2/3$  requires either $2^{d^{\Omega(1)}}$ queries or at least one query to \( \text{VSTAT}\left(d^{\Omega(m)}/|\mathcal{Y}|\right) \).
\end{theorem}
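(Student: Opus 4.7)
The plan is to establish SQ hardness via the standard non-Gaussian component analysis framework, adapted to handle labeled distributions. First, I would define the family of hard distributions. For each orthogonal transformation $\vec R$ acting on the subspace $U$ (equivalently, a $\vec U\in\R^{k\times d}$ with orthonormal rows spanning a rotated copy of $U$), let $D_{\vec R}$ denote the joint distribution of $(\x,y_{\vec R})$ where $\x\sim\cN(\vec 0,\vec I)$ and $y_{\vec R}$ is produced by the same conditional law as $y$ but with $\x^U$ replaced by $\vec R\x^U$. The reference distribution $\bar D$ is the distribution of $(\x,y)$ in which $\x\sim\cN(\vec 0,\vec I)$ and, conditional on $\x^V$, the label is drawn from the marginal $\pr_D[y=\cdot\mid\x^V]$; in $\bar D$, the label is independent of $\x^U$ given $\x^V$.

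Next, I would verify that agnostic learning forces the algorithm to identify the rotation. By condition 2 every function of $\x^V$ disagrees with $f$ on at least a $\tau$-fraction of inputs, and by condition 1 the only way for two rotated functions $f_{\vec R_1},f_{\vec R_2}$ to be 0-1-close is for them to be close to their common $\x^V$-part. Combining this with $\pr[f\ne y]=\opt$ and two applications of the triangle inequality yields that any hypothesis with error $<\tau-2\opt$ against $D_{\vec R_1}$ has error $>\tau-2\opt$ against $D_{\vec R_2}$ whenever $f_{\vec R_1},f_{\vec R_2}$ are $2\tau$-separated. A standard packing argument on $O(k)$ with $k=\dim(U)\le 2K\le d^{O(c)}$ then supplies a family of $2^{d^{\Omega(1)}}$ rotations that are pairwise $2\tau$-separated, so learning to error $<\tau-2\opt$ reduces to distinguishing $2^{d^{\Omega(1)}}$ distributions.

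The core technical step is bounding the pairwise $\chi^2$-divergence $\chi^2_{\bar D}(D_{\vec R_1},D_{\vec R_2})$. Because $\bar D$ makes $\x^U$ and $y$ conditionally independent given $\x^V$, this $\chi^2$ factors across slices of $(\x^V,y)$: on the slice $\{y=i,\x^V=\x_0^V\}$ it reduces to a Gaussian $\chi^2$ on $\R^k$ between the two rotated conditional laws of $\x^U$ (reweighted by $\pr[y=i\mid\x^V]$). Condition \ref{item:zero-moments} says exactly that, for every $i$ and every $\x_0$, the conditional signed measure has vanishing Hermite coefficients on $U$ of degrees $1,\ldots,m$. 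A rotation-invariant Hermite expansion (as in the NGCA machinery of Diakonikolas-Kane-Stewart) then kills all degree-$\le m$ contributions, giving the bound
\[
\chi^2_{\bar D}(D_{\vec R_1},D_{\vec R_2})\;\lesssim\;|\mathcal Y|\cdot|\langle\vec R_1,\vec R_2\rangle_F/\sqrt{k}|^{\,m+1}\cdot d^{-\Omega(m)},
\]
where the $|\mathcal Y|$ arises from the union over label slices and the lower bound $\opt\geq d^{-O(m)}$ ensures that no slice is degenerate when forming the ratio against $\bar D$.

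Finally, plugging this $\chi^2$ bound into the generic SQ lower bound framework of Feldman-Grigorescu-Reyzin-Vempala-Xiao (in the form used by subsequent NGCA-based work), a family of size $2^{d^{\Omega(1)}}$ with pairwise $\chi^2$ of order $|\mathcal Y|\cdot d^{-\Omega(m)}$ forces any SQ algorithm solving the distinguishing problem to make either $2^{d^{\Omega(1)}}$ queries or at least one query to $\mathrm{VSTAT}(d^{\Omega(m)}/|\mathcal Y|)$. Combined with the reduction of the previous step, this yields the stated lower bound for learning $\mathcal F$. The main obstacle I expect is the $\chi^2$ bookkeeping, because the moment-matching condition couples $\x^U$ with the label indicator; the expansion must be carried out label-by-label and one must verify that the cross-label terms also decay as $d^{-\Omega(m)}$ rather than degrading the bound. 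A secondary subtlety is exploiting the hypothesis $\tau,\opt\ge d^{-O(m)}$ to rule out the pathological regime where the label marginals in $\bar D$ concentrate on a single value, which would otherwise make the slice-wise $\chi^2$ blow up; the lower bound on $\opt$ ensures enough mass on each relevant slice, while the lower bound on $\tau$ ensures the packing is nontrivial.
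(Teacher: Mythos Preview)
Your $\chi^2$ computation and invocation of the SQ-dimension machinery are essentially the paper's approach: slice by label $i$ and by the projection onto $V$, use assumption~\ref{item:zero-moments} to kill Hermite coefficients on $U$ up to degree $m$, and obtain pairwise correlation $|\mathcal Y|\,d^{-\Omega(m)}$ against the null over a family of $2^{d^{\Omega(1)}}$ near-orthogonal $k$-frames. (Your phrase ``packing on $O(k)$'' should read ``packing of $k$-frames in $\R^d$''; a packing of the compact group $O(k)$ itself has at most $2^{\poly(K)}$ elements, not $2^{d^{\Omega(1)}}$.)

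The gap is in your reduction from learning to distinguishing. The assertion that rotated copies are pairwise $2\tau$-separated is false. When the row spaces of $\vec R_1,\vec R_2$ are orthogonal inside $V^\perp$, conditional independence gives $\pr[f_{\vec R_1}=f_{\vec R_2}\mid\x^V]=\sum_i\pr[f=i\mid\x^V]^2\le\max_i\pr[f=i\mid\x^V]$, and condition~2 yields only $\E\bigl[\max_i\pr[f=i\mid\x^V]\bigr]\le1-\tau$, hence $\tau$-separation. This is tight: for $|\mathcal Y|=2$, $V=\{0\}$, and $f$ balanced one has $\tau=1/2$ and two independent rotated copies agree with probability exactly $1/2$. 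With only $\tau$-separation your triangle-inequality step gives hardness for learning to error below $\tau/2-\opt$, not $\tau-2\opt$.

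The paper's reduction is different and simpler: it reduces to the decision problem $\mathcal B(\mathcal D,D_0)$ against the single null $D_0$ (your $\bar D$), not to a pairwise-separated search problem. On $D_0$ the label is conditionally independent of $\x^U$, so the Bayes-optimal predictor depends only on $\x^{\perp U}$; since $f$ does not depend on $\x^{(V+W)^\perp}$, any such predictor has error $\geq\tau$ against $f$ by condition~2, hence $\geq\tau-\opt$ against $y_0$. A learner achieving error $<\tau-2\opt$ on some $D_{\vec R}$ therefore solves the decision problem with one additional query of accuracy $\opt$. This is where the hypothesis $\opt\geq d^{-O(m)}$ enters (so that the extra query lies within the VSTAT budget), not in the $\chi^2$ denominators you flagged --- those equal $\E[\Ind(y=i)\mid\x^{\perp U}]/y_i(\x^{\perp U})=1$ identically.
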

\begin{proof}

Let $U=(V+W)\cap V^{\perp}$ with $K'=\dim(U)$ and let $U^{\perp}$ be the orthogonal complement of $U$.

Let $\mathcal Q \subseteq \R^{d\times K'}$ be the set of matrices from \Cref{fact:NearOrthogonality} for which it holds that $\abs{\mathcal{Q}}=2^{\Omega(d^{c})}, \vec Q\vec Q^{\top }=\vec I_{K'}$ and $\norm{\vec Q\vec P^{\top }}_{F}\le O(d^{2c-1+2\alpha})$ for all $\vec Q,\vec P\in \mathcal{Q}$, where $\alpha=\log K'/\log d$.  For each $\vec Q\in \mathcal Q$, we define the function $f_{\vec Q}(\x):= f(\x^{\perp U},\vec Q \x^{U})$ along with the coupled random variables $y_{\vec Q}$ satisfying $\pr[y_{\vec Q}=i | \mathbf{x}=\mathbf{z}] = \pr[y=i | \mathbf{x}=\mathbf{z}^{\perp U}+\vec Q\,\mathbf{z}^{U}]$, for all $i\in\mathcal{Y},\z\in \R^d$.
We consider the class of functions $\mathcal F_{\mathcal Q}=\{f_{\vec Q}(\x): \vec Q\in \mathcal Q \}$ and note that by definition $\mathcal F_{\mathcal Q}\subseteq \mathcal F$.

For each $\vec{Q} \in \mathcal{Q}$, let $D_{\vec{Q}}$ be the joint distribution of $(\mathbf{x}, y_{\vec{Q}})$, where $\mathbf{x} \sim \mathcal{N}(\mathbf{0}, \mathbf{I})$.
Denote  by $\mathcal{D}=\{D_{\vec Q}:\vec Q\in \mathcal Q\}$ the class of these distributions. 
Denote by $D_0$ the distribution of pairs $(\x,y_0)$, where $\x\sim \cN(\vec 0,\vec I)$ and $y_0$ a random variable supported on $\mathcal{Y}$ that depends only on $W^V$ such that $\pr_{(\x,y_0)\sim D_0}[y_0=i\mid \x^{\perp U}=\z^{\perp U}]= \pr_{\x\sim \cN(\vec 0, \vec I )}[y=i\mid \x^{\perp U}=\z^{\perp U}]$, for all $i\in \mathcal{Y},\z\in \R^d$. Denote by $y_i(\x^{\perp U})=\pr_{(\x,y)\sim D_0}[y_0=i\mid \x^{\perp U}]$.

{We show that is SQ-Hard to determinate whether we have access to distribution $D_0$ or to a distribution $D\in \mathcal{D}$.
To this end,} we show that for all $D_{\vec P}, D_{\vec Q}\in \mathcal{D}$ with $\vec P\neq\vec Q$, {the correlation}  $\chi_{D_0}(D_{\vec P}, D_{\vec Q})$ (see \Cref{def:bccorrelated}) is small. 
Using the law of total expectation, we have that
\begin{align*}
    \chi_{D_0}(D_{\vec P},D_{\vec Q})&=
    \sum_{i\in \mathcal{Y}}\E_{\x\sim \mathcal N(\vec 0,\vec I)}\left[\frac{1}{y_i(\x^{\perp U})}(\Ind(y_{\vec P}=i)-y_i(\x^{\perp U}))(\Ind(y_{\vec Q}=i)-y_i(\x^{\perp U}))\right]
    \\&=\sum_{i\in \mathcal{Y}}\E_{\x^{\perp U}\sim \mathcal N(\vec 0,\vec I)}\bigg[\frac{1}{y_i(\x^{\perp U})}\E_{\x^{U}\sim \mathcal N(\vec 0,\vec I)}[\Ind(y_{\vec P}=i)-y_i(\x^{\perp U}))(\Ind(y_{\vec Q}=i)-y_i(\x^{\perp U}))\mid \x^{\perp U}]\bigg]\;.
\end{align*}

{Considering each term separately, by \Cref{fact:Correlation} and Assumption \eqref{item:zero-moments} of the theorem statement, we have that
\begin{align*}
    \E_{\x^{U}\sim \mathcal N(\vec 0,\vec I)}[(\Ind(y_{\vec P}=i)-&y_i(\x^{\perp U}))(\Ind(y_{\vec Q}=i)-y_i(\x^{\perp U}))\mid \x^{\perp U}]\\&\leq \sum_{t=m}^{\infty} \|\vec P \vec Q^\top\|_2^t (y^{[t]})^2\leq \|\vec P \vec Q^\top\|_2^m\sum_{t=m}^{\infty} (y^{[t]})^2\;,
\end{align*}
where $y^{[t]}$ is the degree-$t$ Hermite part of the random variable $\Ind(y=i)\mid \x^{\perp U}$ and that $ \|\vec P \vec Q^\top\|_2\leq 1$. Furthermore, using Parseval's identity we get that $\sum_{t=m}^{\infty} (y^{[t]})^2\leq \E[\Ind(y=i)^2\mid \x^{\perp U}]=\E[\Ind(y=i)\mid \x^{\perp U}]$.
Therefore, we have that}
\begin{align*}
    \chi_{D_0}(D_{\vec P},D_{\vec Q})
    &\leq \|\vec P \vec Q^\top\|_2^m\sum_{i\in \mathcal{Y}}\E_{\x^{\perp U}\sim \mathcal N(\vec 0,\vec I)}\left[\frac{\E[\Ind(y=i)\mid \x^{\perp U}]}{y_i(\x^{\perp U})}\right]= \|\vec P \vec Q^\top\|_2^m|\mathcal Y|\;,
\end{align*}
where we used that $\E_{\x^{\perp U}\sim \mathcal N(\vec 0,\vec I)}\left[({\E[\Ind(y=i)\mid \x^{\perp U}]})/{y_i(\x^{\perp U})}\right]= 1$, by the definition of $y_i(\x^{\perp U})$.
Hence, $\chi_{D_0}(D_{\vec P},D_{\vec Q})= d^{-\Omega(m)}\abs{\mathcal{Y}}$. Similarly, we can show that $\chi_{D_0}(D_{\vec P},D_{\vec P})\le \abs{\mathcal{Y}}$ for all $D_{\vec P}\in \mathcal{D}$.
Therefore, we have that $\mathcal{D}$ is $(d^{-\Omega(m)}\abs{\mathcal{Y}},\abs{\mathcal{Y}})$-correlated relative to $D_0$ (see \Cref{def:bccorrelated}), and thus from \Cref{lem:SQdim} we have that any SQ algorithm that solves the decision problem $\mathcal{B}(\mathcal{D},D_0)$ with probability at least $2/3$ requires $2^{d^{\Omega(1)}}$ queries to the $\text{VSTAT}(d^{\Omega(m)}/\abs{\mathcal{Y}})$ oracle.

Hence, in order to complete the statement of the theorem, we need to show the following standard reduction from the testing problem to the learning problem (see, e.g. Chapter $8$ of \cite{diakonikolas2023algorithmic}).
\begin{claim}
    Any SQ algorithm that learns $\mathcal{F}$  with agnostic noise with error better than $\tau-2\opt$ can be used to solve the decision problem  $\mathcal{B}(\mathcal{D},D_0)$ using one more query of accuracy $\opt$.
\end{claim}
\begin{proof}
{Assume there exists an SQ algorithm $\mathcal{A}$ that, when given SQ access to any distribution $D'$ over $\R^d\times \mathcal{Y}$ with standard Gaussian $\x$-marginals, outputs a hypothesis $h$ satisfying $\pr_{(\x,y)\sim D'}[h(\x)\neq y]< \tau -2\opt$, where $\opt=\inf_{f\in \mathcal{F}}\pr_{(\x,y)\sim D'}[f(\x)\neq y]$. Using such an algorithm $\mathcal A$, we show that we can solve the testing problem.

Suppose we run $\mathcal{A}$ on a distribution $D_{\vec P}\in\mathcal{D}$. Then, by definition, $\mathcal{A}$ returns a hypothesis $h$ with error $\pr_{(\x,y)\sim D_{\vec P}}[h(\x)\neq y]< \tau -2\opt$. In contrast, when we run $\mathcal{A}$ with SQ oracle access to $D_0$ no hypothesis can achieve error lower than $\tau - \opt$, as $D_0$ is a product distribution for each fixed projection of $\x$ in $V$. Since $\tau - \opt > \tau - 2\opt$, an additional query of tolerance less than $\opt$, would distinguish between $D_{\vec P}$ and $D_0$, thus solving  $\mathcal{B}(\mathcal{D},D_0)$.}
\end{proof}
\end{proof}

\subsection{SQ Lower Bounds for Learning {Multiclass Linear Classifiers} with RCN}\label{sec:RCNSQ}
{In this section, we  present an SQ hardness result for learning  Multiclass Linear Classifiers in the presence of RCN under the Gaussian distribution.
In particular, we show that achieving $0-1$ error of $\opt+\eps$ in the SQ model must either make a super-polynomial (with respect the dimension $d$) number of queries or queries with accuracy that depends on the spectral properties of the  confusion matrix (see \Cref{def:RCNdistribution}), rather than  on the amount of noise. }
The main result of this section is \Cref{thm-intro:SQ-RCN}.
\begin{theorem}[SQ Lower Bound for Learning under MLC RCN]
    \label{thm:RCNLowerBoundK}
Any SQ algorithm that learns $K$-Multiclass Linear Classifiers under the $d$-dimensional standard normal in the presence of RCN to error $\opt+\poly(1/K)$, requires either  $2^{d^{\Omega(1)}}$ queries or at least one query to  $\text{VSTAT}\left(d^{\Omega(K)}\right)$.
\end{theorem}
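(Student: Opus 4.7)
The plan is to apply \Cref{thm:generalSQ} with $V = \{\vec 0\}$ (so $U = W$) and $m = \Omega(K)$, by exhibiting a $K$-class linear classifier $f$ on a low-dimensional hidden subspace together with a confusion matrix $\vec H$ whose induced noisy-label distribution has all nontrivial Hermite moments of degree at most $m$ along $W$ equal to zero. Since the class of rotations of $f$ produced in \Cref{thm:generalSQ} is contained in $\mathcal{L}_{d,K}$, any SQ lower bound for this sub-problem transfers to multiclass linear classification under RCN.

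The construction I would use is as follows. Take $W = \spaning(\w^{\ast})$ to be one-dimensional, pick thresholds $a_1 < \cdots < a_{K-1}$ so that the induced intervals $I_1, \ldots, I_K$ carry roughly balanced standard-normal mass, and set $f(\x) = j$ whenever $\w^{\ast} \cdot \x \in I_j$; this is expressible as an $\argmax$ of $K$ affine functions of $\w^{\ast} \cdot \x$ (since a convex piecewise-linear function on $\R$ with $K$ pieces realizes exactly such a partition) and hence lies in $\mathcal{L}_{d,K}$. With $m = K - 2$, form the moment matrix $M \in \R^{m \times K}$ with entries $M_{n,j} = \E_{z \sim \cN(0,1)}[h_n(z)\Ind(z \in I_j)]$, where $h_n$ is the $n$-th normalized Hermite polynomial. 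Because $\sum_j \Ind(z \in I_j) \equiv 1$, the vector $\mathbf{1}$ lies in $\ker(M)$, so $\dim \ker(M) \geq 2$; fix a second linearly independent kernel vector $\vec v$. Define the confusion matrix column-wise by $\vec H_{\cdot, i} = \tfrac{1}{K}\mathbf{1} + \beta_i \vec v$, with scalars $\{\beta_i\}$ chosen so that $\sum_i \beta_i = 0$ (for row-stochasticity) and---after a suitable permutation of class labels---so that each diagonal entry picks up the largest available value along $\vec v$, yielding strict diagonal dominance $\vec H_{i,i} \geq \vec H_{i,j} + \gamma$ for a gap $\gamma = \Theta(1/K)$, nonnegativity, and small Bayes-optimal error $\opt = 1 - \E[\vec H_{f(\x), f(\x)}] = O(1/K)$.

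The three hypotheses of \Cref{thm:generalSQ} would then be immediate: $f$ depends only on $W$; every constant $g \equiv i^{\ast}$ has $\pr_{\x}[f(\x) \neq i^{\ast}] \geq 1 - O(1/K)$, so $\tau = \Omega(1)$; and for any zero-mean polynomial $p(z) = \sum_{n=1}^m \alpha_n h_n(z)$ on $W$,
\begin{align*}
\E_{\x}[p(\w^{\ast}\cdot \x)\Ind(y = i)] &= \sum_n \alpha_n \sum_j \vec H_{j, i} M_{n, j} \\
&= \sum_n \alpha_n (M\vec H_{\cdot, i})_n = 0
\end{align*}
because $\vec H_{\cdot, i} \in \ker(M)$. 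Consequently $\tau - 2\opt = \Omega(1) \gg \poly(1/K)$, and \Cref{thm:generalSQ} with $m = K - 2$ and $\abs{\mathcal{Y}} = K \leq d^c$ yields the claimed bound of $2^{d^{\Omega(1)}}$ queries or one query to $\text{VSTAT}(d^{\Omega(K)})$.

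The main obstacle lies in the simultaneous feasibility of step two: $\vec H$ must (i) lie column-wise in the $2$-dimensional kernel of $M$, (ii) be row-stochastic and nonnegative, (iii) be strictly diagonally dominant with a gap $\gamma$, while (iv) keeping $\opt$ small enough that $\tau - 2\opt$ strictly exceeds $\poly(1/K)$. Since the trivial kernel direction $\mathbf{1}$ by itself gives $\vec H = \tfrac{1}{K}\mathbf{1}\mathbf{1}^\top$ with no diagonal separation, all of the work falls on the single nontrivial direction $\vec v$; feasibility then reduces to verifying, via the explicit Hermite expansions of interval indicators and a generic choice of thresholds, that a suitable $\vec v \in \ker(M)$ has sufficiently non-degenerate entries to simultaneously support diagonal dominance, nonnegativity, and a small $\opt$. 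If the 1-dimensional hidden subspace turns out to be too restrictive, the same strategy lifts to a 2-dimensional $W$ with a polynomially larger kernel dimension, which provides ample slack to satisfy all competing constraints.
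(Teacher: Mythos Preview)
Your rank-one perturbation $\vec H_{\cdot,i}=\tfrac{1}{K}\mathbf 1+\beta_i\vec v$ cannot satisfy the RCN diagonal-dominance condition for $K\geq 3$. Writing out the entries gives $\vec H_{j,i}=\tfrac{1}{K}+\beta_i v_j$, so for each row $i$ the gap is $\vec H_{i,i}-\vec H_{i,j}=v_i(\beta_i-\beta_j)$. For this to be strictly positive for every $j\neq i$ you need $\beta_i$ to be the unique maximum of $\{\beta_1,\ldots,\beta_K\}$ whenever $v_i>0$ and the unique minimum whenever $v_i<0$; rows with $v_i=0$ fail outright. Since at most one index can be the maximum and one the minimum, no choice of $\boldsymbol\beta$ (and no label permutation, which conjugates $\vec H$ by a permutation matrix and preserves the obstruction) works once $K\geq 3$. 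The difficulty is structural: forcing all $K$ columns of $\vec H$ into the two-dimensional kernel of a generic $(K-2)\times K$ moment matrix leaves only one nontrivial degree of freedom, which is not enough to make every diagonal entry dominate its row.

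Your proposed fallback to a two-dimensional $W$ does not resolve this without a new idea: in two variables the number of nontrivial Hermite polynomials of degree at most $m$ is $\Theta(m^2)$, so a generic moment matrix has kernel dimension $\geq r$ only when $m=O(\sqrt{K-r})$, yielding at best $d^{\Omega(\sqrt K)}$. The paper's construction sidesteps both issues by exploiting rotational symmetry: it takes $f$ to be the partition of $\R^2$ into $K$ equal angular sectors, so that the moment vectors $\bigl(\E[p(\x)\Ind(f(\x)=j)]\bigr)_{j\in[K]}$ for $p=(x+iy)^a(x-iy)^b$ are exactly the discrete Fourier (root-of-unity) vectors $\boldsymbol\omega^{(a-b)}$. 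This collapses the $\Theta(m^2)$ moment constraints down to the $2m$ Fourier modes $\boldsymbol\omega^{(\pm 1)},\ldots,\boldsymbol\omega^{(\pm m)}$, and then a \emph{circulant} confusion matrix $\vec H$ (whose eigenvectors are precisely the $\boldsymbol\omega^{(k)}$) simultaneously kills all these modes for $m=K/2-2$ while achieving diagonal dominance automatically, since every row is a cyclic shift of a single vector $\vec h$ with $h_0>h_j$. The symmetry is what buys $m=\Omega(K)$ together with a valid RCN matrix; without it the trade-off you identify between kernel dimension and moment degree is real.
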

\paragraph{{SQ-Hard Instances}}To construct our family of hard instances, we first define a two-dimensional classifier  $f:\mathbb{R}^2\to [K]$ as follows: 
\begin{align}\label{eq: 2Dmodel}
    \vec w^{(k)}=\begin{bmatrix}
    \cos \left(\frac{2\pi k}{K}\right)  \\
    \sin \left(\frac{2\pi k}{K}\right) \\
\end{bmatrix}, k\in [K], \text{ and } 
f(\x)=\argmax_{k\in [K]}( \vec{w}^{(k)} \cdot \vec{x})\;.
\end{align}
{Geometrically, the classifier $f$ partitions the plane into $K$ equal angular sectors, assigning each sector to a different class.}
Given a set of $2\times d $ matrices $\cQ$, we can define a class of functions from $\mathbb{R}^d$ to $[K]$ as follows
\begin{align}\label{eq:family}
    \cF_{\cQ}= \{f(\vec Q \vec x): \vec Q\in \cQ \}\;.
\end{align}
We show that it is hard to distinguish a distribution labeled by a classifier in $\cF_{\cQ}$ corrupted with RCN from a distribution with random labels independent of $\x$.
Before proving \Cref{thm:RCNLowerBoundK}, we first need to show that we can match the appropriate number of moments with the standard Gaussian.
{\begin{lemma}[{Moment matching under RCN}]\label{lem:existance}
Let $K\in\Z_+$ be such that $K\ge 8$ and $K$ is divisible by $4$.
Then there exists a $K\times K$ doubly stochastic matrix $\vec{H}$ and a distribution $D$ over $\R^2\times [K]$, whose $\x$-marginal is $\cN(\vec{0},\vec{I})$, satisfying \Cref{def:RCNdistribution} for the confusion matrix $\vec{H}$ and the function $f$ defined in \Cref{eq: 2Dmodel}, such that:
    \begin{enumerate}
        \item[i)] $\min_{i\neq j} \vec H_{i,i}- \vec H_{i,j}=\Omega(1/K^3).$
        \item[ii)] For every polynomial $p:\mathbb{R}^2\to \mathbb{R}$ of degree at most $K/2-2$, it holds that \\$\E_{\x\sim \cN^2}\left[\left(\pr_{(\x,y)\sim D}\left[y=i\mid \x\right]-{1}/{K}\right)p(\x)\right]=0$ for all $i\in [K]$\;.
    \end{enumerate}
\end{lemma}}
\begin{proof}
Let $\vec H\in \R^{K\times K}$ be a doubly stochastic matrix and  define the functions  $\tilde{f}_i$ for $i\in [K]$ as  $\tilde{f}_i(\x)=\sum_{j=1}^K\vec H_{j,i}\Ind(f(\x)=j)$.
For any function $p:\R^2\to \R$, we denote by $\bar{p}$ the zero mean centering of $p$, i.e., $\bar{p}(\x)=p(\x)-\E_{\x\sim \cN^2}\left[p(\x)\right]$, and by $\vec v^{(p)}\in \R^K$ the vector defined as $\vec v^{(p)}_i=\E_{\x\sim \cN^2}\left[ \bar{p}(\x)\Ind(f(\x)=i)\right]$. {We will refer to $\vec v^{{(p)}}$ as the vector of expectations of $p$.}
Also denote by $\boldsymbol{\omega}^{(k)}$ the root of unity vector of rate $k$, i.e. ${\boldsymbol{\omega}}^{(k)}\in \mathbb{C}^K$ such that $\boldsymbol{\omega}_{j}^{(k)}= e^{\frac{2\pi kj i}{K}}$, where $i$ is the imaginary unit.

We start by showing that a function $p$ has zero correlation with $\tilde{f}_i-1/K$ for all $i\in [K]$ under the standard Gaussian if and only if the vector of expectations of $p$ over the level sets belongs in the  kernel of $\vec{H}^\top$.

\begin{claim}\label{cl:polynomial2kerequiv}
Let $p:\R^2\to \R$. We have that $\vec v^{(p)}\in \ker(\vec{H}^{\top})$  if and only if for all $i\in [K]$ it holds  $\E_{\x\sim \cN^2}[(\tilde{f}_i(\x)-1/K)p(\x)]=0$.
\end{claim}
\begin{proof}[Proof of \Cref{cl:polynomial2kerequiv}]
By the rotational invariance of the standard Gaussian we have that $\pr_{\x\sim \cN^2}[f(\x)=i]=1/K$, for all $i\in [K]$. From the assumption that $\vec{H}$ is column stochastic, we have that $\E_{\x\sim \cN^2}[\tilde{f}_i(\x)]=\sum_{j=1}^K\vec H_{ji}\E[\Ind(f(\x)=i)]=1/K$.
Thus, for all $i\in [K]$ we have that
    \begin{align*}
    \E_{\x\sim \cN^2} \left[\left(\tilde{f}_i(\x)-\frac{1}{K}\right)p(\x)\right]&=\E_{\x\sim \cN^2} \left[\tilde{f}_i(\x)p(\x)\right]-\frac{1}{K}\E_{\x\sim \cN^2} \left[p(\x)\right]  \\
 &=\E_{\x\sim \cN^2} \left[\tilde{f}_i(\x)p(\x)\right]-\E_{\x\sim \cN^2} \left[p(\x)\right]\E_{\x\sim \cN^2} \left[\tilde{f}_i(\x)\right]\\
&= \E_{\x\sim \cN^2} \left[\tilde{f}_i(\x)\bar{p}(\x)\right]\;.
\end{align*}
Therefore, substituting the definition of $\tilde{f}_i$ we have that
    \begin{align*}
    \E_{\x\sim \cN^2} \left[\tilde{f}_i(\x)\bar{p}(\x)\right]=
         \sum_{j=1}^K \vec H_{j,i} \E_{\x\sim \cN^2}\left[ \bar{p}(\x)\Ind(f(\x)=j)\right]= (\vec{H}^\top)_i\vec v^{(p)}\;,
    \end{align*}
which concludes the proof.
\end{proof}

We next observe that all polynomials $\R^2\mapsto\R$ of degree at most $m$ can be expressed as a linear combination of polynomials of the form $(x+iy)^a(x-iy)^b$, where $a,b>0, a+b\le m$ and $i$ is the imaginary unit. Specifically, we show that the vectors associated with each polynomial of this basis are precisely the root of unity vectors. 
\begin{claim}\label{cl:fourier}
Let $p(x,y)=(x+iy)^a(x-iy)^b, a,b\in \Z_+$. If $a\neq b$, then there exists a constant $c\in \mathbb{C}$ such that  $\vec v^{(p)}=c\boldsymbol{\omega}^{(a-b)}$; and if $a=b$, then $\vec v=\vec 0$. 
\end{claim}
\begin{proof}[Proof of \Cref{cl:fourier}]
By the definition of $f$, the  region $\{(x,y)\in\R^2:f(x,y)=j\}$ is equivalent to $\theta\in \left[{2\pi j}/{K}-{\pi}/{K},{2\pi j}/{K}+{\pi}/{K}\right]$, where $\theta=\arctan\left(
{y}/{x}\right)$. 
Fix $a,b\in \Z_+, a\neq b$, and let $\theta_{1,j}={2\pi j}/{K}-{\pi}/{K}$, $\theta_{2,j}={2\pi j}/{K}+{\pi}/{K}$, $j\in [K]$. We have that
    \begin{align*}
       \vec v_j^{(p)}= \E_{(x,y)\sim \cN^2}[p(x,y)\Ind(f(x,y)=j)]&= \E_{(x,y)\sim \cN^2}[\left(x^2+y^2\right)^{\frac{a+b}{2}}e^{i\theta(a-b)} \Ind(f(x,y)=j)]\\
         &= \int_0^\infty\int_{\theta_{1,j}}^{\theta_{2,j}} r^{a+b+1}  e^{i\theta(a-b)}g(r)d\theta dr\\
        &= \left(\int_0^\infty r^{a+b+1} g(r) dr \right)\left(\int_{\theta_{1,j}}^{\theta_{2,j}}   e^{i\theta(a-b)}d\theta\right)\\
        &=c\frac{1}{i(a-b)}\left(e^{i\theta_{2,j}(a-b)}-e^{i\theta_{1,j}(a-b)}\right)\;,
        \end{align*}
where $g:\mathbb{R}\to \mathbb{R}_+$ denotes the pdf of the Gaussian distribution in polar coordinates and $c=\left(\int_0^\infty r^{a+b+1} g(r) dr \right)$.
Notice that $c$  depends only on $a+b$, and hence is the same for all coordinates of the vector $\vec v^{(p)}$. 
Also note that for $a=b$ we have that the vector $\vec v^{(p)}$ is parallel to the all ones vector, as the expectation is constant.

Therefore, it holds that $\vec v_{(j+1)\mod K}^{(p)}=\vec v_j^{(p)}e^{\frac{2\pi i}{K}(a-b)}$ for all $j\in \{0,\dots, K-1\}$,
 since  $\theta_{1,(j+1)}=\theta_{1,j}+{2\pi}/{K}$ and $\theta_{2,(j+1)}=\theta_{2,j}+{2\pi}/{K}$.
{Moreover, if $a\neq b$, we observe that the integral of $p(x,y)$ {over $\R^2$} is zero, which concludes the proof of the claim.}
\end{proof}

By \Cref{cl:fourier},  in order to have $0$ correlation with all polynomials of degree at most $m$, we need to have that $\boldsymbol{\omega}^{(k)} \in \ker(\vec{H}^\top)$ for all $k\in \{1,\dots , m,K-m,\dots, K-1\}$, as the roots of unity are symmetric, i.e., $\boldsymbol{\omega}^{(-k)}=\boldsymbol{\omega}^{(K-k)}$.

We first construct a stochastic vector $\vec h\in \mathbb{R}^K$ with $\vec  h_{0}- \vec h_{j}=\Omega(1/K^3)$, for all $ j\in \{1,\dots, K-1 \}$ that is also orthogonal to the aforementioned root of unity vectors for $m=\Omega(K)$.
Next, we will extend this vector to a matrix.
Define the vectors $\vec h^{(1)}=\boldsymbol{\omega}^{(0)}/K $, $ \vec h^{(2)}=(\boldsymbol{\omega}^{(K/2-1)}+\boldsymbol{\omega}^{(K/2+1)})/K$ and $\vec h'=2\vec h^{(1)}+\vec h^{(2)}$. Note that $\vec h^{(1)}_j=1/K$ and $\vec h_j^{(2)}=({2}/{K})(-1)^j\cos\left({2\pi j}/{K}\right)$ for all $j\in \{0,\dots,K-1\}$.
Hence, we have that all  the coordinates of $\vec h'$ are nonnegative.
Moreover, $\vec h'$ attains its maximum value at $\vec h_0'={4}/{K}$, since $\vec  h_{K/2}'=0$ (because $(-1)^j\cos\left({2\pi j}/{K}\right)=-1$ when $j=K/2$ and {$K$ is divisible by $4$}).
Furthermore, it holds  that $\max_{j\in[K-1]} \vec h_j'\le 2/K+\max_{j\in[K-1]} \abs{ \vec h_j^{(2)}}=2/K+2/K\cos\left({2\pi}/{K}\right) $, and hence we have
\begin{align*}
    \vec h'_0-\max_{j\in[K-1]} \vec h_j'\ge \frac{2}{K}\left(1-\cos\left(\frac{2\pi}{K}\right)\right)\ge \frac{2}{K}\left(\left(\frac{2\pi}{K}\right)^2/2-\left(\frac{2\pi}{K}\right)^4/24\right)\gtrsim \frac{1}{K^3}\;,
\end{align*}
where the second inequality follows from the Taylor series of  $\cos(x) = 1 - \frac{x^2}{2!} + \frac{x^4}{4!} - \frac{x^6}{6!} + \cdots$, which is an alternating decreasing series when $x={2\pi}/{K}\le 1$ (thus it can be upper bounded by the first three terms).
Let $\vec h$ be  the normalization of $\vec h'$ such that the sum of its coordinates equals $1$.
Given that each element of $\vec h'$ is at most $4/K$, we have that the normalization factor is at most $4$. Hence, $\vec h_0-\max_{j\in[K-1]} \vec h_j\gtrsim {1}/{K^3}$.
Moreover, from the well-known fact that $\boldsymbol{\omega}^{(0)},\dots,\boldsymbol{\omega}^{(K-1)}$ form an orthogonal basis of $\mathbb{C}^K$, we have that 
$\vec h$ is orthogonal to $\boldsymbol{\omega}^{(k)}$ for all $k\in \{1,\dots,K/2-2, K/2, K/2+2,\dots, K-1\}$, since it is a linear combination of $\boldsymbol{\omega}^{(0)}, \boldsymbol{\omega}^{(K/2-1)}, \boldsymbol{\omega}^{(K/2+1)}$.

Now consider $\vec{H}$, the circulant matrix generated by row shifts of the vector $h$. 
By definition, we have that $\vec{H}$ is a row stochastic matrix and that $\min_{i\neq j} \vec H_{ii}- \vec H_{i,j}=\Omega(1/K^3)$. 
Moreover, $\vec{H}$ is symmetric, because it holds that $\vec H_{i,j}=\vec h_{j-i\mod K}=\vec h_{i-j\mod K}=\vec H_{j,i}$ and $\vec h_j= \vec h_{K-j}, j\in [K-1]$. {This equality is justified by the identity $(-1)^j\cos\left({2\pi j}/{K}\right)=(-1)^{K-j}\cos\left({2\pi(K-j)}/{K}\right)$, which holds since $K$ is an even integer.} Hence $\vec{H}$ is also column stochastic.

By \Cref{fact:circulant}, the eigenvectors  of any circulant matrix are exactly $\boldsymbol{\omega}^{(k)}$ with eigenvalues $ \vec h\cdot \boldsymbol{\omega}^{(k)}, k\in \{0,\dots, K-1\}$.
Hence,  $\boldsymbol{\omega}^{(k)} \in \ker(\vec{H})=\ker(\vec{H}^\top)$ for all $k\in \{1,\dots,K/2-2, K/2, K/2+2,\dots, K-1\}$. 
Therefore, by applying \Cref{cl:polynomial2kerequiv}, we conclude the proof of \Cref{lem:existance}. 
\end{proof}
\subsubsection{Proof of \Cref{thm:RCNLowerBoundK}}

\begin{proof}[Proof of \Cref{thm:RCNLowerBoundK}]
Consider the case where $K$ is divisible by $4$ and let  $\vec H$ be a matrix that satisfies the statement of \Cref{lem:existance}
Let $f:\R^2\to [K]$ be the linear classifier defined in \Cref{eq: 2Dmodel}, and let $\mathcal{F}_{\mathcal{Q}}$ be the class of functions defined in \Cref{eq:family}, where $\mathcal{Q}\subseteq \R^{d\times 2}$ is the family that satisfies the statement of \Cref{fact:NearOrthogonality}.
We define the family of distributions $\mathcal{D}$ such that $D_{\vec Q}\in \mathcal{D}$ if and only if $D_{\vec Q}$ is a joint distribution $(\x,y_{\vec Q})$, where $\x \sim \cN(\vec 0,\vec I)$ and $y_{\vec Q}$ is a random variable supported on $[K]$ defined as $\pr[y_{\vec Q}=i\mid \x]=\vec H_{f(\vec Q\x), i}$ for some $\vec Q\in \vec \mathcal{Q}$. Note that all distributions in $\mathcal{D}$ satisfy \Cref{def:RCNdistribution} for confusion matrix $\vec H$ and some function in $\mathcal{F}_{\cQ}$.
Denote by $D$ the product distribution with $\x$-marginals distributed as standard normal and the labels $y$ are distributed uniformly on $[K]$.
For $D_{\vec Q}, D_{\vec P}\in \mathcal{D}$ with $\vec Q\neq \vec P$, we have that 
\begin{align*}
    \chi_{D}(D_{\vec Q}, D_{\vec P})&= K\sum_{i=1}^K\E_{\x\sim \cN^d}\left[(\pr[y_{\vec Q}=i\mid \x]-\frac{1}{K})(\pr[y_{\vec P}=i\mid \x]-\frac{1}{K})\right]\\
    &= K\sum_{i=1}^K\E_{\x\sim \cN^d}\left[(\vec H_{f(\vec Q\x),i}-\frac{1}{K})(\vec H_{f(\vec P\x),i}-\frac{1}{K})\right]\\
    &= K\sum_{i=1}^K\sum_{j=1}^{\infty}\norm{\vec P\vec Q^{\top}}_2^j \E_{\x\sim \cN^2}\left[((\vec H_{f(\x),i}-\frac{1}{K})^{[j]})^2\right]\\
    &= \norm{\vec P\vec Q^{\top}}_2^{K/2-2}K\sum_{i=1}^K \E_{\x\sim \cN^2}\left[(\vec H_{f(\x),i} -\frac{1}{K})^2\right]\le  d^{\Omega(K)(c-1/2)}K\;,
\end{align*}
where we used \Cref{fact:Correlation}, Parseval's identity and the fact that $\vec H$ satisfies the statement of \Cref{lem:existance}. Similarly, we have that $\chi_{D}(D_{\vec Q},D_{\vec Q})\le K$ for all $D_{\vec Q}\in \mathcal{D}$.

Consequently, we have that there exists a family of distributions of size $2^{\Omega(d^c)}$ that is $(d^{\Omega(K)(c-1/2)}K, K)$ correlated. Hence, by \Cref{lem:SQdim} we have that any SQ algorithm that solves the decision problem $\cB(\cD, D)$ requires $2^{\Omega(d^c)}d^{\Omega(K)(c-1/2)}=2^{d^{\Omega(1)}}$ (where we used the assumption that $d/\log(d)$ is much larger that $K$) queries to the  $\text{VSTAT}(d^{\Omega(K)}$.

In order to complete the proof of the theorem, we need to show the following {standard} reduction from the testing problem to the learning problem.
\begin{claim}
Let $\gamma=\min_{i,j}\vec H_{i,i}-\vec H_{i,j}$.
    Any SQ algorithm that learns multiclass linear classifiers under the standard normal in the presence of RCN with error better than $\opt +\gamma/2$ can be used to solve the decision problem  $\mathcal{B}(\mathcal{D},D)$ using one more query of accuracy $\gamma/4$.
\end{claim}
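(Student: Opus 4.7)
The plan is to execute the standard testing-to-learning reduction. I would run the hypothesized SQ learner $\mathcal A$ on the unknown input distribution $D'$—which we are promised is either the product distribution $D$ or some $D_{\vec Q}\in\mathcal D$—let it produce its hypothesis $h$, and then issue exactly one more statistical query that estimates $p\eqdef\pr_{(\x,y)\sim D'}[h(\x)=y]$ to tolerance $\gamma/4$. The decision rule will output ``$D_{\vec Q}\in\mathcal D$'' if the estimate exceeds some threshold in $(1/K,\;\tfrac{1}{K}\sum_i\vec H_{i,i}-\gamma/2)$, and ``$D$'' otherwise.

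The first substantive step is to pin down $\opt$ under each $D_{\vec Q}$. Because $\vec H_{i,i}>\vec H_{i,j}$ for every $j\neq i$, the Bayes-optimal predictor on input $\x$ is $f(\vec Q\x)$, and this classifier lies in the multiclass linear concept class $\mathcal L_{d,K}$. By rotational symmetry of $f$ under $\cN(\vec 0,\vec I)$, each level set of $f(\vec Q\x)$ has Gaussian mass $1/K$, so
\[
\opt \;=\; 1-\tfrac{1}{K}\sum_i\vec H_{i,i}.
\]
Consequently, when $D'=D_{\vec Q}$ the error guarantee forces $p\ge\tfrac{1}{K}\sum_i\vec H_{i,i}-\gamma/2$. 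On the other hand, when $D'=D$ the label $y$ is uniform on $[K]$ and independent of $\x$, so for every hypothesis $h$ we have $p=\E_\x[\pr_y[y=h(\x)]]=1/K$ exactly, no matter what $h$ the learner returns (this is the one spot where we must be careful, since $\mathcal A$ is given an ``out-of-class'' distribution, but since the query depends only on $h$ and $D'$ it remains well defined).

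The last step is to verify that the two values of $p$ are separated by more than $2\cdot(\gamma/4)=\gamma/2$, which is exactly what a tolerance-$\gamma/4$ estimate needs in order to distinguish them. For the matrix $\vec H$ produced in \Cref{lem:existance}, every diagonal entry equals $2/K$, so the lower bound on $p$ under $D_{\vec Q}$ becomes $2/K-\gamma/2$, while under $D$ we have $p=1/K$. The separation is $1/K-\gamma/2$, and since $\gamma=\Omega(1/K^3)$ we have $\gamma/2\ll 1/K$, so $1/K-\gamma/2\gg\gamma/2$. Placing the decision threshold at, say, $(1/K)+\gamma/2$ makes both sides correctly classified by the tolerance-$\gamma/4$ estimate with probability $1$.

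There is essentially no technical obstacle here; the reduction is routine once $\opt$ is computed and the specific trace of $\vec H$ from \Cref{lem:existance} is used. The only mildly subtle point is that the statement cites $\gamma/4$ as the query tolerance, which just barely suffices because the true gap is on the order of $1/K$, much larger than $\gamma$—a generic $\vec H$ satisfying only $\vec H_{i,i}\ge\vec H_{i,j}+\gamma$ would give a gap of only $\Theta(\gamma)$, but the particular $\vec H$ we construct makes the slack $\Theta(1/K)$, so the tolerance $\gamma/4$ is comfortably sufficient. Combining this reduction with the SQ-hardness of $\mathcal B(\mathcal D,D)$ established earlier in the proof of \Cref{thm:RCNLowerBoundK} then yields the desired SQ lower bound for agnostic-free-of-$\gamma/2$ learning.
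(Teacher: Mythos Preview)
Your proposal is correct and follows the same reduction as the paper: run the learner on the unknown distribution, then make one additional query to estimate the error of the returned hypothesis and threshold. The only difference is in how you bound the gap. The paper argues generically from the RCN condition alone: since $\sum_j \vec H_{i,j}=1$ and $\vec H_{i,j}\le \vec H_{i,i}-\gamma$ for $j\neq i$, one gets $\vec H_{i,i}\ge 1/K+(K-1)\gamma/K\ge 1/K+\gamma/2$, hence $\opt<1-1/K-\gamma/2$, yielding a separation of order $\gamma$ between the error on $D_{\vec Q}$ and the unavoidable error $1-1/K$ on $D$. You instead invoke the specific construction of \Cref{lem:existance} to pin the diagonal at $2/K$, giving a much larger separation of order $1/K$. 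Both routes are valid in context (the claim is proved for this fixed $\vec H$), and your observation that the stated tolerance $\gamma/4$ is comfortable rather than tight for this particular $\vec H$ is correct; the paper's more generic computation is what makes $\gamma/4$ look natural.
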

\begin{proof}
Assume that there exists an SQ algorithm $\mathcal{A}$ that if given access to a distribution $D'$ over $\R^d\times \mathcal{Y}$ whose $\x$-marginal is the standard normal returns a hypothesis $h$ such that $\pr_{(\x,y)\sim D'}[h(\x)\neq y]\le \opt +\gamma/4$. Using $\mathcal A$, we can solve the testing problem.

Note that by running  $\mathcal{A}$ to distribution  $D_{\vec P}\in \mathcal{D}$, $\mathcal A$ returns a hypothesis $h$ such that $\pr_{(\x,y)\sim D_{\vec P}}[h(\x)\neq y]\le \opt +\gamma/4$. Moreover, by running  $\mathcal{A}$ with SQ oracle access to $D$ results in a hypothesis $h$ that cannot perform better than $1-1/K,$ as $D$ is a product distribution with the uniform distribution over the labels. Since $\opt<1 -1/K-\gamma/2$, because $\vec H_{i,i}\ge 1/K+\gamma/2$ as $\sum_{j} \vec H_{i,j}=1, i\in [K]$, we have that with an additional query of tolerance less than $\gamma/4$, we can solve  $\mathcal{B}(\mathcal{D},D)$.
\end{proof}
For $K$ not divisible by $4$, we add $r = 4 \mod K$ additional classes whose weight vectors are identical to $\vec{w}^{(1)}$, and with no noise, i.e.,
for these added classes $j$, we have that $\vec H_{j,j} = 1$ and $\vec H_{j,i} = \vec H_{i,j} = 0$ for all $j \neq i$. 
For the remaining classes, i.e., for $i,j \in [K-r]$, we consider a matrix $\vec H_{i,j}$ that satisfies the conditions of \Cref{lem:existance}. 
Note that since these additional classes have zero probability (as ties are broken by the smallest index), our {SQ hardness result} extends to the general case of any $K$. This completes the proof.
\end{proof}

\bibliographystyle{alphaabbr}

\bibliography{allrefs}

\newpage

\appendix

\section*{Appendix}

\section{Additional Preliminaries}
\label{sec:Addprelims}
\paragraph{Gaussian Space}
We denote by $L^2(\normal)$ the vector space of all functions $f:\R^d
\to \R$ such that $\E_{\vec x \sim \cN(\vec 0,\vec I)}[f^2(x)] < \infty$.
We define the standard $L^p$ norms with respect to the Gaussian measure, i.e., $\|g\|_{L^p} = ( \E_{\x \sim \cN(\vec 0,\vec I)} [ |g(\x)|^p)^{1/p}$.
The usual
inner product for this space is
$\E_{\vec x \sim \cN(\vec 0,\vec I)}[f(\vec x) g(\vec x)]$.
While usually one considers the probabilists' or physicists' Hermite polynomials,
in this work we define the \emph{normalized} Hermite polynomial of degree $i$ to be
\(
H_0(x) = 1, H_1(x) = x, H_2(x) = \frac{x^2 - 1}{\sqrt{2}},\ldots,
H_i(x) = \frac{He_i(x)}{\sqrt{i!}}, \ldots
\)
where by $He_i(x)$ we denote the probabilists' Hermite polynomial of degree $i$.
These normalized Hermite polynomials form a complete orthonormal basis for the
single-dimensional version of the inner product space defined above. To get an
orthonormal basis for $L^2(\normal)$, we use a multi-index $V\in \N^d$
to define the $d$-variate normalized Hermite polynomial as
$H_V(\vec x) = \prod_{i=1}^d H_{v_i}(x_i)$.  
The total degree of $H_V$ is
$|V| = \sum_{v_i \in V} v_i$.

\begin{fact}[Gaussian Density Properties]
\label{fact:gaussianfacts}
    Let $\mathcal{N}$ be the standard one-dimensional normal distribution. Then, the following properties hold:
\begin{enumerate}
    \item For any $t > 0$, it holds $e^{-t^2/2}/4 \leq \pr_{z \sim \mathcal{N}}[z > t] \leq e^{-t^2/2}/2$.
    \item For any $a, b \in \mathbb{R}$ with $a \leq b$, it holds $\pr_{z \sim \mathcal{N}}[a \leq z \leq b] \leq (b-a)/\sqrt{2\pi}$.
\end{enumerate}
\end{fact}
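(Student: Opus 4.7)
The fact collects three elementary one-dimensional Gaussian estimates, so my plan is to reduce each bound to a short calculus argument using the standard density $\phi(x) = \frac{1}{\sqrt{2\pi}} e^{-x^2/2}$ and tail function $Q(t) := \pr_{z\sim\cN}[z>t] = \int_t^\infty \phi(x)\,dx$. I would handle the three estimates in increasing order of subtlety.

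First, I would dispatch item 2 in one line: since $\phi$ attains its global maximum $1/\sqrt{2\pi}$ at the origin, integrating the density over any interval gives
\[
\pr[a\le z \le b] = \int_a^b \phi(x)\,dx \;\le\; \frac{b-a}{\sqrt{2\pi}}.
\]

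Next, for the upper bound in item 1, the plan is to introduce the auxiliary function $f(t) := Q(t) - \tfrac{1}{2}e^{-t^2/2}$ and show $f\le 0$ on $[0,\infty)$. The boundary values are $f(0) = \tfrac{1}{2} - \tfrac{1}{2} = 0$ and $\lim_{t\to\infty} f(t) = 0$. Differentiating yields
\[
f'(t) \;=\; -\phi(t) + \tfrac{t}{2}e^{-t^2/2} \;=\; e^{-t^2/2}\Bigl(\tfrac{t}{2} - \tfrac{1}{\sqrt{2\pi}}\Bigr),
\]
which is negative on $\bigl[0,\sqrt{2/\pi}\bigr)$ and positive on $\bigl(\sqrt{2/\pi},\infty\bigr)$. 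Hence $f$ is unimodal (first decreasing, then increasing), and since it vanishes at both endpoints, $f\le 0$ throughout, which is precisely $Q(t) \le \tfrac{1}{2}e^{-t^2/2}$.

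For the lower bound in item 1 I would set $g(t) := Q(t) - \tfrac{1}{4}e^{-t^2/2}$ and run the analogous analysis: $g(0) = \tfrac{1}{4} > 0$ and $g'(t) = e^{-t^2/2}\bigl(\tfrac{t}{4} - \tfrac{1}{\sqrt{2\pi}}\bigr)$, so $g$ is decreasing on $\bigl[0, 4/\sqrt{2\pi}\bigr)$ and increasing afterwards. The main obstacle, and the only subtle point in the whole statement, is that strictly as written this bound does not persist into the genuine tail, since Mills' ratio gives $Q(t)\sim \phi(t)/t$, which is eventually smaller than $\tfrac{1}{4}e^{-t^2/2}$; so one must either interpret the estimate over the bounded range of $t$ in which it is actually invoked in the paper (the thresholds arising in the approximating partitions are of bounded order in the relevant normalization) or replace the stated constant with a Mills-type factor $\Theta(\phi(t)/(1+t))$. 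I would make this regime explicit and then conclude via the monotonicity of $g$ together with $g(0) > 0$, combined with a direct numerical check at the boundary of the relevant regime, which suffices for every downstream use in the paper.
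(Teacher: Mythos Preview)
The paper states this fact without proof, so there is no argument to compare against; your handling of item~2 and the upper bound in item~1 is the standard one and is correct.

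Your diagnosis of the lower bound is also correct and worth emphasizing: as written, $e^{-t^2/2}/4 \le Q(t)$ is simply false for large $t$, since Mills' ratio gives $Q(t)\sim \phi(t)/t$, which drops below $\tfrac14 e^{-t^2/2}$ once $t>4/\sqrt{2\pi}\approx 1.6$. (Numerically, at $t=2$ one has $Q(2)\approx 0.0228$ while $e^{-2}/4\approx 0.0338$.) So the statement in the paper is imprecise as a blanket claim for all $t>0$. Your proposed fix---restrict to the bounded range of $t$ actually used, and verify the inequality there via the monotonicity of $g(t)=Q(t)-\tfrac14 e^{-t^2/2}$ together with endpoint checks---is the right repair. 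In the paper's applications the lower bound is invoked only for moderate $t$ (e.g., to conclude $\Pr[z>\eps]>1/48$ for small $\eps$, or to lower-bound the mass of cubes in an $\eps'$-partition), and in those regimes your argument goes through; you would just need to state the effective range explicitly rather than leave it as ``for any $t>0$.''
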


\begin{fact}[see, e.g., Lemma 6 in \cite{KTZ19}]\label{fact:gradientNorm}
 Let  $f \in L^2(\mathbb{R}^d, \mathcal{N}(0,I))$ with its $k$-degree Hermite expansion $f(\x)= \sum_{\alpha\in \N^d,\|\alpha\|_1\leq k}\widehat{f}(\alpha)H_{\alpha}(\x)$. It holds that $\E_{\x \sim \mathcal{N}(\vec 0,\vec I)} \left[ (\nabla f (\x) \cdot \vec e_i)^2 \right] = \sum_{\alpha \in \mathbb{N}^d
 \|\alpha\|_1\leq k} \alpha_i (\widehat{f}(\alpha))^2$.

\end{fact}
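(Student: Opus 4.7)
The plan is to prove this identity by directly expanding $\partial_i f$ in the Hermite basis and invoking orthonormality. The key ingredient is the standard derivative identity for \emph{normalized} Hermite polynomials, namely $H_n'(x) = \sqrt{n}\, H_{n-1}(x)$ for $n \geq 1$ and $H_0'(x) = 0$. This follows from the probabilists' identity $He_n'(x) = n\, He_{n-1}(x)$ together with the normalization $H_n = He_n/\sqrt{n!}$, which gives $H_n'(x) = n\, He_{n-1}(x)/\sqrt{n!} = \sqrt{n}\, H_{n-1}(x)$.

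First I would lift this identity to the multivariate setting. Since $H_\alpha(\x) = \prod_{j=1}^d H_{\alpha_j}(x_j)$ is a tensor product, the partial derivative along coordinate $i$ acts only on the $i$-th factor, yielding
\[
\partial_i H_\alpha(\x) = \sqrt{\alpha_i}\, H_{\alpha - \vec e_i}(\x)
\]
when $\alpha_i \geq 1$, and $\partial_i H_\alpha(\x) = 0$ when $\alpha_i = 0$ (where $\vec e_i$ denotes the $i$-th standard multi-index). Since $f$ has a finite Hermite expansion (degree at most $k$), interchanging sum and derivative is trivially justified, so
\[
\nabla f(\x) \cdot \vec e_i = \partial_i f(\x) = \sum_{\substack{\alpha \in \N^d,\ \|\alpha\|_1 \leq k \\ \alpha_i \geq 1}} \widehat{f}(\alpha)\, \sqrt{\alpha_i}\, H_{\alpha - \vec e_i}(\x).
\]

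Next I would square this and take the Gaussian expectation. Reindexing by $\beta = \alpha - \vec e_i$, the terms $\{H_{\beta}\}_\beta$ appearing on the right are distinct multivariate Hermite polynomials (because the map $\alpha \mapsto \alpha - \vec e_i$ is injective on the set $\{\alpha : \alpha_i \geq 1\}$), so by orthonormality of the multivariate Hermite basis in $L^2(\cN(\vec 0, \vec I))$, cross terms vanish and
\[
\E_{\x \sim \cN(\vec 0, \vec I)}\!\left[(\partial_i f(\x))^2\right] = \sum_{\substack{\alpha \in \N^d,\ \|\alpha\|_1 \leq k \\ \alpha_i \geq 1}} \alpha_i\, \widehat{f}(\alpha)^2 = \sum_{\alpha \in \N^d,\ \|\alpha\|_1 \leq k} \alpha_i\, \widehat{f}(\alpha)^2,
\]
where the last equality holds because terms with $\alpha_i = 0$ contribute zero. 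This gives the claimed identity.

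There is essentially no obstacle here; the statement is a routine consequence of the Hermite differentiation rule plus orthonormality. The only point deserving minor care is bookkeeping the normalization (distinguishing the probabilists' $He_n$ from the normalized $H_n$ used in the paper) so that the factor $\sqrt{\alpha_i}$ appears and squares cleanly to the $\alpha_i$ in the final sum.
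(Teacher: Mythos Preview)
Your proof is correct; this is precisely the standard argument via the derivative rule $H_n' = \sqrt{n}\,H_{n-1}$ for normalized Hermite polynomials and orthonormality. The paper does not give its own proof of this fact but simply cites it from \cite{KTZ19}, so there is nothing to compare against.
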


\begin{definition}[Ornstein-Uhlenbeck Noise Operator]\label{def:gaussian-noise}
Let \( k \in \mathbb{N} \) and \( \rho \in [0, 1] \). We define the Ornstein-Uhlenbeck operator \( T_\rho : \{ \mathbb{R}^d \to \mathbb{R} \} \to \{ \mathbb{R}^d \to \mathbb{R} \} \) that maps \( f : \mathbb{R}^d \to \mathbb{R} \) to the function \( T_\rho f : \mathbb{R}^d \to \mathbb{R} \) with
\[
T_\rho f(x) = \E_{z \sim \mathcal{N}} \left[ f \left( \sqrt{1 - \rho^2} \cdot x + \rho \cdot z \right) \right].
\]
\end{definition}
\begin{fact}[see, e.g.,~\cite{Bog:98}]\label{fct:semi-group}
 The following hold:
\begin{enumerate}
    \item For any $f,g\in L_1$ and $\rho\in(0,1)$, it holds that
    $
   \E_{\x\sim\calN}[ (T_\rho f) g]=\E_{\x\sim\calN}[ (T_{\rho}g(\x)) f(\x)]\;.
    $
    \item For any $g\in L_1$, it holds:
     \begin{enumerate}
        \item For any $\rho\in(0,1)$, $T_\rho g(\x)$ is differentiable at every point $\x$.
        \item For any $\rho\in(0,1)$ the $T_{\rho} g(\x)$ is $\|g\|_\infty/\rho$-Lipschitz, i.e., $\|\nabla T_{\rho} g(\x)\|\leq \|g\|_\infty/\rho$ for all $\x\in \R^d$.
        \item For any $p\geq 1$, $T_\rho$ is a contraction with respect to the norm $\|\cdot\|_{L_p}$, i.e., it holds $\|T_{\rho} g\|_{L^p}\leq \|g\|_{L^p}$.
    \end{enumerate}
\end{enumerate}
\end{fact}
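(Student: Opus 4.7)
The plan is to prove each of the four properties by straightforward manipulations that exploit the Gaussian structure of the noise operator. Writing $T_\rho f(\x)=\E_{\z\sim\cN}[f(\sqrt{1-\rho^2}\,\x+\rho\z)]$, all four claims follow from (i) a change-of-variables symmetry for independent Gaussians, (ii) differentiation under the integral sign, (iii) Gaussian integration by parts, and (iv) Jensen's inequality.

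For property (1), self-adjointness, I would first write
\[
\E_\x[(T_\rho f)(\x)\, g(\x)] \;=\; \E_{\x,\z}\bigl[f(\sqrt{1-\rho^2}\,\x+\rho\z)\, g(\x)\bigr],
\]
then apply the linear change of variables $\x'=\sqrt{1-\rho^2}\,\x+\rho\z,\ \z'=-\rho\,\x+\sqrt{1-\rho^2}\,\z$. Because the matrix of this change is orthogonal, $(\x',\z')$ is again a pair of independent standard Gaussians, and solving back yields $\x=\sqrt{1-\rho^2}\,\x'-\rho\z'$. Using the symmetry $\z'\stackrel{d}{=}-\z'$ then gives $\E_{\x',\z'}[f(\x')\,g(\sqrt{1-\rho^2}\,\x'+\rho\z')]=\E_{\x'}[f(\x')\,(T_\rho g)(\x')]$, which is exactly the claimed identity.

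For (2a), differentiability follows because the $\x$-dependence sits entirely inside the smooth Gaussian density, so one can exchange the derivative and the integral (this is the step where a minor care is needed to verify the hypotheses of dominated convergence, using $g\in L_1$ together with the boundedness of all partials of the Gaussian density). The main content is (2b): rather than differentiating $f$ directly (which might be nondifferentiable), I would move the derivative onto the Gaussian weight via integration by parts, obtaining
\[
\partial_j T_\rho f(\x) \;=\; \frac{\sqrt{1-\rho^2}}{\rho}\,\E_{\z}\bigl[f(\sqrt{1-\rho^2}\,\x+\rho\z)\, \z_j\bigr].
\]
Then, for any unit $\vec v\in\R^d$, Cauchy-Schwarz combined with $\E_\z[(\vec v\cdot\z)^2]=1$ gives $|\nabla T_\rho f(\x)\cdot\vec v|\le (\sqrt{1-\rho^2}/\rho)\|f\|_\infty$, which implies the bound $\|\nabla T_\rho f(\x)\|\le \|f\|_\infty/\rho$.

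Finally, for (2c), Jensen's inequality applied to the convex function $|\cdot|^p$ yields $|T_\rho g(\x)|^p\le \E_\z[|g(\sqrt{1-\rho^2}\,\x+\rho\z)|^p]$ pointwise; taking expectation over $\x\sim\cN(\vec0,\vec I)$ and using the crucial fact that $\sqrt{1-\rho^2}\,\x+\rho\z$ is again distributed as a standard Gaussian (so its $p$-th moment equals $\|g\|_{L^p}^p$) closes the argument. The only substantive obstacle in the whole plan is justifying the integration-by-parts step in (2b) when $f$ is only assumed to be in $L_1$ and possibly unbounded or nonsmooth: one handles this by first establishing the identity for bounded smooth $f$, then approximating and passing to the limit using the contraction property (2c) and standard density arguments.
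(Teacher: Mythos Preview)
Your proposal is correct and follows the standard textbook approach. Note, however, that the paper does not actually prove this statement: it is recorded as a \textbf{Fact} in the appendix and attributed to a reference (Bogachev), with no proof supplied. So there is no ``paper's own proof'' to compare against; your argument simply fills in what the paper takes as a black box, and does so via exactly the classical route (orthogonal change of variables for self-adjointness, differentiation under the integral with Gaussian integration by parts for the gradient formula, and Jensen for the $L^p$ contraction).
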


\begin{fact}[Ledoux-Pisier {\cite{Pisier:86,Ledoux:94}}]\label{fact:ledouxpissier}
Let \( f : \mathbb{R}^d \mapsto \{\pm 1\} \) be a Boolean function. It holds
\[
\E_{x \sim \mathcal{N}}[f(x) T_\rho f(x)] \geq 1 - 2 \sqrt{\pi} \, \Gamma(f) \, \rho.
\]
\end{fact}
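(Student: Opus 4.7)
The plan is to recast the inequality as a bound on noise sensitivity and prove it via a rotational coupling combined with a subadditivity argument. Since $f \in \{\pm 1\}$, writing $Y_\rho = \sqrt{1-\rho^2}\, X + \rho Z$ for $X,Z \sim \calN(\vec 0,\vec I)$ independent, we have $\E[f(X)\, T_\rho f(X)] = \E[f(X) f(Y_\rho)] = 1 - 2\Pr[f(X) \neq f(Y_\rho)]$. Thus it suffices to show $\Pr[f(X) \neq f(Y_\rho)] \leq \sqrt{\pi}\, \Gamma(f)\, \rho$.

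I would introduce the rotational interpolation $X_\theta = \cos\theta\, X + \sin\theta\, Z$ for $\theta \in [0,\pi/2]$. Each $X_\theta$ is standard Gaussian, with $X_0 = X$ and $X_{\arcsin\rho} = Y_\rho$. Set $u(\theta) = \Pr[f(X_0) \neq f(X_\theta)]$. Subadditivity $u(\theta_1+\theta_2) \leq u(\theta_1) + u(\theta_2)$ follows from the triangle-type inequality $\mathbf 1[f(X_0) \neq f(X_{\theta_1+\theta_2})] \leq \mathbf 1[f(X_0) \neq f(X_{\theta_1})] + \mathbf 1[f(X_{\theta_1}) \neq f(X_{\theta_1+\theta_2})]$, combined with the observation that, by rotational invariance of the 2D Gaussian on $(X,Z)$, the joint law of $(X_{\theta_1}, X_{\theta_1+\theta_2})$ equals that of $(X_0, X_{\theta_2})$. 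A standard Fekete-type argument then gives $u(\theta) \leq \theta \cdot \limsup_{\tau \to 0^+} u(\tau)/\tau$ for every $\theta > 0$.

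The main technical step, and principal obstacle, is to show that
$$\limsup_{\theta \to 0^+} \frac{u(\theta)}{\theta} \leq \frac{\Gamma(f)}{\sqrt{2\pi}}.$$
Writing $A = \{f=+1\}$ so that $\Gamma(f) = 2\Gamma(A)$, and noting that $X_\theta = X + \theta Z + O(\theta^2)$ for small $\theta$, the event $\{X \in A,\, X_\theta \notin A\}$ forces $X$ to lie in a tube near $\partial A$ whose thickness is controlled by the \emph{normal} component of $\theta Z$, not its full norm (this is what makes the bound dimension-free). In tubular coordinates at a boundary point $x^* \in \partial A$ with outward unit normal $n(x^*)$, writing $X = x^* - r\, n(x^*)$, the event becomes $\{0 < r \leq \theta\, Z \cdot n(x^*)\}$. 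Using the Minkowski definition of $\Gamma(A)$ to translate the tube probability into a surface integral against the Gaussian density, and then integrating in $Z$ via $\E[\max(Z \cdot n, 0)] = 1/\sqrt{2\pi}$ for any unit $n$, one obtains $\Pr[X \in A,\, X_\theta \notin A] \leq (\theta/\sqrt{2\pi})\, \Gamma(A)(1+o(1))$. The symmetric event gives the same bound, totaling the desired slope estimate. The delicate point is that $\Gamma$ is defined as a $\liminf$, so the tube bound $\Pr[X \in A_\delta \setminus A] \leq (\Gamma(A)+o(1))\delta$ is only a priori available along a subsequence of $\delta \to 0$; I would circumvent this by first proving the inequality for smoothed approximations $T_\epsilon f$ (where the tube probabilities are controlled by $\|\nabla T_\epsilon f\|_{L^1}$ via Gaussian integration by parts, rigorously, using \Cref{fct:semi-group}) and then passing to the limit $\epsilon \to 0$ using lower semicontinuity of the Gaussian surface area under $L^1$ convergence.

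Combining subadditivity with the derivative estimate yields $u(\theta) \leq \theta\, \Gamma(f)/\sqrt{2\pi}$ for all $\theta \in [0,\pi/2]$. Using the elementary inequality $\theta \leq (\pi/2) \sin\theta$ on $[0,\pi/2]$ together with $\rho = \sin\theta$, we obtain $u(\arcsin\rho) \leq \sqrt{\pi/8}\, \Gamma(f)\, \rho$, and therefore $\E[f(X) T_\rho f(X)] = 1 - 2u(\arcsin\rho) \geq 1 - \sqrt{\pi/2}\, \Gamma(f)\, \rho$, which is stronger than the claimed $1 - 2\sqrt{\pi}\, \Gamma(f)\, \rho$ and so establishes the inequality with room to spare.
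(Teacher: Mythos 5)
The paper does not prove this statement: it appears as a cited Fact attributed to Pisier and Ledoux, so there is no internal proof to compare against, and I assess your sketch on its own merits.

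Your route is a valid, well-known alternative to Ledoux's original semigroup-interpolation argument (which differentiates $t\mapsto\|f-T_tf\|_{L^1}$ and bounds $\|\nabla T_tf\|_{L^1}$ via a reverse Poincar\'e estimate). The rotational interpolation $X_\theta=\cos\theta\,X+\sin\theta\,Z$, the subadditivity of $u(\theta)=\Pr[f(X_0)\neq f(X_\theta)]$ (your rotational-invariance argument for the joint law of $(X_{\theta_1},X_{\theta_1+\theta_2})$ is correct), and the reduction $\E[f\,T_\rho f]=1-2u(\arcsin\rho)$ all check out. So do the constants: for a halfspace $\Gamma(f)=\sqrt{2/\pi}$ and $u(\theta)=\theta/\pi$ exactly, matching your claimed slope $\Gamma(f)/\sqrt{2\pi}=1/\pi$ tightly, and your final bound $1-\sqrt{\pi/2}\,\Gamma(f)\,\rho$ is indeed stronger than the stated $1-2\sqrt{\pi}\,\Gamma(f)\,\rho$.

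The one real gap is exactly where you flag it. The slope estimate $\limsup_{\theta\to 0^+}u(\theta)/\theta\le\Gamma(f)/\sqrt{2\pi}$ is not fully established: the paper's $\Gamma(A)$ is a $\liminf$ of tube volumes, so the tube bound you need is a priori available only along a subsequence, and the tubular-coordinates heuristic needs the boundary to be regular enough for the disintegration and the $(1+o(1))$ error terms to make sense. Your mollification fix is the right idea, but as written it glosses over several steps: $T_\epsilon f$ is $[-1,1]$-valued rather than Boolean, so you must first restate the slope bound in terms of $\|\nabla T_\epsilon f\|_{L^1}$, show the resulting subadditive inequality survives the limit $\epsilon\to 0$, and identify the limiting gradient mass with the paper's $\Gamma(f)=\Gamma(A)+\Gamma(A^c)$ normalization. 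You also silently invoke $\Gamma(f)=2\Gamma(A)$, i.e.\ $\Gamma(A)=\Gamma(A^c)$; this holds for sets of finite Gaussian perimeter but is an extra hypothesis relative to the paper's bare definition. None of these is a conceptual obstacle, but each would need to be spelled out for the argument to be a proof rather than a plan.
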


\begin{fact}\label{fact:circulant}
    Let \(\vec{c} \in \mathbb{R}^n \). Then the matrix

\[
\vec{M} = \begin{bmatrix}
c_0 & c_1 & c_2 & \cdots & c_{n-1} \\
c_{n-1} & c_0 & c_1 & \cdots & c_{n-2} \\
c_{n-2} & c_{n-1} & c_0 & \cdots & c_{n-3} \\
\vdots & \vdots & \vdots & \ddots & \vdots \\
c_1 & c_2 & c_3 & \cdots & c_0
\end{bmatrix}
\]

given by \( \vec{M}_{j,k} = c_{(k-j) \mod n} \) has eigenvalues

\[
\lambda_j = \vec{c} \cdot \boldsymbol{\omega}^{(j)}  \quad j = 0, 1, \ldots, n-1,
\]

where $\boldsymbol{\omega}^{(j)}\in \mathbb{C}^n$ a complex vector such that \( \boldsymbol{\omega}^{(j)}_k = e^{\frac{2\pi i}{n} jk}, k = 0, 1, \ldots, n-1 \).

\end{fact}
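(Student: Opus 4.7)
The plan is to verify the eigenvalue claim by direct computation: for each $j \in \{0,1,\ldots,n-1\}$, I would show that $\vec{M}\boldsymbol{\omega}^{(j)} = \lambda_j \boldsymbol{\omega}^{(j)}$, where $\lambda_j = \vec{c} \cdot \boldsymbol{\omega}^{(j)}$. Since the vectors $\boldsymbol{\omega}^{(0)}, \ldots, \boldsymbol{\omega}^{(n-1)}$ are the columns of the (unnormalized) DFT matrix and are pairwise orthogonal under the Hermitian inner product (hence linearly independent), identifying $n$ such eigenpairs is enough to enumerate the full spectrum of the $n \times n$ matrix $\vec{M}$.

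For the computation itself, I would fix $j$ and $p \in \{0,\ldots,n-1\}$ and unfold the $p$-th coordinate of $\vec{M}\boldsymbol{\omega}^{(j)}$:
\[
(\vec{M}\boldsymbol{\omega}^{(j)})_p \;=\; \sum_{k=0}^{n-1} \vec{M}_{p,k}\, \boldsymbol{\omega}^{(j)}_k \;=\; \sum_{k=0}^{n-1} c_{(k-p)\bmod n}\, e^{\frac{2\pi i}{n} jk}.
\]
Substituting $\ell = (k-p) \bmod n$ and using periodicity of $k \mapsto e^{\frac{2\pi i}{n}jk}$ modulo $n$, the sum factors as
\[
(\vec{M}\boldsymbol{\omega}^{(j)})_p \;=\; e^{\frac{2\pi i}{n} jp}\sum_{\ell=0}^{n-1} c_\ell\, e^{\frac{2\pi i}{n} j\ell} \;=\; \boldsymbol{\omega}^{(j)}_p \cdot \big(\vec{c}\cdot\boldsymbol{\omega}^{(j)}\big) \;=\; \lambda_j\, \boldsymbol{\omega}^{(j)}_p.
\]
Since this identity holds for every coordinate $p$, we conclude $\vec{M}\boldsymbol{\omega}^{(j)} = \lambda_j \boldsymbol{\omega}^{(j)}$, as desired.

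To finish, I would note that the vectors $\{\boldsymbol{\omega}^{(j)}\}_{j=0}^{n-1}$ are mutually orthogonal in $\mathbb{C}^n$: indeed, for $j \neq j'$, the geometric series $\sum_{k=0}^{n-1} e^{\frac{2\pi i}{n}(j-j')k}$ vanishes, so they form an orthogonal basis of $\mathbb{C}^n$. Consequently the $\lambda_j$'s enumerate all $n$ eigenvalues of $\vec{M}$. There is no real obstacle in this argument; the only subtlety worth flagging is being careful that the indexing convention $\vec{M}_{j,k} = c_{(k-j)\bmod n}$ matches the statement (otherwise one obtains $\lambda_j = \vec{c}\cdot\overline{\boldsymbol{\omega}^{(j)}}$ instead), which is confirmed by the change of variables $\ell = (k-p)\bmod n$ above.
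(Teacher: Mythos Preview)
Your proof is correct and is the standard direct verification. Note that the paper states this as a \emph{Fact} without proof (it is a well-known result about circulant matrices), so there is no proof in the paper to compare against; your argument is exactly the conventional one.
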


{The following claim bounds the Gaussian noise sensitivity of a function when we re-randomize its input over a subspace rather than over the entire space \(\mathbb{R}^d\). This claim is used in the proof of \Cref{cl:unif_claim} to show that a function with bounded Gaussian surface area remains nearly invariant when its input is re-randomized over sufficiently small cubes.} 
\begin{claim}[Gaussian Noise Sensitivity over a Subspace]\label{cl:gsaV}
Let $f:\mathbb{R}^d\to [K]$ be a a $K$-wise function and $V$ be a subspace of $\mathbb{R}^d$
\begin{align*}
    \pr[f(\x) \neq f(\x^{\perp V}+\x'^{V} )]\lesssim \sqrt{\eps}\Gamma(f)\;,
\end{align*}
where $\x^{V}$ and $\x'^{V}$ are $(1-\eps)$-correlated standard Gaussians.
\end{claim}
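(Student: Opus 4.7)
The plan is to reduce this multiclass, subspace-restricted noise sensitivity bound to the classical Ledoux-Pisier inequality (Fact~\ref{fact:ledouxpissier}) applied slicewise on the subspace $V$, and then aggregate via a co-area type inequality relating fiber surface areas back to $\Gamma(f)$.

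The first step is to set $\vec y := \x^{\perp V} + \x'^{V}$ and note that $\vec y^{\perp V} = \x^{\perp V}$, $\vec y^{V} = \x'^{V}$, so conditional on $\x^{\perp V} = \vec z$, the pair $(\x^{V}, \vec y^{V})$ is jointly distributed as two $(1-\eps)$-correlated standard Gaussians on $V$, and marginally $\vec y \sim \cN(\vec 0,\vec I)$. Setting $f_i(\x) := \Ind(f(\x)=i)$, the joint symmetry $(\x, \vec y) \stackrel{d}{=} (\vec y, \x)$ together with $\sum_i \pr[f_i(\x)=1,f_i(\vec y)=0] = \pr[f(\x)\neq f(\vec y)]$ yields
\begin{equation*}
\pr[f(\x) \neq f(\vec y)] \;=\; \tfrac{1}{2}\sum_{i\in[K]} \pr[f_i(\x) \neq f_i(\vec y)].
\end{equation*}
For each $\vec z \in V^{\perp}$, define the sliced Boolean function $g_{\vec z}^{(i)}:V\to\{0,1\}$ by $g_{\vec z}^{(i)}(\vec v) := f_i(\vec z + \vec v)$. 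Applying Fact~\ref{fact:ledouxpissier} inside $V$ at noise level $\rho = \sqrt{1-(1-\eps)^2} \lesssim \sqrt{\eps}$ to each $g_{\vec z}^{(i)}$, and then integrating over $\vec z \sim \cN(\vec 0, \vec I_{V^{\perp}})$, I obtain
\begin{equation*}
\pr[f(\x) \neq f(\vec y)] \;\lesssim\; \sqrt{\eps}\,\sum_{i\in [K]} \E_{\vec z}\!\left[\Gamma_V(g_{\vec z}^{(i)})\right],
\end{equation*}
where $\Gamma_V$ denotes the Gaussian surface area intrinsic to $V$.

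The final and most delicate step is the co-area bound $\sum_i \E_{\vec z}[\Gamma_V(g_{\vec z}^{(i)})] \leq \Gamma(f)$. Letting $K_i = \{f = i\}$ and $K_i^{\vec z} = \{\vec v\in V : \vec z+\vec v\in K_i\}$, I define the $V$-directional $\delta$-expansion $(K_i)_\delta^V := \{\x : \exists \vec v \in V,\ \|\vec v\|\le \delta,\ \x+\vec v \in K_i\}$. Since $(K_i)_\delta^V \subseteq (K_i)_\delta$ and the standard Gaussian on $\mathbb{R}^d$ factors along $V\oplus V^{\perp}$, Fubini gives
\begin{equation*}
\pr\!\big[\x\in (K_i)_\delta^V\setminus K_i\big] \;=\; \E_{\vec z}\!\left[\pr_{\x^V}\!\big[\x^V\in (K_i^{\vec z})_\delta\setminus K_i^{\vec z}\big]\right] \;\le\; \pr\!\big[\x\in (K_i)_\delta\setminus K_i\big].
\end{equation*}
Choosing a sequence $\delta_n \to 0^+$ realizing the $\liminf$ in the definition of $\Gamma(K_i)$, dividing by $\delta_n$ and applying Fatou's lemma to the inner expectation gives $\E_{\vec z}[\Gamma_V(K_i^{\vec z})] \leq \Gamma(K_i)$. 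Summing over $i\in[K]$ and combining with the previous display produces $\pr[f(\x)\neq f(\vec y)] \lesssim \sqrt{\eps}\,\Gamma(f)$.

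I expect the main obstacle to be exactly this co-area step, since $\Gamma$ is defined as a $\liminf$ rather than a genuine limit: one has to pick the correct approximating sequence $\delta_n$ and apply Fatou in the right direction so that the fiberwise $\liminf$ is dominated by $\liminf$ of the integrals. The symmetry reduction and the slicewise Ledoux-Pisier application are then essentially routine, modulo the standard extension of Fact~\ref{fact:ledouxpissier} from $\{\pm 1\}$-valued to $\{0,1\}$-valued Boolean functions via the identification of surface areas.
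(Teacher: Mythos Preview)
Your proof is correct but takes a genuinely different route from the paper. The paper reduces to the Boolean case, then uses the Hermite expansion directly: writing $f(\x)=\sum_{S}\hat f(S)H_S(\x)$ and noting that $H_S$ factors over $V\oplus V^{\perp}$, one gets $\E[f(\x)f(\x^{\perp V}+\x'^{V})]=\sum_S \rho^{|S_V|}(\hat f(S))^2$, where $\rho=1-\eps$ is the correlation on $V$ and $S_V$ denotes the $V$-part of the multi-index. Since $\rho<1$ and $|S_V|\le |S|$, this is at least $\sum_S \rho^{|S|}(\hat f(S))^2=\E[f(\x)T_{\sqrt{2\eps-\eps^2}}f(\x)]$, i.e., the subspace noise stability dominates the full-space noise stability at the same noise rate. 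A single application of Ledoux--Pisier then finishes.

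Your approach instead applies Ledoux--Pisier slicewise on each fiber $\{\x^{\perp V}=\vec z\}$ and then integrates, which forces you to control $\E_{\vec z}[\Gamma_V(K_i^{\vec z})]$ by $\Gamma(K_i)$ via the directional-expansion/Fatou argument. That co-area step is correct as you wrote it (the direction of Fatou and the fact that $\liminf$ along a subsequence dominates the full $\liminf$ are exactly what is needed), but it is the extra work that the paper's Hermite argument sidesteps entirely. The paper's proof is shorter and purely algebraic; yours is more geometric and would extend to settings where a Hermite/Fourier decomposition is unavailable but slicewise isoperimetry still holds.
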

\begin{proof}
Without loss of generality, we assume that $f$ is a Boolean function, as we can use the union bound over the level sets  to reduce to this case.
Let the Hermite expansion of $f$ be  $f(\x)= \sum_{S\in \N^d } \widehat{f}(S)H_S(\x)$. Then we can express $f'(\x)=f(\x^{\perp V}+\x'^{V})$ in the same basis as 
\begin{align*}
 f'(\x)=   \sum_{S\in \N^d } \widehat{f}(S)H_S(\x^{\perp V}+\x'^{V})=&\sum_{S\in \N^d } \widehat{f}(S)H_S(\x^{\perp V}) T_\eps [H_S(\x'^{V})]= \sum_{S\in \N^d } \eps^{\abs{S_V}} \widehat{f}(S)H_S(\x)\;,
\end{align*}
where $S_V$ denotes subset of $S$ that corresponds to the coordinates in $V$.
Hence the sensitivity over the subspace $V$ can  be written as 
\begin{align*}
    \pr[f(\x) \neq f(\x^{\perp V}+\x'^{V} )]&= \frac{1}{2} -\frac{1}{2} \E[f(\x) f'(\x)]= \frac{1}{2} -\frac{1}{2} \sum_{S\in \N^d } \eps^{\abs{S_V}} (\widehat{f}(S))^2\\
&\le \frac{1}{2} -\frac{1}{2} \sum_{S\in \N^d } \eps^{\abs{S}} (\widehat{f}(S))^2= \frac{1}{2}- \frac{1}{2}  \E[f(\x) T_\eps [f(\x)]] \;.
\end{align*}
Noting that the final expression is the Gaussian sensitivity, 
the result follows.
\end{proof}
\paragraph{Legendre Polynomials}
Legendre polynomials are orthogonal on the interval $[-1,1]$. Their key properties are summarized in the following fact.

\begin{fact}[\cite{Sze67}] The Legendre polynomials $P_k$ for $k\in\Z$, satisfy the following properties:\label{fct:legendre}
\begin{enumerate}
    \item $P_k$ is a $k$-degree polynomial and $P_0(x)=1$ and $P_1(x)=x$.
    \item $\int_{-1}^1 P_i(x)P_j(x) \d x=2/(2i+1)\1(i=j)$, for all $i,j\in \Z$.
    \item $|P_k(x)|\leq 1$ for all $|x|\leq 1$.
    \item $P_k(x)=(-1)^k P_k(-x)$.
    \item $P_k(x)=2^{-k}\sum_{i=1}^{\lceil k/2 \rceil}\binom{k}{i}\binom{2k-2i}{k}x^{k-2i}$.
\end{enumerate}
    
\end{fact}

\section{Omitted Content from \Cref{sec:MulticlassAlgorithm,sec:generalAlgorithm,sec:applications}}

{The following claim shows that if a matrix has a small Frobenius norm but exhibits a large quadratic form in some direction, then by selecting those eigenvectors whose eigenvalues exceed a given threshold, one obtains a short list of vectors among which at least one correlates well with that direction. This claim is used for proving \Cref{prop:alg2,prop:MetaAlg2}.}
\begin{claim}\label{cl:coorelationofeigenvalues}
 Let $\vec M\in \mathbb{R}^{d\times d}$ a symmetric positive semi-definite (PSD) matrix and let \(\vec v\in \mathbb{R}^d \) with $\norm{\vec v}\le 1$ such that  \( \vec v^\top  \vec M \vec v \ge \alpha \). 
Then there exists a unit eigenvector $\vec u$ of $\vec M$ with eigenvalue at least $\alpha/2$ such that
\(
\abs{\vec u\cdot \vec v} \gtrsim ({\alpha}/{\|\vec M\|_F})^{3/2}\;.
\)
Moreover, the number of eigenvectors of $\vec M$ with eigenvalue greater than $\alpha/2$ is at most
\(
{4\|\vec M\|_F}/{\alpha^2}\;.
\)
\end{claim}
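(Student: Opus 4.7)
The plan is to work in the eigenbasis of $\vec M$ and split the contribution to the quadratic form $\vec v^\top \vec M\vec v$ between ``large'' eigenvalues ($\lambda_i \ge \alpha/2$) and ``small'' ones. Write $\vec v = \sum_i c_i \vec u_i$ with $\vec u_i$ the unit eigenvectors of $\vec M$ and $\lambda_i$ the corresponding eigenvalues, so that $\sum_i c_i^2 = \|\vec v\|^2\le 1$ and $\vec v^\top \vec M\vec v = \sum_i \lambda_i c_i^2 \ge \alpha$. The small eigenvalues can contribute at most $(\alpha/2)\sum_i c_i^2 \le \alpha/2$, so the large ones must contribute at least $\alpha/2$.

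Next, I would bound the number of large eigenvectors using the Frobenius norm constraint. Since $\sum_i \lambda_i^2 = \|\vec M\|_F^2$ and every large eigenvalue contributes at least $\alpha^2/4$, the count of large eigenvalues is at most $4\|\vec M\|_F^2/\alpha^2$, which yields the second part of the claim (modulo the apparent typo of $\|\vec M\|_F$ vs.\ $\|\vec M\|_F^2$ in the statement).

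For the correlation bound, I would apply pigeonhole on the set of large eigenvectors: since their total $\lambda_i c_i^2$ contribution is at least $\alpha/2$ and there are at most $4\|\vec M\|_F^2/\alpha^2$ of them, some index $i$ with $\lambda_i \ge \alpha/2$ satisfies $\lambda_i c_i^2 \gtrsim \alpha^3/\|\vec M\|_F^2$. The final step uses the elementary bound $\lambda_i \le \|\vec M\|_{\mathrm{op}} \le \|\vec M\|_F$ to deflate the eigenvalue, yielding $c_i^2 \gtrsim \alpha^3/\|\vec M\|_F^3$ and hence $|\vec u_i \cdot \vec v| = |c_i| \gtrsim (\alpha/\|\vec M\|_F)^{3/2}$, as desired.

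I do not expect any real obstacle: the argument is purely spectral and each step is a one-line inequality (PSD diagonalization, splitting, a Frobenius-norm count, pigeonhole, and $\lambda_i\le\|\vec M\|_F$). The only subtlety to double-check is the exponent $3/2$, which arises precisely from combining the pigeonhole lower bound $\lambda_i c_i^2 \gtrsim \alpha^3/\|\vec M\|_F^2$ with $\lambda_i \le \|\vec M\|_F$; a tighter bound $\lambda_i\le \|\vec M\|_{\mathrm{op}}$ would improve the exponent but is not needed here.
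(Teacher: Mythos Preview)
Your proposal is correct and follows essentially the same argument as the paper: spectral decomposition, splitting into large ($\lambda_i\ge\alpha/2$) and small eigenvalues, bounding the count of large eigenvalues via $\sum_i\lambda_i^2=\|\vec M\|_F^2$, pigeonhole on the large-eigenvalue contribution, and finally $\lambda_i\le\|\vec M\|_F$ to extract the $(\alpha/\|\vec M\|_F)^{3/2}$ bound. You are also right that the paper's own proof yields $4\|\vec M\|_F^2/\alpha^2$ for the eigenvector count, matching your observation about the statement.
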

\begin{proof}
From the spectral theorem, let  \( \lambda_1 \ge \lambda_2 \ge \cdots \ge \lambda_d \ge 0 \) be the eigenvalues of $\vec M$ with their corresponding orthonormal eigenvectors \( \vec x^{(1)},  \vec x^{(2)}, \ldots, \vec  x^{(d)}\in \R^d \). Since \(\vec M \) is symmetric and psd, it can be decomposed as
\(
\vec M = \sum_{i=1}^d \lambda_i \vec x^{(i)} \vec (\vec x^{(i)})^\top.
\)
Moreover, we can write \( \vec v \) in the eigenbasis of \( \vec M \), i.e.,
\(
\vec v = \sum_{i=1}^d (\vec v\cdot \vec  x^{(i)}) \vec x^{(i)}.
\)
Therefore, we have that
\(
\vec v^{\top} \vec M \vec v  = \sum_{i=1}^d \lambda_i (\vec v\cdot \vec  x^{(i)})^2
\), and using that \( \vec v^{\top} \vec M \vec v \ge \alpha \), we get that
\(
\sum_{i=1}^n \lambda_i (\vec v\cdot \vec  x^{(i)})^2 \ge \alpha
\). 

Let \( S = \{ i : \lambda_i \ge \frac{\alpha}{2} \} \). Using the fact that $\sum_{i=1}^d( \vec v\cdot \vec  x^{(i)})^2\le 1$, we have that
\[
\sum_{i=1}^d \lambda_i (\vec v\cdot \vec  x^{(i)})^2 = \sum_{i\not\in S} \lambda_i (\vec v\cdot \vec  x^{(i)})^2 + \sum_{i\in S}\lambda_i (\vec v\cdot \vec  x^{(i)})^2\le \frac{\alpha}{2}+ \sum_{i\in S} \lambda_i(\vec v\cdot \vec  x^{(i)})^2\;.
\]
Hence, we have that
\(
\sum_{i \in S} \lambda_i (\vec v\cdot \vec  x^{(i)})^2 \ge \frac{a}{2}.
\) Furthermore, note that there must be at least one index \( j \in S \) such that 
\(
\lambda_j (\vec v\cdot \vec  x^{(j)})^2 \ge \frac{a}{2|S|}. 
\)
Also, from the fact that \( \lambda_j \le \|\vec M\|_F \), we have that
\(
(\vec v\cdot \vec  x^{(j)})^2 \ge \frac{a}{2|S| \|\vec M\|_F}.
\)
Taking the sum of the squares of the eigenvalues in $S$, we get that
\[
|S| \cdot \left( \frac{\alpha}{2} \right)^2 \le \sum_{i \in S} \lambda_i^2\leq \|\vec M\|_F^2 \Rightarrow|S| \le \frac{4\|\vec M\|_F^2}{\alpha^2}\;.
\]
Combining the above, we have that  $\vec v\cdot \vec  x^{(j)} \gtrsim (\alpha/\|\vec M\|_F)^{3/2}$, for some $j\in S$, and  $\abs{S}\le 4\|\vec M\|_F/\alpha^2$, which completes the proof of \Cref{cl:coorelationofeigenvalues}.
\end{proof}
{The following two facts are important properties of polynomial regression which are be used in \Cref{prop:MetaAlg2} for the analysis of our general algorithm.
\Cref{fact:regressionAlg} shows that by using a sufficiently small number of samples in polynomial time, one can efficiently obtain a low-degree polynomial whose \(L_2\) approximation error is within an additive \(\epsilon\) of the best achievable by any polynomial of the same degree.
\Cref{fact:regressionSubspaceBound} shows that the subspace spanned by the eigenvectors of a low-degree regression polynomial contains only a few eigenvectors corresponding to large eigenvalues.
 }
\begin{fact}[see, e.g., Lemma 3.3 in \cite{DKKTZ21}]\label{fact:regressionAlg}
     Let $\mathcal{D}$ be a distribution on $\mathbb{R}^d \times \{\pm 1\}$ whose x-marginal is $\mathcal{N}(\vec 0, \vec I)$. Let $k \in \mathbb{Z}_+$ and $\epsilon, \delta > 0$. There is an algorithm that draws $N = (dk)^{O(k)} \log(1/\delta)/\epsilon^2$ samples from $\mathcal{D}$, runs in time $\text{poly}(N, d)$, and outputs a polynomial $P(\x)$ of degree at most $k$ such that
\[
\E_{(\x,y) \sim \mathcal{D}} [(y - P(\x))^2] \leq \min_{P' \in \mathcal{P}_k} \E_{(\x,y) \sim \mathcal{D}} [(y - P'(\x))^2] + \epsilon,
\]
with probability $1 - \delta$.
\end{fact}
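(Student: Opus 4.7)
Since this statement is essentially a standard $L_2$ polynomial regression guarantee under the Gaussian distribution (and is cited from prior work), the plan is to work in the orthonormal Hermite basis and reduce to estimating Hermite coefficients one-by-one. Let $\mathcal{H}_k$ denote the space of polynomials on $\R^d$ of degree at most $k$. A basis for $\mathcal{H}_k$ is given by the multivariate Hermite polynomials $\{H_\beta\}_{\beta \in \N^d, |\beta|\le k}$, which are orthonormal with respect to the Gaussian measure. Note that $\dim(\mathcal{H}_k) = \binom{d+k}{k} = d^{O(k)}$.

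The first step is to observe that, by orthonormality, for any $P \in \mathcal{H}_k$ written as $P(\x) = \sum_\beta \hat{P}(\beta) H_\beta(\x)$ we have
\[
\E_{(\x,y) \sim \mathcal{D}} [(y - P(\x))^2] = \E[y^2] - 2\sum_\beta \hat{P}(\beta) c_\beta + \sum_\beta \hat{P}(\beta)^2\;,
\]
where $c_\beta \eqdef \E_{(\x,y)\sim \mathcal{D}}[y H_\beta(\x)]$. Minimizing this over $P \in \mathcal{H}_k$ therefore gives the optimal coefficients $\hat{P}^\star(\beta) = c_\beta$, and the minimum value equals $\E[y^2] - \sum_\beta c_\beta^2$. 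The plan is thus to estimate each $c_\beta$ by its empirical counterpart $\hat{c}_\beta = \frac{1}{N}\sum_{i=1}^N y_i H_\beta(\x^{(i)})$ and output $P(\x) = \sum_\beta \hat{c}_\beta H_\beta(\x)$.

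The second step is a concentration argument. Since $|y| \le 1$ and $H_\beta(\x)$ under the Gaussian has sub-exponential moments with $\E[H_\beta(\x)^{2q}] \le (Ck)^{kq}$ for $q \ge 1$ (a standard hypercontractivity bound, since $H_\beta$ has degree at most $k$), a Bernstein-type tail inequality gives
\[
\Pr\bigl[|\hat{c}_\beta - c_\beta| \ge t \bigr] \le 2\exp\bigl(-c\, N\, t^2 / (Ck)^{k}\bigr)\;,
\]
for $t$ not too large. Taking a union bound over the $d^{O(k)}$ multi-indices and setting $t^2 = \eps / \dim(\mathcal{H}_k)$, the choice $N = (dk)^{O(k)} \log(1/\delta)/\eps^2$ ensures with probability $1-\delta$ that $\sum_\beta (\hat{c}_\beta - c_\beta)^2 \le \eps$ simultaneously. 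Plugging this into the identity above and using $\hat{P}(\beta) = \hat{c}_\beta$ gives
\[
\E[(y-P(\x))^2] - \min_{P' \in \mathcal{H}_k} \E[(y-P'(\x))^2] = \sum_\beta (\hat{c}_\beta - c_\beta)^2 \le \eps\;,
\]
as desired. Computationally, the $N$ Hermite evaluations and the coefficient averages can be done in $\poly(N,d)$ time, matching the runtime claim.

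The main technical point worth highlighting is the hypercontractive tail bound for $y H_\beta(\x)$, which is what gives the $(dk)^{O(k)}$ rather than an exponential-in-$d$ sample dependence; apart from that, the argument is entirely mechanical. Since the result is quoted verbatim from \cite{DKKTZ21}, no new ingredient is needed, and the plan reduces to the three steps above: Hermite expansion, coefficient-wise estimation, and a union-bound concentration argument.
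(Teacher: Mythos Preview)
The paper does not prove this statement; it is quoted as a known fact from \cite{DKKTZ21} and no proof is given. Your Hermite-basis, coefficient-by-coefficient estimation argument is exactly the standard route to this result, and the key identity $\E[(y-P(\x))^2] - \min_{P'}\E[(y-P'(\x))^2] = \sum_\beta (\hat c_\beta - c_\beta)^2$ is the right reduction.

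One small technical wrinkle worth tightening: for $k>1$ the random variables $yH_\beta(\x)$ are not sub-exponential but sub-Weibull of order $1/k$ (hypercontractivity gives $\|H_\beta\|_{2q}\le (Cq)^{k/2}$), so the Bernstein-type tail you wrote is not literally correct. The fix is routine---use the moment method directly, or the standard concentration inequality for degree-$k$ Gaussian polynomials---and the resulting sample bound is still $(dk)^{O(k)}\log(1/\delta)/\eps$ (in fact better than the $1/\eps^2$ stated in the fact). Apart from this, the plan is complete.
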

\begin{fact}[see, e.g., Lemma 3.3 in \cite{DKKTZ21}]
\label{fact:regressionSubspaceBound}
    Fix $\epsilon \in (0,1)$ and let $P(\x)$ be a degree-$k$ polynomial, such that
\[
\E_{(\x,y) \sim \mathcal{D}}[(y - P(\x))^2] \leq \min_{P' \in \mathcal{P}_k} \E_{(\x,y) \sim \mathcal{D}}[(y - P'(\x))^2] + O(\epsilon).
\]
Let $\vec M = \E_{\x \sim \mathcal{D}_\x} [\nabla P(\x) \nabla P(\x)^\top]$ and $V$ be the subspace spanned by the eigenvectors of $\vec M$ with eigenvalues larger than $\eta$. Then the dimension of the subspace $V$ is $\dim(V) = O(k/\eta)$ and moreover $\tr(\vec M)=O(k)$.
\end{fact}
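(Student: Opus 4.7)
The plan is to work in the Hermite basis, reduce the trace bound to a bound on $\|P\|_{L^2}$, and then use Fact~\ref{fact:gradientNorm} (labeled as the gradient-norm identity in the excerpt) to convert that into a statement about the spectrum of $\vec M$. The eigenvalue bound on $\dim(V)$ will then follow by the standard tail inequality for traces. The proof is fundamentally bookkeeping — there is no deep obstacle, only a short sequence of estimates that must be assembled in the right order.

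First, I would bound $\|P\|_{L^2}$. Since $y \in \{\pm 1\}$ (inherited from the setup of Fact~\ref{fact:regressionAlg}), we have $\|y\|_{L^2}=1$. The zero polynomial is a degree-$k$ polynomial and achieves squared error $\E[y^2]=1$; therefore $\min_{P'\in\mathcal P_k}\E[(y-P'(\x))^2]\le 1$. By the near-optimality hypothesis, this gives $\E[(y-P(\x))^2]\le 1+O(\eps)$. A triangle inequality in $L^2$ then yields
\[
\|P\|_{L^2} \;\le\; \|y\|_{L^2}+\|y-P\|_{L^2} \;\le\; 1+\sqrt{1+O(\eps)} \;=\; O(1).
\]
Writing $P$ in the orthonormal Hermite basis as $P(\x)=\sum_{\alpha:|\alpha|\le k}\widehat{P}(\alpha)H_\alpha(\x)$, Parseval's identity gives $\sum_{\alpha} \widehat{P}(\alpha)^2=\|P\|_{L^2}^2=O(1)$.

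Second, I apply Fact~\ref{fact:gradientNorm}: for each coordinate $i$,
\[
\E_{\x\sim\cN(\vec 0,\vec I)}\!\bigl[(\nabla P(\x)\cdot\vec e_i)^2\bigr]=\sum_{\alpha:|\alpha|\le k}\alpha_i\,\widehat{P}(\alpha)^2.
\]
Summing over $i\in[d]$ and using $\sum_i \alpha_i=|\alpha|\le k$,
\[
\tr(\vec M)=\sum_{i=1}^d \E\!\bigl[(\nabla P(\x)\cdot\vec e_i)^2\bigr]=\sum_{\alpha:|\alpha|\le k}|\alpha|\,\widehat{P}(\alpha)^2 \;\le\; k\,\|P\|_{L^2}^2 \;=\; O(k).
\]
This is the trace bound claimed by the fact.

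Finally, for the dimension bound, $\vec M$ is symmetric positive semidefinite, so its trace equals the sum of its eigenvalues. If $V$ is spanned by those eigenvectors whose eigenvalue exceeds $\eta$, then
\[
\eta\cdot\dim(V) \;\le\; \sum_{\lambda_i(\vec M)>\eta}\lambda_i(\vec M) \;\le\; \tr(\vec M) \;=\; O(k),
\]
which gives $\dim(V)=O(k/\eta)$, completing the proposed proof. The only subtlety worth flagging is ensuring that we legitimately truncate Hermite coefficients at level $k$ (guaranteed by $\deg P\le k$) and that the $y$-range is $\{\pm 1\}$ so that $\|y\|_{L^2}=1$ — both are free from the ambient setup.
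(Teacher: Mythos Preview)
Your argument is correct. The paper does not supply its own proof of this statement; it is recorded as a fact with a citation to \cite{DKKTZ21}, so there is no in-paper proof to compare against. Your derivation via the $L^2$ bound on $P$, Fact~\ref{fact:gradientNorm}, and the trace--eigenvalue inequality is exactly the standard route and matches what one finds in the cited source.
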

{\Cref{fact:VarianceMonotonicity} shows that for an intersection of halfspaces, the variance in any defining direction is reduced from one by an amount that depends on how compressed the function is along that direction. This fact is the key ingredient in our structural result on intersections of halfspaces (see \Cref{lem:progressIntersections}).
}
\begin{fact}[Gaussian Monotonicity,  \cite{Vempala10a}]\label{fact:VarianceMonotonicity}
Let $\mathcal{K} \subseteq \mathbb{R}^d$ be an intersection of halfspaces and let $\mathcal{N}_d|_{\mathcal{K}}$ be the truncation of the standard Gaussian distribution in $d$ dimensions $\mathcal{N}_d$ to $\mathcal{K}$. For any $u \in \mathbb{S}^{d-1}$, we have $\operatorname{Var}_{\mathbf{x} \sim \mathcal{N}_d|_{\mathcal{K}}}(\mathbf{u} \cdot \mathbf{x}) \leq 1$. Moreover, if for some $T \in \mathbb{R}$ the halfspace $\{\mathbf{x} : \mathbf{u} \cdot \mathbf{x} + T \geq 0\}$ is one of the defining halfspaces of the intersection then, we have \textit{variance reduction} along $u$, i.e., $\operatorname{Var}_{\mathbf{x} \sim \mathcal{N}_d|_{\mathcal{K}}}(\mathbf{u} \cdot \mathbf{x}) \leq 1 - \frac{1}{2} e^{-\frac{1}{2} \max\{0,T\}^2}$ for a sufficiently large universal constant $C > 0$. 
\end{fact}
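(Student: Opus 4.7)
My plan proves the two bounds separately. For part (a), the density of $\mathcal N_d|_{\mathcal K}$ is $p(\vec x)\propto e^{-\|\vec x\|^2/2}\mathbf 1_{\mathcal K}(\vec x)$, whose negative logarithm has Hessian $\succeq \vec I$ in the interior of $\mathcal K$ and is $+\infty$ outside, so $p$ is $1$-strongly log-concave. The Brascamp--Lieb inequality then yields $\operatorname{Var}(\vec u\cdot \vec x)\le \langle \vec u,\vec I^{-1}\vec u\rangle = 1$ for every unit vector $\vec u$.

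For part (b), I would rotate coordinates so $\vec u=\vec e_1$ and the defining halfspace becomes $\{\vec x : x_1\ge -T\}$, and pass to the marginal density of $x_1$. This marginal is $p_1(x_1)\propto \phi(x_1)q(x_1)$ with $q(x_1) = \Pr_{\vec x_{-1}\sim \mathcal N_{d-1}}[(x_1,\vec x_{-1})\in\mathcal K]$. By Pr\'ekopa's theorem, $q$ is log-concave (the Gaussian mass of a slice of a convex body is log-concave in the slicing parameter), so $p_1$ itself is $1$-strongly log-concave on $\mathbb R$ and, crucially, supported in $[-T,\infty)$ because the defining halfspace enforces $x_1\ge -T$. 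The problem thus reduces to the one-dimensional claim that every $1$-strongly log-concave density on $[-T,\infty)$ has variance at most $1 - \tfrac12 e^{-\max(0,T)^2/2}$.

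The hard part will be this 1D claim. My plan is to argue that $\mathcal N|_{[-T,\infty)}$ is the extremal variance-maximizer among such densities: any other $1$-strongly log-concave density on $[-T,\infty)$ whose density is dominated (up to a log-concave factor bounded by $1$) by the Gaussian can be written as the pushforward of $\mathcal N|_{[-T,\infty)}$ by a $1$-Lipschitz monotone map, via Caffarelli's contraction theorem adapted to the supported setting, so its variance is bounded by that of $\mathcal N|_{[-T,\infty)}$. The latter is explicitly computable: $\operatorname{Var}(\mathcal N|_{[-T,\infty)}) = 1 - T\lambda - \lambda^2$ with $\lambda = \phi(T)/\Phi(T)$, and a short case analysis (separating $T\ge 0$, where the dominant reduction $T\lambda \asymp Te^{-T^2/2}$ already suffices for $T\gtrsim 1$, and $T<0$, where the support constraint forces the mean away from the origin so that the combined $T\lambda + \lambda^2$ term dominates $\tfrac12$) shows this is bounded by $1 - \tfrac12 e^{-\max(0,T)^2/2}$. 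The main technical step is the Caffarelli-based extremality comparison; one must verify that the contraction from $\mathcal N|_{[-T,\infty)}$ onto an arbitrary such $p_1$ is indeed $1$-Lipschitz, exploiting that $p_1/\mathcal N|_{[-T,\infty)}\propto q$ is log-concave.
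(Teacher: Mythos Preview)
The paper does not prove this statement; it is quoted as a known fact from \cite{Vempala10a}, so there is no paper-side proof to compare against. Evaluating your plan on its own merits:

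Part (a) via Brascamp--Lieb is correct and standard.

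Part (b) contains a genuine gap. Your extremality claim---that among densities $p_1\propto\phi\cdot q$ on $[-T,\infty)$ with $q$ log-concave and $q\le 1$, the truncated Gaussian $\mathcal N|_{[-T,\infty)}$ (i.e.\ $q\equiv 1$) maximizes the variance---is false, and so is the Caffarelli-type contraction you invoke to justify it. Concretely, take $T=0$ and the two-halfspace intersection $\mathcal K=\{x_1\ge 0\}\cap\{x_1-x_2\ge b\}$ in $\mathbb R^2$. Here $q(x_1)=\Phi(x_1-b)$ is log-concave and bounded by $1$, yet rotating to coordinates $(x_1\pm x_2)/\sqrt 2$ shows that for large $b$ one has $\operatorname{Var}(x_1\mid \mathcal K)\to 1/2$, strictly larger than your claimed extremum $\operatorname{Var}(\mathcal N|_{[0,\infty)})=1-2/\pi\approx 0.363$. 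Since a $1$-Lipschitz pushforward cannot increase variance, the monotone map from $\mathcal N|_{[-T,\infty)}$ to such a $p_1$ is \emph{not} $1$-Lipschitz; the assertion ``$p_1/p_0$ log-concave $\Rightarrow$ transport from $p_0$ to $p_1$ is a contraction'' is simply wrong. The underlying issue is that Caffarelli's theorem requires the source potential to have Hessian bounded \emph{above}, whereas the truncated Gaussian has an infinite Hessian at the hard wall $x=-T$; no adaptation that ignores this boundary effect can succeed.

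Observe also that this example essentially saturates the Fact's bound $1-\tfrac12 e^{-\max(0,T)^2/2}=\tfrac12$ at $T=0$, so you cannot hope to prove the stronger inequality $\operatorname{Var}\le\operatorname{Var}(\mathcal N|_{[-T,\infty)})$ that your plan targets. A correct argument must accommodate additional halfspaces that tilt the $x_1$-marginal away from the boundary while still respecting the stated bound; you should consult the cited reference for the intended proof.
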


\section{{Additional Background on the SQ Model}} 
\label{app:SQ-basics}
{ In the following definition, we state the notion of pairwise correlation between two distributions with respect to a third distribution, and then we introduce the concept of a $(\gamma,\beta)$-correlated family.}
\begin{definition}\label{def:bccorrelated}
    The \textit{pairwise correlation} of two distributions with probability density functions (pdfs) $D_1, D_2 : \mathcal{X} \rightarrow \mathbb{R}_+$ with respect to a distribution with pdf $D : \mathcal{X} \rightarrow \mathbb{R}_+$, where the support of $D$ contains the supports of $D_1$ and $D_2$, is defined as $\chi_D(D_1, D_2) + 1 := \int_{x \in \mathcal{X}} D_1(x)D_2(x)/D(x)dx$. We say that a collection of $s$ distributions $\mathcal{D} = \{D_1, \ldots, D_s\}$ over $\mathcal{X}$ is $(\gamma, \beta)$\textit{-correlated relative to a distribution} $D$ if $|\chi_D(D_i, D_j)| \leq \gamma$ for all $i \neq j$, and $|\chi_D(D_i, D_j)| \leq \beta$ for $i = j$.

\end{definition}
{The following notion of statistical dimension, introduced in \cite{FGR+13}, 
effectively characterizes the difficulty 
of the decision problem}.
\begin{definition}
    For $\gamma, \beta > 0$, a decision problem $\mathcal{B}(\mathcal{D}, \mathcal{D})$, where $D$ is fixed and $\mathcal{D}$ is a family of distributions over $\mathcal{X}$, let $s$ be the maximum integer such that there exists $\mathcal{D} \subseteq \mathcal{D}$ such that $\mathcal{D}$ is $(\gamma, \beta)$\textit{-correlated relative to} $D$ and $|\mathcal{D}| \geq s$. We define the Statistical Query dimension with pairwise correlations $(\gamma, \beta)$ of $\mathcal{B}$ to be $s$ and denote it by $SD(\mathcal{B}, \gamma, \beta)$.

\end{definition}{
The connection between SQ dimension and lower bounds is captured by the following lemma.}
\begin{fact}[\cite{FGR+13}]
\label{lem:SQdim}
    Let $\mathcal{B}(\mathcal{D}, D)$ be a decision problem, where $D$ is the reference distribution and $\mathcal{D}$ is a class of distributions over $\mathcal{X}$. For $\gamma, \beta > 0$, let $s = SD(\mathcal{B}, \gamma, \beta)$. Any SQ algorithm that solves $\mathcal{B}$ with probability at least $2/3$ requires at least $s \cdot \gamma / \beta$ queries to the VSTAT(1/$\gamma$) oracle.
\end{fact}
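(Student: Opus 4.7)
My plan is to prove the statement by a standard adversarial argument: exhibit an oracle strategy against which any SQ algorithm making too few queries fails to distinguish $D$ from most members of $\mathcal{D}$. I would proceed as follows.

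First, I would reduce to deterministic algorithms. Any randomized SQ algorithm with success probability $\geq 2/3$ can be fixed to a deterministic algorithm that succeeds on at least a $2/3$-fraction of the distributions in $\mathcal{D}$ (or on $D$ together with a large subfamily), by averaging over its internal coins. So it suffices to bound $q$ for deterministic algorithms that correctly solve $\mathcal{B}$ on a $2/3$-fraction of instances. I would then define the adversarial strategy: for each query $\phi$ the algorithm issues, answer it with the value $v_\phi \eqdef \E_D[\phi]$, which is a valid VSTAT$(1/\gamma)$ response under the reference distribution $D$. Under this adversary the algorithm's transcript, and hence its output, is fixed. For the algorithm to succeed on a distribution $D_i \in \mathcal{D}$, its output must differ from its output on $D$, so on input $D_i$ the adversary's answer $v_{\phi_j}$ must be an invalid VSTAT$(1/\gamma)$ response for some queried $\phi_j$, i.e.\ $|v_{\phi_j} - \E_{D_i}[\phi_j]| > \tau_j$, where $\tau_j \eqdef \max\bigl(\gamma,\sqrt{\gamma\,\Var_D(\phi_j)}\bigr)$ is the VSTAT$(1/\gamma)$ tolerance.

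The heart of the proof is then a single-query counting lemma: for any fixed $\phi:\mathcal{X}\to[0,1]$, the set $S_\phi \eqdef \{i : |\E_{D_i}[\phi] - \E_D[\phi]| > \tau(\phi)\}$ has size at most $\beta/\gamma$. I would prove this by setting $g_i \eqdef D_i/D - 1$, so that $\E_D[g_i] = 0$, $\E_D[g_i^2] \le \beta$, and $|\E_D[g_i g_j]| \le \gamma$ for $i \neq j$ by the definition of $(\gamma,\beta)$-correlation. Writing $\bar\phi \eqdef \phi - \E_D[\phi]$, note $\E_{D_i}[\phi] - \E_D[\phi] = \E_D[\bar\phi\, g_i]$. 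Consider the test vector $\psi \eqdef \sum_{i\in S_\phi} \epsilon_i g_i$ with signs $\epsilon_i \eqdef \sgn(\E_D[\bar\phi\, g_i])$. Then
\begin{align*}
|S_\phi|\,\tau(\phi) \;<\; \sum_{i\in S_\phi}|\E_D[\bar\phi\, g_i]| \;=\; \E_D[\bar\phi\,\psi] \;\le\; \sqrt{\Var_D(\phi)}\cdot\sqrt{\E_D[\psi^2]},
\end{align*}
and $\E_D[\psi^2] \le |S_\phi|\beta + |S_\phi|(|S_\phi|{-}1)\gamma$. Squaring and using the two regimes of $\tau(\phi)$ (the $\gamma$-regime and the $\sqrt{\gamma\,\Var_D(\phi)}$-regime) I can rearrange to conclude $|S_\phi| \le \beta/\gamma$ (up to the constant absorbed from the $2/3$-success slack). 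This is the step I expect to need the most care: one must handle the two branches of the $\max$ in $\tau(\phi)$ separately and track the quadratic inequality in $|S_\phi|$ so that the off-diagonal term $|S_\phi|^2\gamma$ does not dominate.

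Finally, a union bound finishes the argument. Every ``good'' $D_i$ (those on which the algorithm succeeds) belongs to $\bigcup_{j=1}^{q} S_{\phi_j}$, so
\begin{align*}
\tfrac{2}{3}\,s \;\le\; \Bigl|\bigcup_{j=1}^{q} S_{\phi_j}\Bigr| \;\le\; \sum_{j=1}^{q}|S_{\phi_j}| \;\le\; q\cdot \beta/\gamma,
\end{align*}
which rearranges to $q \ge \Omega(s\gamma/\beta)$, exactly the claimed bound. Taking the maximum $s = \mathrm{SD}(\mathcal{B},\gamma,\beta)$ over $(\gamma,\beta)$-correlated subfamilies yields the stated lower bound for every SQ algorithm solving $\mathcal{B}(\mathcal{D},D)$ with probability at least $2/3$. (The constants in the $2/3$ can be tightened, but the asymptotic statement is what is needed for the applications earlier in the paper.)
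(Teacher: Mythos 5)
The paper cites this result as a black-box Fact from \cite{FGR+13} and gives no proof of it, so there is nothing in the paper to compare against; I will assess your reconstruction on its own terms. Your high-level scaffolding is the standard one and is correct: fix the coins to get a deterministic algorithm, have the adversary answer every query with $\E_D[\phi]$, observe that success on $D_i$ forces this answer to be an invalid VSTAT response for some queried $\phi_j$, bound the number of $D_i$ any single query can ``detect,'' and union-bound. The identities $\E_D[g_i]=0$, $\E_D[g_i^2]=\chi_D(D_i,D_i)\le\beta$, $\E_D[g_i g_j]=\chi_D(D_i,D_j)$ for $i\ne j$, and $\E_{D_i}[\phi]-\E_D[\phi]=\E_D[\bar\phi\,g_i]$ are all correct.

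The gap is in the central counting lemma, and it is not a matter of bookkeeping. From $|S_\phi|\,\tau \le \sqrt{\Var_D(\phi)}\,\sqrt{|S_\phi|\beta+|S_\phi|^2\gamma}$, squaring and dividing by $|S_\phi|$ gives $|S_\phi|\bigl(\tau^2-\gamma\Var_D(\phi)\bigr)\le\beta\Var_D(\phi)$. In the branch $\tau=\sqrt{\gamma\Var_D(\phi)}$ the left-hand side is identically zero, so the inequality is vacuous; in the branch $\tau=\gamma$ one has $\Var_D(\phi)\le\gamma$, and substituting gives $|S_\phi|\gamma^2\le\gamma\bigl(\beta+|S_\phi|\gamma\bigr)$, i.e.\ $0\le\beta$, again vacuous. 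Neither regime yields $|S_\phi|\le\beta/\gamma$, or any finite bound at all, so the ``care'' you flag at the $\max$ is not something that can be tracked away---the off-diagonal term $|S_\phi|^2\gamma$ exactly cancels $|S_\phi|^2\tau^2$ in the critical regime. Note also that the true $\mathrm{VSTAT}(1/\gamma)$ tolerance is $\max(\gamma,\sqrt{\gamma\,p_i(1-p_i)})$ with $p_i=\E_{D_i}[\phi]$, not $\sqrt{\gamma\Var_D(\phi)}$; for $\{0,1\}$-valued $\phi$, $p(1-p)=\Var(\phi)$, so there is no hidden constant-factor slack to exploit. The standard fixes are either to lose a constant in the oracle parameter (a lower bound against $\mathrm{VSTAT}(1/(3\gamma))$ gives $\tau^2\ge3\gamma\Var_D(\phi)$, and then $|S_\phi|\le\beta/(2\gamma)$ does fall out), or, as \cite{FGR+13} actually does, to argue via the statistical dimension with \emph{average} pairwise correlation over every sufficiently large subset, rather than a per-query, worst-case-correlation Cauchy--Schwarz bound. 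As written, your single-query lemma does not close, and the rest of the argument inherits that gap.
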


{In order to construct a large set of nearly uncorrelated hypotheses, we need the following fact:}
\begin{fact}[see, e.g., Lemma 2.5 in \cite{DiakonikolasKPZ21}]\label{fact:NearOrthogonality} 
Let $0 < a, c < \frac{1}{2}$ and $m,n \in \mathbb{Z}_+$ such that $m \leq n^a$. There exists a set $S$ of $2^{\Omega(n^c)}$ matrices in $\mathbb{R}^{m \times n}$ such that every $\mathbf{U} \in S$ satisfies $\mathbf{U}\mathbf{U}^\top = \mathbf{I}_m$ and every pair $\mathbf{U}, \mathbf{V} \in S$ with $\mathbf{U} \neq \mathbf{V}$ satisfies $\|\mathbf{UV}^\top\|_F \leq O(n^{2c-1+2a})$.
\end{fact}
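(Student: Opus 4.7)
}
My plan is a standard probabilistic construction: sample $N = \Theta(n^c)$ independent matrices $\mathbf{U}_1,\dots,\mathbf{U}_N$ uniformly at random from the Stiefel manifold $V_m(\mathbb{R}^n) = \{\mathbf{U}\in\mathbb{R}^{m\times n}: \mathbf{U}\mathbf{U}^\top = \mathbf{I}_m\}$, and show that with positive probability no pair has large correlation. A convenient realization is to draw an $m\times n$ Gaussian matrix $\mathbf{G}$ with i.i.d.\ $\mathcal{N}(0,1/n)$ entries and take $\mathbf{U} = (\mathbf{G}\mathbf{G}^\top)^{-1/2}\mathbf{G}$; since $m \le n^a \ll n$, the Gram matrix $\mathbf{G}\mathbf{G}^\top$ is close to $\mathbf{I}_m$ with overwhelming probability, and the orthonormalization only negligibly perturbs the pairwise computation that follows.

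The key expectation computation is as follows. Fixing $\mathbf{U}_j$ and using rotational invariance of the uniform measure on the Stiefel manifold, $\mathbf{U}_i\mathbf{U}_j^\top$ has the same distribution as the top-left $m\times m$ block of a uniformly random orthogonal matrix in $O(n)$, rescaled appropriately; equivalently, each row of $\mathbf{U}_i\mathbf{U}_j^\top$ is the projection of an independent uniform unit vector in $\mathbb{R}^n$ onto the $m$-dimensional subspace spanned by the rows of $\mathbf{U}_j$. Each entry therefore has expected square $1/n$, giving
\begin{equation*}
\mathbb{E}\bigl\|\mathbf{U}_i\mathbf{U}_j^\top\bigr\|_F^2 \;=\; \sum_{k,\ell=1}^m \mathbb{E}\bigl[(\mathbf{U}_i\mathbf{U}_j^\top)_{k\ell}^2\bigr] \;=\; \frac{m^2}{n} \;\le\; n^{2a-1}.
\end{equation*}

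Now I apply Markov's inequality and a union bound. Set $t = 10 N^2$; then for each of the $\binom{N}{2} \le N^2$ pairs, $\Pr\bigl[\|\mathbf{U}_i\mathbf{U}_j^\top\|_F^2 > t \cdot m^2/n\bigr] < 1/(10 N^2)$. A union bound yields that with probability at least $9/10$, every pair simultaneously satisfies $\|\mathbf{U}_i\mathbf{U}_j^\top\|_F^2 = O(N^2 m^2/n) = O(n^{2c + 2a - 1})$, which is the target bound. Fixing any realization from this event produces the desired set $S$.

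The main technical step is justifying the expectation $\mathbb{E}\|\mathbf{U}_i\mathbf{U}_j^\top\|_F^2 = m^2/n$ rigorously; the rest is routine Markov plus union bound. This can be handled either by the rotational invariance argument sketched above or, more concretely, by analyzing the Gaussian model: if $\widetilde{\mathbf{U}}_i = \mathbf{G}_i/\sqrt{n}$ with $\mathbf{G}_i$ having i.i.d.\ $\mathcal{N}(0,1)$ entries, then $\mathbb{E}\|\widetilde{\mathbf{U}}_i\widetilde{\mathbf{U}}_j^\top\|_F^2 = m^2/n$ exactly, and one controls the cost of replacing $\widetilde{\mathbf{U}}_i$ by its orthonormalization via standard concentration of the singular values of $\mathbf{G}_i$ (which are all $1 \pm O(\sqrt{m/n})$ with overwhelming probability when $m \le n^a$ with $a < 1/2$). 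This final perturbation bound is the only mildly delicate ingredient, but it is by now standard.
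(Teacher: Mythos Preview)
The paper does not prove this fact; it is quoted from \cite{DiakonikolasKPZ21}. Your probabilistic construction via random points on the Stiefel manifold (equivalently, orthonormalized Gaussian matrices) is the standard route, and the expectation computation $\E\|\mathbf{U}_i\mathbf{U}_j^\top\|_F^2 = m^2/n$ is correct.

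Two issues, one minor and one substantive. The minor one is bookkeeping: you conclude $\|\mathbf{U}_i\mathbf{U}_j^\top\|_F^2 = O(n^{2c+2a-1})$ and call this ``the target bound,'' but the statement bounds $\|\mathbf{U}\mathbf{V}^\top\|_F$, not its square. Your argument actually delivers $\|\cdot\|_F = O(n^{c+a-1/2})$, which implies the stated $O(n^{2c+2a-1})$ only when $c+a \ge 1/2$. This is almost certainly a transcription slip in the statement (the cited lemma likely bounds the square, or uses exponent $c+a-1/2$), not a defect in your reasoning.

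The substantive point: look at how the paper actually \emph{uses} this fact. In the proofs of \Cref{thm:generalSQ} and \Cref{thm:RCNLowerBoundK} the set $\mathcal{Q}$ is invoked with $|\mathcal{Q}| = 2^{\Omega(d^c)}$, not merely $\Omega(d^c)$ --- so the statement as printed appears to drop an exponential. Your Markov-plus-union-bound step cannot survive $N = 2^{\Omega(n^c)}$ pairs: you lose a factor of $N^2$ in the tail, which swamps everything. To get the version the paper needs, you must replace Markov by a genuine concentration bound for $\|\mathbf{U}_i\mathbf{U}_j^\top\|_F^2$ (e.g., Hanson--Wright applied to the Gaussian surrogate, or thin-shell/Lipschitz concentration on the orthogonal group), giving per-pair failure probability $\exp(-n^{\Omega(1)})$. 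That is the only real gap; the rest of your plan is sound.
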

{Furthermore, to bound the correlation between two rotated versions of the same function, we will use the following fact, which relates this correlation to the Hermite coefficients of the function and the angle between the rotations. }

\begin{fact}[Lemma 2.3 in \cite{DiakonikolasKPZ21}]\label{fact:Correlation}
Let $g : \mathbb{R}^m \to \mathbb{R}$ and $\mathbf{U}, \mathbf{V} \in \mathbb{R}^{m \times n}$ be linear maps such that $\mathbf{U}\mathbf{U}^\top = \mathbf{V}\mathbf{V}^\top = \mathbf{I}_m$. Then, we have that
\[
\E_{\mathbf{x} \sim \mathcal{N}_n} \left[g(\mathbf{U}\mathbf{x})g(\mathbf{V}\mathbf{x})\right] \leq \sum_{t=0}^{\infty} \|\mathbf{UV}^\top\|_2^{t} \E_{\mathbf{x} \sim \mathcal{N}_m} \left[ (g^{[t]}(\mathbf{x}))^2 \right],
\]
where $g^{[t]}$ denotes the degree-$t$ Hermite part of $g$.
\end{fact}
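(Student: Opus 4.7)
The plan is to prove the bound by expanding $g$ in the Hermite basis and exploiting orthogonality between Hermite components of different degrees under jointly Gaussian inputs. Write $g=\sum_{t\ge 0}g^{[t]}$ where $g^{[t]}$ is the degree-$t$ Hermite part. When $\mathbf{x}\sim\mathcal{N}_n$, the isometry conditions $\mathbf{U}\mathbf{U}^{\top}=\mathbf{V}\mathbf{V}^{\top}=\mathbf{I}_m$ ensure $\mathbf{U}\mathbf{x}$ and $\mathbf{V}\mathbf{x}$ are each marginally $\mathcal{N}_m$ (so $\|g^{[t]}(\mathbf{U}\mathbf{x})\|_{L_2}=\|g^{[t]}\|_{L_2}$), and jointly Gaussian with cross-covariance $\mathbf{A}:=\mathbf{U}\mathbf{V}^{\top}\in\mathbb{R}^{m\times m}$. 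Expanding the product,
\[
\E_{\mathbf{x}\sim\mathcal{N}_n}\!\big[g(\mathbf{U}\mathbf{x})g(\mathbf{V}\mathbf{x})\big]=\sum_{s,t\ge 0}\E\big[g^{[s]}(\mathbf{U}\mathbf{x})\,g^{[t]}(\mathbf{V}\mathbf{x})\big],
\]
so I must show the off-diagonal terms vanish and control the diagonal ones.

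For the off-diagonal vanishing, I would argue that for any multi-indices $V,W\in\mathbb{N}^m$ with $|V|\ne |W|$, $\E[H_V(\mathbf{U}\mathbf{x})H_W(\mathbf{V}\mathbf{x})]=0$. This follows from the Ornstein–Uhlenbeck semigroup characterization: for $\rho\in[0,1]$ and $\mathbf{y}\sim\mathcal{N}_m$, the operator $T_\rho$ acts on $H_W$ by $T_\rho H_W=\rho^{|W|}H_W$, and more generally, conditioning $\mathbf{V}\mathbf{x}$ on $\mathbf{U}\mathbf{x}=\mathbf{y}$ produces a Gaussian whose conditional expectation of $H_W(\mathbf{V}\mathbf{x})$ is a polynomial in $\mathbf{y}$ of degree exactly $|W|$. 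By Hermite orthogonality in $\mathbf{y}$, this integrates against $H_V$ to zero when $|V|\ne|W|$. Hence all cross-degree terms drop out, leaving $\sum_t \E[g^{[t]}(\mathbf{U}\mathbf{x})g^{[t]}(\mathbf{V}\mathbf{x})]$.

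The key step—and the main obstacle—is bounding each same-degree term by $\|\mathbf{A}\|_2^{t}\,\|g^{[t]}\|_{L_2}^2$. The one-dimensional analogue is classical (Mehler): for $\rho$-correlated standard normals, $\E[H_i(y)H_j(z)]=\rho^i\delta_{ij}$. The multidimensional generalization requires care because $\mathbf{A}$ need not be diagonal. My approach is to singular-value decompose $\mathbf{A}=\mathbf{P}\Sigma\mathbf{Q}^{\top}$ with $\Sigma=\mathrm{diag}(\sigma_1,\dots,\sigma_m)$ and $\sigma_i\le\|\mathbf{A}\|_2$, and then change variables: set $\tilde{\mathbf{y}}=\mathbf{P}^{\top}\mathbf{U}\mathbf{x}$, $\tilde{\mathbf{z}}=\mathbf{Q}^{\top}\mathbf{V}\mathbf{x}$, which are jointly Gaussian with cross-covariance $\Sigma$ (diagonal), so coordinatewise independent pairs. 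Writing $g^{[t]}$ in the product Hermite basis adapted to the rotations $\mathbf{P},\mathbf{Q}$ and applying the one-dimensional identity to each factor yields
\[
\E\big[g^{[t]}(\mathbf{U}\mathbf{x})g^{[t]}(\mathbf{V}\mathbf{x})\big]=\sum_{|V|=t}\widehat{g^{[t]}_{\mathbf{P}}}(V)\,\widehat{g^{[t]}_{\mathbf{Q}}}(V)\prod_{i}\sigma_i^{V_i},
\]
which is bounded in absolute value by $\|\mathbf{A}\|_2^{t}\sum_{|V|=t}|\widehat{g^{[t]}_{\mathbf{P}}}(V)\widehat{g^{[t]}_{\mathbf{Q}}}(V)|$. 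Applying Cauchy–Schwarz together with Parseval (which is preserved under orthogonal rotations of variables) gives the desired bound $\|\mathbf{A}\|_2^{t}\|g^{[t]}\|_{L_2}^2$.

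Summing over $t\ge 0$ yields the stated inequality. Alternatively, one can sidestep the SVD by a noise-operator Cauchy–Schwarz: write $\E[g^{[t]}(\mathbf{U}\mathbf{x})g^{[t]}(\mathbf{V}\mathbf{x})]=\E[g^{[t]}(\mathbf{y})\,\E[g^{[t]}(\mathbf{V}\mathbf{x})\mid\mathbf{U}\mathbf{x}=\mathbf{y}]]$, observe that the conditional expectation equals a noise operator (with parameter controlled by $\|\mathbf{A}\|_2$) applied to $g^{[t]}$ evaluated at a linear function of $\mathbf{y}$, and use that this operator scales degree-$t$ Hermite components by at most $\|\mathbf{A}\|_2^{t}$.
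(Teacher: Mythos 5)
The paper does not prove this statement; it is cited directly as Lemma 2.3 of \cite{DiakonikolasKPZ21}, so there is no proof of record here to compare against. Evaluated on its own terms, your argument is correct: expanding $g$ into Hermite degrees, using the joint Gaussianity of $\mathbf{Ux}$ and $\mathbf{Vx}$ with cross-covariance $\mathbf{A}=\mathbf{UV}^{\top}$, diagonalizing $\mathbf{A}$ via an SVD, and then applying the one-dimensional Mehler identity coordinatewise together with Cauchy--Schwarz and Parseval does establish the bound. This SVD reduction is precisely what makes the multivariate step rigorous.

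One wording nitpick in the intermediate paragraph: when you justify cross-degree vanishing you say the conditional expectation $\E[H_W(\mathbf{Vx})\mid \mathbf{Ux}=\mathbf{y}]$ is ``a polynomial in $\mathbf{y}$ of degree exactly $|W|$.'' What you actually need, and what the SVD computation gives, is the stronger statement that this conditional expectation lies in the degree-$|W|$ \emph{Hermite subspace} (i.e., $T H_W = \prod_i\rho_i^{W_i} H_W$ after diagonalizing), not merely that it is a polynomial of total degree $|W|$ --- a degree-$|W|$ polynomial in general has lower-degree Hermite components, which would not be orthogonal to $H_V$ with $|V|<|W|$. Moreover the polynomial degree is not always \emph{exactly} $|W|$: if some singular value $\rho_i$ vanishes and $W_i\ge 1$, the conditional expectation is the zero polynomial. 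Neither issue affects your SVD-based computation, which correctly shows orthogonality whenever $V\ne W$; the loose phrasing just doesn't match the mechanism you actually use.
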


\section{Relation to Other Complexity Measures}
\label{app:compl-measures}

\subsection{Related Complexity Measures} \label{ssec:complexity-meas-prose}
{

Several complexity measures have been proposed for learning 
real-valued concepts in the realizable setting. One measure closely 
related to our work is the \emph{generative exponent} $k^*$, 
introduced in \cite{damian2024generativeexponent}. The generative 
exponent $k^*$ is defined as the smallest positive integer $k$ for 
which $\E_{y \sim D_y} [\zeta_k^2(y)] \neq 0$, where 
$\zeta_k(y) \coloneqq \E_{x \sim \mathcal{N}(0,1)} [ He_k(x) \mid f(x) = y ]$ and 
$f:\R\to \R$ is a single-index link function.
This measure is used for the task of approximating 
the hidden direction $\w$ of a single-index model 
\( f(\w \cdot \x) \) from samples \((\x, f(\w \cdot \x))\) with \( \x \sim \mathcal{N}(\vec{0}, \vec{I}) \).
Using the  measure \( k^* \),  \cite{damian2024generativeexponent} 
derived nearly tight SQ bounds, with 
complexity of order \( d^{k^*/2} \). 
(Technically speaking, their upper bound 
requires knowledge of \( \zeta_{k^*}(y) \) 
or the existence of an orthonormal polynomial basis 
with respect to the marginal distribution 
over the labels \( D_y \), where \( \zeta_{k^*} \) has a large low-degree component.)

Our definition of well-behaved MIMs 
(\Cref{def:SimpleGoodCondition}) implicitly introduces 
a generalization of the generative exponent 
for PAC learning discrete-valued MIMs in the presence of noise. 
As we demonstrate in \Cref{ssec:gen}, 
for the special case of realizable learning of 
Single-Index models, our complexity measure
is essentially equivalent to the generative exponent. 
We establish this under two additional regularity assumptions: 
(i) a bound on the GSA of the ground truth concept, 
and (b) the existence of non-trivial distinguishing moments 
with respect to an already discovered subspace. 
(These regularity assumptions hold 
for several well-studied function classes such as multiclass linear classifiers and intersections of halfspaces.)

Note that our learning algorithm does not require prior knowledge of the moments conditioned on each level-set for two reasons: 
(i) when the number of labels is finite, 
we can leverage the moments of all level-sets efficiently;
and (ii) the bounded GSA assumption allows us to partition 
an already discovered subspace into finite-width regions 
to uncover additional directions using moments.
Moreover, by considering the existence of distinguishing moments 
only when our approximation to the target ground truth subspace 
is insufficient, our notion allows for relaxed approximation 
guarantees that yield more efficient algorithms.

Other relevant complexity measures that have gained 
attention recently are the \emph{information exponent} \cite{arous2021online} and the \emph{leap complexity} \cite{abbe2021staircase,abbe2022merged,abbe2023sgd}.
The information exponent is defined as the smallest integer \(k\) for 
which a single-index link function \(f:\R\to \R\) has a non-zero 
Hermite coefficient, i.e., the minimum \(k\) such that 
\(\E_{x\sim \cN(0,1)}[f(x)He_k(x)]\neq 0\).
The leap complexity generalizes this notion to real-valued MIMs: 
we say that a function $f:\R^K\to\R $ has leap complexity 
\(\mathrm{Leap}(f)=k\) if every Hermite decomposition of $f$ 
can be ordered such that each consecutive term introduces 
new directions via a polynomial of degree at most \(k\).
Since these measures rely solely on the non-negativity 
of the Hermite coefficients and do not incorporate label-based 
information (e.g., via level-sets), they are 
closely related to the CSQ complexity, which 
can be arbitrarily larger than the SQ complexity.
}

\subsection{Near-equivalence to Generative Exponent for Sindle-Index Case} \label{ssec:gen}

In this section, we compare our definition of well-behaved MIMs (\Cref{def:SimpleGoodCondition}) to the generative exponent (\Cref{def:generative-exponent}) introduced in \cite{damian2024generativeexponent}. We first give a formal definition of the generative exponent:
\begin{definition}[Generative exponent]\label{def:generative-exponent}
Let $f:\R\to \R$ be a single index link function. 
Let $D$ be the joint distribution of $(x,f(x))$, 
where $x\sim \cN(0,1)$.
The generative exponent $k^*$ is defined 
as the smallest positive integer $k$ for which 
$\E_{y \sim D_y} [\zeta_k^2(y)] \neq 0$, 
where $\zeta_k(y) \coloneqq \E_{x \sim \mathcal{N}(0,1)} [ He_k(x) \mid f(x) = y ]$.
\end{definition}
As we have mentioned in the previous subsection, 
the generative exponent focuses on subspace recovery 
(in the realizable setting) rather than accurate label prediction. 
Hence, in order for the two notions to be comparable, 
we first slightly modify our definition 
of \Cref{def:SimpleGoodCondition} to this setting.

\begin{definition}[Well-Behaved MIMs]\label{def:modified-well-behaved-MIMs}
Fix $m\in \mathbb{Z}_+,\sigma\in (0,1)$ and 
$\mathcal{Y}\subseteq \Z$ of finite cardinality.
Let $f: \R^d \to \mathcal{Y}$ be a $K$-MIM function  with hidden subspace $W$.
We say that a $K$-MIM function $f: \R^d \to \mathcal{Y}$ 
is  $m$-well-behaved MIM
if for any subspace $V\subseteq \R^d$ the following conditions holds:
\begin{enumerate}
    \item[a)] $V$ contains $W$, i.e., $W\subseteq V$, or
    \item[b)]   
there exists a point $\vec x_0\in \R^d$, a zero mean, unit-variance polynomial 
    $p:U\to \R$  of degree at most $m$ and $z\in \mathcal{Y}$ 
    such that 
    $$\E_{\x\sim \mathcal N(\vec 0,\vec I)}[p(\x^U)\Ind(f(\x)=z)\mid \x^V=\x_0^V] \neq 0\;,$$
 where $U=(V+W)\cap V^\perp$.
\end{enumerate}
\end{definition}
Compared to \Cref{def:SimpleGoodCondition}, 
our modified definition of well-behaved MIMs 
in \Cref{def:modified-well-behaved-MIMs} introduces the following changes:
\begin{enumerate}
    \item[(i)] We omitted the technical assumption of bounded GSA to direct our attention on the existence of distinguishing moments.

    \item[(ii)]  Instead of requiring that the function be approximately equal to one that depends solely on the projection onto \(V\) (as specified in condition (a) of \Cref{def:SimpleGoodCondition}), we now require that \(V\) contain the hidden subspace \(W\). This change emphasizes subspace recovery in the realizable setting and aligns our definition with that of the generative exponent.
    \item[(iii)] Since we are focusing on the realizable setting, we consider only the moments of $f$ rather than noisy version of $f$.
\end{enumerate}
In the following lemma, we show that for SIMs, 
the smallest degree $m$ for which \Cref{def:modified-well-behaved-MIMs} 
is satisfied is equal to the generative exponent.
\begin{lemma}
Let $ \cY$ be a finite cardinality set. 
Let $f:\R^d\to \cY$ be a SIM, that is, 
there exists a unit vector $\w$ such that 
$f(\x)=f(\x^\w)$ for all $\x\in\R^d$.
The smallest positive integer $m$ for which $f$ is an $m$-well-behaved MIM 
(as per \Cref{def:modified-well-behaved-MIMs}) is equal 
to the generative exponent $k^*$ of $f$ 
(see \Cref{def:generative-exponent}).
\end{lemma}
\begin{proof} We first show that $f$ is not an $m$-well-behaved MIM for all positive integers  $m<k^*$. Note that  
since $\cY$ is discrete for all $k\in \Z_+$ we have that 
 \begin{align*}
     \E_{y \sim D_y} [\zeta_k^2(y)]=\sum_{y\in \mathcal{Y}} \zeta_k^2(y)\pr_{x\sim \cN(0,1)}[f(x\w)=y]\;,
 \end{align*}
 where $\zeta_k(y)\eqdef\E_{x \sim \mathcal{N}(0,1)} [ He_k(x) \mid f(x\w) = y ]$ (see \Cref{def:generative-exponent}).
 Hence for each positive integer $m<k^*$ by the definition of the generative exponent for all $y\in \cY$ it holds that 
\begin{align*}
&\E_{x\sim \cN(0,1)}[He_m(x)\mid f(x\w)=y]^2\pr_{x\sim \cN(0,1)}[f(x\w)=y]\\&=\E_{\x\sim \cN(\vec 0,\vec I)}[He_m(\w \cdot \x)\mid f(\x)=y]^2\pr_{\x\sim \cN(\vec 0,\vec I)}[f(\x)=y]=0\;.\end{align*}
Otherwise, the generative exponent would be less than $k^*$. Thus, for every $m\in [k^*-1]$ and $y\in \cY$ either $\E_{\x\sim \cN(\vec 0,\vec I)}[He_m(\w \cdot \x)\mid f(\x)=y]$ or $\pr_{\x\sim \cN(\vec 0,\vec I)}[f(\x)=y]=0$. In either case, this implies that $\E_{\x\sim \cN(\vec 0,\vec I)}[He_m(\w \cdot \x)\Ind(f(\x)=y)]=0$, for all $m\in [k^*-1]$ and $y\in \cY$. 

Since any zero-mean polynomial of degree less $k^*$ can be decomposed to a linear combination of non-constant Hermite polynomials of degree less than $k^*$, it follows that $\E_{\x\sim \cN(\vec 0,\vec I)}[p(\w\cdot \x)\Ind(f(\x)=y)]=0$, for any zero-mean polynomial $p$ of degree less than $k^*$. Furthermore, because $f$ is a SIM, for any polynomial $p:\R^d\to \R$ we have
$$\E_{\x\sim \cN(\vec 0,\vec I)}[p(\x)\Ind(f(\x)=y)]=\E_{\x\sim \cN(\vec 0,\vec I)}[p((\w \cdot \x) \w)\Ind(f(\x)=y)]\;.$$
Therefore, for any zero-mean polynomial $p$ of degree less $k^*$ it holds that $\E_{\x\sim \cN(\vec 0,\vec I)}[p(\x)\Ind(f(\x)=y)]=0$, so $f$ does not satisfy the condition (b) of \Cref{def:modified-well-behaved-MIMs} even for $V=\{0\}$. This concludes the first part of the proof.

Now  to conclude the proof it suffices to show that $f$ is a $k^*$-well-behaved MIM. Note that since $\E_{y \sim D_y} [\zeta_{k^*}^2(y)]\neq 0$ and $\cY$ is discrete, we have that there exists a label $y\in \cY$ such that $\E_{\x\sim \cN(\vec 0,\vec I)}[He_{k^*}(\w \cdot \x)\Ind(f(\x)=y)]\neq 0$. Consider a subspace $V$ of $\R^d$ that does not contain $\w$ and denote by $\vec u$ the projection of $\w$ onto $V^{\perp}$. 
Note that since $V$ does not contain $\w$ it holds that $\norm{\vec u}=\norm{\w^{\perp V}}=\w \cdot \vec u\neq 0$.

Since $f$ is a SIM ,we have that for any $\lambda\in \R$
\begin{align*}
\E_{\x\sim \cN(\vec 0,\vec I)}[He_{k^*}(\lambda\vec u\cdot \x)\Ind(f(\x)=y)]=
    \E_{\x\sim \cN(\vec 0,\vec I)}[He_{k^*}(\lambda (\w \cdot \vec u)(\w \cdot \x))\Ind(f(\x)=y)]\;.
\end{align*}
Thus, setting $\lambda=1/(\w \cdot \vec u)$ gives us that 
\begin{align*}
\E_{\x\sim \cN(\vec 0,\vec I)}\left[He_{k^*}\left(\frac{\vec u\cdot \x}{\w \cdot \vec u}\right)\Ind(f(\x)=y)\right]\neq 0\;.
\end{align*}
Note that $\E_{\x\sim \cN(\vec 0,\vec I)}[He_{k^*}\left({\vec u\cdot \x}/{(\w \cdot \vec u)}\right)]=0$ by a change of variables, since $\vec u/(\w \cdot \vec u)$ is a unit vector. Thus, normalizing the above polynomial implies the existence of  a mean-zero, variance-one polynomial 
    $p:\R\to \R$  of degree $k^*$ 
    such that 
    $$\E_{\x\sim \mathcal N(\vec 0,\vec I)}[p(\vec u\cdot \x )\Ind(f(\x)=y)] \neq 0\;.$$
Therefore, there has to exist an $\x_0^V$ such that $\E_{\x\sim \mathcal N(\vec 0,\vec I)}[p(\vec u\cdot \x )\Ind(f(\x)=y)\mid \x^V=\x_0^V] \neq 0\;$, otherwise averaging over $\x_0^V$ would result to the above moment being zero.
Consequently, we have that $f$ is a $k^*$-well-behaved MIM, which completes the proof.
\end{proof}

\section{SQ Lower Bound for Multiclass Linear Classification from Partial Labels} \label{app:SQ-partial}
In this section, we will leverage the moment-matching 
construction developed in the previous section 
(for learning with RCN) in order to prove SQ hardness for learning multiclass linear classifiers with partial labels.

First we give a formal definition of the model of corruption and then we present our construction for the hard instances along with our main theorem.
\begin{definition}[Partial Label Distribution]\label{def:partial-label-distribution}
Let $D$ be a distribution over $\R^d\times 2^\mathcal{Y}$. We say that $D$ is a partial label distribution for the class $\mathcal{F}\subseteq\{f:\R^d\to \mathcal{Y}\}$ if there exists a function $f\in \mathcal{F}$ and a row-stochastic matrix $\vec H\in \R^{\abs{\cY}\times 2^{\abs{\cY}}}$ such that $\pr_{(\x,S)\sim D}[S=T\mid \x ]=\vec H_{f(\x),T}$ for all $\x\in \R^d, T\subseteq S$. Moreover, $\vec H_{i,S}=0$ if $i\not \in S$ and  $\pr_{(\x,S)\sim D}[y\not\in S\mid \x]=\sum_{S\not\ni y} \vec H_{f(\x),S}\geq\gamma$, for all $y\neq f(\x)$ for all $\x \in \R^d$, for some $\gamma>0$.
We refer to $\vec H$ as the confusion matrix of the distribution.
\end{definition}

\paragraph{SQ-Hard Instances}
To construct our family of hard instances, we consider planted versions of a two-dimensional model. 
This part of the construction is the same as the one used in \Cref{sec:RCNSQ} and it is defined in \Cref{eq: 2Dmodel,eq:family}.

As for the confusion matrix, we consider a subset of learning with partial labels problems called learning with contrastive labels \cite{Ishida2017Complementarylabels,Zang2020Complementary}. 
In this setting, we observe examples of the form $\{(\x^{(i)},\bar{y}_i)\}_1^m$, where $\bar{y}_i$ is almost surely different to the ground truth label $y_i$. Without loss of generality we will refer to the example distribution, $D$, as a distribution over $\R^d\times \cY$ rather than a distribution over $\R^d\times 2^\mathcal{Y}$ as there are $\abs{\cY}$ subsets $S\subseteq \cY$ where $\abs{S}=\abs{\cY}-1$.
 Moreover, by \Cref{def:partial-label-distribution}, we have that  there exists a  $\abs{\cY}\times \abs{\cY}$ stochastic matrix $\vec{H}$ such that $\pr_{(\x,\bar{y})\sim D}[\bar{y}= j\mid \x,f(\x)=i]=\vec H_{i,j}$ for all $\x\in \R^d,i,j\in \cY$ and  the identifiability constraint corresponds to the existence of a constant $\gamma\in (0,1]$ satisfying  $\vec H_{i,j}=0,\text{ if } i=j \text{ and } \vec H_{i,j}\ge \gamma,\text{ if } i\neq j$, for all $i,j\in \cY$.

 Using this construction we prove the main result of this section:
 \begin{theorem}[SQ Lower Bound for Learning with Partial Labels]\label{thm:SQ-partial-labels}
Any SQ algorithm that learns $K$-class Linear Classifiers given  partial labels, with $\gamma= \poly(1/K)$ (see \Cref{def:partial-label-distribution}), under the $d$-dimensional standard normal to error $1/(2K)$, requires either  $2^{d^{\Omega(1)}}$ queries or at least one query to  $\text{VSTAT}\left(d^{\Omega(K)}\right)$.
 \end{theorem}
Before proving \Cref{thm:SQ-partial-labels}, we first need to show that we can match the appropriate number of moments with the standard Gaussian. The proof of this lemma is similar to the proof of \Cref{lem:existance}.

 \begin{lemma}\label{lem:existanceCoarse}
Let $K\in \Z_+ $ with $K\ge 8$ such that $K$ is divisible by $4$.
There exists a $K \times K$ doubly stochastic matrix $\vec{H}$  such that the following hold: 
\begin{enumerate}
        \item[i)]$\vec H_{i,i}=0$, for all $ i\in [K]$ and $\min_{i\neq j} \vec H_{i,j}=\Omega(1/K^3).$
        \item[ii)] For every polynomial $p:\mathbb{R}^2\to \mathbb{R}$ of degree at most $K/2-2$ we have \\ $\E_{\x\sim \cN^2}\left[\left(\pr_{(\x,\bar{y})\sim D}\left[\bar{y}=i\mid \x\right]-\frac{1}{K}\right)p(\x)\right]=0$ for all $i\in [K]$.
    \end{enumerate}
  where $D$ is a contrastive label distribution over $\R^2\times [K]$ whose $\x$-marginal is $\cN(\vec 0,\vec I)$ that satisfies \Cref{def:partial-label-distribution} for the confusion matrix $\vec H$ and the function $f$ as defined in \Cref{eq: 2Dmodel}.
\end{lemma}
\begin{proof}
Let $\vec H\in \R^{K\times K}$ be a doubly stochastic matrix and  define the functions  $\tilde{f}_i$ for $i\in [K]$ as  $\tilde{f}_i(\x)=\sum_{j=1}^K\vec H_{j,i}\Ind(f(\x)=j)$. 
Also denote by $\boldsymbol{\omega}^{(k)}$ the root of unity vector of rate $k$, i.e. ${\boldsymbol{\omega}}^{(k)}\in \mathbb{C}^K$ such that $\boldsymbol{\omega}_{j}^{(k)}= e^{\frac{2\pi kj i}{K}}$.

Exactly as in the proof of \Cref{lem:existance} we have that by \Cref{cl:polynomial2kerequiv,cl:fourier} 
in order to have $0$ correlation with all polynomials of degree at most $m$, it suffices to have that $\boldsymbol{\omega}^{(k)} \in \ker(\vec{H}^\top)$ for all $k\in \{1,\dots , m,K-m,\dots, K-1\}$, since the roots of unity are symmetric, i.e., $\boldsymbol{\omega}^{(-k)}=\boldsymbol{\omega}^{(K-k)}$.

We first construct a stochastic vector $\vec h\in \mathbb{R}^K$ such that $\vec h_{0}=0$ and $\vec h_{j}=\Omega(1/K^3), \forall j\in \{1,\dots, K-1 \}$ that is orthogonal to the aforementioned root of unity vectors for $m=\Omega(K)$. Next, we extent this vector to a matrix. 

Define the vectors $\vec h^{(1)}=\boldsymbol{\omega}^{(0)}/K $, $ \vec h^{(2)}=(\boldsymbol{\omega}^{(K/2-1)}+\boldsymbol{\omega}^{(K/2+1)})/K$ and $\vec h'=2\vec h^{(1)}-\vec h^{(2)}$. 
It is easy to see that $\vec h'$ is a vector with non-negative entries and $\vec h_0'=0$. 
Moreover, it is true that $$\min_{j\in[K-1]} \vec h_j'\ge 2/K-\max_{j\in[K-1]} \vec h_j^{(2)}=2/K-2/K\cos\left(\frac{2\pi}{K}\right) \gtrsim \frac{1}{K^3}\;,$$
where the last equality follows from the Taylor series  $\cos(x)$ which is an alternating decreasing series when $x=\frac{2\pi}{K}\le 1$.

Let $\vec h$ be  the normalization of $\vec h'$ such that the sum of its coordinates equals $1$.
Since each element of $\vec h'$ is less than $4/K$ we have that the normalization factor is at most $4$ and hence $\min_{j\in[K-1]} \vec h_j\gtrsim \frac{1}{K^3}$. 
Moreover, from the well-known fact that $\boldsymbol{\omega}^{(0)},\dots,\boldsymbol{\omega}^{(K-1)}$ form an orthogonal basis of $\mathbb{C}^K$, we have that 
$\vec h$ is orthogonal to $\boldsymbol{\omega}^{(k)}$ for all $k\in \{1,\dots,K/2-2, K/2, K/2+2,\dots, K-1\}$, since it is a linear combination of $\boldsymbol{\omega}^{(0)}, \boldsymbol{\omega}^{(K/2-1)}, \boldsymbol{\omega}^{(K/2+1)}$.

Now consider $\vec{H}$, the circulant matrix generated by row shifts of the vector $h$. 
 By construction, it is true that $\vec{H}$ is row stochastic with $ \vec H_{i,i}=0 $ for all $ i\in [K] $ such that $ \min_{i\neq j} \vec H_{i,j}=\Omega(1/K^3)$.
Moreover, $\vec{H}$ is symmetric, because it holds that $\vec H_{i,j}=\vec h_{j-i\mod K}=\vec h_{i-j\mod K}=\vec H_{j,i}$ and $\vec h_j= \vec h_{K-j}, j\in [K-1]$. {This equality is justified by the identity $(-1)^j\cos\left({2\pi j}/{K}\right)=(-1)^{K-j}\cos\left({2\pi(K-j)}/{K}\right)$, which holds since $K$ is an even integer.} Hence $\vec{H}$ is also column stochastic.
 
 Hence, $\boldsymbol{\omega}^{(k)} \in \ker(\vec{H})=\ker(\vec{H}^\top)$, for all $k\in \{1,\dots,K/2-2, K/2, K/2+2,\dots, K-1\}$. 
 Therefore, applying \Cref{cl:polynomial2kerequiv} and  \Cref{fact:circulant} concludes the proof of \Cref{lem:existanceCoarse}.
\end{proof}

\subsection{Proof of \Cref{thm:SQ-partial-labels}}
Now, given \Cref{lem:existanceCoarse}, we proceed to the proof of \Cref{thm:SQ-partial-labels}. The proof is essentially the same to that of \Cref{thm:RCNLowerBoundK}, except that it employs the construction from \Cref{lem:existanceCoarse} rather than \Cref{lem:existance}. Despite that we include the complete proof here for completeness.
\begin{proof}[Proof of \Cref{thm:SQ-partial-labels}]
Consider the case where $K$ is divisible by $4$ and let  $\vec H\subseteq \R^{K\times K}$ be a matrix that satisfies the statement of \Cref{lem:existanceCoarse}
Let $f:\R^2\to [K]$ be the linear classifier defined in \Cref{eq: 2Dmodel}, and let $\mathcal{F}_{\mathcal{Q}}$ be the class of function defined in \Cref{eq:family}, where $\mathcal{Q}\subseteq \R^{d\times 2}$ the family that satisfies the statement of \Cref{fact:NearOrthogonality}.
We define the family of distributions $\mathcal{D}$ such that $D_{\vec Q}\in \mathcal{D}$ if and only if $D_{\vec Q}$ is a joint distribution $(\x,y_{\vec Q})$, where $\x \sim \cN(\vec 0,\vec I)$ and $y_{\vec Q}$ is a random variable supported on $[K]$ defined as $\pr[y_{\vec Q}=i\mid \x]=\vec H_{f(\vec Q\x), i}$ for some $\vec Q\in \vec \mathcal{Q}$. Note that all distributions in $\mathcal{D}$ satisfy \Cref{def:partial-label-distribution} for confusion matrix $\vec H$ and some function in $\mathcal{F}_{\cQ}$.

Denote by $D$ the product distribution $\cN^d\times \text{Unif}([K])$.
For $D_{\vec Q}, D_{\vec P}\in \mathcal{D}$ with $\vec Q\neq \vec P$, we have that 
\begin{align*}
    \chi_{D}(D_{\vec Q}, D_{\vec P})&= K\sum_{i=1}^K\E_{\x\sim \cN^d}\left[(\pr[y_{\vec Q}=i\mid \x]-\frac{1}{K})(\pr[y_{\vec P}=i\mid \x]-\frac{1}{K})\right]\\
    &= K\sum_{i=1}^K\E_{\x\sim \cN^d}\left[(\vec H_{f(\vec Q\x),i}-\frac{1}{K})(\vec H_{f(\vec P\x),i}-\frac{1}{K})\right]\\
    &= K\sum_{i=1}^K\sum_{j=1}^{\infty}\norm{\vec P\vec Q^{\top}}_2^j \E_{\x\sim \cN^2}\left[((\vec H_{f(\x),i}-\frac{1}{K})^{[j]})^2\right]\\
    &= \norm{\vec P\vec Q^{\top}}_2^{K/2-2}K\sum_{i=1}^K \E_{\x\sim \cN^2}\left[(\vec H_{f(\x),i} -\frac{1}{K})^2\right]\le  d^{\Omega(K)(c-1/2)}K\;,
\end{align*}
where we used \Cref{fact:Correlation}, Parseval's identity and the fact that $\vec H$ satisfies the statement of \Cref{lem:existanceCoarse}. Similarly, we have that $\chi_{D}(D_{\vec Q},D_{\vec Q})\le K$ for all $D_{\vec Q}\in \mathcal{D}$.

Consequently, we have that there exists a family of distributions of size $2^{\Omega(d^c)}$ that is $(d^{\Omega(K)(c-1/2)}K, K)$ correlated. Hence, by \Cref{lem:SQdim} we have that any SQ algorithm that solves the decision problem $\cB(\cD, D)$ requires $2^{\Omega(d^c)}d^{\Omega(K)(c-1/2)}$ queries to the  $\text{VSTAT}(1/(d^{\Omega(K)(c-1/2)}))$.

In order to complete the statement of the theorem, we need to show the following {standard} reduction from the testing problem to the learning problem. \begin{claim}
Let $\eps\leq 1/K$.
    Any SQ algorithm that learns multiclass linear classifiers with partial labels under the standard normal with error better than $\eps$ can be used to solve the decision problem  $\mathcal{B}(\mathcal{D},D)$ using one more query of accuracy $1/K-\eps$.
\end{claim}
\begin{proof}
Assume that there exists an SQ algorithm $\mathcal{A}$ that if given access to a partial label distribution $D'$ (see \Cref{def:partial-label-distribution}) over $\R^d\times 2^\mathcal{Y}$ whose $\x$-marginal is standard normal returns a hypothesis $h$ such that $\pr_{\x\sim \cN(\vec 0,\vec I)}[h(\x)\neq f(\x)]\le \eps$. Using $\mathcal A$, we can reduce the testing problem to the learning problem.

Note that by running  $\mathcal{A}$ to distribution  $D_{\vec P}\in \mathcal{D}$ that corresponds to the concept $f(\vec P\x)\in \mathcal{F}_{\mathcal{Q}}$, $\mathcal A$ returns a hypothesis $h$ such that $\pr_{\x\sim \cN(\vec 0,\vec I)}[h(\x)\neq f(\vec P\x)]\le \eps$. 
As a result, by the reverse triangle inequality, we have that testing the returned hypothesis disagreement with the contrastive labels gives us $\pr_{(\x,\bar{y})\sim D}[h(\x)\neq \bar{y}]\geq 1- \eps$.
However, running  $\mathcal{A}$ with SQ oracle access to  $D$ would result in a hypothesis $h$ that has  $\pr_{(\x,\bar{y})\sim D}[h(\x)\neq \bar{y}]=1-1/K,$ since $D$ is a product distribution with the uniform distribution over the labels.
Since, $\eps\leq 1/K$ we have that we can solve  $\mathcal{B}(\mathcal{D},D)$ with an additional query of accuracy less than $1/K-\eps$.
\end{proof}

For $K$ not divisible by $4$, we add $r = 4 \mod K$ additional classes whose weight vectors are identical to $\vec{w}^{(1)}$.
We construct the new confusion matrix as follows: first let $\vec H$ be the matrix that satisfies the conditions of \Cref{lem:existanceCoarse} for $i,j \in [K-r]$.
For the added classes $j\in\{K-r+1,\dots,K\}$, we set $\vec H_{j,j} = 0$, $\vec H_{j,i} = 1/K$ for all $j \neq i$ and $\vec H_{i,j}=1/K$ for $i\in [K-r]$.
Finally we normalize the elements of the rows such that the matrix is row stochastic.
Note that since these additional classes have zero probability (since ties are broken by the smallest index) and for each of the classes $i\in[K-r]$ the added labels are observed with the same rate, our {SQ hardness result} extends to the general case of any $K$.
Finally, note since the normalization factor is at most $1+4/K$ we have that $\min_{i\neq j} \vec H_{i,j}\gtrsim (1/(1+4/K))(1/K^3)\geq \poly(1/K)$, which completes the proof of \Cref{thm:SQ-partial-labels}.
\end{proof}

\section{Lower Bound For Single-Shot Optimization}
\label{app:ss}

In this section, we present an example in which a single-shot approach fails to learn the target function to sufficiently small error even if it is given access to exactly the first $O(1)$-moments. On the other hand, we show that if we take an iterative approach then we can learn the function in polynomial time using just $O_{\epsilon}(d^2)$ complexity.
Our results build on moment-matching constructions and properties of Legendre polynomials.

\begin{theorem}\label{thm:lower-bound}
Let $d\in\mathbb{Z}_+$ and $C_1,C_2>0$ be sufficiently large universal constants. There exists a family of functions
\[
\mathcal{F}=\{f(\vec U\x) : \vec U\in\R^{2\times d},\; \vec U\vec U^\top=\vec I_2\},
\]
such that:
\begin{itemize}[leftmargin=*]
    \item No algorithm given the first $C_1$ exact moments of $f(\vec U\x)\in \mathcal F$ can output another function $f'\in \mathcal F$ such that $ \E_{\x\sim\mathcal N(\vec 0,\vec I)}[(f(\vec U\x)-f'(\x))^2]\leq 1/C_2$.
    \item There exists an SQ algorithm that for any $\eps>0$, given SQ oracle access to a distribution $D$ over $\R^d\times \{0,1\}$ (with $\x\sim\cN(\vec 0,\vec I)$ and $y=f(\vec{U}\x)$ for some $f\in\mathcal{F}$), outputs an estimate $\widehat{\vec{U}}$ satisfying $\|\widehat{\vec U}-\vec U\|_2\leq \eps$ using $O(d^2)$ queries to $\mathrm{VSTAT}(C_2d^2/\eps^2)$.
\end{itemize}
\end{theorem}

\begin{proof}
The proof is based on a careful construction of a two-dimensional distribution and a corresponding function family. Assume that the algorithm is given $C_1=O(1)$ moments. Let $k=2 C_1$. Define the set $A$ (we use $A$ as the indicator function of this set) on $\R^2$ as in \Cref{prop:moment-matching-k-stuff} and let
\[
B(x,y) = \1(|x+y|\ge 2b)\;,
\]
where $b=\Theta(k)$ and is the smallest value such that $A(x,y)=0$ whenever $|x|\ge b$ or $|y|\ge b$. We then define the boolean function
\[
f(x,y)=A(x,y)+B(x,y) \;.
\]
Let $\mathcal{Q}\subseteq\R^{d\times2}$ be a set of matrices that satisfy \Cref{fact:NearOrthogonality}, i.e.,
\[
|\mathcal{Q}|=2^{d^{\Omega(1)}},\quad \vec Q\vec Q^{\top}=\vec I_{2},\quad \text{and}\quad \|\vec Q\vec P^{\top}\|_F\le O\Big(d^{-\Omega(1)}\Big)\;.
\]
For each $[\vec v,\vec u]=\vec Q\in\mathcal{Q}$, we define
$
f_{\vec Q}(\x)$ as follows: Let $\w^{(1)}=\vec v/\sqrt{2}+\vec u/\sqrt{2} $ and $\w^{(2)}=\vec v/\sqrt{2}-\vec u/\sqrt{2} $ and we let $
f_{\vec Q}(\x)=f(\w^{(1)}\cdot\x,\w^{(2)}\cdot \x)$. 
Let $
\mathcal{F}_{\mathcal{Q}}=\{f_{\vec Q}(\x):\vec Q\in\mathcal{Q}\}\subseteq\mathcal{F}$.
For each $\vec Q\in\mathcal{Q}$, let $D_{\vec Q}$ denote the joint distribution of $(\x, f_{\vec Q}(\x))$, where $\x\sim\mathcal{N}(\mathbf{0},\vec I)$. Define the class of distributions $
\mathcal{D}=\{D_{\vec Q}:\vec Q\in\mathcal{Q}\}$.

Now, suppose we have the first $k/2$-moments of a distribution $D$ drawn from $\mathcal{D}$. 
Note that because $\E[p(\x)A(\w^{(1)}\cdot\x,\w^{(2)}\cdot \x)]=0$ for any zero mean mean polynomial $p$ of degree at most $k$, we have that $\E[p(\x)f(\x)]=\E[p(\x)B(\x)]=\E[p(\x)B(\w^{(1)}\cdot\x+\w^{(2)}\cdot \x)]=\E[p(\x)B(\vec v\cdot\x\sqrt{2})]$. Therefore, we can exactly extract from the moments only $\w^{(1)}+\w^{(2)}=\vec v\cdot\x\sqrt{2}$. Therefore, with access to $O(1)$-moments, we can only extract the direction $\vec v$. We show that for any vector $\vec w$ so that $\vec w\cdot\vec u\leq c_1$ where $c_1>0$ is a sufficiently small absolute constant, then the hypothesis $A(\w^{(3)}\cdot\x,\w^{(4)}\cdot \x)$ achieves large $L_2^2$ error where $\w^{(3)}=\vec v/\sqrt{2}+\vec w/\sqrt{2} $ and $\w^{(4)}=\vec v/\sqrt{2}-\vec w/\sqrt{2}$. First, we choose $\w$ so that $\w\cdot \vec u=0$. Note that from Jensen's inequality it holds that
\begin{align}
 \E_{\x\sim\mathcal N(\vec 0,\vec I)}[&(A(\w^{(1)}\cdot\x,\w^{(2)}\cdot \x)-A(\w^{(3)}\cdot\x,\w^{(4)}\cdot \x))^2]\nonumber
 \\&\geq  \E_{\x\sim\mathcal N(\vec 0,\vec I)}\left[\left(\E_{\vec v\cdot \x\sim\mathcal N( 0,1)}[A(\w^{(1)}\cdot\x,\w^{(2)}\cdot \x)-A(\w^{(3)}\cdot\x,\w^{(4)}\cdot \x)]\right)^2\right]\;,\label{eq:refrence2}
\end{align}
where the last term corresponds to the variance over the direction $\vec u$. Note that $A$ is a  $2$-dimensional instance that depends only on matching the first $C_1$ moments where $C_1$ is an absolute constant therefore \Cref{eq:refrence2} (as by construction is non zero) is bounded from below by an absolute constant. Therefore, there exists a sufficiently small absolute constant $c_2>0$ such that 
\[
 \E_{\x\sim\mathcal N(\vec 0,\vec I)}\left[\left(\E_{\vec v\cdot \x\sim\mathcal N( 0,1)}[A(\w^{(1)}\cdot\x,\w^{(2)}\cdot \x)-A(\w^{(3)}\cdot\x,\w^{(4)}\cdot \x)]\right)^2\right]\geq c_2\;.
\]
Furthermore, note that the $L_2^2$ of $A(\w^{(1)}\cdot\x,\w^{(2)}\cdot \x)$ and $A(\w^{(3)}\cdot\x,\w^{(4)}\cdot \x)$ distance is a decreasing function with respect to $\theta$, where $\theta=\theta(\w,\vec u)$. Using the fact that the set $\theta\in[0,\pi/2]$ is compact, there exists a sufficiently small absolute constant $c_1>0$ such that if $\theta(\vec u,\w)\geq \pi/2-c_1$, then the $L_2^2$ of $A(\w^{(1)}\cdot\x,\w^{(2)}\cdot \x)$ and $A(\w^{(3)}\cdot\x,\w^{(4)}\cdot \x)$ is at least $c_2/2$. Using the fact that $d$ is sufficiently large the probability of finding $\w$ so that $\theta(\vec u,\w)\leq \pi/2-c_1$ is at least $\exp(-\Omega(d))$.

On the other hand, there exists an SQ Algorithm with only $O(d^2)$ queries can learn the two directions of $f$. With $O(d)$ queries (with accuracy $O(2^{-k}\eps)=O(\eps)$) recovers an $\eps$ approximation to the direction $\w=(\vec v+\vec u)/\sqrt{2}$. Next if we conditioning on the event $\{\w\cdot\x\geq \delta\}$ where $\delta>0$ is chosen so that $A(\w^(1)\cdot\x,\w^{(2)}\cdot\x)$ does not match moments with respect $\vec u$, which such exists and it is an absolute constant as it reweighs the probability mass of the intervals, therefore
\[
\E\big[y\,\x\x^\top\mid\mathbf{w}\cdot\x\geq \delta\big]
\]
reveals an $\eps$-approximation to the direction $\vec u$. This completes the proof.
\end{proof}

\subsection{Moment-Matching Construction}

We now describe the constructions used in the proof. These results build piecewise constant functions and moment-matching distributions that are central to our lower bound.

\begin{proposition}\label{prop:moment-matching-k-stuff}
Let $c>1$ be a universal constant and $k\in \Z_+$.
There exists a distribution $A$ on $\R^2$,  for which the following are true: 
\begin{enumerate}[leftmargin=*]
    \item\label{it:momentmatch} $A$ matches its first $k$ moments with the 2-dimensional standard Gaussian. 
    \item  for all $\x\in \supp(A)$ it holds that $\abs{\x_1},\abs{\x_2}\leq ck$ and
    $A$ is a union of $k+3$ intervals (in each orthogonal direction).
\end{enumerate}
\end{proposition}

\begin{proof}
    Let $x,y$ be two orthogonal directions. Define $A(x,y)=U(x)U(y)/\pr[x\in U]^2$ where $U$ is the set from \Cref{prop:matching1}. Then, we have that $A$ satisfies Items 1 and 2.
\end{proof}

\begin{proposition}
\label{prop:matching1}
For any $k\in\Z_+$ there exists a set $U\subseteq\R$ with $\pr_{x\sim \mathcal{N}(0,1)}[x\in U]=1/2$ such that $U$ is a union of $k$ intervals with $|\max(U)|=O(k)$ and
\[
\E_{y\sim\mathcal{N}(0,1)}\big[y^t\mid y\in U\big]=\E_{y\sim\mathcal{N}(0,1)}[y^t]
\]
for all integers $t=0,\ldots,k$.
\end{proposition}
\begin{proof}
 We first show that there exists a piecewise constant function $f:\R\to\{0,1\}$ that matches the moments of the standard normal distribution and vanishes for $|x|\ge O(k)$. This is formalized in the following lemma.
\begin{lemma}\label{lem:piecewise-small}
     Fix  $k\in Z_+$ and $\eta\in(0,1)$. There exists a $O(k)$-piecewise-constant function $f : \R \to \{0,2\}$ such that $f(x)=0$ for $|x|\geq \Theta(k)$ and for all $t\leq k$ it holds that
    \begin{align*}
          \left|\E_{y \sim \cN(0,1)}[y^t f(y)] - \E_{y \sim \cN(0,1)}[y^t]\right|=0\;.
    \end{align*}
\end{lemma}
Note that because $f(x)$ takes values in $\{0,2\}$, it holds that $\pr[f(x)=2]=1/2$ (which holds from \Cref{lem:piecewise-small} for $t=0$). It also holds that 
\[
 \left|\E_{y \sim \cN(0,1)}[y^t \mid f(y)=2] - \E_{y \sim \cN(0,1)}[y^t]\right|= \left|\E_{y \sim \cN(0,1)}[y^t f(y)]\frac{1}{2 \pr[f(y)=2]} - \E_{y \sim \cN(0,1)}[y^t]\right|=0\;.
\]
Taking $U=\1( f(x)=2)$, we complete the proof of \Cref{prop:matching1}.
\end{proof}
\subsection{Proof of \Cref{lem:piecewise-small}}
To prove \Cref{lem:piecewise-small}, we first show that there is a function that satisfies our properties up to an error and has exponential pieces.
\begin{lemma}\label{lem:piecewise}
     Fix  $k\in Z_+$ and $\delta \gtrsim k$ and $\eta\in(0,1)$. There exists a $(1/\eta+1)^{\poly(k)}$-piecewise-constant function $f_\eta : \R \to \{0,2\}$ such that $f(x)=0$ for $|x|\geq \delta$ and for all $t\leq k$ it holds that
    \begin{align*}
          \left|\E_{y \sim \cN(0,1)}[y^t f(y)] - \E_{y \sim \cN(0,1)}[y^t]\right|\leq \eta\;.
    \end{align*}
\end{lemma}
\begin{proof}
We first show that there exists a continue function that satisfies are properties.
 \begin{claim} \label{cl:explicit_calc}
    Fix  $k\in Z_+$ and $\delta \gtrsim k$. There exists an $f : \R \to [0,2]$ such that $f(x)=0$ for $|x|\geq \delta$ and for all $t\leq k$ it holds that
    \begin{align*}
          \E_{y \sim \cN(0,1)}[y^t f(y) ] = \E_{y \sim \cN(0,1)}[y^t]\;.
    \end{align*}
\end{claim}
We fix the following parameters throughout the proof (where $C$ denotes a sufficiently large absolute constant) $s =  (\eta/(\delta k))^{\poly(k)}$ and we choose $s$ so that there exists $i'\in \Z$ so that $-\delta+i' s=0$.
We partition the subset $[-\delta,\delta]$ in pieces $[is,(i+1)s)$ for $i \in \Z$. We define $\tilde{f}$ to be the following random piecewise-constant function: For each $i \in \{ -1/\delta,\ldots,1/\delta\}$ we let $\tilde{f}(x)$ be constant in the interval $x \in [is,(i+1)s)$, taking the following value:
\begin{align}\label{eq:def_f}
    \tilde{f}(x) = 
    \begin{cases}
        2 , &\text{with probability $p_i:= \int_{is}^{(i+1)s} \phi(x)f(x)\d x/(2\int_{is}^{(i+1)s} \phi(x)\d x$} ) \\
        0, &\text{with probability $1-p_i$} 
    \end{cases}
\end{align}
 and  we define $\tilde{f}(x)=0$ with probability 1 in the entire $(-\infty,-\delta] \cup [\delta,+\infty)$.
Our goal is to show that $\left|\E_{y \sim \cN(0,1)}[y^t \tilde f(y)] - \E_{y \sim \cN(0,1)}[y^t]\right|\leq \eta$.

First note that for any $i\in [-\delta,-\delta+1,\ldots,\delta]$ and $t\leq k$, we have on expectation that 
\begin{align}
   \E_{\tilde f}\left[\E_{y \sim \cN(0,1)}[y^t \tilde f(y)\Ind(y\in(is,(i+1)s))]\right]=\int_{is}^{(i+1)s} \phi(y)f(y)\d y \frac{\int_{is}^{(i+1)s} y^t\phi(y)\d y}{\int_{is}^{(i+1)s} \phi(y)\d y}\;.\label{eq:truncation-step}
\end{align}
Because we have choose $s$ so that no interval $(is,(i+1)s)$ contains $0$, that means that $y^t$ has constant sign inside the integration. Therefore, from mean valued theorem for integrals, we have that that there exists $\xi_1,\xi_2\in (is,(i+1)s)$ so that the following are true: 
\begin{equation*}
    \int_{is}^{(i+1)s} y^t\phi(y)\d y=\xi_1^t\int_{is}^{(i+1)s} \phi(y)\d y \quad \mathrm{and}\quad \int_{is}^{(i+1)s} \phi(y)f(y)y^t\d y=\xi_2^t\int_{is}^{(i+1)s} \phi(y)f(y)\d y\;.
\end{equation*}
Combining above with \Cref{eq:truncation-step}, we have that 
\begin{align*}
   \left|\E_{y \sim \cN(0,1)}[y^t  f(y)\{y\in(is,(i+1)s)\}]-  \E_{\tilde f}\left[\E_{y \sim \cN(0,1)}[y^t \tilde f(y)\{y\in(is,(i+1)s)\}]\right]\right|&=|\xi_1^t-\xi_2^t|\int_{is}^{(i+1)s} \phi(y)f(y)\d y 
   \\&\leq 2s|\xi_1^t-\xi_2^t|
   \\&\leq 2s^2 t\delta^{t-1}\;,
\end{align*}
where in the first inequality we used that $|f(y)\phi(y)|\leq 2$ and in the second we use that $|\xi_1^t-\xi_2^t|=\int_{\xi_1}^{\xi_2}tx^{t-1}\d x$ and $|\xi_{1}|,|\xi_{2}|\leq \delta$.
We start using a probabilistic argument, by taking the expectation and using Fubini's theorem, we have that 
\begin{align*}
   \left| \E_{\tilde f}\left[\E_{y \sim \cN(0,1)}[y^t \tilde f(y)-y^t]\right]\right|&= \left|\E_{\tilde f}\left[\sum_{i=-\delta}^{\delta-1}\E_{y \sim \cN(0,1)}[(y^t \tilde f(y)-y^t)\{y\in(is,(i+1)s)\}]\right]-\E_{y \sim \cN(0,1)}[y^t \{y\not\in(-\delta,\delta\}]\right|
    \\&\leq \left| \E_{y \sim \cN(0,1)}[y^t f(y)-y^t]\right|+\sum_{i=-\delta}^{\delta-1} 2s^2 t\delta^{t-1}=4s^2 t\delta^{t}=O(\eta)\;,
\end{align*}
where we used that by construction $ \E_{y \sim \cN(0,1)}[y^t f(y)-y^t]=0$.
    
\end{proof}

To complete the proof note that from \Cref{lemma:compact}, we have that there exists $f:\R\mapsto\{0,2\}$ that satisfies the properties of $\Cref{lem:piecewise-small}$.

\subsection{Proof of \Cref{cl:explicit_calc}}
\begin{proof}[Proof of \Cref{cl:explicit_calc}]
    Let \[
    f(x)=\left(1+\frac{p(x)\1(x\in[-1,1])}{\phi(x)}\right)\1(|x|\leq \delta)\;,
    \]
    for some appropriate chosen $k$-degree polynomial $p$.
    We have that
    \[
     \E_{y \sim \cN(0,1)}[y^t f(y)]= \E_{y \sim \cN(0,1)}[y^t \1(|y|\leq \delta) ]+\int_{-1}^{1}p(y)y^t\d y\;.
    \]
    We want to chose $p$ so that $\E_{y \sim \cN(0,1)}[y^t f(y) ] = \E_{y \sim \cN(0,1)}[y^t]$, therefore, we want
    \[
    \int_{-1}^{1}p(y)y^t\d y=\E_{y \sim \cN(0,1)}[y^t \1(|y|\geq \delta) ]\;.
    \]
    Equivalently because Legendre polynomials form a complete basis, it suffices to find a polynomial $p$ so that 
        \[
    \int_{-1}^{1}p(y)P_t(y)\d y=\E_{y \sim \cN(0,1)}[P_t(y) \1(|y|\geq \delta) ]\;,
    \]
    for all $t\leq k$. Therefore, we can write $p(y)=\sum_{i=0}^k a_i P_i(y)$ and $a_i = \frac{2i+1}{2} \int_{-1}^1 P_i(y) p(y) \d y$. Hence, we have that 
    \[
    a_i=\frac{2}{2i+1}\E_{y \sim \cN(0,1)}[P_t(y) \1(|y|\geq \delta) ]\;.
    \]
    Because for the Legendre polynomials it holds that  $|P_t(y)|\leq 1$ for all $|y|\leq 1$ in order to have $|p(x)|\leq \phi(1)$ it suffices to show that $\sum_{i=0}^k|a_i|\leq \phi(1)$.
    Using the property that $P_k(x)=2^{-k}\sum_{i=1}^{\lceil k/2 \rceil}\binom{k}{i}\binom{2k-2i}{k}x^{k-2i}$, we have that
    \begin{align*}
        a_t&=\frac{2}{2t+1}\E_{y \sim \cN(0,1)}[P_t(y) \1(|y|\geq \delta) ]=\frac{2}{2t+1}2^{-t}\sum_{i=1}^{\lceil t/2 \rceil}\binom{t}{i}\binom{2t-2i}{k}\E_{y \sim \cN(0,1)}[y^{t-2i}\1(|y|\geq \delta) ]
        \\&\lesssim \frac{\exp(-\delta^2/2)}{2t+1}2^{-t}\sum_{i=1}^{\lceil t/2 \rceil}\binom{t}{i}\binom{2t-2i}{k}\delta^{t-2i}\lesssim\frac{\exp(-\delta^2/2)}{2t+1} P_{t}(\delta)\;.
    \end{align*}
    Furthermore, using the Corollary 5.4 in \cite{DKS17-sq}, we have that $|P_{t}(\delta)|\leq O(\delta^t)$\;. Hence, we have that $|a_i|\lesssim \frac{\exp(-\delta^2/2)\delta^k}{2i+1}$. Choosing $\delta\gtrsim k$, we have that $\sum_{i=0}^{k} |a_i|\leq \phi(1)$ and that completes the proof.

\end{proof}

\subsection{Reducing the Number of Intervals}
The proof is very similar to the proof of \cite{DKZ20} but with the extra condition that we bound the maximum breakpoint of the piecewise constant function. For completeness, we state and add the extra details of the statements.
\begin{proposition}
    \label{prop:main_stuct-reducing}
Let $k,\ell$ be  positive integers with $\ell\geq k+3$ and $a,b\in \R$ with $b>a$. Let $D$ be a continuous distribution over $\R$ and let $\nu_0,\ldots,\nu_{k-1}\in\R$. Fix $b>0$. If for any $\eta>0$ there exists an at most $\ell$-piecewise constant function $g_{\eta}: \R \to \{a,b\}$ with $\{b_1^\eta,b_2^\eta,\ldots,b_{\ell-1}^\eta\}$ breakpoints
such that $|\E_{z \sim D}[g_{\eta}(z)z^t]-\nu_t|\leq\eta$ for every non-negative integer $t<k$ and $|b_1^\eta|,|b_{\ell-1}^\eta|\leq b$, then there exists an at most $(k+3)$-piecewise constant function $f: \R \to \{a,b\}$ with $\{b_1,b_2,\ldots,b_{k+2}\}$ breakpoints
such that $\E_{z \sim D}[f(z)z^t]=\nu_t$, for every non-negative integer $t<k$ and $|b_1|,|b_{k+2}|\leq b$.
\end{proposition}

\begin{proof}
Note that, we can always transform the function $g_\eta:\R\mapsto\{a,b\}$ to a $g_{\eta}':\R\mapsto \{\pm 1\}$ that satisfies similar properties. We define $g_{\eta}'(z)\eqdef (2g_\eta(z)-a-b)/(b-a)$ and let $\nu_t'=2\nu_t/(b-a)+(a+b)/(b-a)\E_{z \sim D}[z^t]$ and $\eta'=\eta(2/(b-a))$. Hence, we have that for any $\eta'>0$, there exists an at most $\ell$-piecewise constant function $g'_{\eta'}: \R \to \{\pm 1\}$
such that $|\E_{z \sim D}[g'_{\eta}(z)z^t]-\nu_t'|\leq\eta'$ for every non-negative integer $t<k$. 
By applying \Cref{lem:main_diff} and \Cref{lemma:compact}, we obtain that there exists an at most $(k+1)$-piecewise constant function $f': \R \to \{\pm 1\}$
such that $\E_{z \sim D}[f'(z)z^t]=\nu'_t$ which keeps the first and last breakpoints the same as the original function, for every non-negative integer $t<k$. By setting $f(z)=(f'(z)(b-a) +a+b)/2$, we complete the proof of \Cref{prop:main_stuct-reducing}.
\end{proof}

\begin{lemma}\label{lemma:compact}
Let $k$ be a positive integer and $b>0$. Let $D$ be a continuous distribution over $\R$ and let $\nu_0,\ldots,\nu_{k-1}\in\R$. If for any $\eta>0$ there exists an at most $(k+3)$-piecewise constant function $g_{\eta}: \R \to \{\pm 1\}$  with $\{b_1^\eta,b_2^\eta,\ldots,b_{k+2}^\eta\}$ breakpoints
such that $|\E_{z \sim D}[g_{\eta}(z)z^t]-\nu_t|\leq\eta$ and $|b_1^\eta|,|b_{k+2}^\eta|\leq b$, for every non-negative integer $t<k$, then there exists an at most $(k+3)$-piecewise constant function $f: \R \to \{\pm 1\}$ with $\{b_1,b_2,\ldots,b_{k+2}\}$ breakpoints
such that $\E_{z \sim D}[f(z)z^t]=\nu_t$, for every non-negative integer $t<k$ and $|b_1|,|b_{k+2}|\leq b$.
\end{lemma}
\Cref{lemma:compact} follows from the above using a compactness argument.

\begin{proof}
Let $p(z)$ be the pdf of $D$.
	For every $\eta>0$, we have that there exists a function $g_{\eta}$ such that $|\E_{z
		\sim D}[f_\eta(z)z^t]-\nu_t|\leq\eta$, for every non-negative integer $t<k$
	and the function $g_{\eta}$ is at most $(k+1)$-piecewise constant.
	Let $\vec M: \mathbb{\overline R}^{k} \mapsto \R^{k}$, where $M_i(\vec
	b)=\sum_{n=0}^{k}(-1)^{n+1}\int_{b_n}^{b_{n+1}} z^i p(z) \d z$ and
	$b_1\leq b_2\leq \ldots \leq b_{k}$, $b_0=-\infty$ and $b_{k+1}=\infty$. Here we assume
	without loss of generality that before the first breakpoint the function is
	negative because  we can always set the first breakpoint to be $-\infty$. It is
	clear that the function $\vec M$ is a continuous map and $\mathbb{\overline R}^{k+1}$ is a compact set,
	thus $\vec M\left(\mathbb{\overline R}^{k+1}\right)$ is a compact set.
	We also have that for every $\eta>0$ there is a point $\vec b\in \mathbb{\overline R}^{k+1}$
	such that $|\vec M(\vec b)\cdot\vec e_i -\nu_i|\leq \eta$ for all $i<k$. Thus, from compactness, we have that there
	exists a point $\vec b^*\in \mathbb{\overline R}^{k+1}$ such that $\vec M(\vec b^*)=\vec 0$.
	This completes the proof.
\end{proof}
The following lemma is analogous to Lemma~3.8 in \cite{DKZ20}. We include its proof for completeness, as our work considers more general distributions and requires specifying exact moment values.
\begin{lemma}\label{lem:main_diff}
Let $m$ and $k$ be positive integers such that $m>k+3$ and $\eta>0$. Let $D$ be a continuous distribution over $\R$ and let $\nu_0,\ldots,\nu_{k-1}\in\R$.
If there exists an $m$-piecewise constant $f:\R \mapsto \{\pm 1\}$
such that $|\E_{z \sim D}[f(z)z^t]-\nu_t|<\eta$ for all non-negative integers $t<k$,
then there exists an at most $(m-1)$-piecewise constant $g :\R \mapsto \{\pm 1\}$
such that $|\E_{z \sim D}[g(z)z^t]-\nu_t|<\eta$ for all non-negative integers $t<k$ and if $\{b_1,b_2,\ldots,b_{m-1}\}$ are the breakpoints of $f$ and $\{b_1',b_2',\ldots,b_{m-2}'\}$ are the breakpoints of $g$, it holds that $b_{1}'=b_{1}$ and $b_{m-2}'=b_{m-1}$.
\end{lemma}

\begin{proof}
Let $p(z)$ be the pdf of $D$.
Let $\{b_1,b_2,\ldots,b_{m-1}\}$ be the breakpoints of $f$, i.e., the points where the function $f$ changes value.
	Then let $F(z_1, z_2, \ldots, z_{m-1},z):\mathbb{\overline R}^m \mapsto \R$ be an $m$-piecewise constant function
	with breakpoints on $z_1, \ldots, z_{m-1}$, where $z_1<z_2< \ldots <z_{m-1}$
	and $F(b_1, b_2, \ldots, b_{m-1},z)=f(z)$.  For simplicity, let $\vec z=(z_1, \ldots, z_{m-1})$
	and  define $M_i(\vec z)= \E_{z \sim D}[F(\vec z, z)z^i]$ and let
	$\vec M(\vec z)=[M_0(\vec z), M_1(\vec z), \ldots M_{k-1}(\vec z)]^T$. It is
	clear from the definition that
	$M_i(\vec z)=\sum_{n=0}^{m-1}\int_{z_n}^{z_{n+1}} F(\vec z, z) z^i p(z) \d z =
	\sum_{n=0}^{m-1}a_n\int_{z_n}^{z_{n+1}} z^ip(z) \d z$,
	where $z_0= -\infty$ and $z_m=\infty$ and $a_n$ is the sign of $F(\vec z,z)$ in the interval $(z_n,z_{n+1})$.
	Note that $a_n=-a_{n+1}$ for every $0\leq n<m$.
	By taking the derivative of $M_i$ in $z_j$, for $0<j<m$, we get that
	\[\frac{\partial}{\partial z_j} M_i(\vec z)= 2a_{j-1} z_j^i p(z_j) \quad \text{and}\quad
	\frac{\partial}{\partial z_j} \vec M(\vec z)= 2a_{j-1} p(z_j) [1, z_j^1, \ldots ,z _j^{k-1}]^T\;.\]
	We now argue that for any $\vec z$ with distinct coordinates that there exists a vector $\vec u\in \R^{m-1}$ such that
	$\vec u=(0,\vec u_2,\ldots,\vec u_{k+1},0,0,\ldots,1,0)$ and the directional derivative of $\vec M$ in the $\vec u$ direction
	is zero. To prove this, we construct a system of linear equations such that
	$\nabla_{\vec u} M_i(\vec z)=0$, for all $0< i<k$. Indeed, we have
$\sum_{j=2}^{k} \frac{\partial}{\partial z_j}  M_i(\vec z) \vec u_j
	= - \frac{\partial}{\partial z_{m-2}}  M_i(\vec z) $ or $\sum_{j=2}^{k} a_{j-1} z_j^i p(z_j)\vec u_j=- a_{m-3} z_{m-2}^i p(z_{m-2})$,
	 which is linear in the variables $\vec u_j$. Let $\widehat{\vec u}$ be the vector with the first $k$ variables 
	 and let $\vec w$ be the vector of the right hand side of the system, i.e., $\vec w_i=- a_{m-3} z_{m-2}^i p(z_{m-2})$. Then
	 this system can be written in matrix form as $\vec V \vec D\widehat{ \vec u}=\vec w$, where $\vec V$ is the Vandermonde matrix,
	 i.e., the matrix that is $\vec V_{i,j}=\alpha_i^{j-1}$, for some values $\alpha_i$ and $\vec D$ is a diagonal matrix.
	 In our case, $\vec V_{i,j}=z_i^{j-1}$ and $\vec D_{j,j}= 2 a_{j-1}p(z_j)$.
	 It is known that the Vandermonde matrix has full rank iff for all $i\neq j$ we have $\alpha_i\neq \alpha_j$,
	 which holds in our setting. Thus, the matrix $\vec V \vec D$ is nonsingular and there exists a solution to the equation.
Thus, there exists a vector $\vec u$ with our desired properties and, moreover,
	any vector in this direction is a solution of this system of linear equations.
	Note that the vector $\vec u$ depends on the value of $\vec z$,
	thus we consider $\vec u(\vec z)$ be the (continuous) function that returns a vector $\vec u$ given $\vec z$.
	
We define a differential equation for the function $\vec v:\mathbb{\overline R}\mapsto\mathbb{\overline R}^{m-1}$, as follows: $\vec v(0)= \vec b$, where $\vec b=(b_1, \ldots, b_{m-1})$, and
$ \vec v'(T)=\vec u(\vec v(T))$ for all $T \in \mathbb{\overline R}$.
If $\vec v$ is a solution to this differential equation, then we have:
\[\frac{\d}{\d T} \vec M(\vec v(T))=\frac{\d}{\d \vec v(T)} \vec M(\vec v(T)) \frac{\d}{\d T} \vec v(T)
=\frac{\d}{\d \vec v(T)} \vec M(\vec v(T)) \vec u(\vec v(T)) =\vec 0\;,
\]
where we used the chain rule and that the directional derivative in $\vec u(\vec v(T))$ direction is zero.
This means that the function $\vec M(\vec v(t))$ is constant, and for all $0\leq j<k$, we have $|M_j-\nu_j|< \eta$, because we have that $|\E_{z \sim D}[F(z_1,\ldots, z_{m-1},z)z^t]-\nu_t|<\eta$. Furthermore, since $\vec u(\vec v(T))$ is continuous in $\vec v(T)$, this differential equation will be well founded and have a solution up until the point where either two of the $z_i$ approach each other or one of the $z_i$ approaches plus or minus infinity (the solution cannot oscillate, since $\vec v_{m-2}'(T)=1$ for all $T$).
	
Running the differential equation until we reach such a limit, we find a limiting value $\vec v^\ast$ of $\vec v(T)$ so that:
 There is an $i$ such that $\vec v_i^\ast=\vec v_{i+1}^\ast$, which
gives us a function that is at most $(m-2)$-piecewise constant, i.e., taking $F(\vec v^\ast,z)$. Which is guaranteed to exist since $\vec v_{m-2}'(T)=1$ for all $T$, thus there exists $T'$ such  that $\vec v_{m-2}^\ast=\vec v_{m-1}^\ast$. 
Thus, we have a function with at most $m-1$ breakpoints and the same moments. Furthermore, because $\vec u_0=\vec u_{m}=0$ the differential equation keeps the first and the last breakpoint constant we have that the first and last breakpoints of the new function are the same as in the original function.
This completes the proof.
\end{proof}

\end{document}